\newtheorem{theorem}{Theorem}
\newtheorem{lemma}{Lemma}
\newtheorem{claim}{Claim}
\newtheorem{proposition}{Proposition}
\newtheorem{definition}{Definition}
\renewcommand{\phi}{\varphi}
\renewcommand{\epsilon}{\varepsilon}
\newenvironment{proof}{\noindent{\em Proof.}}{\hfill $\boxtimes\hspace{2mm}$\linebreak}
\renewcommand{\qed}{\hfill $\boxtimes\hspace{2mm}$}
\newenvironment{proof-of-claim}{\noindent{\em Proof of Claim.}}{\hfill $\boxtimes\hspace{2mm}$\linebreak}
\renewcommand{\H}{{\sf H}}
\newcommand{\N}{{\sf N}}
\newcommand{\cN}{{\sf \overline{N}}}
\newcommand{\K}{{\sf K}}
\newcommand{\E}{{\sf E}}
\renewcommand{\S}{{\sf S}}
\renewcommand{\c}{{\sf c}}
\begin{document}

\begin{frontmatter}





\title{If You're Happy, Then You Know It: The Logic of Happiness \dots and Sadness}


\author{Sanaz Azimipour}
\address{Berlin University of Applied Sciences, Berlin, Germany}
\ead{Sanaz.a234@gmail.com}

\author{Pavel Naumov}
\address{King's College, Pennsylvania,  USA}
\ead{pgn2@cornell.edu}

\begin{abstract}
The article proposes a formal semantics of happiness and sadness modalities in imperfect information setting. It shows that these modalities are not definable through each other and gives a sound and complete axiomatization of their properties.
\end{abstract}

\end{frontmatter}

\section{Introduction}

Emotions provide motivation for many aspects of human behavior. To be able to understand and predict human actions, artificial agents must be able to identify, comprehend, and reason about human emotions. Different formal models of human emotions have been studied in AI literature. 
Doyle, Shoham, and Wellman propose a logic of relative desire~\cite{dsw91ismis}.
Lang, Van Der Torre, and Weydert introduce utilitarian desires~\cite{lvw02aamas}.
Meyer states logical principles aiming at capturing anger and fear~\cite{m04ecai}.
Steunebrink, Dastani, and Meyer expand this work to hope~\cite{sdm07aaai}.
Adam, Herzig, and Longin propose formal definitions of hope, fear, relief, disappointment, resentment, gloating, pride, shame, admiration, reproach, gratification, remorse, gratitude, and anger~\cite{ahl09synthese}.
Lorini and Schwarzentruber define regret and elation~\cite{ls11ai}.

The focus of this article is on happiness and sadness. These notions have long been studied in literature on philosophy~\cite{s11ijw,f19ijw,b10yup,a10dup}, psychology~\cite{a13routledge,sed05pid}, and economics~\cite{f10mit,bp05oup}.
Note that happiness/sadness are vague terms that have multiple meanings that overlap with several other terms such as joy/distress and elation/disappointment\footnote{For example, Merriam-Webster dictionary lists joy as synonym for happiness and happiness as synonym for joy. At the same time, thesaurus.com lists elation as synonym for joy and joy as synonym for elation.}. 

Two approaches to capturing happiness and sadness in formal logical systems have been proposed.   The first approach is based on Ortony, Clore, and Collins' definitions of joy and distress (the capitalization is original):
\begin{quote}
\dots we shall often use the terms ``joy'' and ``distress'' as convenient shorthands for the reactions of being PLEASED ABOUT A DESIRABLE EVENT and DISPLEASED ABOUT AN UNDESIRABLE EVENT, respectively.~\cite[p.88]{occ88cup} 
\end{quote}
Adam, Herzig, and Longin formalized these definitions. An agent feels joy about $\phi$ if she believes that $\phi$ is true and she desires $\phi$. An agent feels distress about $\phi$ if she believes $\phi$ is true, but she desires $\phi$ to be false~\cite{ahl09synthese}. 
Similarly, Lorini and Schwarzentruber define that an agent is elated/disappointed about $\phi$ if $\phi$ is desirable/undesirable to the agent, agent knows that $\phi$ is true, and she also knows that the others could have prevented $\phi$ from being true~\cite{ls11ai}. Although Adam, Herzig, and Longin use beliefs while Lorini and Schwarzentruber use knowledge and the latter authors also add ``could have prevented'' part, both definitions could be viewed as a variation of Ortony, Clore, and Collins' definitions of joy/distress.

Meyer suggested a very different  approach to defining these notions.
He writes ``an agent that is happy observes that its subgoals (towards certain goals) are being achieved, and is `happy' with it''. He acknowledges, however, that this definition might be capturing only one of the forms of what people mean by happiness~\cite{m04ecai}. 

Note, for example, that if Pavel, the second author of this article, receives an unexpected gift from Sanaz, the first author, then he will experience ``joy'' as defined by Ortony, Clore, and Collins. However, he will not be ``happy'' as defined by Meyer because receiving such a gift has never been among Pavel's goals\footnote{Ortony, Clore, and Collins give a similar example with an unexpected inheritance from an unknown relative.}.

In this article we adopt Ortony, Clore, and Collins' definitions, but we use terms happiness/sadness instead of joy/distress. While the cited above works suggest formal semantics and list formal properties  of happiness and sadness, none of them gives an axiomatization of these properties. In this article we propose such an axiomatization and prove its completeness. We also show that notions of happiness and sadness in our formal system are, in some sense, dual but are not definable through each other.

The rest of the article is structured as follows. First, we formally define epistemic models with preferences that serve as the foundation of our semantics of happiness and sadness. Then, we define formal syntax and semantics of our system and illustrate them with several examples. In Section~\ref{Undefinability of Sadness through Happiness section}, we show that sadness cannot be defined through happiness. In spite of this, as we show in Section~\ref{Duality of Happiness and Sadness} there is a certain duality between the properties of the happiness and sadness. We use this duality to observe that sadness can not be defined through happiness either. In Section~\ref{Axioms of Emotions}, we list the axioms of our logical system. In the section that follows, we prove its soundness. In Section~\ref{Utilitarian Emotions} and Section~\ref{Goodness-Based Emotions}, we show how  utilitarian and goodness-based approaches to desires, already existing in the literature, could be adopted to happiness and sadness. In the rest of the article we prove the completeness of our logical section. The last section concludes.

\section{Epistemic Models with Preferences}

Throughout the article we assume a fixed countable set of agents $\mathcal{A}$ and a countable set of propositional variables.
The semantics of our logical system is defined in terms of epistemic models with preferences. These models extend standard Kripke models for epistemic logic with a preference relation for each agent in set $\mathcal{A}$. 

\begin{definition}\label{epistemic model with preferences}
A tuple $(W,\{\sim_a\}_{a\in\mathcal{A}},\{\prec_a\}_{a\in\mathcal{A}},\pi)$ is called an epistemic model with preferences if
\begin{enumerate}
    \item $W$ is a set of epistemic worlds,
    \item $\sim_a$ is an ``indistinguishability'' equivalence relation on set $W$ for each agent $a\in\mathcal{A}$,
    \item $\prec_a$ is a strict partial order preference relation on set $W$ for each agent $a\in\mathcal{A}$,
    \item $\pi(p)$ is a subset of $W$ for each propositional variable $p$.
\end{enumerate}
\end{definition}
For any two sets of epistemic worlds $U,V\subseteq W$, we write $U\prec_a V$ if $u\prec_a v$ for each world $u\in U$ and each world $v\in V$.

\begin{figure}[ht]
\begin{center}
\scalebox{0.45}{\includegraphics{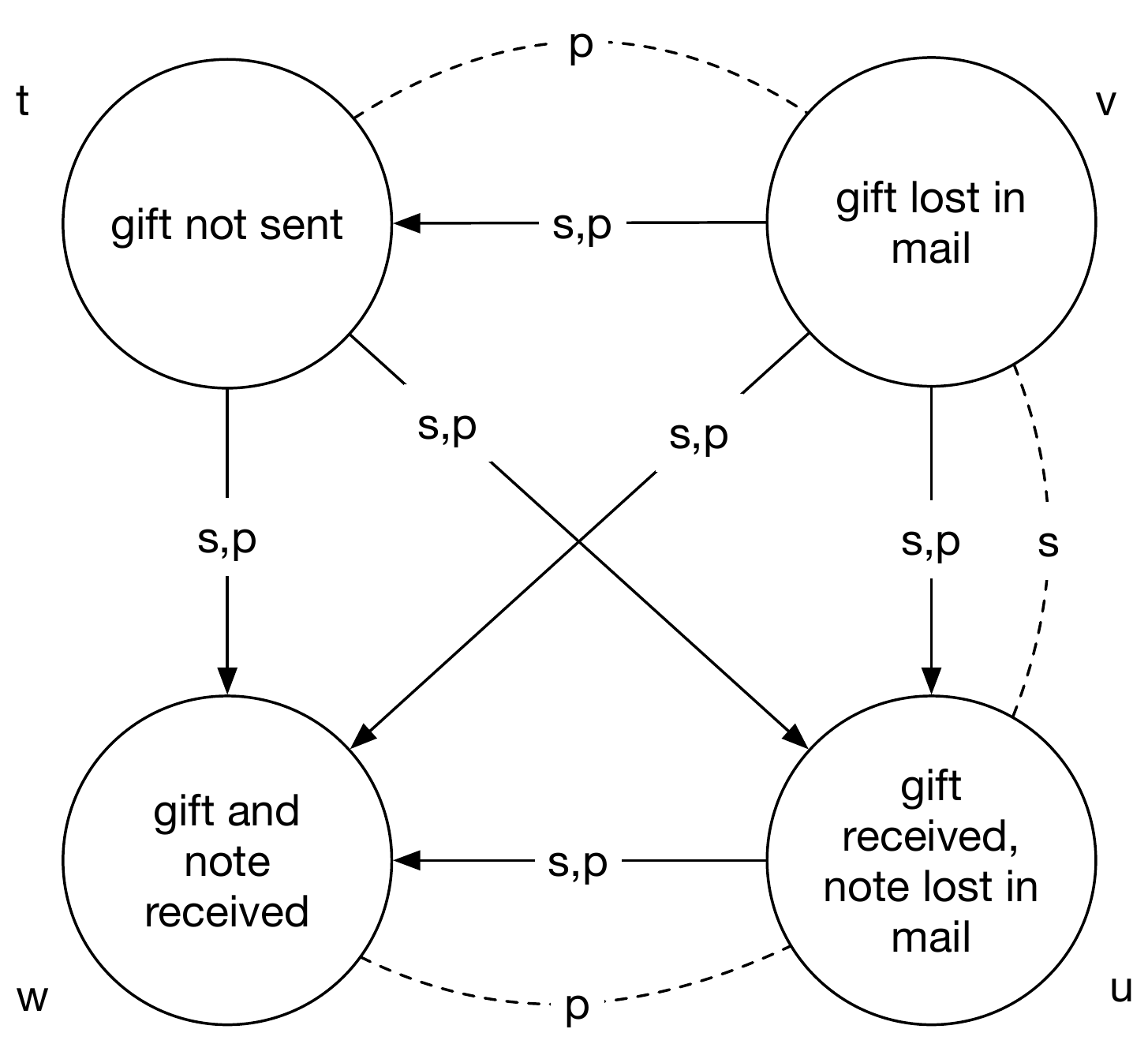}}
\caption{Gift Scenario.}\label{example gift figure}
\end{center}
\end{figure}

An example of an epistemic model with preferences is depicted in Figure~\ref{example gift figure}. It captures mentioned in the introduction scenario in which the first author of this article, Sanaz, is sending a gift to the second author, Pavel. If Pavel receives the gift, he will acknowledge it by sending back a thank you note. We assume that either the gift or the note can be lost in the mail and that there is no additional communication between the authors.

This epistemic model with preferences in Figure~\ref{example gift figure} has four worlds corresponding to four different scenarios. In world $t$ Sanaz did not send the gift. In world $v$, she sent the gift, but it was lost in the mail. In world $u$, Pavel received the gift, but his note is lost in the mail. Finally, in world $w$ Pavel received the gift and Sanaz received his note. Sanaz cannot distinguish the world $v$, in which the gift is lost, from world $u$, in which the note is lost. Pavel cannot distinguish the world $t$, in which gift is not sent, from world $v$, in which the gift is lost. He also cannot distinguish the world $u$, in which the note is lost, from world $w$, in which the note is received. In the diagram, the indistinguishability relations of Sanaz and Pavel are denoted by dashed lines labeled by $s$ and $p$ respectively. Although in general, according to Definition~\ref{epistemic model with preferences}, different agents might have different preference relations between world, in this scenario we assume that Sanaz and Pavel have the same preferences. These preferences are shown in the diagram using directed edges. For example, the directed edge from world $v$ to world $t$ labeled with $s,p$ means that they both would prefer if Sanaz does not send the gift at all to the scenario when the gift is lost in the mail. 

\section{Syntax and Semantics}
In this section we introduce the formal syntax and semantics of our logical system. Throughout the article we assume a fixed countable set of agents $\mathcal{A}$ and a fixed nonempty countable set of propositional variables. The language $\Phi$ of our logical system is defined by grammar:
$$
\phi:=p\;|\;\neg\phi\;|\;\phi\to\phi\;|\;\N\phi\;|\;\K_a\phi\;|\;\H_a\phi\;|\;\S_a\phi,
$$
where $p$ is a propositional variable and $a\in\mathcal{A}$ is an agent. We read formula $\N\phi$ as ``statement $\phi$ is true in each world'', formula $\K_a\phi$ as ``agent $a$ knows $\phi$'', formula $\H_a\phi$ as ``agent $a$ is happy about $\phi$'', and formula $\S_a\phi$ as ``agent $a$ is sad about $\phi$''. We assume that Boolean connectives conjunction $\wedge$, disjunction $\vee$, and biconditional $\leftrightarrow$ are defined through negation $\neg$ and implication $\to$ in the standard way. By $\cN\phi$ we denote formula $\neg\N\neg\phi$. We read $\cN\phi$ as ``statement $\phi$ is true in at least one of the worlds''.

\begin{definition}\label{sat}
For any world $w\in W$ of an epistemic model with preferences $(W,\{\sim_a\}_{a\in\mathcal{A}},\{\prec_a\}_{a\in\mathcal{A}},\pi)$ and any formula $\phi\in\Phi$, satisfaction relation $w\Vdash\phi$ is defined as follows:
\begin{enumerate}
    \item $w\Vdash p$, if $w\in \pi(p)$,
    \item $w\Vdash\neg\phi$, if $w\nVdash\phi$,
    \item $w\Vdash\phi\to\psi$, if $w\nVdash\phi$ or $w\Vdash\psi$,
    \item $w\Vdash\N\phi$, if $u\Vdash \phi$ for each world $u\in W$,
    \item\label{item K} $w\Vdash\K_a\phi$, if $u\Vdash \phi$ for each world $u\in W$ such that $w\sim_a u$,
    \item\label{item H} $w\Vdash \H_a\phi$, if the following three conditions are satisfied:
        \begin{enumerate}
            \item $u\Vdash \phi$ for each world $u\in W$ such that $w\sim_a u$,
            \item for any two worlds $u,u'\in W$, if $u\nVdash\phi$ and $u'\Vdash\phi$, then $u\prec_a u'$,
            \item there is a world $u\in W$ such that $u\nVdash\phi$,
        \end{enumerate}
    \item $w\Vdash \S_a\phi$, if the following three conditions are satisfied:
        \begin{enumerate}
            \item $u\Vdash \phi$ for each world $u\in W$ such that $w\sim_a u$,
            \item for any two worlds $u,u'\in W$, if $u\Vdash\phi$ and $u'\nVdash\phi$, then $u\prec_a u'$,
            \item there is a world $u\in W$ such that $u\nVdash\phi$.
        \end{enumerate}
\end{enumerate}
\end{definition}

Items 6 and 7 of Definition~\ref{sat} capture the notions of happiness and sadness studied in this article. Item 6 states that to be happy about a condition $\phi$, the agent must know that $\phi$ is true, the agent must prefer the worlds in which condition $\phi$ is true to those where it is false, and the condition $\phi$ must not be trivial. These three parts are captured by items 6(a), 6(b), and 6(c) of the above definition. Note that we require condition $\phi$ to be non-trivial to exclude an agent being happy about conditions that always hold in the model. Thus, for example, we believe that an agent cannot be happy that $2+2=4$.

Similarly, item 7 states that an agent is sad about condition $\phi$ if she knows that $\phi$ is true, she prefers worlds in which condition $\phi$ is false to those in which condition $\phi$ is true, and condition $\phi$ is not trivial. Note that being sad is different from not being happy. In fact, later in this article we show that neither of  modalities $\H$ and $\S$ is expressible through the other.

We conclude this section with a technical observation that follows from Definition~\ref{sat}.
\begin{lemma}\label{semantic substitution lemma}
For any epistemic model with preferences $(W,\{\sim_a\}_{a\in\mathcal{A}},\{\prec_a\}_{a\in\mathcal{A}},\pi)$, any agent $a\in\mathcal{A}$, and any formulae $\phi,\psi\in\Phi$, if $w\Vdash\phi$ iff $w\Vdash\psi$ for each world $w\in W$, then  
\begin{enumerate}
    \item $w\Vdash\H_a\phi$ iff $w\Vdash\H_a \psi$ for each world $w\in W$,
    \item $w\Vdash\S_a\phi$ iff $w\Vdash\S_a \psi$ for each world $w\in W$. \qed
\end{enumerate}
\end{lemma}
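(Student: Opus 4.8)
The plan is to prove both parts directly from Definition~\ref{sat} by observing that the truth conditions for $\H_a$ and $\S_a$ reference the formula argument only through the set of worlds where it holds. Concretely, fix an epistemic model with preferences and formulae $\phi,\psi$ such that $w\Vdash\phi$ iff $w\Vdash\psi$ for every $w\in W$; call this hypothesis $(\star)$. I would then go through conditions (a), (b), (c) of item~\ref{item H} and show each is insensitive to replacing $\phi$ by $\psi$.

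First I would handle condition 6(a): by $(\star)$, the statement ``$u\Vdash\phi$ for each $u$ with $w\sim_a u$'' is equivalent to ``$u\Vdash\psi$ for each $u$ with $w\sim_a u$'', since for each relevant $u$ we have $u\Vdash\phi\Leftrightarrow u\Vdash\psi$. Next, for condition 6(b), I would note that for any two worlds $u,u'$, the antecedent ``$u\nVdash\phi$ and $u'\Vdash\phi$'' is equivalent to ``$u\nVdash\psi$ and $u'\Vdash\psi$'' by $(\star)$, while the consequent $u\prec_a u'$ does not mention the formula at all; hence the two universally quantified implications are equivalent. For condition 6(c), ``there is a world $u$ with $u\nVdash\phi$'' is equivalent to ``there is a world $u$ with $u\nVdash\psi$'' again by $(\star)$. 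Since $w\Vdash\H_a\phi$ is by definition the conjunction of 6(a)--6(c) for $\phi$, and likewise for $\psi$, the two are equivalent; as $w$ was arbitrary this gives part~1.

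For part~2 I would repeat the identical argument with item~7 in place of item~\ref{item H}; the only change is that condition 7(b) has antecedent ``$u\Vdash\phi$ and $u'\nVdash\phi$'' instead, which is handled exactly as before via $(\star)$. So part~2 follows by the same three-step verification.

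There is no real obstacle here: the lemma is essentially a bookkeeping observation that the semantics of $\H_a$ and $\S_a$ is extensional in its argument, and the proof is a routine unpacking of Definition~\ref{sat}. The only mild care needed is to make sure each of the three sub-conditions, including the ones quantifying over ordered pairs of worlds, is treated, and to remember that the preference-relation part of condition (b) is formula-independent and therefore transfers for free.
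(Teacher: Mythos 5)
Your argument is correct: each of conditions (a), (b), (c) in items 6 and 7 of Definition~\ref{sat} depends on the formula only through the set of worlds satisfying it, so the hypothesis transfers it to $\psi$ directly. The paper gives no written proof (it states the lemma as a technical observation following from Definition~\ref{sat}), and your routine unpacking is exactly the intended argument.
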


\section{Gift Scenario}\label{gift scenario section}

In this section we illustrate Definition~\ref{sat} using the gift scenario depicted in the diagram in Figure~\ref{example gift figure}. For the benefit of the reader, we reproduce this diagram in Figure~\ref{example gift figure repeated}.

\begin{figure}[ht]
\begin{center}
\scalebox{0.45}{\includegraphics{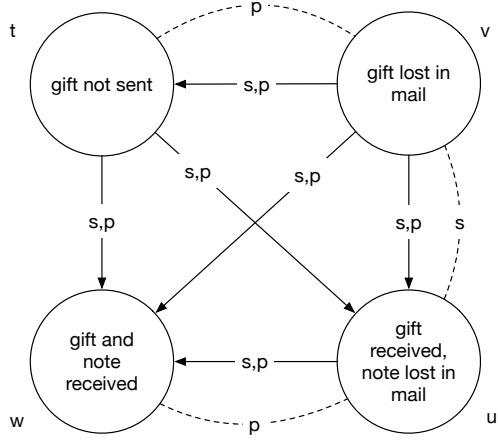}}
\caption{Gift Scenario (repeated from Figure~\ref{example gift figure}).}\label{example gift figure repeated}
\end{center}
\end{figure}


\begin{proposition}\label{Hp proposition}
$z\Vdash \H_p(\mbox{``Pavel received a gift from Sanaz''})$ iff $z\in\{w,u\}$.
\end{proposition}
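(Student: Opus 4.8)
The plan is to unwind Definition~\ref{sat}(\ref{item H}) applied to the formula $\phi=$ ``Pavel received a gift from Sanaz''. First I would identify the extension of $\phi$ in the model of Figure~\ref{example gift figure repeated}: the gift is received exactly in worlds $u$ and $w$, so $z\Vdash\phi$ iff $z\in\{u,w\}$, while $t\nVdash\phi$ and $v\nVdash\phi$. This immediately takes care of condition 6(c): there is a world (namely $t$, or $v$) where $\phi$ fails, so $\phi$ is non-trivial. Condition 6(b) requires that every $\phi$-false world be $\prec_p$-below every $\phi$-true world; here that means checking $t\prec_p u$, $t\prec_p w$, $v\prec_p u$, $v\prec_p w$, which I would read off from the preference edges in the diagram (using transitivity of $\prec_p$ where the edges only give $v\prec_p t$ and the chain up to $u,w$, or whatever the precise edges are). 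Since 6(b) and 6(c) do not depend on the evaluation world $z$, they hold uniformly.

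The only part that depends on $z$ is condition 6(a): $u'\Vdash\phi$ for every $u'$ with $z\sim_p u'$, i.e.\ Pavel knows the gift arrived. So the proposition reduces to: $z\in\{u,w\}$ iff the entire $\sim_p$-equivalence class of $z$ is contained in $\{u,w\}$. I would then go through Pavel's indistinguishability classes as described in the text: $\{t,v\}$ and $\{u,w\}$ (with the reflexive loops understood). For $z\in\{t,v\}$ the class is $\{t,v\}$, which contains $t\nVdash\phi$, so 6(a) fails and $z\nVdash\H_p\phi$. For $z\in\{u,w\}$ the class is $\{u,w\}\subseteq\{u,w\}$, so 6(a) holds, and combined with 6(b), 6(c) we get $z\Vdash\H_p\phi$. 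This establishes both directions.

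I do not expect a genuine obstacle here; the proof is a direct case check against the diagram. The one point requiring a little care is verifying 6(b) for \emph{all} pairs of $\phi$-false and $\phi$-true worlds — in particular making sure the preference relation as drawn (together with its transitive closure, since $\prec_p$ is only required to be a strict partial order) really does put both $t$ and $v$ strictly below both $u$ and $w$, and that there is no stray edge violating this. If the diagram's edges are, say, $v\prec_p t$, $t\prec_p u$, $u\prec_p w$, then transitivity yields exactly the four required comparisons. I would state this verification explicitly as the crux of the argument and otherwise keep the presentation as a short two-case analysis on Pavel's epistemic class of $z$.
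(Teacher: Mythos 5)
Your proposal is correct and follows essentially the same route as the paper: conditions 6(b) and 6(c) are verified uniformly from the diagram (the paper likewise just notes $\{t,v\}\prec_p\{w,u\}$ and that $t\nVdash\phi$), and the dependence on $z$ is isolated in condition 6(a), settled by a case split on Pavel's indistinguishability classes $\{t,v\}$ and $\{u,w\}$. The only cosmetic difference is that for $z\in\{t,v\}$ the paper falsifies 6(a) at $z$ itself via reflexivity of $\sim_p$, while you falsify it at $t$ inside the class; this is immaterial.
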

\begin{proof} 
$(\Rightarrow):$ Suppose $z\notin \{w,u\}$. Thus, $z\in \{t,v\}$.  Hence, see Figure~\ref{example gift figure repeated},
$$z\nVdash \mbox{``Pavel received a gift from Sanaz''}.$$
Therefore, $z\nVdash \H_p(\mbox{``Pavel received a gift from Sanaz''})$ by item 6(a) of Definition~\ref{sat} because $z\sim_p z$.

\vspace{1mm}
\noindent $(\Leftarrow):$ Suppose $z\in \{w,u\}$. To show $z\Vdash \H_p(\mbox{``Pavel received a gift from Sanaz''})$, we will verify conditions (a), (b) and (c) of item~\ref{item H} of Definition~\ref{sat} separately:

\vspace{1mm}
\noindent{\em Condition a:} Consider any world $z'$ such that $z\sim_p z'$. To verify the condition, it suffices to show that $z'\Vdash \mbox{``Pavel received a gift from Sanaz''}$. Indeed, assumptions $z\in \{w,u\}$ and $z\sim_p z'$ imply that $z'\in \{w,u\}$, see Figure~\ref{example gift figure repeated}. Therefore, $z'\Vdash \mbox{``Pavel received a gift from Sanaz''}$, again see Figure~\ref{example gift figure repeated}.

\vspace{1mm}
\noindent{\em Condition b:} Consider any two epistemic worlds $x,y$ such that 
\begin{eqnarray}
    &&x\nVdash \mbox{``Pavel received a gift from Sanaz''},\label{sept12-1}\\
    &&y\Vdash \mbox{``Pavel received a gift from Sanaz''}.\label{sept12-2}
\end{eqnarray}

To verify the condition, it suffices to show that $x\prec_p y$. Indeed, assumptions~(\ref{sept12-1}) and (\ref{sept12-2}) implies that $x\in \{t,v\}$ and $y\in \{w,u\}$, see Figure~\ref{example gift figure repeated}. Note that $\{t,v\}\prec_p \{w,u\}$, see Figure~\ref{example gift figure repeated}. Therefore, $x\prec_p y$.

\vspace{1mm}
\noindent{\em Condition c:} $t\nVdash \mbox{``Pavel received a gift from Sanaz''}$.
\end{proof}

\begin{proposition}\label{Hs proposition}
$z\Vdash \H_s(\mbox{``Pavel received a gift from Sanaz''})$ iff $z\in\{w\}$.
\end{proposition}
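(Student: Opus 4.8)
The plan is to mirror the structure of the proof of Proposition~\ref{Hp proposition}, verifying the biconditional by proving each direction, but now with respect to agent Sanaz's indistinguishability relation $\sim_s$ rather than Pavel's. Let me abbreviate $\phi$ for the formula ``Pavel received a gift from Sanaz''. Reading off Figure~\ref{example gift figure repeated}, the worlds satisfying $\phi$ are exactly $\{w,u\}$, and Sanaz's indistinguishability classes are $\{t\}$, $\{u,v\}$, and $\{w\}$ (she can tell $t$ apart from everything, and she cannot distinguish $u$ from $v$).

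For the direction $(\Rightarrow)$, I would argue contrapositively: if $z\notin\{w\}$, then $z\in\{t,u,v\}$, and I want to show $z\nVdash\H_s\phi$. If $z\in\{t,v\}$ then $z\nVdash\phi$ while $z\sim_s z$, so condition 6(a) of Definition~\ref{sat} fails. The remaining case $z=u$ is the interesting one: here $u\Vdash\phi$, so 6(a) needs checking against the whole class $\{u,v\}$; since $v\sim_s u$ but $v\nVdash\phi$, condition 6(a) fails again. Hence in all three cases $z\nVdash\H_s\phi$.

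For the direction $(\Leftarrow)$, suppose $z=w$ and verify conditions 6(a), 6(b), 6(c) in turn, exactly as in the previous proposition. For 6(a): the only world $z'$ with $w\sim_s z'$ is $w$ itself, and $w\Vdash\phi$. For 6(b): any $x$ with $x\nVdash\phi$ lies in $\{t,v\}$ and any $y$ with $y\Vdash\phi$ lies in $\{u,w\}$, and from the preference edges in Figure~\ref{example gift figure repeated} one reads off $\{t,v\}\prec_s\{u,w\}$, so $x\prec_s y$. (This is the same preference fact used for Pavel, since the two agents share preferences in this scenario.) For 6(c): $t\nVdash\phi$.

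The only real subtlety — and the main thing to get right — is the case $z=u$ in the $(\Rightarrow)$ direction: one must notice that although $u$ itself satisfies $\phi$, Sanaz cannot distinguish $u$ from $v$, and $v$ does not satisfy $\phi$, so Sanaz does not know $\phi$ at $u$ and therefore cannot be happy about it there. This is precisely the point that distinguishes $\H_s\phi$ (true only at $w$) from $\H_p\phi$ (true at both $w$ and $u$), and it is worth highlighting in the write-up. Everything else is a routine read-off from the figure.
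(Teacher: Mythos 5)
Your proof is correct and follows essentially the same route as the paper's: the forward direction fails condition 6(a) of Definition~\ref{sat} (using reflexivity at $t,v$ and the edge $u\sim_s v$ at $u$), and the backward direction verifies 6(a)--(c) at $w$ just as in Proposition~\ref{Hp proposition}. The only cosmetic difference is the case split (you group $\{t,v\}$ versus $\{u\}$, the paper groups $\{t\}$ versus $\{u,v\}$), which changes nothing of substance.
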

\begin{proof}
$(\Rightarrow):$ Suppose that $z\notin \{w\}$. Thus, either $z\in \{t\}$ or $z\in \{u,v\}$. We consider these two cases separately.

\vspace{1mm}
\noindent{\em Case I}: $z\in \{t\}$. Then, $z\nVdash \mbox{``Pavel received a gift from Sanaz''}$, see Figure~\ref{example gift figure repeated}. Therefore, $z\Vdash \H_s(\mbox{``Pavel received a gift from Sanaz''})$ by item 6(a) of Definition~\ref{sat} and because $z\sim_s z$. 

\vspace{1mm}
\noindent{\em Case II}: $z\in \{u,v\}$. Then, $z\sim_s v$, see Figure~\ref{example gift figure repeated}. Note that, see again Figure~\ref{example gift figure repeated},
$v\nVdash \mbox{``Pavel received a gift from Sanaz''}$. Therefore, by item 6(a) of Definition~\ref{sat}, $z\nVdash \H_s(\mbox{``Pavel received a gift from Sanaz''})$.

\vspace{1mm}
\noindent$(\Leftarrow):$ To show that $w\Vdash \H_s(\mbox{``Pavel received a gift from Sanaz''})$, we verify conditions (a), (b), and (c) of item~\ref{item H} of Definition~\ref{sat}:

\vspace{1mm}
\noindent{\em Condition a:} Consider any world $z$ such that $w\sim_s z$. It suffices to show that $z\Vdash \mbox{``Pavel received a gift from Sanaz''}$. Indeed, assumption $w\sim_s z$ implies that $z=w$, see Figure~\ref{example gift figure repeated}. Note that $w\Vdash \mbox{``Pavel received a gift from Sanaz''}$, see again Figure~\ref{example gift figure repeated}. 

\vspace{1mm}
\noindent{\em Condition b:} The proof is similar to the proof of {\em Condition b} in Proposition~\ref{Hp proposition}.

\vspace{1mm}
\noindent{\em Condition c:} $t\nVdash \mbox{``Pavel received a gift from Sanaz''}$.
\end{proof}

The next proposition shows that Sanaz is happy that Pavel is happy only if she gets the thank you card and, thus, she knows that he received the the gift.

\begin{proposition}\label{HsHp proposition}
$z\Vdash \H_s\H_p(\mbox{``Pavel received a gift from Sanaz''})$ iff $z\in\{w\}$.
\end{proposition}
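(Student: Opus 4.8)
The plan is to reduce this claim to Proposition~\ref{Hp proposition} together with the semantics of $\H_s$. Write $\gamma$ for the formula ``Pavel received a gift from Sanaz''. By Proposition~\ref{Hp proposition} we already know the exact extension of $\H_p\gamma$: it holds precisely at the worlds $w$ and $u$. So the first step is to replace the ``inner'' formula, observing that for every world $z$ we have $z\Vdash\H_p\gamma$ iff $z\in\{w,u\}$, and then to analyze $\H_s$ applied to a formula whose truth set is exactly $\{w,u\}$. In fact $\{w,u\}$ is the same truth set as that of $\gamma$ itself (both hold exactly at $w$ and $u$, see Figure~\ref{example gift figure repeated}), so by Lemma~\ref{semantic substitution lemma} one gets $z\Vdash\H_s\H_p\gamma$ iff $z\Vdash\H_s\gamma$, and the desired equivalence follows immediately from Proposition~\ref{Hs proposition}.

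If one prefers a self-contained argument rather than invoking the substitution lemma, the proof proceeds exactly as in Proposition~\ref{Hs proposition} with $\gamma$ replaced by $\H_p\gamma$. For the direction $(\Leftarrow)$, assume $z=w$ and verify the three conditions of item~6 of Definition~\ref{sat} for $\H_s\H_p\gamma$: condition (a) holds because $w\sim_s z'$ forces $z'=w$ and $w\Vdash\H_p\gamma$; condition (b) holds because any $x\nVdash\H_p\gamma$ lies in $\{t,v\}$ and any $y\Vdash\H_p\gamma$ lies in $\{w,u\}$, and $\{t,v\}\prec_s\{w,u\}$ from the diagram; condition (c) holds since $t\nVdash\H_p\gamma$. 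For the direction $(\Rightarrow)$, suppose $z\neq w$, so $z\in\{t\}$ or $z\in\{u,v\}$. If $z=t$ then $t\nVdash\H_p\gamma$ (by Proposition~\ref{Hp proposition}) while $t\sim_s t$, so condition 6(a) fails. If $z\in\{u,v\}$ then $z\sim_s v$ and $v\nVdash\H_p\gamma$, so condition 6(a) fails again. In either case $z\nVdash\H_s\H_p\gamma$.

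I do not expect any real obstacle here: the only thing to be careful about is getting the truth set of the inner formula $\H_p\gamma$ right, which is handed to us by Proposition~\ref{Hp proposition}, and then reading off the preference edges $\{t,v\}\prec_s\{w,u\}$ and the $s$-indistinguishability classes $\{t\}$, $\{u,v\}$, $\{w\}$ from Figure~\ref{example gift figure repeated}. The cleanest writeup is the first one: state that $\H_p\gamma$ and $\gamma$ have the same truth set, cite Lemma~\ref{semantic substitution lemma} to conclude $\H_s\H_p\gamma$ and $\H_s\gamma$ have the same truth set, and then quote Proposition~\ref{Hs proposition}.
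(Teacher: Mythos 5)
Your proposal is correct, and its recommended route (observe that $\H_p\gamma$ and $\gamma$ have the same truth set $\{w,u\}$ by Proposition~\ref{Hp proposition}, apply Lemma~\ref{semantic substitution lemma}, then quote Proposition~\ref{Hs proposition}) is exactly the paper's own proof. The alternative direct verification you sketch is also sound, but adds nothing beyond re-deriving what the substitution lemma already gives.
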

\begin{proof}
Note that $x\Vdash \mbox{``Pavel received a gift from Sanaz''}$ iff $x\in \{w,u\}$, see Figure~\ref{example gift figure repeated}. Thus, by Proposition~\ref{Hp proposition}, for any world $x\in W$,
\begin{eqnarray*}
&&x\Vdash \mbox{``Pavel received a gift from Sanaz''}\\ 
&&\hspace{5mm}\mbox{ iff } x\Vdash \H_p(\mbox{``Pavel received a gift from Sanaz''}).
\end{eqnarray*}
Hence, by Lemma~\ref{semantic substitution lemma}, for any world $x\in W$,
\begin{eqnarray*}
&&x\Vdash \H_s(\mbox{``Pavel received a gift from Sanaz''})\\ 
&&\hspace{5mm}\mbox{ iff } x\Vdash \H_s\H_p(\mbox{``Pavel received a gift from Sanaz''}).
\end{eqnarray*}
Therefore, $z\Vdash \H_s\H_p(\mbox{``Pavel received a gift from Sanaz''})$ iff $z\in\{w\}$ and Proposition~\ref{Hs proposition}.
\end{proof}

Note that because Sanaz never acknowledges the thank you card, Pavel does not know that Sanaz is happy. Hence, he cannot be happy that she is happy. This is captured in the next proposition. 

\begin{proposition}\label{HpHs proposition}
$z\nVdash \H_p\H_s(\mbox{``Pavel received a gift from Sanaz''})$ for each world $z\in \{w,u,v,t\}$.
\end{proposition}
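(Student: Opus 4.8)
The plan is to show that no world $z \in \{w,u,v,t\}$ satisfies $\H_p\H_s(\text{``Pavel received a gift from Sanaz''})$ by reducing the outer happiness modality to the analysis already carried out in Proposition~\ref{HsHp proposition}. First I would recall from the proof of Proposition~\ref{HsHp proposition} that, for every world $x \in W$,
$$
x\Vdash \H_s(\text{``Pavel received a gift from Sanaz''}) \quad\text{iff}\quad x \in \{w\},
$$
so the formula $\H_s(\text{``Pavel received a gift from Sanaz''})$ is true at exactly one world, namely $w$. I would therefore treat $\psi := \H_s(\text{``Pavel received a gift from Sanaz''})$ as a condition and check whether $z \Vdash \H_p\psi$ can hold for any $z$.

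The key observation is that $\H_p\psi$ requires, by item 6(a) of Definition~\ref{sat}, that $x \Vdash \psi$ for every world $x$ with $z \sim_p x$. I would split on whether $z = w$ or $z \neq w$. If $z \neq w$, then since $z \sim_p z$ and $z \nVdash \psi$ (because $\psi$ holds only at $w$), item 6(a) fails immediately, so $z \nVdash \H_p\psi$. If $z = w$, then I would use the indistinguishability structure of Figure~\ref{example gift figure repeated}: Pavel cannot distinguish $w$ from $u$, i.e. $w \sim_p u$, yet $u \nVdash \psi$ since $u \neq w$. Hence item 6(a) fails at $w$ as well, giving $w \nVdash \H_p\psi$. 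Covering all four worlds this way yields $z \nVdash \H_p\H_s(\text{``Pavel received a gift from Sanaz''})$ for every $z \in \{w,u,v,t\}$.

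I do not expect a serious obstacle here; the statement is essentially a corollary of the single-world characterization of $\H_s(\dots)$ established in the previous proposition, combined with the fact that Pavel's indistinguishability class of $w$ is $\{w,u\}$, which is not contained in $\{w\}$. The only point requiring a little care is making explicit that failure of condition 6(a) alone suffices to refute $\H_p\psi$ — there is no need to examine conditions 6(b) or 6(c). One could alternatively phrase the whole argument as: $\K_p\psi$ is false at every world because $\psi$'s extension $\{w\}$ is not a union of $\sim_p$-classes, and $\H_p\psi$ implies $\K_p\psi$ by comparing items 5 and 6(a) of Definition~\ref{sat}; this gives the same conclusion slightly more slickly.
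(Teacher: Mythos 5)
Your argument is correct and matches the paper's own proof: both reduce the claim to the fact that $\H_s(\mbox{``Pavel received a gift from Sanaz''})$ holds only at $w$ and then refute condition 6(a) of Definition~\ref{sat} at every world, using reflexivity for $z\notin\{w,u\}$ and $w\sim_p u$ for the remaining worlds (the paper splits into $z\in\{w,u\}$ versus $z\in\{v,t\}$, which is the same idea). The only slip is the citation: the single-world characterization you invoke is Proposition~\ref{Hs proposition}, not Proposition~\ref{HsHp proposition}.
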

\begin{proof}
We consider the following two cases separately:

\vspace{1mm}
\noindent{\em Case I}: $z\in \{w,u\}$. Then, $z\sim_p u$, see Figure~\ref{example gift figure repeated}. Note that by Proposition~\ref{Hs proposition},
$$u\nVdash \H_s(\mbox{``Pavel received a gift from Sanaz''}).$$ 
Therefore, $z\nVdash \H_p\H_s(\mbox{``Pavel received a gift from Sanaz''})$ by item 6(a) of Definition~\ref{sat} and the statement  $z\sim_p u$.

\vspace{1mm}
\noindent{\em Case II}: $z\in \{v,t\}$. Then,
$z\nVdash \H_s(\mbox{``Pavel received a gift from Sanaz''})$ by Proposition~\ref{Hs proposition}.
Therefore, $z\nVdash \H_p\H_s(\mbox{``Pavel received a gift from Sanaz''})$ by item 6(a) of Definition~\ref{sat}.
\end{proof}

The proof of the next statement is similar to the proof of Proposition~\ref{HpHs proposition} except that it refers to Proposition~\ref{HsHp proposition} instead of Proposition~\ref{Hs proposition}.
\begin{proposition}
$z\nVdash \H_p\H_s\H_p(\mbox{``Pavel received a gift from Sanaz''})$ for each world $z\in \{w,u,v,t\}$. \qed
\end{proposition}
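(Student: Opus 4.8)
The plan is to mirror the structure of the proof of Proposition~\ref{HpHs proposition}, replacing the one inner step that differs. Recall the target:
$$z\nVdash \H_p\H_s\H_p(\mbox{``Pavel received a gift from Sanaz''})\quad\text{for each } z\in\{w,u,v,t\}.$$
The key observation, already supplied by Proposition~\ref{HsHp proposition}, is that $\H_s\H_p(\mbox{``Pavel received a gift from Sanaz''})$ holds exactly at world $w$. So if I write $\chi$ for the formula $\H_s\H_p(\mbox{``Pavel received a gift from Sanaz''})$, then Proposition~\ref{HsHp proposition} tells us that $x\Vdash\chi$ iff $x=w$, and in particular $u\nVdash\chi$, $v\nVdash\chi$, and $t\nVdash\chi$. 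The goal is then to show $z\nVdash\H_p\chi$ for each $z\in\{w,u,v,t\}$, which follows from condition 6(a) of Definition~\ref{sat} (the knowledge requirement) once we exhibit, for each such $z$, a $p$-indistinguishable world at which $\chi$ fails.

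The argument then splits into the same two cases as in Proposition~\ref{HpHs proposition}. \emph{Case I:} $z\in\{w,u\}$. Here $z\sim_p u$ (see Figure~\ref{example gift figure repeated}), and $u\nVdash\chi$ by Proposition~\ref{HsHp proposition}; hence condition 6(a) of Definition~\ref{sat} fails for $\H_p\chi$ at $z$, so $z\nVdash\H_p\chi$. \emph{Case II:} $z\in\{v,t\}$. Here $z\nVdash\chi$ directly by Proposition~\ref{HsHp proposition}, and since $z\sim_p z$, condition 6(a) fails for $\H_p\chi$ at $z$, so again $z\nVdash\H_p\chi$. Together the two cases cover all of $\{w,u,v,t\}=W$, which completes the proof.

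I do not anticipate any real obstacle: the whole point of stating this proposition after Proposition~\ref{HsHp proposition} is that the latter collapses $\H_s\H_p(\cdots)$ to something true only at $w$, and then the reasoning about $\H_p$ failing everywhere is word-for-word the reasoning used for $\H_p\H_s(\cdots)$ in Proposition~\ref{HpHs proposition} (Pavel cannot distinguish $u$ from $w$, and $u$ is not a world where Sanaz's happiness — now iterated — holds). The only thing to be slightly careful about is citing Proposition~\ref{HsHp proposition} rather than Proposition~\ref{Hs proposition} at the two places where the earlier proof invoked the latter; this is exactly the substitution the paper flags in the sentence preceding the statement. No new lemmas or computations are needed beyond reading off the indistinguishability edges for $p$ from the figure.
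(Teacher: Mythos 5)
Your proof is correct and is exactly the argument the paper intends: it mirrors the two-case proof of Proposition~\ref{HpHs proposition}, substituting Proposition~\ref{HsHp proposition} for Proposition~\ref{Hs proposition}, which is precisely the modification the paper indicates in the sentence preceding the statement. No gaps.
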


The next proposition states that Sanaz is sad about Pavel not receiving the gift only if she does not send it. Informally, this proposition is true because Sanaz cannot distinguish world $v$ in which the gift is lost from world $u$ in which the card is lost.

\begin{proposition}\label{Ss proposition}
$z\Vdash \S_s\neg(\mbox{``Pavel received a gift from Sanaz''})$ iff $z\in \{t\}$.
\end{proposition}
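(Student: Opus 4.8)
The plan is to abbreviate by $\phi$ the formula $\neg(\mbox{``Pavel received a gift from Sanaz''})$ and to record first, by inspecting Figure~\ref{example gift figure repeated}, that $x\Vdash\phi$ holds exactly for $x\in\{t,v\}$. Since the proposition is an ``iff'', I would prove the two directions separately, and the bookkeeping will largely mirror Proposition~\ref{Hp proposition} and Proposition~\ref{Hs proposition}.

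For the direction $(\Leftarrow)$ I would assume $z=t$ and verify the three conditions of item~7 of Definition~\ref{sat}. For condition~(a), the only world $z'$ with $t\sim_s z'$ is $t$ itself (Sanaz, having sent nothing, distinguishes $t$ from every other world), and $t\Vdash\phi$. For condition~(b), take arbitrary worlds $x,y$ with $x\Vdash\phi$ and $y\nVdash\phi$; then $x\in\{t,v\}$ and $y\in\{u,w\}$, and since $\{t,v\}\prec_s\{w,u\}$ (this is the same preference fact used in {\em Condition b} of Proposition~\ref{Hp proposition}, recalling that Sanaz and Pavel share a preference relation), we get $x\prec_s y$. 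For condition~(c), note $u\nVdash\phi$ because Pavel does receive the gift in world $u$.

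For the direction $(\Rightarrow)$ I would argue by contraposition: assume $z\neq t$, so $z\in\{v,u,w\}$, and in each case exhibit a world $z'$ with $z\sim_s z'$ yet $z'\nVdash\phi$, which defeats condition~(a) of item~7 and hence yields $z\nVdash\S_s\phi$. Concretely, if $z\in\{u,v\}$ take $z'=u$ (using $u\sim_s v$), and if $z=w$ take $z'=w$; in both cases $z'\nVdash\phi$ by the reading of $\phi$ above.

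This is essentially a routine model-checking verification, so I do not expect a genuine obstacle. The only points needing a little care are getting the direction of the preference edges right when checking condition~(b) — here it is convenient simply to reuse the computation already done in Proposition~\ref{Hp proposition} — and remembering that the non-triviality condition~(c) asserts the existence of some $\phi$-falsifying world in the model and is therefore independent of $z$.
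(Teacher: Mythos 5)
Your proposal is correct and follows essentially the same route as the paper's proof: verify conditions 7(a)--(c) at $t$ using the preference fact $\{t,v\}\prec_s\{w,u\}$, and refute condition 7(a) at every other world via a world Sanaz cannot distinguish (only cosmetic differences, e.g.\ using $u$ rather than $w$ as the witness for condition (c) and phrasing the forward direction as a contraposition).
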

\begin{proof}
$(\Rightarrow):$ Suppose that $z\notin \{t\}$. Thus, either $z\in \{w\}$ or $z\in\{u,v\}$. We consider the these two cases separately:

\vspace{1mm}
\noindent{\em Case I}: $z\in \{w\}$. Then, $z\Vdash \mbox{``Pavel received a gift from Sanaz''}$, see Figure~\ref{example gift figure repeated}. Thus, $z\nVdash \neg(\mbox{``Pavel received a gift from Sanaz''})$ by item 2 of Definition~\ref{sat}. Therefore, $z\nVdash \S_s\neg(\mbox{``Pavel received a gift from Sanaz''})$ by item 7(a) of Definition~\ref{sat}.

\vspace{1mm}
\noindent{\em Case II}: $z\in\{u,v\}$. Then, $z\sim_s u$, see Figure~\ref{example gift figure repeated}. Note that  
$$u\Vdash \mbox{``Pavel received a gift from Sanaz''},$$
see Figure~\ref{example gift figure repeated}.
Thus, $u\nVdash \neg(\mbox{``Pavel received a gift from Sanaz''})$ by item 2 of Definition~\ref{sat}. Therefore, $z\nVdash \S_s\neg(\mbox{``Pavel received a gift from Sanaz''})$ by item 7(a) of Definition~\ref{sat} and the statement $z\sim_s u$.

\vspace{1mm}
\noindent$(\Leftarrow):$ To prove that $t\Vdash \S_s\neg(\mbox{``Pavel received a gift from Sanaz''})$, we verify conditions (a), (b), and (c) of item 7 in Definition~\ref{sat} separately:

\vspace{1mm}
\noindent{\em Condition a:} Consider any world $z'$ such that $t\sim_s z'$. It suffices to show that $z'\Vdash \neg(\mbox{``Pavel received a gift from Sanaz''})$. 
Indeed, note that $$t\nVdash (\mbox{``Pavel received a gift from Sanaz''}),$$ see Figure~\ref{example gift figure repeated}. Then, $t\Vdash \neg(\mbox{``Pavel received a gift from Sanaz''})$ by item 2 of Definition~\ref{sat}. Also note that the assumption $t\sim_s z'$ implies that $t=z'$, see Figure~\ref{example gift figure repeated}. Therefore, $z'\Vdash \neg(\mbox{``Pavel received a gift from Sanaz''})$.

\vspace{1mm}
\noindent{\em Condition b:} Consider any two epistemic worlds $x,y$ such that 
\begin{eqnarray*}
    &&x\Vdash \neg(\mbox{``Pavel received a gift from Sanaz''}),\\
    &&y\nVdash \neg(\mbox{``Pavel received a gift from Sanaz''}).
\end{eqnarray*}
To verify the condition, it suffices to show that $x\prec_s y$. Indeed, 
by item 2 of Definition~\ref{sat},
\begin{eqnarray*}
    &&x\nVdash \mbox{``Pavel received a gift from Sanaz''},\\
    &&y\Vdash \mbox{``Pavel received a gift from Sanaz''}.
\end{eqnarray*}
Thus, $x\in \{t,v\}$ and $y\in \{w,u\}$, see Figure~\ref{example gift figure repeated}. Note that $\{t,v\}\prec_s \{w,u\}$, see also Figure~\ref{example gift figure repeated}. Therefore, $x\prec_s y$.

\vspace{1mm}
\noindent{\em Condition c:} Note that $w\Vdash \mbox{``Pavel received a gift from Sanaz''}$. Therefore,
$w\nVdash \neg(\mbox{``Pavel received a gift from Sanaz''})$ by item 2 of Definition~\ref{sat}.
\end{proof}

By Proposition~\ref{Ss proposition}, Sanaz is sad about Pavel not receiving the gift only if she does not send it. Since Pavel cannot distinguish world $t$ in which the gift is sent from world $v$ in which it is lost, Pavel cannot know that Sanaz is sad. This is formally captured in the next proposition.

\begin{proposition}
$z\nVdash \K_p\S_s\neg(\mbox{``Pavel received a gift from Sanaz''})$ for each world $z\in \{w,u,t,v\}$.
\end{proposition}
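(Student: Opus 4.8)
The plan is to leverage Proposition~\ref{Ss proposition}, which tells us exactly in which worlds the formula $\S_s\neg(\text{``Pavel received a gift from Sanaz''})$ holds, namely only in world $t$. Since $\K_p$ is interpreted by universal quantification over the $\sim_p$-equivalence class of the current world (item~\ref{item K} of Definition~\ref{sat}), to show $z\nVdash\K_p\S_s\neg(\cdots)$ it suffices to exhibit, for each world $z$, some world $z'$ with $z\sim_p z'$ and $z'\nVdash\S_s\neg(\cdots)$. By Proposition~\ref{Ss proposition}, any world other than $t$ will serve as such a witness $z'$.

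First I would split into cases according to the value of $z$. For $z\in\{u,w\}$, note from Figure~\ref{example gift figure repeated} that $z\sim_p u$, and $u\neq t$, so $u\nVdash\S_s\neg(\cdots)$ by Proposition~\ref{Ss proposition}; hence $z\nVdash\K_p\S_s\neg(\cdots)$ by item~\ref{item K} of Definition~\ref{sat}. For $z\in\{t,v\}$, note that $z\sim_p v$, and $v\neq t$, so $v\nVdash\S_s\neg(\cdots)$ by Proposition~\ref{Ss proposition}; hence again $z\nVdash\K_p\S_s\neg(\cdots)$. Since every world of the model lies in $\{t,u,v,w\}$, this exhausts all cases. (One could equivalently observe that world $t$ itself is indistinguishable by Pavel from world $v$, which is the crux of the informal explanation given before the proposition.)

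The argument is essentially routine once Proposition~\ref{Ss proposition} is in hand; there is no real obstacle. The only point requiring a small amount of care is making sure the chosen $\sim_p$-witness is genuinely distinct from $t$ in each case — this is immediate from the dashed $p$-edges in Figure~\ref{example gift figure repeated}, which pair $t$ with $v$ and $u$ with $w$. In particular, for the world $t$ one must resist the temptation to use $z'=t$ (which does satisfy the formula); the correct witness is $v$.
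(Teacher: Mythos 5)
Your proposal is correct and follows essentially the same two-case argument as the paper, relying on Proposition~\ref{Ss proposition} and item 5 of Definition~\ref{sat}; the only cosmetic difference is that for $z\in\{w,u\}$ the paper uses the failure of $\S_s\neg(\cdots)$ at $z$ itself (via reflexivity of $\sim_p$) rather than naming $u$ as the witness. Both variants are fine.
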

\begin{proof}
We consider the following two cases separately:

\vspace{1mm}
\noindent{\em Case I:} $z\in\{w,u\}$. Then, $z\nVdash \S_s\neg(\mbox{``Pavel received a gift from Sanaz''})$ by Proposition~\ref{Ss proposition}. Therefore, $z\nVdash \K_p\S_s\neg(\mbox{``Pavel received a gift from Sanaz''})$ by item 5 of Definition~\ref{sat}.

\vspace{1mm}
\noindent{\em Case II:} $z\in\{t,v\}$. Then, $z\sim_p v$, see Figure~\ref{example gift figure repeated}. Note that 
$$v\nVdash \S_s\neg(\mbox{``Pavel received a gift from Sanaz''})$$ by Proposition~\ref{Ss proposition}. Therefore, $z\nVdash \K_p\S_s\neg(\mbox{``Pavel received a gift from Sanaz''})$ by item 5 of Definition~\ref{sat} and the statement $z\sim_p v$.
\end{proof}

\begin{proposition}\label{Sp proposition}
$z\Vdash \S_p\neg(\mbox{``Pavel received a gift from Sanaz''})$ iff $z\in \{v,t\}$.
\end{proposition}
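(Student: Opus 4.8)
The plan is to mirror the structure of the proof of Proposition~\ref{Hp proposition}, using the fact that ``Pavel received a gift from Sanaz'' fails exactly at worlds $t$ and $v$, so $\neg(\mbox{``Pavel received a gift from Sanaz''})$ is true exactly at $\{t,v\}$ and false exactly at $\{w,u\}$. For the direction $(\Leftarrow)$ I would fix $z\in\{v,t\}$ and verify the three conditions of item~7 of Definition~\ref{sat} for the formula $\psi:=\neg(\mbox{``Pavel received a gift from Sanaz''})$. For condition~7(a): if $z=t$ then $z\sim_p z'$ implies $z'\in\{t,v\}$, and if $z=v$ then $z\sim_p z'$ also implies $z'\in\{t,v\}$; in either case $z'\Vdash\psi$ by item~2 of Definition~\ref{sat}, reading the indistinguishability edges from Figure~\ref{example gift figure repeated}. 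For condition~7(b): given $x\Vdash\psi$ and $y\nVdash\psi$, item~2 of Definition~\ref{sat} gives $x\nVdash(\mbox{``Pavel received a gift from Sanaz''})$ and $y\Vdash(\mbox{``Pavel received a gift from Sanaz''})$, so $x\in\{t,v\}$ and $y\in\{w,u\}$; since $\{t,v\}\prec_p\{w,u\}$ by Figure~\ref{example gift figure repeated}, we get $x\prec_p y$ — essentially identical to condition~b in Proposition~\ref{Hp proposition}. Condition~7(c) is immediate: $w\Vdash(\mbox{``Pavel received a gift from Sanaz''})$, hence $w\nVdash\psi$ by item~2, so a world falsifying $\psi$ exists.

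For the direction $(\Rightarrow)$ I would argue the contrapositive: suppose $z\notin\{v,t\}$, so $z\in\{w,u\}$. In this case, reading Figure~\ref{example gift figure repeated}, we have $z\sim_p u$ (Pavel cannot distinguish $u$ from $w$, and $u\sim_p u$). But $u\Vdash(\mbox{``Pavel received a gift from Sanaz''})$, so $u\nVdash\psi$ by item~2 of Definition~\ref{sat}. Hence condition~7(a) fails for $z$, and therefore $z\nVdash\S_p\psi$. This gives the two cases $z=w$ and $z=u$ uniformly, so no case split is really needed beyond observing $\{w,u\}$ is the complement of $\{v,t\}$ among the four worlds.

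I do not expect any genuine obstacle: every step reduces to reading the indistinguishability and preference edges off the diagram in Figure~\ref{example gift figure repeated}, exactly as in the preceding propositions. The only mild subtlety is bookkeeping in condition~7(a) of the $(\Leftarrow)$ direction — one must check that \emph{both} $z=t$ and $z=v$ have all their $\sim_p$-neighbours inside $\{t,v\}$ (equivalently, that the $p$-indistinguishability class $\{t,v\}$ is closed), which is exactly what Figure~\ref{example gift figure repeated} shows — and making sure to translate between $\psi$ and $(\mbox{``Pavel received a gift from Sanaz''})$ via item~2 of Definition~\ref{sat} at each use. Given the near-duplication with Proposition~\ref{Hp proposition}, I would keep the write-up terse, citing ``the proof is similar to \dots Proposition~\ref{Hp proposition}'' for condition~b as the authors did elsewhere.
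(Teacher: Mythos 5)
Your proposal is correct and follows essentially the same argument as the paper: verify conditions 7(a)–(c) for $z\in\{v,t\}$ using the indistinguishability class $\{t,v\}$ and the preference $\{t,v\}\prec_p\{w,u\}$, and refute condition 7(a) when $z\in\{w,u\}$. The only cosmetic difference is that in the $(\Rightarrow)$ direction the paper uses $z$ itself (via reflexivity of $\sim_p$) as the world where $\neg(\mbox{``Pavel received a gift from Sanaz''})$ fails, while you route through $z\sim_p u$; both are fine.
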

\begin{proof}
$(\Rightarrow):$ Suppose that $z\notin \{v,t\}$. Thus, $z\in\{w,u\}$. Hence, see Figure~\ref{example gift figure repeated}, $z\Vdash \mbox{``Pavel received a gift from Sanaz''}$. Then, by item 2 of Definition~\ref{sat},
$$z\nVdash \neg(\mbox{``Pavel received a gift from Sanaz''}).$$
Therefore, $z\nVdash \S_p\neg(\mbox{``Pavel received a gift from Sanaz''})$ by item 7(a) of Definition~\ref{sat}. 

\vspace{1mm}
\noindent$(\Leftarrow):$ Suppose that $z\in \{v,t\}$. We verify conditions (a), (b), and (c) from item 7 of Definition~\ref{sat} to prove that
$z\Vdash \S_p\neg(\mbox{``Pavel received a gift from Sanaz''})$:

\vspace{1mm}
\noindent{\em Condition a:} Consider any world $z'$ such that $z\sim_p z'$. It suffices to show that $z'\Vdash \neg(\mbox{``Pavel received a gift from Sanaz''})$. Indeed, the assumptions  $z\in \{v,t\}$ and  $z\sim_p z'$ imply that $z'\in \{v,t\}$, see Figure~\ref{example gift figure repeated}. Thus, see again Figure~\ref{example gift figure repeated}, $z'\nVdash \mbox{``Pavel received a gift from Sanaz''}$. Therefore, by item 2 of Definition~\ref{sat}, $z'\Vdash \neg(\mbox{``Pavel received a gift from Sanaz''})$. 

\vspace{1mm}
\noindent{\em Condition b:} The proof is similar to the proof of {\em Condition b} in Proposition~\ref{Ss proposition}.

\vspace{1mm}
\noindent{\em Condition c:} Note that $w\Vdash \mbox{``Pavel received a gift from Sanaz''}$. Therefore,
$w\nVdash \neg(\mbox{``Pavel received a gift from Sanaz''})$ by item 2 of Definition~\ref{sat}.
\end{proof}

\begin{proposition}\label{SsSp proposition}
$z\Vdash \S_s\S_p\neg(\mbox{``Pavel received a gift from Sanaz''})$ iff $z\in \{t\}$.
\end{proposition}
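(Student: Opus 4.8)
The plan is to recycle the argument used for Proposition~\ref{HsHp proposition}, exploiting Lemma~\ref{semantic substitution lemma}. Write $\phi$ for the formula $\neg(\mbox{``Pavel received a gift from Sanaz''})$. The first observation is purely about the diagram in Figure~\ref{example gift figure repeated}: the atomic statement ``Pavel received a gift from Sanaz'' holds exactly in the worlds $w$ and $u$, so by item~2 of Definition~\ref{sat} we have $x\Vdash\phi$ iff $x\in\{v,t\}$ for every world $x\in W$. On the other hand, Proposition~\ref{Sp proposition} tells us that $x\Vdash\S_p\phi$ iff $x\in\{v,t\}$ as well. Hence $x\Vdash\phi$ iff $x\Vdash\S_p\phi$ for each world $x\in W$, i.e.\ the formulae $\phi$ and $\S_p\phi$ are satisfied in exactly the same worlds of this model.

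Next I would feed this equivalence into Lemma~\ref{semantic substitution lemma}, applied with agent $s$ and with the pair of formulae $\phi$ and $\S_p\phi$: since these two formulae hold in the same worlds, part~2 of that lemma yields $x\Vdash\S_s\phi$ iff $x\Vdash\S_s\S_p\phi$ for every world $x\in W$. Finally, Proposition~\ref{Ss proposition} identifies the worlds satisfying $\S_s\phi$, namely $\{t\}$. Chaining the two equivalences gives $z\Vdash\S_s\S_p\phi$ iff $z\Vdash\S_s\phi$ iff $z\in\{t\}$, which is exactly the claim.

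There is essentially no obstacle here: the only thing to be careful about is the bookkeeping of which worlds satisfy the atomic proposition versus its negation, and making sure the hypothesis of Lemma~\ref{semantic substitution lemma} is invoked with the right pair of formulae (it is the equivalence $\phi\leftrightarrow\S_p\phi$ across all worlds, not $\phi\leftrightarrow\phi$, that does the work). No direct verification of conditions (a), (b), (c) of item~7 of Definition~\ref{sat} is needed, since everything is routed through the already-established Propositions~\ref{Ss proposition} and~\ref{Sp proposition}. If one wanted a self-contained proof instead, one could alternatively verify the three conditions for $\S_s\S_p\phi$ directly at $t$ and refute them elsewhere, but the substitution-lemma route is shorter and is the one I would present.
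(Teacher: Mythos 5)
Your proposal is correct and follows exactly the paper's own argument: establish that $\neg(\mbox{``Pavel received a gift from Sanaz''})$ and $\S_p\neg(\mbox{``Pavel received a gift from Sanaz''})$ hold in the same worlds (via Proposition~\ref{Sp proposition}), apply Lemma~\ref{semantic substitution lemma} to replace the inner formula under $\S_s$, and conclude with Proposition~\ref{Ss proposition}. No gaps, and nothing differs in substance from the paper's proof.
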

\begin{proof}
Note that $x\nVdash \mbox{``Pavel received a gift from Sanaz''}$ iff $x\in \{t,v\}$, see Figure~\ref{example gift figure repeated}. Thus, $x\Vdash\neg( \mbox{``Pavel received a gift from Sanaz''})$ iff $x\in \{t,v\}$ by item 2 of Definition~\ref{sat}.
Thus, by Proposition~\ref{Sp proposition}, for any world $x\in W$,
\begin{eqnarray*}
&&x\Vdash \neg(\mbox{``Pavel received a gift from Sanaz''})\\ 
&&\hspace{5mm}\mbox{ iff } x\Vdash \S_p\neg(\mbox{``Pavel received a gift from Sanaz''}).
\end{eqnarray*}
Hence, by Lemma~\ref{semantic substitution lemma}, for any world $z\in W$,
\begin{eqnarray*}
&&z\Vdash \S_s\neg(\mbox{``Pavel received a gift from Sanaz''})\\ 
&&\hspace{5mm}\mbox{ iff } z\Vdash \S_s\S_p\neg(\mbox{``Pavel received a gift from Sanaz''}).
\end{eqnarray*}

Therefore, $z\Vdash \S_s\S_p\neg(\mbox{``Pavel received a gift from Sanaz''})$ iff $z\in\{t\}$  by Proposition~\ref{Ss proposition}.
\end{proof}

\begin{proposition}
$z\nVdash \K_p\S_s\S_p\neg(\mbox{``Pavel received a gift from Sanaz''})$ for each world $z\in \{w,u,v,t\}$.
\end{proposition}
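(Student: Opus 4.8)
The plan is to reduce the claim to Proposition~\ref{SsSp proposition}, which tells us exactly where the inner formula holds: $x\Vdash \S_s\S_p\neg(\mbox{``Pavel received a gift from Sanaz''})$ iff $x\in\{t\}$. Since $\K_p$ quantifies over the $\sim_p$-class of the current world, it suffices to observe that \emph{every} $\sim_p$-equivalence class contains some world outside $\{t\}$, and then invoke item~5 of Definition~\ref{sat} together with the fact that $\sim_p$ is reflexive (item~2 of Definition~\ref{epistemic model with preferences}).

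Concretely, I would split into the two cases dictated by Pavel's indistinguishability classes, $\{u,w\}$ and $\{t,v\}$, mirroring the proof of Proposition~\ref{HpHs proposition}. \emph{Case I:} $z\in\{w,u\}$. Then $z\notin\{t\}$, so $z\nVdash \S_s\S_p\neg(\mbox{``Pavel received a gift from Sanaz''})$ by Proposition~\ref{SsSp proposition}; since $z\sim_p z$, item~5 of Definition~\ref{sat} gives $z\nVdash \K_p\S_s\S_p\neg(\mbox{``Pavel received a gift from Sanaz''})$. \emph{Case II:} $z\in\{v,t\}$. Then $z\sim_p v$, see Figure~\ref{example gift figure repeated}, and $v\notin\{t\}$, so $v\nVdash \S_s\S_p\neg(\mbox{``Pavel received a gift from Sanaz''})$ by Proposition~\ref{SsSp proposition}; hence $z\nVdash \K_p\S_s\S_p\neg(\mbox{``Pavel received a gift from Sanaz''})$ by item~5 of Definition~\ref{sat} and the statement $z\sim_p v$.

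There is no real obstacle here: the argument is entirely routine once Proposition~\ref{SsSp proposition} is in hand. The only point worth flagging is the conceptual one — the inner formula is true solely in world $t$, but Pavel cannot distinguish $t$ from $v$, so he can never \emph{know} it. If anything, the mild subtlety is making sure that the case split genuinely covers all four worlds and that in each $\sim_p$-class one picks a witness world where the inner formula fails (namely $v$ for the class $\{t,v\}$, and the world $z$ itself for the class $\{u,w\}$, using reflexivity).
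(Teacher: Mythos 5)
Your proposal is correct and follows essentially the same route as the paper's own proof: the same case split on Pavel's indistinguishability classes $\{w,u\}$ and $\{t,v\}$, the same appeal to Proposition~\ref{SsSp proposition} to produce a world in each class where the inner formula fails, and the same use of item~5 of Definition~\ref{sat}.
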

\begin{proof}
We consider the following two cases separately:

\vspace{1mm}
\noindent{\em Case I}: $z\in \{w,u\}$. Then, $z\nVdash \S_s\S_p\neg(\mbox{``Pavel received a gift from Sanaz''})$ by Proposition~\ref{SsSp proposition}. Therefore, $z\nVdash \K_p\S_s\S_p\neg(\mbox{``Pavel received a gift from Sanaz''})$ by item 5 of Definition~\ref{sat}.

\noindent{\em Case II}: $z\in \{t,v\}$. Then, $z\sim_p v$, see Figure~\ref{example gift figure repeated}. Note that, by Proposition~\ref{SsSp proposition}, $v\nVdash \S_s\S_p\neg(\mbox{``Pavel received a gift from Sanaz''})$. Therefore, by item 5 of Definition~\ref{sat}, $z\nVdash \K_p\S_s\S_p\neg(\mbox{``Pavel received a gift from Sanaz''})$. 
\end{proof}

\section{The Battle of Cuisines Scenario}\label{The Battle of Cuisines Scenario section}

In this section we illustrate Definition~\ref{sat} using a scenario based on a classical strategic game. 
\begin{table}[ht]
\begin{center}
\begin{tabular}{ ccc }
\toprule
     & Iranian & Russian\\ \midrule
  Iranian  & 1,3 & 0,0 \\ 
  Russian  & 0,0 & 3,1 \\
 \bottomrule
\end{tabular}
\caption{The Battle of Cuisines. Sanaz is the first player, Pavel is the second.}\label{example table figure}
\end{center}
\end{table}
Suppose that the two co-authors finally met and are deciding on a restaurant to have dinner. Sanaz, being Iranian, wants to explore Russian cuisine, while Pavel would prefer to have dinner in an Iranian restaurant. The epistemic worlds in this model are pairs $(r_s,r_p)$ of restaurants choices made by Sanaz and Pavel respectively, where $r_s,r_p\in\{\mbox{Iranian},\mbox{Russian}\}$. We will consider the situation after they both arrived to a restaurant and thus each of them already knows the choice made by the other. Hence, both of them can distinguish all epistemic worlds. In other words, this is a {\em perfect information} scenario. We specify the preference relations of Sanaz and Pavel through their respective utility functions $u_s$ and $u_p$ captured in Table~\ref{example table figure}. For example, $(\mbox{Russian},\mbox{Iranian})\prec_s(\mbox{Iranian},\mbox{Iranian})$ because the value of Sanaz's utility function in world $(\mbox{Iranian},\mbox{Iranian})$  is larger than in world $(\mbox{Russian},\mbox{Iranian})$:
\begin{eqnarray*}
    &&u_s(\mbox{Iranian},\mbox{Iranian})=1,\\
    &&u_s(\mbox{Russian},\mbox{Iranian})=0.
\end{eqnarray*}

\begin{proposition}
$$(\mathrm{Russian},\mathrm{Russian})\nVdash \H_p(\mbox{``Pavel is in the Russian restaurant''}).$$
\end{proposition}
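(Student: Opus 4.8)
The plan is to show that the world $(\mathrm{Russian},\mathrm{Russian})$ violates condition (b) of item~6 of Definition~\ref{sat} for the formula $\phi = $ ``Pavel is in the Russian restaurant''. First I would fix the extension of $\phi$: since $\phi$ only asserts that Pavel's choice is Russian, it is satisfied exactly in the worlds $(\mathrm{Iranian},\mathrm{Russian})$ and $(\mathrm{Russian},\mathrm{Russian})$, and it fails in $(\mathrm{Iranian},\mathrm{Iranian})$ and $(\mathrm{Russian},\mathrm{Iranian})$.

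Next I would recall that, by the assumption of this section, Pavel's preference relation $\prec_p$ is the one induced by his utility function $u_p$ from Table~\ref{example table figure}, i.e. $x \prec_p y$ iff $u_p(x) < u_p(y)$. Then I would take as a witnessing pair the worlds $u = (\mathrm{Iranian},\mathrm{Iranian})$, which does not satisfy $\phi$, and $u' = (\mathrm{Russian},\mathrm{Russian})$, which satisfies $\phi$. Condition 6(b) of Definition~\ref{sat} would require $u \prec_p u'$, that is $u_p(\mathrm{Iranian},\mathrm{Iranian}) < u_p(\mathrm{Russian},\mathrm{Russian})$; but Table~\ref{example table figure} gives $u_p(\mathrm{Iranian},\mathrm{Iranian}) = 3$ and $u_p(\mathrm{Russian},\mathrm{Russian}) = 1$, so $u \nprec_p u'$. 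Hence condition 6(b) fails at $(\mathrm{Russian},\mathrm{Russian})$, and therefore $(\mathrm{Russian},\mathrm{Russian}) \nVdash \H_p\phi$.

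There is essentially no obstacle here; the only points requiring care are reading the utilities off the table in the correct order (Sanaz is the row/first player, Pavel the column/second player) and remembering that the relevant relation is $\prec_p$, coming from $u_p$ rather than $u_s$. Note that one must choose the witness $(\mathrm{Iranian},\mathrm{Iranian})$ specifically: the other $\phi$-failing world, $(\mathrm{Russian},\mathrm{Iranian})$, has $u_p = 0 < 1$ and so would not refute condition 6(b). Informally, the statement holds because Pavel is sitting in his less-preferred restaurant in $(\mathrm{Russian},\mathrm{Russian})$, so he cannot be happy about being there even though he knows that he is.
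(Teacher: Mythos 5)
Your proof is correct and follows essentially the same route as the paper: both refute condition 6(b) of Definition~\ref{sat} using the witnessing pair $(\mathrm{Iranian},\mathrm{Iranian})$ (where the statement fails) and $(\mathrm{Russian},\mathrm{Russian})$ (where it holds), noting that $u_p(\mathrm{Iranian},\mathrm{Iranian})=3>1=u_p(\mathrm{Russian},\mathrm{Russian})$, so $(\mathrm{Iranian},\mathrm{Iranian})\nprec_p(\mathrm{Russian},\mathrm{Russian})$. The extra remarks about the extension of the formula and the choice of witness are accurate but not needed beyond what the paper does.
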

\begin{proof} Note that
\begin{eqnarray*}
    && (\mathrm{Iranian},\mathrm{Iranian})\nVdash \mbox{``Pavel is in the Russian restaurant''},\\
    && (\mathrm{Russian},\mathrm{Russian})\Vdash \mbox{``Pavel is in the Russian restaurant''}.
\end{eqnarray*}
At the same time, see Table~\ref{example table figure},
$$
u_p(\mathrm{Russian},\mathrm{Russian})=1<3=u_p(\mathrm{Iranian},\mathrm{Iranian}).
$$
Hence, $(\mathrm{Iranian},\mathrm{Iranian})\nprec_p(\mathrm{Russian},\mathrm{Russian})$. 
Therefore, by item 6(b) of Definition~\ref{sat}, $(\mathrm{Russian},\mathrm{Russian})\nVdash \H_p(\mbox{``Pavel is in the Russian restaurant''})$.
\end{proof}

Note that Sanaz prefers world $(\mathrm{Russian},\mathrm{Russian})$ to any other world. This, however, does not mean that she is happy about everything in this world. The next two propositions illustrate this.

\begin{proposition}\label{oct19-rr}
$$(\mathrm{Russian},\mathrm{Russian})\nVdash \H_s(\mbox{``Sanaz is in the Russian restaurant''}).$$
\end{proposition}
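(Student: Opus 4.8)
The plan is to refute condition 6(b) of Definition~\ref{sat} for the formula $\phi = $ ``Sanaz is in the Russian restaurant'' at the world $(\mathrm{Russian},\mathrm{Russian})$. Note that this cannot be done through condition 6(a) or 6(c): since the scenario is perfect information, $(\mathrm{Russian},\mathrm{Russian})\sim_s u$ only for $u=(\mathrm{Russian},\mathrm{Russian})$ itself, and clearly $(\mathrm{Russian},\mathrm{Russian})\Vdash\phi$, so 6(a) holds; and $(\mathrm{Iranian},\mathrm{Iranian})\nVdash\phi$, so 6(c) holds as well. Hence the argument must come from the preference clause 6(b).

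First I would record which worlds satisfy $\phi$: exactly $(\mathrm{Russian},\mathrm{Iranian})$ and $(\mathrm{Russian},\mathrm{Russian})$, while $(\mathrm{Iranian},\mathrm{Iranian})$ and $(\mathrm{Iranian},\mathrm{Russian})$ do not. Then I would take the witness pair $u=(\mathrm{Iranian},\mathrm{Iranian})$ and $u'=(\mathrm{Russian},\mathrm{Iranian})$, so that $u\nVdash\phi$ and $u'\Vdash\phi$. From Table~\ref{example table figure} we read $u_s(\mathrm{Russian},\mathrm{Iranian})=0<1=u_s(\mathrm{Iranian},\mathrm{Iranian})$, so $u\nprec_s u'$. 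This contradicts what condition 6(b) would require, namely $u\prec_s u'$.

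Therefore $(\mathrm{Russian},\mathrm{Russian})\nVdash\H_s\phi$ by item 6(b) of Definition~\ref{sat}. The structure is parallel to the preceding proposition about $\H_p$; the only point that needs a moment of attention is picking the correct witness pair, i.e.\ a $\phi$-world whose $u_s$-value is strictly below that of some non-$\phi$-world — here $(\mathrm{Russian},\mathrm{Iranian})$ with utility $0$ against $(\mathrm{Iranian},\mathrm{Iranian})$ with utility $1$. There is no real obstacle beyond this bookkeeping.
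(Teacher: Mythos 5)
Your proposal is correct and matches the paper's own proof: both refute condition 6(b) of Definition~\ref{sat} using exactly the witness pair $(\mathrm{Iranian},\mathrm{Iranian})$ and $(\mathrm{Russian},\mathrm{Iranian})$ and the utility comparison $u_s(\mathrm{Russian},\mathrm{Iranian})=0<1=u_s(\mathrm{Iranian},\mathrm{Iranian})$. The preliminary observation that conditions 6(a) and 6(c) hold is extra but harmless.
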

\begin{proof}
Note that
\begin{eqnarray*}
    && (\mathrm{Iranian},\mathrm{Iranian})\nVdash \mbox{``Sanaz is in the Russian restaurant''},\\
    && (\mathrm{Russian},\mathrm{Iranian})\Vdash \mbox{``Sanaz is in the Russian restaurant''}.
\end{eqnarray*}
At the same time, see Table~\ref{example table figure},
$$
u_s(\mathrm{Russian},\mathrm{Iranian})=0<1=u_s(\mathrm{Iranian},\mathrm{Iranian}).
$$
Hence, $(\mathrm{Iranian},\mathrm{Iranian})\nprec_s(\mathrm{Russian},\mathrm{Iranian})$. 
Therefore, by item 6(b) of Definition~\ref{sat}, $(\mathrm{Russian},\mathrm{Russian})\nVdash \H_s(\mbox{``Sanaz is in the Russian restaurant''})$.
\end{proof}

\begin{proposition}\label{sept25 Hs}
$$(x,y)\Vdash \H_s(\mbox{``Sanaz and Pavel are in the same restaurant''})$$ iff $x=y$.
\end{proposition}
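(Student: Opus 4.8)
The plan is to abbreviate by $\phi$ the statement ``Sanaz and Pavel are in the same restaurant'' and to recall the structure of this model: it has exactly the four worlds $(\mathrm{Iranian},\mathrm{Iranian})$, $(\mathrm{Iranian},\mathrm{Russian})$, $(\mathrm{Russian},\mathrm{Iranian})$, $(\mathrm{Russian},\mathrm{Russian})$; it is a perfect information scenario, so $\sim_s$ is the identity relation; and $(a,b)\Vdash\phi$ iff $a=b$. Hence the $\phi$-worlds are $(\mathrm{Iranian},\mathrm{Iranian})$ and $(\mathrm{Russian},\mathrm{Russian})$, while $(\mathrm{Iranian},\mathrm{Russian})$ and $(\mathrm{Russian},\mathrm{Iranian})$ are the worlds where $\phi$ fails.

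For the direction $(\Leftarrow)$, assuming $x=y$ I would verify conditions (a), (b), (c) of item~6 of Definition~\ref{sat} for $(x,y)\Vdash\H_s\phi$. Condition (a) is immediate: the only world $\sim_s$-related to $(x,y)$ is $(x,y)$ itself, and $(x,y)\Vdash\phi$ since $x=y$. Condition (c) holds because $(\mathrm{Iranian},\mathrm{Russian})\nVdash\phi$. Condition (b) is the only place needing a (finite) case analysis: if $u\nVdash\phi$ and $u'\Vdash\phi$, then $u\in\{(\mathrm{Iranian},\mathrm{Russian}),(\mathrm{Russian},\mathrm{Iranian})\}$, so $u_s(u)=0$, whereas $u'\in\{(\mathrm{Iranian},\mathrm{Iranian}),(\mathrm{Russian},\mathrm{Russian})\}$, so $u_s(u')\in\{1,3\}$; since $0<1$ and $0<3$, in every case $u_s(u)<u_s(u')$ and hence $u\prec_s u'$. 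This is the same kind of computation as in the proof of Proposition~\ref{oct19-rr}, only now it comes out favourably. Note that conditions (b) and (c) are independent of the world $(x,y)$.

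For the direction $(\Rightarrow)$, I would argue by contraposition: if $x\ne y$ then $(x,y)\nVdash\phi$, and since $(x,y)\sim_s(x,y)$, condition (a) of item~6 fails for $(x,y)$ and $\phi$; therefore $(x,y)\nVdash\H_s\phi$.

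I do not anticipate a genuine obstacle here: the argument is a finite verification, and the only mildly laborious step is the enumeration behind condition (b). The conceptual point worth emphasizing is that in this particular model conditions (b) and (c) hold at every world, so it is condition (a) alone --- Sanaz's knowledge of $\phi$ --- that pins down $x=y$.
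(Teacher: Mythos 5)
Your proof is correct, and the $(\Leftarrow)$ direction is essentially the paper's own argument: condition (a) via perfect information, condition (b) via the utility computation ($u_s=0$ on the mixed worlds versus $u_s\in\{1,3\}$ on the diagonal worlds), condition (c) via a mixed world. Where you genuinely differ is the $(\Rightarrow)$ direction. You refute $(x,y)\Vdash\H_s\phi$ for $x\neq y$ by noting that $\phi$ fails at $(x,y)$ itself, so clause 6(a) of Definition~\ref{sat} is violated by reflexivity of $\sim_s$. The paper instead exhibits the worlds $(x,y)\nVdash\phi$ and $(x,x)\Vdash\phi$, computes $u_s(x,y)=0<1\le u_s(x,x)$, concludes $(x,x)\nprec_s(x,y)$, and cites clause 6(b). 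But, as your own closing remark makes explicit, clause 6(b) holds at \emph{every} world of this model (each non-$\phi$ world has utility $0$ and each $\phi$-world utility at least $1$), and what 6(b) would require here is $(x,y)\prec_s(x,x)$, which is true; the inequality the paper derives points the wrong way to contradict 6(b). So your 6(a)-based contrapositive is the argument that actually carries the forward direction cleanly (it is the pattern the paper itself uses in Propositions~\ref{Hp proposition} and~\ref{sept26 Hs proposition}), whereas the paper's appeal to 6(b) here follows the template of Proposition~\ref{oct19-rr}, where that clause genuinely fails, but does not transfer to this statement. Your observation that only condition (a) depends on the evaluation world, and hence alone pins down $x=y$, is exactly the right diagnosis.
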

\begin{proof}
$(\Rightarrow):$ Suppose that $x\neq y$. Then,
\begin{eqnarray*}
    && (x,y)\nVdash \mbox{``Sanaz and Pavel are in the same restaurant''},\\
    && (x,x)\Vdash \mbox{``Sanaz and Pavel are in the same restaurant''}.
\end{eqnarray*}
Note also that 
$
u_s(x,y)=0<1\le u_s(x,x)
$
because $x\neq y$, see Table~\ref{example table figure}. Hence
$(x,x)\nprec_s(x,y)$.
Therefore, $$(x,y)\nVdash \H_s(\mbox{``Sanaz and Pavel are in the same restaurant''})$$
by item 6(b) of Definition~\ref{sat}.

\vspace{1mm}
\noindent$(\Leftarrow):$ Suppose $x=y$. We verify conditions (a), (b), and (c) from item 6 of Definition~\ref{sat} to prove that
$(x,y)\Vdash \H_s(\mbox{``Sanaz and Pavel are in the same restaurant''})$:

\vspace{1mm}
\noindent{\em Condition a:} Since this is a model with perfect information, it suffices to show that $(x,y)\Vdash \mbox{``Sanaz and Pavel are in the same restaurant''}$, which is true due to the assumption $x=y$.

\vspace{1mm}
\noindent{\em Condition b:} Consider any two worlds $(x_1,y_1), (x_2,y_2)\in W$ such that 
\begin{eqnarray}
    && (x_1,y_1)\nVdash \mbox{``Sanaz and Pavel are in the same restaurant''},\label{sept25-a}\\
    && (x_2,y_2)\Vdash \mbox{``Sanaz and Pavel are in the same restaurant''}.\label{sept25-b}
\end{eqnarray}
It suffices to show that $(x_1,y_1)\prec_s(x_2,y_2)$. Indeed, statements~(\ref{sept25-a}) and (\ref{sept25-b}) imply that $x_1\neq y_1$ and $x_2=y_2$, respectively. Thus, $u_s(x_1,y_1)=0<1\le u_s(x_2,y_2)$, see Table~\ref{example table figure}. Therefore, $(x_1,y_1)\prec_s(x_2,y_2)$.

\vspace{1mm}
\noindent{\em Condition c:} Note that $$(\mathrm{Russian},\mathrm{Iranian})\nVdash \mbox{``Sanaz and Pavel are in the same restaurant''}.$$
\end{proof}

The proof of the next proposition is similar to the proof of the one above. 
\begin{proposition}\label{sept25 Hp}
$$(x,y)\Vdash \H_p(\mbox{``Sanaz and Pavel are in the same restaurant''})$$ iff $x=y$. \qed
\end{proposition}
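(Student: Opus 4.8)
The plan is to mirror the proof of Proposition~\ref{sept25 Hs}, swapping the roles of the two players and of the two utility columns/rows as needed, since ``Sanaz and Pavel are in the same restaurant'' is symmetric in the two agents but the preference justification relies on Pavel's utilities rather than Sanaz's. Concretely, I would prove the two directions separately.

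For the forward direction $(\Rightarrow)$, I would argue the contrapositive: assume $x\neq y$. Then $(x,y)\nVdash$ ``Sanaz and Pavel are in the same restaurant'' while $(y,y)\Vdash$ ``Sanaz and Pavel are in the same restaurant''. Reading off Table~\ref{example table figure}, when $x\neq y$ we have $u_p(x,y)=0$, whereas $u_p(y,y)\ge 1$ since the diagonal entries give Pavel utility $3$ or $1$. Hence $(y,y)\nprec_p(x,y)$, so condition 6(b) of Definition~\ref{sat} fails with the witnessing pair $u=(x,y)$, $u'=(y,y)$, and therefore $(x,y)\nVdash \H_p(\mbox{``Sanaz and Pavel are in the same restaurant''})$.

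For the backward direction $(\Leftarrow)$, assume $x=y$ and verify conditions (a), (b), (c) of item~\ref{item H} of Definition~\ref{sat}. Condition (a): this is a perfect-information model, so $(x,y)\sim_p(x,y)$ is the only relevant world, and $(x,y)\Vdash$ ``same restaurant'' holds by $x=y$. Condition (b): for any $(x_1,y_1)$ with $x_1\neq y_1$ and any $(x_2,y_2)$ with $x_2=y_2$, Table~\ref{example table figure} gives $u_p(x_1,y_1)=0<1\le u_p(x_2,y_2)$, hence $(x_1,y_1)\prec_p(x_2,y_2)$. Condition (c): $(\mathrm{Iranian},\mathrm{Russian})\nVdash$ ``same restaurant'' is the required non-trivial witness.

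I do not expect a genuine obstacle here; the only point requiring a moment's care is to check that \emph{both} diagonal entries $(\mathrm{Iranian},\mathrm{Iranian})$ and $(\mathrm{Russian},\mathrm{Russian})$ give Pavel utility at least $1$ (they give $3$ and $1$ respectively), so that the strict inequality $0<u_p(x_2,y_2)$ holds uniformly for every ``same restaurant'' world — this is exactly the analogue of the $0<1\le u_s(x,x)$ step in Proposition~\ref{sept25 Hs}, and it is what makes condition 6(b) go through. Everything else is a direct transcription of the earlier proof with $s$ replaced by $p$.
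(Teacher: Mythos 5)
Your backward direction is fine and is exactly what the paper intends (a transcription of the proof of Proposition~\ref{sept25 Hs} with $p$ in place of $s$): conditions (a), (b), (c) are verified correctly, and your remark that both diagonal worlds give Pavel utility at least $1$ is indeed the point that makes condition (b) go through. The genuine gap is in your forward direction. You claim that condition 6(b) of Definition~\ref{sat} ``fails with the witnessing pair $u=(x,y)$, $u'=(y,y)$'' because $(y,y)\nprec_p(x,y)$. But for that pair, 6(b) only requires $(x,y)\prec_p(y,y)$ (the antecedent is $u\nVdash\phi$, $u'\Vdash\phi$), and this requirement is \emph{satisfied}, since $u_p(x,y)=0<1\le u_p(y,y)$; the reversed non-preference $(y,y)\nprec_p(x,y)$ is irrelevant, because for the reversed pair the antecedent of 6(b) is false. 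Moreover, condition 6(b) does not depend on the evaluation world at all, and your own Condition (b) check in the backward direction establishes that it holds for this formula and agent $p$ for every pair of worlds — so it cannot also ``fail'' when $x\neq y$. As written, the forward direction establishes nothing.

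The repair is one line and needs no utilities: if $x\neq y$, then $(x,y)\nVdash\mbox{``Sanaz and Pavel are in the same restaurant''}$, and since $\sim_p$ is reflexive (indeed the identity in this perfect-information model), condition 6(a) of Definition~\ref{sat} fails at $(x,y)$; hence $(x,y)\nVdash\H_p(\mbox{``Sanaz and Pavel are in the same restaurant''})$. Be aware that the misattribution to 6(b) is inherited from the printed proof of Proposition~\ref{sept25 Hs}, whose forward direction has the same defect and should likewise cite 6(a); contrast this with Proposition~\ref{oct19-rr}, where the formula \emph{is} true at the evaluation world, so 6(a) cannot fail and 6(b) is genuinely the clause that is violated. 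Mirroring the earlier proof is the right strategy here, but it should be corrected on this point rather than copied verbatim.
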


\begin{proposition}\label{sept25 HpHs}
$$(x,y)\Vdash \H_p\H_s(\mbox{``Sanaz and Pavel are in the same restaurant''})$$ iff $x=y$.
\end{proposition}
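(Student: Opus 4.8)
The plan is to mimic the argument used for Proposition~\ref{HsHp proposition} and Proposition~\ref{SsSp proposition}: reduce $\H_p\H_s\phi$ to $\H_p\phi$ via the semantic substitution lemma, and then invoke the already-established characterization of $\H_p\phi$. Here $\phi$ abbreviates ``Sanaz and Pavel are in the same restaurant''.

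First I would pin down where $\phi$ holds: by inspection of the model (the worlds are the pairs $(x,y)$ with $x,y\in\{\mathrm{Iranian},\mathrm{Russian}\}$), $(x,y)\Vdash\phi$ iff $x=y$, i.e.\ $\phi$ holds exactly at $(\mathrm{Iranian},\mathrm{Iranian})$ and $(\mathrm{Russian},\mathrm{Russian})$. Next I would quote Proposition~\ref{sept25 Hs}, which says $(x,y)\Vdash\H_s\phi$ iff $x=y$ as well. Comparing the two, I get that $w\Vdash\phi$ iff $w\Vdash\H_s\phi$ for every world $w\in W$; that is, $\phi$ and $\H_s\phi$ are satisfied in exactly the same worlds of this particular model.

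Then I would apply Lemma~\ref{semantic substitution lemma} (item~1) with the pair of formulae $\phi$ and $\H_s\phi$: since these two formulae hold in the same worlds, $w\Vdash\H_p\phi$ iff $w\Vdash\H_p\H_s\phi$ for every world $w\in W$. Finally, Proposition~\ref{sept25 Hp} gives $(x,y)\Vdash\H_p\phi$ iff $x=y$, and combining the last two statements yields $(x,y)\Vdash\H_p\H_s\phi$ iff $x=y$, which is exactly the claim.

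I do not expect a genuine obstacle here: the proof is a two-line chain once the right substitution is identified. The only point worth flagging is that the collapse of $\H_s\phi$ to (something equivalent to) $\phi$ is a feature of this specific perfect-information model with these specific utilities — it is not a general logical validity — so the argument leans essentially on Propositions~\ref{sept25 Hs} and~\ref{sept25 Hp} having already been proved for this model, together with the model-specific observation about where $\phi$ is true.
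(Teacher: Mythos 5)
Your proposal is correct and follows exactly the paper's own argument: establish that $\phi$ and $\H_s\phi$ hold in the same worlds via Proposition~\ref{sept25 Hs}, apply Lemma~\ref{semantic substitution lemma} to get $\H_p\phi$ equivalent to $\H_p\H_s\phi$, and conclude with Proposition~\ref{sept25 Hp}. Your closing remark that the collapse is model-specific rather than a logical validity is accurate and consistent with how the paper uses these propositions.
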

\begin{proof}
Note that $(x,y)\Vdash \mbox{``Sanaz and Pavel are in the same restaurant''}$ iff $x=y$. Thus, by Proposition~\ref{sept25 Hs}, for any world $(x,y)\in W$,
\begin{eqnarray*}
&&(x,y)\Vdash \mbox{``Sanaz and Pavel are in the same restaurant''}\\ 
&&\hspace{5mm}\mbox{ iff } (x,y)\Vdash \H_s(\mbox{``Sanaz and Pavel are in the same restaurant''}).
\end{eqnarray*}
Hence, by Lemma~\ref{semantic substitution lemma}, for any world $(x,y)\in W$,
\begin{eqnarray*}
&&(x,y)\Vdash \H_p(\mbox{``Sanaz and Pavel are in the same restaurant''})\\ 
&&\hspace{5mm}\mbox{ iff } (x,y)\Vdash \H_p\H_s(\mbox{``Sanaz and Pavel are in the same restaurant''}).
\end{eqnarray*}
Therefore, $(x,y)\Vdash \H_p\H_s(\mbox{``Sanaz and Pavel are in the same restaurant''})$ iff $x=y$ by Proposition~\ref{sept25 Hp}.
\end{proof}

\begin{proposition}
$$(x,y)\Vdash \H_s\H_p\H_s(\mbox{``Sanaz and Pavel are in the same restaurant''})$$
iff $x=y$.
\end{proposition}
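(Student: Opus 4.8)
The plan is to follow exactly the template used for Proposition~\ref{sept25 HpHs}, applying the semantic substitution lemma one additional time. Write $\psi$ for the formula ``Sanaz and Pavel are in the same restaurant''; recall (as observed just before Proposition~\ref{sept25 Hs}) that $(x,y)\Vdash\psi$ iff $x=y$.

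First I would invoke Proposition~\ref{sept25 HpHs} to conclude that for every world $(x,y)\in W$ we have $(x,y)\Vdash\psi$ iff $(x,y)\Vdash\H_p\H_s\psi$, since both sides hold exactly when $x=y$. In other words, $\psi$ and $\H_p\H_s\psi$ are true in precisely the same worlds of this model. Then I would apply part~1 of Lemma~\ref{semantic substitution lemma} to these two formulae, obtaining that for every world $(x,y)\in W$ we have $(x,y)\Vdash\H_s\psi$ iff $(x,y)\Vdash\H_s\H_p\H_s\psi$.

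Finally, Proposition~\ref{sept25 Hs} states that $(x,y)\Vdash\H_s\psi$ iff $x=y$; chaining this with the previous equivalence yields $(x,y)\Vdash\H_s\H_p\H_s\psi$ iff $x=y$, which is the claim. There is essentially no obstacle here: the only points requiring care are citing Proposition~\ref{sept25 HpHs} (rather than Proposition~\ref{sept25 Hp}) for the inner equivalence, and using the $\H$-clause of Lemma~\ref{semantic substitution lemma} rather than its $\S$-clause.
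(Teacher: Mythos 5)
Your proposal is correct and matches the paper's own proof essentially step for step: both establish that $\psi$ and $\H_p\H_s\psi$ hold in exactly the same worlds via Proposition~\ref{sept25 HpHs}, apply Lemma~\ref{semantic substitution lemma} to replace the inner formula under $\H_s$, and conclude with Proposition~\ref{sept25 Hs}. No gaps.
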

\begin{proof}
Note that $(x,y)\Vdash \mbox{``Sanaz and Pavel are in the same restaurant''}$ iff $x=y$. Thus, by Proposition~\ref{sept25 HpHs}, for any world $(x,y)\in W$,
\begin{eqnarray*}
&&(x,y)\Vdash \mbox{``Sanaz and Pavel are in the same restaurant''}\\ 
&&\hspace{5mm}\mbox{ iff } (x,y)\Vdash \H_p\H_s(\mbox{``Sanaz and Pavel are in the same restaurant''}).
\end{eqnarray*}
Hence, by Lemma~\ref{semantic substitution lemma}, for any world $(x,y)\in W$,
\begin{eqnarray*}
&&(x,y)\Vdash \H_s(\mbox{``Sanaz and Pavel are in the same restaurant''})\\ 
&&\hspace{5mm}\mbox{ iff } (x,y)\Vdash \H_s\H_p\H_s(\mbox{``Sanaz and Pavel are in the same restaurant''}).
\end{eqnarray*}
Therefore, $(x,y)\Vdash \H_s\H_p\H_s(\mbox{``Sanaz and Pavel are in the same restaurant''})$ iff $x=y$ by Proposition~\ref{sept25 Hs}.
\end{proof}

\begin{proposition}\label{sept26 Hs proposition}
$$(x,y)\Vdash \H_s(\mbox{``Sanaz and Pavel are in the Russian restaurant''})$$
iff $x=y=\mathrm{Russian}$.
\end{proposition}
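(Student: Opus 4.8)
The plan is to follow exactly the same template as the proof of Proposition~\ref{sept25 Hs}, verifying the three conditions of item~6 of Definition~\ref{sat}. Write $\phi$ for the statement ``Sanaz and Pavel are in the Russian restaurant''. The first observation is that $\phi$ holds at exactly one world, namely $(\mathrm{Russian},\mathrm{Russian})$, because the worlds are the pairs $(r_s,r_p)$ with $r_s,r_p\in\{\mathrm{Iranian},\mathrm{Russian}\}$ and $\phi$ requires $r_s=r_p=\mathrm{Russian}$.

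For the direction $(\Rightarrow)$ I would argue contrapositively. Suppose it is not the case that $x=y=\mathrm{Russian}$. Then $(x,y)\nVdash\phi$. Since this is a perfect-information model, $(x,y)\sim_s(x,y)$, so condition 6(a) of Definition~\ref{sat} already fails at $(x,y)$, and hence $(x,y)\nVdash\H_s\phi$.

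For the direction $(\Leftarrow)$, assume $x=y=\mathrm{Russian}$ and verify conditions (a), (b), (c) of item~6 separately. Condition (a) is immediate from perfect information together with $(\mathrm{Russian},\mathrm{Russian})\Vdash\phi$. Condition (c) is witnessed by $(\mathrm{Iranian},\mathrm{Iranian})\nVdash\phi$. For condition (b), consider any two worlds $u,u'$ with $u\nVdash\phi$ and $u'\Vdash\phi$; then $u'=(\mathrm{Russian},\mathrm{Russian})$ while $u$ is one of the other three worlds. Consulting Table~\ref{example table figure}, $u_s(\mathrm{Russian},\mathrm{Russian})=3$ strictly exceeds the value of $u_s$ at each of the remaining worlds (which are $1$, $0$, and $0$), so $u\prec_s(\mathrm{Russian},\mathrm{Russian})=u'$, as required.

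The only step that needs any attention — and it is minor — is condition (b): one must check against the utility table that $(\mathrm{Russian},\mathrm{Russian})$ sits strictly $\prec_s$-above \emph{every} other world, not merely above the $\neg\phi$-worlds that happen to be nearby. Since $u_s$ attains its unique maximum at $(\mathrm{Russian},\mathrm{Russian})$, this is immediate, and no genuine obstacle arises.
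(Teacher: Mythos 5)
Your proposal is correct and follows essentially the same route as the paper: the forward direction via condition 6(a) (you state it contrapositively, the paper directly), and the backward direction by checking conditions (a), (b), (c), with (b) settled by noting from Table~\ref{example table figure} that $u_s$ attains its strict maximum $3$ at $(\mathrm{Russian},\mathrm{Russian})$ while every $\neg\phi$-world has $u_s\le 1$. The only cosmetic difference is your choice of witness $(\mathrm{Iranian},\mathrm{Iranian})$ for condition (c) instead of the paper's $(\mathrm{Russian},\mathrm{Iranian})$, which is immaterial.
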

\begin{proof}
$(\Rightarrow):$ By item 6(a) of Definition~\ref{sat}, the assumption of the proposition $(x,y)\Vdash \H_s(\mbox{``Sanaz and Pavel are in the Russian restaurant''})$ implies that $(x,y)\Vdash \mbox{``Sanaz and Pavel are in the Russian restaurant''}$. Therefore, $x=y=\mathrm{Russian}$.

\vspace{1mm}
\noindent
$(\Leftarrow):$ Suppose $x=y=\mathrm{Russian}$. To prove that
$$(x,y)\Vdash \H_s(\mbox{``Sanaz and Pavel are in the Russian restaurant''}),$$ we verify conditions (a), (b), and (c) from item 6 of Definition~\ref{sat}:

\vspace{1mm}
\noindent{\em Condition a:} Since this is a model with perfect information, it suffices to show that $(x,y)\Vdash \mbox{``Sanaz and Pavel are in the Russian restaurant''}$, which is true due to the assumption $x=y=\mathrm{Russian}$.

\vspace{1mm}
\noindent{\em Condition b:} Consider any two worlds $(x_1,y_1), (x_2,y_2)\in W$ such that 
\begin{eqnarray}
    && (x_1,y_1)\nVdash \mbox{``Sanaz and Pavel are in the Russian restaurant''},\label{sept25-c}\\
    && (x_2,y_2)\Vdash \mbox{``Sanaz and Pavel are in the Russian restaurant''}.\label{sept25-d}
\end{eqnarray}
It suffices to show that $(x_1,y_1)\prec_s(x_2,y_2)$. Indeed, statement~(\ref{sept25-c}) implies that $u_s(x_1,y_1)\le 1$, see Table~\ref{example table figure}. Similarly, statement~(\ref{sept25-d}) implies that $u_s(x_2,y_2)=3$. Thus, $u_s(x_1,y_1)\le 1<3=u_s(x_2,y_2)$. Therefore, $(x_1,y_1)\prec_s(x_2,y_2)$.

\vspace{1mm}
\noindent{\em Condition c:} $(\mathrm{Russian},\mathrm{Iranian})\nVdash \mbox{``Sanaz and Pavel are in the Russian restaurant''}$.
\end{proof}

\begin{proposition}
$$(\mathrm{Russian},\mathrm{Russian})\nVdash \H_p(\mbox{``Sanaz and Pavel are in the Russian restaurant''}).$$
\end{proposition}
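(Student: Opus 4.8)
The plan is to refute condition 6(b) of Definition~\ref{sat} by exhibiting a single violating pair of worlds. Write $\phi$ for ``Sanaz and Pavel are in the Russian restaurant''. First I would record the two relevant satisfaction facts, both immediate from the interpretation of $\phi$: we have $(\mathrm{Iranian},\mathrm{Iranian})\nVdash\phi$ and $(\mathrm{Russian},\mathrm{Russian})\Vdash\phi$.

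Next I would compare Pavel's utilities at these two worlds using Table~\ref{example table figure}:
\begin{eqnarray*}
    && u_p(\mathrm{Iranian},\mathrm{Iranian})=3,\\
    && u_p(\mathrm{Russian},\mathrm{Russian})=1.
\end{eqnarray*}
Since $3>1$, the preference relation $\prec_p$ (which, as in Section~\ref{The Battle of Cuisines Scenario section}, is induced by $u_p$) satisfies $(\mathrm{Iranian},\mathrm{Iranian})\nprec_p(\mathrm{Russian},\mathrm{Russian})$.

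Finally I would conclude: taking $u=(\mathrm{Iranian},\mathrm{Iranian})$ and $u'=(\mathrm{Russian},\mathrm{Russian})$, we have $u\nVdash\phi$ and $u'\Vdash\phi$ but $u\nprec_p u'$, which directly contradicts condition 6(b) of Definition~\ref{sat} for $\H_p\phi$ at $(\mathrm{Russian},\mathrm{Russian})$. Hence $(\mathrm{Russian},\mathrm{Russian})\nVdash\H_p\phi$. There is no real obstacle here: the argument mirrors Proposition~\ref{oct19-rr} almost verbatim, the only content being the choice of the witnessing pair and the single numerical comparison from the payoff table.
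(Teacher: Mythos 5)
Your proposal is correct and follows essentially the same route as the paper: exhibit the pair $(\mathrm{Iranian},\mathrm{Iranian})$, $(\mathrm{Russian},\mathrm{Russian})$, compare $u_p$ values from Table~\ref{example table figure} to get $(\mathrm{Iranian},\mathrm{Iranian})\nprec_p(\mathrm{Russian},\mathrm{Russian})$, and refute condition 6(b) of Definition~\ref{sat}. Your version merely spells out the two satisfaction facts that the paper leaves implicit.
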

\begin{proof}
Note that 
$u_p(\mathrm{Iranian},\mathrm{Iranian})=3>1=u_p(\mathrm{Russian},\mathrm{Russian})$. Thus, $(\mathrm{Iranian},\mathrm{Iranian})\nprec_p(\mathrm{Russian},\mathrm{Russian})$. Therefore, the proposition is true 
by item 6(b) of Definition~\ref{sat}.
\end{proof}

\begin{proposition}
$$(\mathrm{Russian},\mathrm{Russian})\nVdash \H_p\H_s(\mbox{``Sanaz and Pavel are in the Russian restaurant''}).$$
\end{proposition}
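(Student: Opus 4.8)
The plan is to reduce the statement to the immediately preceding proposition via the semantic substitution lemma, mirroring the proofs of Proposition~\ref{SsSp proposition} and Proposition~\ref{sept25 HpHs}. Write $\varphi$ for the formula ``Sanaz and Pavel are in the Russian restaurant''. First I would record the elementary observation that, in this perfect information model on the four worlds $(r_s,r_p)$, we have $(x,y)\Vdash\varphi$ precisely when $x=y=\mathrm{Russian}$, since both co-authors sit in the Russian restaurant exactly in the world $(\mathrm{Russian},\mathrm{Russian})$.

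Next I would invoke Proposition~\ref{sept26 Hs proposition}, which states that $(x,y)\Vdash\H_s\varphi$ also holds precisely when $x=y=\mathrm{Russian}$. Comparing the two truth sets, for every world $(x,y)\in W$ we get $(x,y)\Vdash\varphi$ iff $(x,y)\Vdash\H_s\varphi$. Hence Lemma~\ref{semantic substitution lemma}, applied with agent $p$, yields $(x,y)\Vdash\H_p\varphi$ iff $(x,y)\Vdash\H_p\H_s\varphi$ for every world $(x,y)\in W$. Since the preceding proposition gives $(\mathrm{Russian},\mathrm{Russian})\nVdash\H_p\varphi$, this equivalence delivers $(\mathrm{Russian},\mathrm{Russian})\nVdash\H_p\H_s\varphi$, which is exactly the claim.

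There is essentially no obstacle here; the only point requiring a moment's care is verifying that $\varphi$ and $\H_s\varphi$ are semantically equivalent on this model — they happen to share the truth set $\{(\mathrm{Russian},\mathrm{Russian})\}$ — so that Lemma~\ref{semantic substitution lemma} is applicable. If one preferred a self-contained argument not routing through the previous proposition, one could instead show directly that item 6(b) of Definition~\ref{sat} fails for $\H_p\H_s\varphi$ at $(\mathrm{Russian},\mathrm{Russian})$: take $u=(\mathrm{Iranian},\mathrm{Iranian})$, where $\H_s\varphi$ is false by Proposition~\ref{sept26 Hs proposition}, and $u'=(\mathrm{Russian},\mathrm{Russian})$, where $\H_s\varphi$ is true, and note that $u_p(\mathrm{Iranian},\mathrm{Iranian})=3>1=u_p(\mathrm{Russian},\mathrm{Russian})$, so $(\mathrm{Iranian},\mathrm{Iranian})\nprec_p(\mathrm{Russian},\mathrm{Russian})$, contradicting item 6(b).
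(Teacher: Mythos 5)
Your proposal is correct, and its main route is genuinely different from the paper's. You reduce the claim via Lemma~\ref{semantic substitution lemma}: since $\varphi$ and $\H_s\varphi$ share the truth set $\{(\mathrm{Russian},\mathrm{Russian})\}$ (the latter by Proposition~\ref{sept26 Hs proposition}), you get $(x,y)\Vdash\H_p\varphi$ iff $(x,y)\Vdash\H_p\H_s\varphi$ for all worlds, and then quote the immediately preceding (unnumbered) proposition $(\mathrm{Russian},\mathrm{Russian})\nVdash\H_p\varphi$. The paper instead argues directly: by Proposition~\ref{sept26 Hs proposition}, $\H_s\varphi$ is false at $(\mathrm{Iranian},\mathrm{Iranian})$ and true at $(\mathrm{Russian},\mathrm{Russian})$, while $u_p(\mathrm{Iranian},\mathrm{Iranian})=3>1=u_p(\mathrm{Russian},\mathrm{Russian})$ gives $(\mathrm{Iranian},\mathrm{Iranian})\nprec_p(\mathrm{Russian},\mathrm{Russian})$, so condition 6(b) of Definition~\ref{sat} fails for $\H_p$ applied to $\H_s\varphi$ --- which is precisely the self-contained alternative you sketch in your last paragraph. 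Your substitution-lemma route buys uniformity with the pattern of Propositions~\ref{sept25 HpHs} and~\ref{SsSp proposition} and avoids re-examining the preference relation, at the cost of leaning on the preceding proposition; the paper's direct check (your fallback) is shorter and keeps the decisive utility comparison explicit. Both are sound.
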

\begin{proof}
Note that 
\begin{eqnarray*}
    && (\mathrm{Iranian},\mathrm{Iranian})\nVdash \H_s(\mbox{``Sanaz and Pavel are in the Russian restaurant''}),\label{sept26-c}\\
    &&(\mathrm{Russian},\mathrm{Russian})\Vdash \H_s(\mbox{``Sanaz and Pavel are in the Russian restaurant''}).\label{sept26-d}
\end{eqnarray*}
by Proposition~\ref{sept26 Hs proposition}. At the same time, $$u_p(\mathrm{Iranian},\mathrm{Iranian})=3>1=u_p(\mathrm{Russian},\mathrm{Russian}).$$ 
Thus, $(\mathrm{Iranian},\mathrm{Iranian})\nprec_p(\mathrm{Russian},\mathrm{Russian})$. Therefore, the proposition is true 
by item 6(b) of Definition~\ref{sat}.
\end{proof}

\begin{proposition}\label{sept26-Ss}
$$(x,y)\Vdash \S_s(\mbox{``Sanaz and Pavel are in different restaurants''})\;\;\mbox{ iff }\;\; x\neq y.$$
\end{proposition}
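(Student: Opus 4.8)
The plan is to mimic the argument used for $\H_s$ in Proposition~\ref{sept25 Hs}, but now verifying the three conditions of item~7 of Definition~\ref{sat} (sadness) instead of item~6. The key structural fact is that the Battle of Cuisines is a perfect-information model, so each $\sim_a$ is the identity relation, and that Sanaz's utility $u_s$ takes its minimum value $0$ \emph{exactly} at the two worlds $(\mathrm{Iranian},\mathrm{Russian})$ and $(\mathrm{Russian},\mathrm{Iranian})$, which are precisely the worlds where ``Sanaz and Pavel are in different restaurants'' holds.

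For the $(\Rightarrow)$ direction I would argue by contraposition in the convenient form: assume $(x,y)\Vdash \S_s(\mbox{``Sanaz and Pavel are in different restaurants''})$. Since $(x,y)\sim_s(x,y)$, item~7(a) of Definition~\ref{sat} forces $(x,y)\Vdash(\mbox{``Sanaz and Pavel are in different restaurants''})$, and hence $x\neq y$.

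For the $(\Leftarrow)$ direction, suppose $x\neq y$ and verify (a), (b), (c) of item~7. Condition (a): by perfect information it suffices to check that $(x,y)$ itself satisfies ``Sanaz and Pavel are in different restaurants'', which holds because $x\neq y$. Condition (b): take any two worlds $(x_1,y_1),(x_2,y_2)$ with $(x_1,y_1)\Vdash(\mbox{``\dots different\dots''})$ and $(x_2,y_2)\nVdash(\mbox{``\dots different\dots''})$; then $x_1\neq y_1$, so $u_s(x_1,y_1)=0$, while $x_2=y_2$, so $u_s(x_2,y_2)\ge 1$ by Table~\ref{example table figure}; hence $u_s(x_1,y_1)=0<1\le u_s(x_2,y_2)$, so $(x_1,y_1)\prec_s(x_2,y_2)$. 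Condition (c): the world $(\mathrm{Iranian},\mathrm{Iranian})$ does not satisfy ``Sanaz and Pavel are in different restaurants''.

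I expect no real obstacle here; the only substantive point is making sure the direction of the preference comparison in condition (b) is the one required by \emph{sadness} (the $\phi$-worlds must sit strictly below the $\neg\phi$-worlds), which is exactly what the inequality $0<1\le u_s(x_2,y_2)$ delivers. A minor point worth a sentence is that, although the statement only asks about the diagonal worlds implicitly, condition (b) is in fact a global requirement over all pairs of worlds, so it must be checked using the full utility table rather than just at $(x,y)$.
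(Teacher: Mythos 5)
Your proposal is correct and follows essentially the same route as the paper: both directions rest on item 7(a) for $(\Rightarrow)$, and for $(\Leftarrow)$ you verify conditions (a), (b), (c) of item 7 exactly as the paper does (the paper merely delegates condition (b) to the analogous computation in Proposition~\ref{sept25 Hs}, which you spell out explicitly, with the preference inequality pointing in the correct direction for sadness).
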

\begin{proof}
$(\Rightarrow):$ Suppose that $x=y$. Thus,
$$(x,y)\nVdash \mbox{``Sanaz and Pavel are in different restaurants''}.$$
Thus, by item 7(a) of Definition~\ref{sat},
$$(x,y)\nVdash \S_s(\mbox{``Sanaz and Pavel are in different restaurants''}).$$

\vspace{1mm}
\noindent$(\Leftarrow):$ Suppose that $x\neq y$.  To prove that
$$(x,y)\Vdash \S_s(\mbox{``Sanaz and Pavel are in different restaurants''}),$$ we verify conditions (a), (b), and (c) from item 7 of Definition~\ref{sat}:

\vspace{1mm}
\noindent{\em Condition a:} Since this is a model with perfect information, it suffices to show that $(x,y)\Vdash \mbox{``Sanaz and Pavel are in different restaurants''}$, which is true due to the assumption $x\neq y$.

\vspace{1mm}
\noindent{\em Condition b:} The proof of this condition is similar to the proof of {\em Condition b} in the proof of Proposition~\ref{sept25 Hs}.

\vspace{1mm}
\noindent{\em Condition c:} $(\mathrm{Iranian},\mathrm{Iranian})\nVdash \mbox{``Sanaz and Pavel are in different restaurants''}$.
\end{proof}


The proof of the following statement is similar to the proof of the one above.
\begin{proposition}\label{sept26-Sp}
$$(x,y)\Vdash \S_p(\mbox{``Sanaz and Pavel are in different restaurants''}) \;\;\mbox{ iff }\;\;x\neq y.$$ \qed
\end{proposition}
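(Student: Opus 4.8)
The plan is to mirror the proof of Proposition~\ref{sept26-Ss}, replacing Sanaz's utility function $u_s$ with Pavel's utility function $u_p$ and the relation $\sim_s$ with $\sim_p$ throughout. As in that proof, the key elementary facts are that the formula ``Sanaz and Pavel are in different restaurants'' holds at a world $(x,y)$ precisely when $x\neq y$, and that reading off Table~\ref{example table figure} gives $u_p(x,y)=0$ whenever $x\neq y$ while $u_p(x,y)\ge 1$ whenever $x=y$ (the diagonal cells give Pavel payoffs $3$ and $1$).

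For the $(\Rightarrow)$ direction I would argue contrapositively: if $x=y$ then $(x,y)\nVdash$``different restaurants'', so $(x,y)\nVdash\S_p(\dots)$ follows at once from item~7(a) of Definition~\ref{sat} applied with $(x,y)\sim_p(x,y)$.

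For the $(\Leftarrow)$ direction, assume $x\neq y$ and verify conditions (a), (b), (c) of item~7 of Definition~\ref{sat} in turn. Condition (a) is immediate since the model has perfect information, so the only world $p$-indistinguishable from $(x,y)$ is $(x,y)$ itself, which satisfies the formula because $x\neq y$. Condition (c) is witnessed by $(\mathrm{Iranian},\mathrm{Iranian})$, at which ``different restaurants'' fails. The only step with any content is condition (b), handled exactly as the corresponding step in Proposition~\ref{sept25 Hs}: given worlds $(x_1,y_1),(x_2,y_2)$ with $(x_1,y_1)\Vdash$``different restaurants'' and $(x_2,y_2)\nVdash$``different restaurants'', we get $x_1\neq y_1$ and $x_2=y_2$, hence $u_p(x_1,y_1)=0<1\le u_p(x_2,y_2)$, and therefore $(x_1,y_1)\prec_p(x_2,y_2)$. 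Since this reduces to a comparison of table entries, I foresee no genuine obstacle; the proof is essentially a transcription of the preceding one with the roles of $s$ and $p$ interchanged.
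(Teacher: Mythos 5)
Your proposal is correct and matches the paper's intent exactly: the paper proves this proposition by noting it is ``similar to the proof of the one above'' (Proposition~\ref{sept26-Ss}), which is precisely the transcription you carry out, swapping $\sim_s$ for $\sim_p$ and $u_s$ for $u_p$, with the utility comparison $u_p(x_1,y_1)=0<1\le u_p(x_2,y_2)$ correctly read off Table~\ref{example table figure} for condition (b).
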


\begin{proposition}
$$(x,y)\Vdash \S_p\S_s(\mbox{``Sanaz and Pavel are in different restaurants''})\;\;\mbox{ iff }\;\; x\neq y.$$
\end{proposition}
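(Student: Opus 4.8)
The plan is to follow exactly the pattern used in Proposition~\ref{SsSp proposition} and Proposition~\ref{sept25 HpHs}: reduce the nested modality to a single one via the semantic substitution lemma. The key observation is that the formula $\mbox{``Sanaz and Pavel are in different restaurants''}$ and the formula $\S_s(\mbox{``Sanaz and Pavel are in different restaurants''})$ are satisfied in precisely the same worlds of this model.

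First I would note that, since this is a perfect-information model, $(x,y)\Vdash\mbox{``Sanaz and Pavel are in different restaurants''}$ iff $x\neq y$. By Proposition~\ref{sept26-Ss}, we also have $(x,y)\Vdash \S_s(\mbox{``Sanaz and Pavel are in different restaurants''})$ iff $x\neq y$. Hence, for every world $(x,y)\in W$,
\[
(x,y)\Vdash\mbox{``\dots different restaurants''}\quad\mbox{iff}\quad (x,y)\Vdash \S_s(\mbox{``\dots different restaurants''}).
\]
Second, I would apply item 2 of Lemma~\ref{semantic substitution lemma} with $\phi=\mbox{``\dots different restaurants''}$ and $\psi=\S_s(\mbox{``\dots different restaurants''})$ to conclude that, for every world $(x,y)\in W$,
\[
(x,y)\Vdash \S_p(\mbox{``\dots different restaurants''})\quad\mbox{iff}\quad (x,y)\Vdash \S_p\S_s(\mbox{``\dots different restaurants''}).
\]
Finally, I would invoke Proposition~\ref{sept26-Sp}, which says $(x,y)\Vdash \S_p(\mbox{``\dots different restaurants''})$ iff $x\neq y$, to obtain the desired conclusion.

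I do not expect any real obstacle here: the statement is a direct corollary of the substitution lemma together with the two already-established single-modality propositions. The only point requiring minor care is to spell out clearly why the bare proposition and the $\S_s$-prefixed proposition have the same extension, so that Lemma~\ref{semantic substitution lemma} is applicable; everything else is bookkeeping.
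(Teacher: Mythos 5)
Your proposal is correct and follows essentially the same route as the paper's own proof: establish that the bare statement and its $\S_s$-prefixed version hold in exactly the same worlds (via Proposition~\ref{sept26-Ss}), apply Lemma~\ref{semantic substitution lemma} to replace the inner formula under $\S_p$, and conclude with Proposition~\ref{sept26-Sp}. No gaps; the only difference is that you spell out the application of item 2 of the substitution lemma slightly more explicitly than the paper does.
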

\begin{proof}
Note that $(x,y)\Vdash \mbox{``Sanaz and Pavel are in different restaurants''}$ iff $x\neq y$. Thus, by Proposition~\ref{sept26-Ss}, for any world $(x,y)\in W$,
\begin{eqnarray*}
&&(x,y)\Vdash \mbox{``Sanaz and Pavel are in different restaurants''}\\ 
&&\hspace{5mm}\mbox{ iff } (x,y)\Vdash \S_s(\mbox{``Sanaz and Pavel are in different restaurants''}).
\end{eqnarray*}
Hence, by Lemma~\ref{semantic substitution lemma}, for any world $(x,y)\in W$,
\begin{eqnarray*}
&&(x,y)\Vdash \S_p(\mbox{``Sanaz and Pavel are in different restaurants''})\\ 
&&\hspace{5mm}\mbox{ iff } (x,y)\Vdash \S_p\S_s(\mbox{``Sanaz and Pavel are in different restaurants''}).
\end{eqnarray*}
Therefore, $(x,y)\Vdash \S_p\S_s(\mbox{``Sanaz and Pavel are in different restaurants''})$ iff $x=y$ by Proposition~\ref{sept26-Sp}.
\end{proof}

\section{Lottery Example}

\epigraph{Be happy for this moment. This moment is your life.}%
{The Rub\'{a}iy\'{a}t of Omar Khayy\'{a}m}


As our next example, consider a hypothetical situation when Sanaz and  Pavel play lottery with Omar Khayy\'{a}m, an Iranian mathematician, astronomer, philosopher, and poet. Each of them gets a lottery ticket and it is known that exactly one out of three tickets is winning. We consider the moment when each of the players has already seen her or his own ticket, but does not know yet what are the tickets of the other players. We assume that each of the three players prefers the outcome when she or he wins the lottery.

\begin{figure}[ht]
\begin{center}
\scalebox{0.45}{\includegraphics{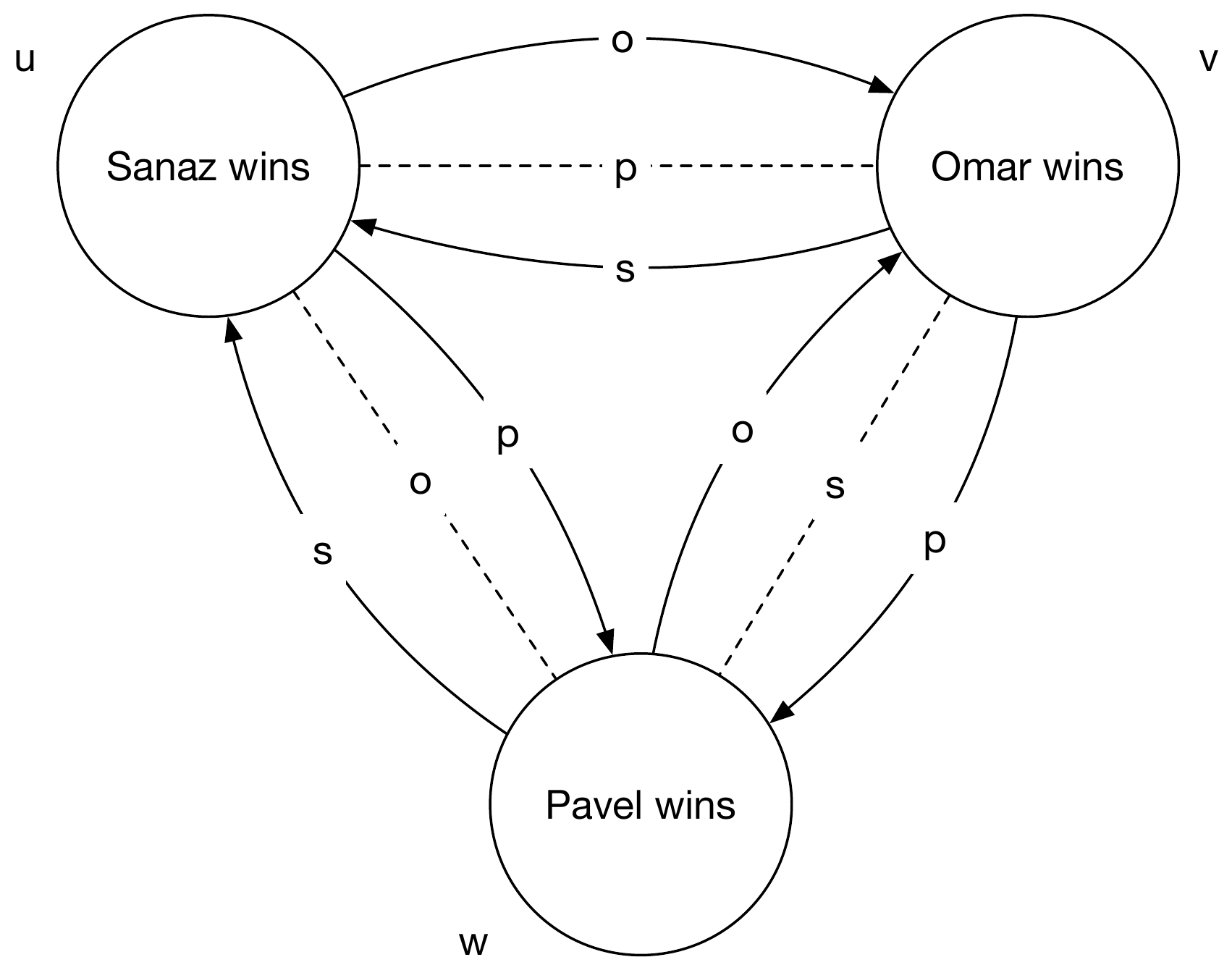}}
\caption{Lottery Epistemic Model with Preferences.}\label{example lottery figure}
\end{center}
\end{figure}

Figure~\ref{example lottery figure} depicts the epistemic model with preferences capturing the above scenario. It has three epistemic worlds, $u$, $v$, and $w$ in which the winner is Sanaz, Omar, and Pavel respectively. Dashed lines represent indistinguishability relations. For example, the dashed line between worlds $w$ and $v$ labeled with $s$ shows that Sanaz cannot distinguish the world in which Pavel wins from the one in which Omar wins. This is true because we consider the knowledge at the moment when neither of the players knows yet what are the tickets of the other players. The directed edges between worlds represent preference relations. For example, the directed edge from world $w$ to world $v$ labeled with $o$ captures the fact that Omar would prefer to win the lottery rather than to lose it.

\begin{proposition}\label{oct3-Hs}
$x\Vdash \H_s(\mbox{``Sanaz won the lottery''})$ iff $x=u$.
\end{proposition}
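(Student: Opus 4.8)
The plan is to verify Proposition~\ref{oct3-Hs} directly against item~6 of Definition~\ref{sat}, in the same style as Proposition~\ref{Hp proposition} and Proposition~\ref{Hs proposition}, using the lottery model in Figure~\ref{example lottery figure}. The three worlds are $u$ (Sanaz wins), $v$ (Omar wins), $w$ (Pavel wins), and Sanaz's indistinguishability relation makes $v\sim_s w$ (while $u$ is distinguishable from both, since Sanaz sees her own winning ticket). The proposition $\mbox{``Sanaz won the lottery''}$ holds exactly at world $u$. Sanaz's preference puts $v\prec_s u$ and $w\prec_s u$, i.e.\ $\{v,w\}\prec_s\{u\}$.

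For the forward direction $(\Rightarrow)$ I would argue contrapositively: suppose $x\neq u$, so $x\in\{v,w\}$; then $x\nVdash\mbox{``Sanaz won the lottery''}$ by inspection of Figure~\ref{example lottery figure}, and since $x\sim_s x$, condition 6(a) fails, so $x\nVdash\H_s(\mbox{``Sanaz won the lottery''})$. For the backward direction $(\Leftarrow)$, assume $x=u$ and check the three conditions of item~6. Condition (a): the only world $z$ with $u\sim_s z$ is $u$ itself (Sanaz distinguishes $u$ from $v$ and $w$), and $u\Vdash\mbox{``Sanaz won the lottery''}$. Condition (b): if $p\nVdash\mbox{``Sanaz won the lottery''}$ and $q\Vdash\mbox{``Sanaz won the lottery''}$, then $p\in\{v,w\}$ and $q=u$, and since $\{v,w\}\prec_s\{u\}$ we get $p\prec_s q$. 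Condition (c): $v\nVdash\mbox{``Sanaz won the lottery''}$, so the proposition is non-trivial.

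I do not anticipate any real obstacle here; the argument is a routine model-checking exercise entirely analogous to the gift-scenario propositions already proved. The one point to state carefully is that $u$ is $\sim_s$-distinguishable from both $v$ and $w$ — this is what makes condition (a) go through with a singleton equivalence class and is exactly the reading of Figure~\ref{example lottery figure} emphasized in the surrounding text (each player has already seen her own ticket). Everything else follows by reading preferences and valuations off the diagram.
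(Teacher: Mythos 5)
Your proof is correct and follows essentially the same route as the paper: the forward direction via failure of condition 6(a) at $x\neq u$ using reflexivity of $\sim_s$, and the backward direction by checking conditions (a), (b), (c) of item 6 against Figure~\ref{example lottery figure}, with only the cosmetic difference that you witness condition (c) with $v$ rather than $w$. No gaps.
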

\begin{proof}
$(\Rightarrow):$ Suppose that $x\neq u$. Thus,
$x\nVdash \mbox{``Sanaz won the lottery''}$, see Figure~\ref{example lottery figure}. Therefore, $x\nVdash \H_s(\mbox{``Sanaz won the lottery''})$ by item 6(a) of Definition~\ref{sat}. 

\vspace{1mm}
\noindent$(\Leftarrow):$ Suppose that $x=u$.  To prove the required, it suffices to verify conditions (a), (b), and (c) from item 6 of Definition~\ref{sat}:

\vspace{1mm}
\noindent{\em Condition a:} Consider any world $y$ such that $u\sim_s y$. We will show that $y\Vdash \mbox{``Sanaz won the lottery''}$. Indeed, assumption $u\sim_s y$ implies that $u=y$, see Figure~\ref{example lottery figure}. Therefore, $y\Vdash \mbox{``Sanaz won the lottery''}$, see again Figure~\ref{example lottery figure}.

\vspace{1mm}
\noindent{\em Condition b:} Consider any worlds $y,z$ such that $y\nVdash \mbox{``Sanaz won the lottery''}$ and $z\Vdash \mbox{``Sanaz won the lottery''}$. We will show that $y\prec_s z$. Indeed, assumption $y\nVdash \mbox{``Sanaz won the lottery''}$ implies that $y\in \{w,v\}$, see Figure~\ref{example lottery figure}. Similarly, assumption $z\Vdash \mbox{``Sanaz won the lottery''}$ implies that $z=u$. Statements $y\in \{w,v\}$ and $z=u$ imply that $y\prec_s z$, see again Figure~\ref{example lottery figure}.

\vspace{1mm}
\noindent{\em Condition c:} $w\nVdash \mbox{``Sanaz won the lottery''}$.
\end{proof}

\begin{proposition}
$u\nVdash \H_s(\mbox{``Pavel lost the lottery''})$.
\end{proposition}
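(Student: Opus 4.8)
The plan is to refute condition (b) of item~6 of Definition~\ref{sat} for the formula $\mbox{``Pavel lost the lottery''}$ at world $u$. First I would read off from Figure~\ref{example lottery figure} that $\mbox{``Pavel lost the lottery''}$ holds exactly in worlds $u$ and $v$ and fails in world $w$, since $w$ is the unique world in which Pavel is the winner. So, concretely,
$$
w\nVdash\mbox{``Pavel lost the lottery''}\quad\text{and}\quad v\Vdash\mbox{``Pavel lost the lottery''}.
$$

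Next I would apply item~6(b) of Definition~\ref{sat} with the pair of worlds $w$ and $v$: for $u\Vdash\H_s(\mbox{``Pavel lost the lottery''})$ to hold, item~6(b) would force $w\prec_s v$. However, Figure~\ref{example lottery figure} shows no preference edge of Sanaz between $v$ and $w$ — Sanaz loses the lottery in both of these worlds, and the only $\prec_s$-edges are the ones recording her preference for the world $u$ where she wins — so $w\nprec_s v$. This contradicts the requirement just derived, hence $u\nVdash\H_s(\mbox{``Pavel lost the lottery''})$ by item~6(b) of Definition~\ref{sat}.

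The only point requiring a little care is the correct reading of $\prec_s$ off the diagram: one must note that $\prec_s$ relates the two losing worlds $v,w$ to the winning world $u$ but leaves $v$ and $w$ incomparable, which is exactly why condition (b) breaks. Everything else is a one-line application of the semantics, so there is no genuine obstacle; this proposition is a short contrast to Proposition~\ref{oct3-Hs}, illustrating that even though in world $u$ Sanaz does know that Pavel lost, she is not \emph{happy} that Pavel lost, because she is indifferent between a world where that holds (Omar wins, $v$) and a world where it fails (Pavel wins, $w$).
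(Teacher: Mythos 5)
Your proof is correct and follows essentially the same route as the paper: both take the witness pair $w$ (where ``Pavel lost the lottery'' fails) and $v$ (where it holds), observe from Figure~\ref{example lottery figure} that $w\nprec_s v$, and conclude by item 6(b) of Definition~\ref{sat} that $u\nVdash \H_s(\mbox{``Pavel lost the lottery''})$. The extra commentary on why $\prec_s$ leaves $v$ and $w$ incomparable is a faithful reading of the diagram and does not change the argument.
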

\begin{proof}
Note that $w\nVdash \mbox{``Pavel lost the lottery''}$, $v\Vdash \mbox{``Pavel lost the lottery''}$, and $w\nprec_s v$, see  Figure~\ref{example lottery figure}. Therefore, $u\nVdash \H_s(\mbox{``Pavel lost the lottery''})$ by item 6(b) of Definition~\ref{sat}. 
\end{proof}

\begin{proposition}
$u\nVdash \K_p\H_s(\mbox{``Sanaz won the lottery''})$.
\end{proposition}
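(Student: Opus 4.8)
The plan is to show that $u \nVdash \K_p \H_s(\text{``Sanaz won the lottery''})$ by exhibiting a world $p$-indistinguishable from $u$ in which Sanaz is not happy about winning the lottery. By item~5 of Definition~\ref{sat}, it suffices to find a world $y$ with $u \sim_p y$ and $y \nVdash \H_s(\text{``Sanaz won the lottery''})$.

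First I would read off the indistinguishability relation for Pavel from Figure~\ref{example lottery figure}: since Pavel wins only in world $w$, he cannot distinguish worlds $u$ and $v$, i.e.\ $u \sim_p v$. Then I would invoke Proposition~\ref{oct3-Hs}, which tells us that $x \Vdash \H_s(\text{``Sanaz won the lottery''})$ iff $x = u$; in particular $v \nVdash \H_s(\text{``Sanaz won the lottery''})$. Combining $u \sim_p v$ with $v \nVdash \H_s(\text{``Sanaz won the lottery''})$ and applying item~5 of Definition~\ref{sat} yields $u \nVdash \K_p \H_s(\text{``Sanaz won the lottery''})$.

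There is essentially no obstacle here: the argument is a direct application of the definition of knowledge together with the already-established Proposition~\ref{oct3-Hs}. The only point requiring a little care is correctly identifying from the diagram which worlds Pavel confuses — one must note that it is world $v$ (where Omar wins), not world $w$ (where Pavel himself wins), that Pavel cannot tell apart from $u$, since Pavel knows his own ticket. Everything else is routine bookkeeping that mirrors the structure of the earlier lottery propositions.
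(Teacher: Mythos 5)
Your argument is correct and matches the paper's own proof exactly: both use $u\sim_p v$ from Figure~\ref{example lottery figure}, apply Proposition~\ref{oct3-Hs} to get $v\nVdash \H_s(\mbox{``Sanaz won the lottery''})$, and conclude via item 5 of Definition~\ref{sat}. No issues.
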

\begin{proof}
Note that $u\sim_p v$, see  Figure~\ref{example lottery figure}. Also,
$v\nVdash \H_s(\mbox{``Sanaz won the lottery''})$ by Proposition~\ref{oct3-Hs}. Therefore, $u\nVdash \K_p\H_s(\mbox{``Sanaz won the lottery''})$ by item 5 of Definition~\ref{sat}.
\end{proof}

\begin{proposition}\label{oct3-Sp}
$u\Vdash \S_p(\mbox{``Pavel lost the lottery''})$.
\end{proposition}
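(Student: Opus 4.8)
The plan is to unwind Definition~\ref{sat}(7) directly: to establish $u\Vdash \S_p(\mbox{``Pavel lost the lottery''})$ it suffices to verify the three conditions 7(a), 7(b), and 7(c) for the world $u$, the agent $p$, and the formula $\phi=\mbox{``Pavel lost the lottery''}$. The only preliminary observation I would record is the truth set of $\phi$: reading off Figure~\ref{example lottery figure}, Pavel lost in worlds $u$ and $v$ (where Sanaz, resp.\ Omar, wins) but did not lose in world $w$; thus $x\Vdash\phi$ iff $x\in\{u,v\}$.

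For condition 7(a), I would note that, just as Sanaz cannot distinguish $w$ from $v$ in this scenario, Pavel---holding a losing ticket---cannot distinguish $u$ from $v$; hence the worlds $y$ with $u\sim_p y$ are exactly $u$ and $v$, and $\phi$ holds in both by the observation above. For condition 7(b), I would take arbitrary worlds $y,z$ with $y\Vdash\phi$ and $z\nVdash\phi$; then $y\in\{u,v\}$ and $z=w$, and since Pavel strictly prefers the world in which he wins to those in which he loses, the diagram gives $u\prec_p w$ and $v\prec_p w$, hence $y\prec_p z$. For condition 7(c), the world $w$ itself witnesses $w\nVdash\phi$.

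I do not expect a genuine obstacle here: every step is a direct appeal to the diagram in Figure~\ref{example lottery figure} and to Definition~\ref{sat}, entirely parallel to the $(\Leftarrow)$ directions of Propositions~\ref{sept26-Ss} and~\ref{oct3-Hs}. The only point requiring a moment's care is identifying Pavel's indistinguishability class of $u$ correctly---it is $\{u,v\}$, mirroring Sanaz's class $\{v,w\}$---since condition 7(a) would fail if one mistakenly included $w$. Everything else is routine verification.
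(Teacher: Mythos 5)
Your proposal is correct and follows essentially the same route as the paper: verify conditions 7(a), 7(b), and 7(c) of Definition~\ref{sat} directly, using that Pavel's indistinguishability class of $u$ is $\{u,v\}$, that the truth set of ``Pavel lost the lottery'' is $\{u,v\}$, and that $u\prec_p w$ and $v\prec_p w$. No meaningful differences from the paper's own proof.
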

\begin{proof}
It suffices to verify conditions (a), (b), and (c) from item 7 of Definition~\ref{sat}:

\vspace{1mm}
\noindent{\em Condition a:} Consider any world $y$ such that $u\sim_p y$. We will show that $y\Vdash \mbox{``Pavel lost the lottery''}$. Indeed, assumption $u\sim_p y$ implies that $y\in\{u,v\}$, see Figure~\ref{example lottery figure}. Therefore, $y\Vdash \mbox{``Pavel lost the lottery''}$, see again Figure~\ref{example lottery figure}.

\vspace{1mm}
\noindent{\em Condition b:} Consider any worlds $y,z$ such that $y\Vdash \mbox{``Pavel lost the lottery''}$ and $z\nVdash \mbox{``Pavel lost the lottery''}$. We will show that $y\prec_p z$. Indeed, assumption $y\Vdash \mbox{``Pavel lost the lottery''}$ implies that $y\in \{u,v\}$, see Figure~\ref{example lottery figure}. Similarly, assumption $z\nVdash \mbox{``Pavel lost the lottery''}$ implies that $z=w$. Statements $y\in \{u,v\}$ and $z=w$ imply that $y\prec_s z$, see again Figure~\ref{example lottery figure}.

\vspace{1mm}
\noindent{\em Condition c:} $w\nVdash \mbox{``Pavel lost the lottery''}$.
\end{proof}

\begin{proposition}
$u\Vdash \K_s\S_p(\mbox{``Pavel lost the lottery''})$.
\end{proposition}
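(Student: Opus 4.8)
\section*{Proof proposal}

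The plan is to unfold the knowledge modality using item~5 of Definition~\ref{sat} and then reduce the claim to the already-established Proposition~\ref{oct3-Sp}. By item~5, the statement $u\Vdash \K_s\S_p(\mbox{``Pavel lost the lottery''})$ holds precisely when $y\Vdash \S_p(\mbox{``Pavel lost the lottery''})$ for every world $y\in W$ with $u\sim_s y$. So the first step is to determine the $\sim_s$-equivalence class of $u$ in the lottery model of Figure~\ref{example lottery figure}.

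Next I would observe, reading off Figure~\ref{example lottery figure}, that in world $u$ Sanaz is the winner and hence she can tell $u$ apart from both $v$ (where Omar wins) and $w$ (where Pavel wins); the only dashed $s$-edge is the one joining $v$ and $w$. Therefore the only world $y$ with $u\sim_s y$ is $y=u$ itself. This collapses the universally quantified obligation to the single instance $u\Vdash \S_p(\mbox{``Pavel lost the lottery''})$.

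Finally, that instance is exactly the content of Proposition~\ref{oct3-Sp}, which has already been proved, so invoking it completes the argument. In summary: $u\sim_s y$ forces $y=u$, and $u\Vdash \S_p(\mbox{``Pavel lost the lottery''})$ by Proposition~\ref{oct3-Sp}, hence $u\Vdash \K_s\S_p(\mbox{``Pavel lost the lottery''})$ by item~5 of Definition~\ref{sat}.

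There is essentially no hard step here: the only thing to be careful about is correctly reading the indistinguishability relation from the diagram, namely that Sanaz, having seen a winning ticket in world $u$, has a singleton information set at $u$. Once that is noted, the result is immediate from Proposition~\ref{oct3-Sp}.
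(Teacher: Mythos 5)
Your proposal is correct and follows essentially the same route as the paper's own proof: unfold $\K_s$ via item~5 of Definition~\ref{sat}, note from Figure~\ref{example lottery figure} that $u\sim_s y$ forces $y=u$ since Sanaz's only $s$-indistinguishability link is between $v$ and $w$, and then invoke Proposition~\ref{oct3-Sp}. Nothing is missing.
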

\begin{proof}
Consider any world $y$ such that $u\sim_s y$. By item 5 of Definition~\ref{sat}, it suffices to show that $y\Vdash \S_p(\mbox{``Pavel lost the lottery''})$. Indeed, assumption $u\sim_s y$ implies that $u=y$, see Figure~\ref{example lottery figure}. Therefore, $y\Vdash \S_p(\mbox{``Pavel lost the lottery''})$ by Proposition~\ref{oct3-Sp}. 
\end{proof}

\section{Undefinability of Sadness through Happiness}\label{Undefinability of Sadness through Happiness section}

In this section we prove that sadness is not definable through happiness. More formally, we show that formula $\S_a p$ is not equivalent to any formula in the language $\Phi^{-\S}$:
$$
\phi:=p\;|\;\neg\phi\;|\;\phi\to\phi\;|\;\N\phi\;|\;\K_a\phi\;|\;\H_a\phi,
$$
which is obtained by removing modality $\S$ from the full language $\Phi$ of our logical system. In the next section we will use a duality principle to claim that happiness is not definable through sadness either.

Without loss of generality, in this section we assume that the set of agents $\mathcal{A}$ contains a single agent $a$ and the set of propositional variables contains a single variable $p$. To prove undefinability of sadness through happiness we consider two epistemic models with preferences depicted in Figure~\ref{two models figure}. By $\Vdash_l$ and $\Vdash_r$ we mean the satisfaction relations for the left and and the right model respectively.

In Lemma~\ref{nov30-a}, we show that these two models are indistinguishable in language the $\Phi^{-\S}$. In Lemma~\ref{nov30-b} and Lemma~\ref{nov30-c}, we prove that $w_1\Vdash_l \S_a p$ and $w_1\nVdash_r \S_a p$ respectively. Together, these three statements imply undefinability of modality $\S$ in the language $\Phi^{-\S}$, which is stated in the end of this section as Theorem~\ref{dec1-S-is-not-definable}. We start with two auxiliary lemmas used in the proof of Lemma~\ref{nov30-a}.

\begin{figure}[ht]
\begin{center}
\scalebox{0.45}{\includegraphics{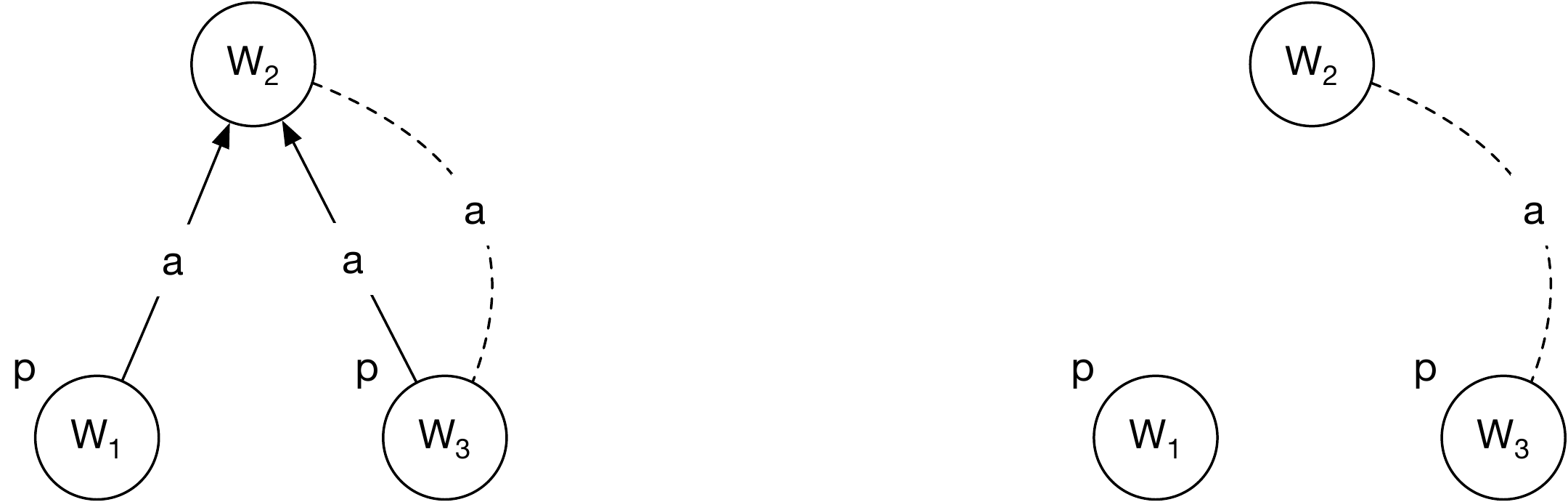}}
\caption{Two Models.}\label{two models figure}
\end{center}
\end{figure}

\begin{lemma}\label{H l lemma}
$w\nVdash_l \H_a \phi$ for each world $w\in\{w_1,w_2,w_3\}$ and each formula $\phi\in \Phi$.
\end{lemma}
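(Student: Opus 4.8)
The strategy is to show that the happiness condition always fails in the left model because of clause 6(b) of Definition~\ref{sat}, which demands that \emph{every} $\phi$-false world be $\prec_a$-below \emph{every} $\phi$-true world. I would argue by cases on which subset of $W_l$ the formula $\phi$ defines, and in each nontrivial case exhibit a $\phi$-false world $u$ and a $\phi$-true world $u'$ with $u\nprec_a u'$, thereby violating 6(b); the trivial cases (where $\phi$ holds in every world, or in none) are killed directly by clause 6(c) together with the fact that $w\sim_a w$ and clause 6(a).

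Concretely, first I would read off from Figure~\ref{two models figure} the structure of the left model: its worlds $w_1,w_2,w_3$ (plus whatever auxiliary worlds appear), the equivalence classes of $\sim_a$, and the strict partial order $\prec_a$. The key structural feature I expect to use is that the preference relation on the left model is \emph{not} a total preorder separating $p$-worlds from $\neg p$-worlds in a single direction — i.e., there exist worlds $x,y$ with $x\nprec_a y$ and $y\nprec_a x$ that would have to be comparable if $\H_a\phi$ were to hold for a suitable $\phi$. Then I would observe that the set of worlds satisfying any formula $\phi\in\Phi$ is some subset $S\subseteq W_l$, and split:

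\emph{Case 1: $S=W_l$.} Then no world fails $\phi$, so clause 6(c) fails and $w\nVdash_l\H_a\phi$ for every $w$. \emph{Case 2: $S=\emptyset$.} Then $w\nVdash_l\phi$, and since $w\sim_a w$, clause 6(a) fails, so in particular $w\nVdash_l\H_a\phi$ (indeed $\H_a\phi$ requires $\phi$ to hold in the reflexive class). \emph{Case 3: $\emptyset\subsetneq S\subsetneq W_l$.} Here I would pick a specific $u\notin S$ and $u'\in S$ witnessing, via the concrete shape of $\prec_a$ in the figure, that $u\nprec_a u'$ — for instance two worlds that are $\prec_a$-incomparable, or a pair ordered the "wrong" way. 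This contradicts clause 6(b), so again $w\nVdash_l\H_a\phi$. Since these three cases are exhaustive over all subsets $S$, and hence over all formulas $\phi\in\Phi$, the lemma follows for all $w\in\{w_1,w_2,w_3\}$.

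The main obstacle is purely that the argument depends on the precise preference edges drawn in Figure~\ref{two models figure}, which I cannot see in the excerpt; the delicate point in Case~3 is to verify that \emph{no} nonempty proper subset $S$ of worlds can be "downward-and-upward separated" by $\prec_a$ in the required strong sense, which amounts to checking that the left model's preference order has at least one pair of incomparable worlds straddling every possible cut — a finite check once the figure is fixed. If the left model were, say, a chain on the relevant worlds this could fail, so the construction in Figure~\ref{two models figure} must have been chosen with enough incomparability (or a cycle-free but non-linear shape) precisely to make Case~3 go through; I would make that explicit by naming the offending incomparable pair and citing the figure.
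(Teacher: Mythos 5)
There is a genuine gap in your Case~3. Your strategy relies on the claim that for \emph{every} nonempty proper subset $S$ of the worlds of the left model there exist a $\phi$-false world $u$ and a $\phi$-true world $u'$ with $u\nprec_a u'$, so that clause 6(b) alone kills $\H_a\phi$. This is false for the left model of Figure~\ref{two models figure}: its preference relation is exactly $w_1\prec_a w_2$ and $w_3\prec_a w_2$, so the cut $S=\{w_2\}$ \emph{is} separated in the required strong sense --- every world outside $S$ is strictly below every world in $S$. A concrete formula realizing this cut is $\phi=\neg p$ (since $\pi_l(p)=\{w_1,w_3\}$), for which clauses 6(b) and 6(c) both hold. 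For this $\phi$ the failure of $\H_a\phi$ cannot be obtained from 6(b); it comes from clause 6(a) and the epistemic relation: the only world satisfying $\neg p$ is $w_2$, but $w_2\sim_a w_3$ and $w_3\Vdash_l p$, so no world knows $\neg p$. Your plan, which uses 6(a) only in the degenerate case $S=\emptyset$, misses exactly this situation, and your closing remark that the finite check ``must go through'' because the figure was chosen with enough incomparability is the point at which the argument breaks.

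By contrast, the paper's proof interleaves all three clauses: assuming $w\Vdash_l\H_a\phi$, reflexivity of $\sim_a$ and 6(a) give $w\Vdash\phi$, 6(c) gives some $\widehat w\nVdash\phi$, and 6(b) gives $\widehat w\prec_a w$; since the only $\prec_a$-edges point into $w_2$, this forces $w=w_2$ and $\widehat w\in\{w_1,w_3\}$. Then $w_2\sim_a w_3$ and 6(a) force $w_3\Vdash\phi$, so $\widehat w=w_1$ and $w_1\nVdash\phi$, whence 6(b) would require $w_1\prec_a w_3$, which does not hold --- contradiction. If you want to keep your case-on-truth-set structure, you must add to Case~3 a subcase for the cut $\{w_2\}$ versus $\{w_1,w_3\}$ and dispose of it via clause 6(a) and the equivalence class $\{w_2,w_3\}$, rather than via 6(b).
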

\begin{proof}
Suppose that $w\Vdash_l \H_a \phi$. Thus, by item 6 of Definition~\ref{sat}, 
\begin{eqnarray}
&&u\Vdash_l\phi \mbox{ for each world $u\in W$ such that } w\sim_a u,\label{item a}\\
&&u\prec_a u' \mbox{ for all worlds $u,u'\in W$ such that }
    u\nVdash\phi  \mbox{ and } u'\Vdash \phi,\label{item b}
\end{eqnarray}
and there is a world $\widehat{w}\in\{w_1,w_2,w_3\}$ such that
\begin{equation}\label{item c}
    \widehat{w}\nVdash\phi.
\end{equation}
Since relation $\sim_a$ is reflexive, statement~(\ref{item a}) implies that 
\begin{equation}\label{item 1}
    w\Vdash\phi.
\end{equation}
Thus, using statements~(\ref{item b}) and (\ref{item c}),
\begin{equation}\label{item 2}
    \widehat{w}\prec_a w. 
\end{equation}
Hence, see Figure~\ref{two models figure} (left),
\begin{eqnarray}
    &&\widehat{w}\in\{w_1,w_3\},\label{item 3a}\\
    &&w=w_2. \label{item 3b}
\end{eqnarray}
Note that $w_2\sim_a w_3$, see Figure~\ref{two models figure} (left). Thus, $w\sim_a w_3$ by statement~(\ref{item 3b}). Then, by statement~(\ref{item a}),
\begin{equation}\label{item 4}
    w_3\Vdash\phi.
\end{equation}
Hence, $\widehat{w}\neq w_3$ because of statement~(\ref{item c}). Thus, 
$\widehat{w}=w_1$
due to statement~(\ref{item 3a}).
Then,
$w_1\nVdash\phi$ because of statement~(\ref{item c}).
Therefore,
$w_1\prec_a w_3$ by statements~(\ref{item b}) and (\ref{item 4}), which is a contradiction, see Figure~\ref{two models figure} (left).
\end{proof}

\begin{lemma}\label{H r lemma}
$w\nVdash_r \H_a \phi$ for each world $w\in\{w_1,w_2,w_3\}$ and each formula $\phi\in \Phi$.
\end{lemma}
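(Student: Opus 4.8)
The plan is to run exactly the argument of the proof of Lemma~\ref{H l lemma}, now against the right model. Suppose, for contradiction, that $w\Vdash_r\H_a\phi$ for some world $w\in\{w_1,w_2,w_3\}$ and some formula $\phi\in\Phi$. Unfolding item~6 of Definition~\ref{sat} gives three facts: (a) $u\Vdash_r\phi$ whenever $w\sim_a u$; (b) $u\prec_a u'$ whenever $u\nVdash_r\phi$ and $u'\Vdash_r\phi$; and (c) there is a world $\widehat w\in\{w_1,w_2,w_3\}$ with $\widehat w\nVdash_r\phi$. Reflexivity of $\sim_a$ and fact (a) give $w\Vdash_r\phi$, and then facts (b) and (c) together force $\widehat w\prec_a w$.

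From here I would do a finite case analysis on the ordered pairs of worlds related by $\prec_a$ in the right model, read off directly from Figure~\ref{two models figure} (right). For each admissible pair $(\widehat w,w)$ with $\widehat w\prec_a w$: first use the $\sim_a$-edges incident to $w$ together with fact (a) to deduce that some further worlds also satisfy $\phi$; next, since $\widehat w$ still fails $\phi$ by fact (c), apply fact (b) once more to force a $\prec_a$-edge between a $\phi$-false world and a $\phi$-true world; and finally observe that this edge is absent from Figure~\ref{two models figure} (right), the contradiction. This mirrors the left-model chain ($w=w_2$; $w_2\sim_a w_3$ makes $w_3\Vdash\phi$; hence $\widehat w=w_1\nVdash\phi$; hence the missing edge $w_1\prec_a w_3$ is demanded).

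The only real work, and the one place the argument can diverge from Lemma~\ref{H l lemma}, is that the preference and indistinguishability relations of the right model differ slightly from those of the left model --- that difference is exactly what yields $w_1\Vdash_l\S_a p$ in Lemma~\ref{nov30-b} but $w_1\nVdash_r\S_a p$ in Lemma~\ref{nov30-c}. So I would recheck, case by case, that in the right model every assignment of $\phi$-true and $\phi$-false worlds consistent with (a) and (c) still ultimately requires a non-existent $\prec_a$-edge. I expect this to go through, since $\H_a$ only inspects the ``$\phi$-false strictly below $\phi$-true'' shape of the preference order among reachable worlds, and the two models were engineered to agree on that shape; the discrepancy detected by $\S_a$ concerns the opposite direction (``$\phi$-true strictly below $\phi$-false''), which $\H_a$ never examines. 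Hence the main obstacle is purely bookkeeping: enumerating the cases for the right model's diagram correctly and confirming the missing-edge contradiction in each.
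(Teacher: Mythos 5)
Your first paragraph already \emph{is} the paper's entire proof: reflexivity of $\sim_a$ with 6(a) gives $w\Vdash_r\phi$, 6(c) gives a world $\widehat w\nVdash_r\phi$, and 6(b) then forces $\widehat w\prec_a w$, which is immediately a contradiction because the preference relation $\prec_a$ of the right model is empty, so the subsequent case analysis you plan is vacuous (there are no $\prec_a$-pairs to enumerate) rather than a genuine obstacle. The only small inaccuracy is the remark that the indistinguishability relations of the two models differ slightly --- they are identical; the two models differ only in their preference relations.
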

\begin{proof}
Suppose $w\Vdash_r \H_a \phi$. Thus, $w\Vdash_r \phi$ by item 6(a) of Definition~\ref{sat} and the reflexivity of relation $\sim_a$. At the same time, by item 6(c) of Definition~\ref{sat}, there must exist a world $u\in W$ such that $u\nVdash_r\phi$. By item 6(b) of the same Definition~\ref{sat}, the assumption $w\Vdash_r \H_a \phi$ and the statements $u\nVdash_r\phi$ and  $w\Vdash_r \phi$ imply that $u\prec_a w$, which is a contradiction because relation $\prec_a$ in the right model is empty, see Figure~\ref{two models figure}.
\end{proof}

\begin{lemma}\label{nov30-a}
$w\Vdash_l \phi$ iff $w\Vdash_r \phi$ for each world $w$ and each formula $\phi\in \Phi^{-\S}$.
\end{lemma}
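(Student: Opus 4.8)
The plan is to prove the statement by structural induction on the formula $\phi\in\Phi^{-\S}$. For the basis, when $\phi$ is the propositional variable $p$, the claim follows by inspecting Figure~\ref{two models figure}: the two models assign $p$ the same truth value in each of the worlds $w_1,w_2,w_3$. The Boolean cases $\phi=\neg\psi$ and $\phi=\psi_1\to\psi_2$ are immediate from the induction hypothesis together with items 2 and 3 of Definition~\ref{sat}.

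For the case $\phi=\N\psi$, note that both models share the same universe $W=\{w_1,w_2,w_3\}$. Hence, by item 4 of Definition~\ref{sat}, $w\Vdash_l\N\psi$ iff $u\Vdash_l\psi$ for every $u\in W$, which by the induction hypothesis is equivalent to $u\Vdash_r\psi$ for every $u\in W$, i.e. to $w\Vdash_r\N\psi$. The case $\phi=\K_a\psi$ is handled in the same manner, using additionally that the two models share the same indistinguishability relation $\sim_a$ (again by inspection of Figure~\ref{two models figure}) and item 5 of Definition~\ref{sat}.

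The only remaining case is $\phi=\H_a\psi$, and here the induction hypothesis is not needed at all: by Lemma~\ref{H l lemma} we have $w\nVdash_l\H_a\psi$ for every world $w\in\{w_1,w_2,w_3\}$, and by Lemma~\ref{H r lemma} we have $w\nVdash_r\H_a\psi$ for every such $w$. Thus both sides of the biconditional are false, and the equivalence holds vacuously. This completes the induction.

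The argument is essentially routine structural induction; the one point that requires care is verifying, from Figure~\ref{two models figure}, that the two models genuinely agree on the valuation of $p$ and on the relation $\sim_a$ — they differ only in their preference relations $\prec_a$, and it is precisely this difference that the auxiliary Lemmas~\ref{H l lemma} and~\ref{H r lemma} neutralize, by showing that no happiness formula is ever satisfied in either model. I expect that this handling of the $\H_a$ case — where the usual appeal to the induction hypothesis is replaced by the two lemmas — is the only nontrivial ingredient of the proof.
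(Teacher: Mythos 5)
Your proposal is correct and follows essentially the same route as the paper: structural induction with the propositional, Boolean, $\N$, and $\K_a$ cases handled via agreement of the two models on valuation, universe, and indistinguishability relation, and the $\H_a$ case disposed of by Lemma~\ref{H l lemma} and Lemma~\ref{H r lemma}, which make both sides of the biconditional false. No gaps.
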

\begin{proof}
We prove the statement by structural induction on formula $\phi$. For the case when formula $\phi$ is propositional variable $p$, observe that $\pi_l(p)=\{w_1,w_3\}=\pi_r(p)$ by the choice of the models, see Figure~\ref{two models figure}. Thus, $w\Vdash_l p$ iff $w\Vdash_r p$ for any world $w$ by item 1 of Definition~\ref{sat}. The case when formula $\phi$ is a negation or an implication follows from the induction hypothesis and items 2 and 3 of Definition~\ref{sat} in the straightforward way.

Suppose formula $\phi$ has the form $\N\psi$. If $w\nVdash_l\N\psi$, then by item 4 of Definition~\ref{sat}, there must exists a world $u\in \{w_1,w_2,w_3\}$ such that $u\nVdash_l\psi$. Hence, by the induction hypothesis, $u\nVdash_r\psi$. Therefore, $w\nVdash_r\N\psi$, by item 4 of Definition~\ref{sat}. The proof in the other direction is similar.

Assume that formula $\phi$ has the form $\K_a\psi$. If $w\nVdash_l\K_a\psi$, then, by item 5 of Definition~\ref{sat}, there is a world $u\in \{w_1,w_2,w_3\}$ such that $w\sim^l_a u$ and $u\nVdash_l \psi$. Then, $w\sim^r_a u$ because relations $\sim^l$ and $\sim^r$ are equal, see Figure~\ref{two models figure} and, by the induction hypothesis, $u\nVdash_r \psi$. Therefore, $w\nVdash_r\K_a\psi$ by item 5 of Definition~\ref{sat}. The proof in the other direction is similar.

Finally, suppose formula $\phi$ has the form $\H_a\psi$. Therefore, $w\nVdash_l\phi$ and $w\nVdash_r\phi$ by Lemma~\ref{H l lemma} and Lemma~\ref{H r lemma} respectively.
\end{proof}

\begin{lemma}\label{nov30-b}
$w_1\Vdash_l \S_a p$.
\end{lemma}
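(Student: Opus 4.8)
The plan is to verify directly the three conditions~(a), (b), (c) of item~7 of Definition~\ref{sat} for the formula $\S_a p$ at the world $w_1$ of the left model, reading the indistinguishability and preference relations off Figure~\ref{two models figure}. Recall the relevant features of that model: $\pi_l(p)=\{w_1,w_3\}$, so $w_1\Vdash_l p$ and $w_3\Vdash_l p$ while $w_2\nVdash_l p$; the world $w_1$ sits alone in its $\sim_a$-class, the only non-trivial indistinguishability being $w_2\sim_a w_3$; and the preference relation of agent $a$ in the left model consists precisely of $w_1\prec_a w_2$ and $w_3\prec_a w_2$.

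First I would dispatch condition~(a): any world $u$ with $w_1\sim_a u$ must equal $w_1$, and $w_1\Vdash_l p$, so the condition holds. Condition~(c) is immediate, witnessed by $w_2\nVdash_l p$. The only step carrying content is condition~(b): suppose $u\Vdash_l p$ and $u'\nVdash_l p$; then the first assumption forces $u\in\{w_1,w_3\}$ and the second forces $u'=w_2$, so in either case the edge $u\prec_a w_2$ is present in the model, giving $u\prec_a u'$ as required. Since all three conditions of item~7 are satisfied, I conclude $w_1\Vdash_l\S_a p$.

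I do not anticipate a genuine obstacle; the one thing requiring care is orienting the preference edges of Figure~\ref{two models figure} correctly. Indeed, the same edge configuration is exactly what makes $\H_a p$ fail at $w_1$ in Lemma~\ref{H l lemma}: happiness about $p$ would demand that the unique $p$-false world be \emph{less} preferred than the $p$-true worlds, whereas here it is \emph{more} preferred, and this reversal is precisely the asymmetry the two-model example is built to exploit.
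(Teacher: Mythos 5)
Your proof is correct and follows essentially the same route as the paper: it verifies conditions (a), (b), and (c) of item 7 of Definition~\ref{sat} at $w_1$ in the left model, using exactly the facts $\pi_l(p)=\{w_1,w_3\}$, that $w_1$ is $\sim_a$-related only to itself, and that the preference edges are $w_1\prec_a w_2$ and $w_3\prec_a w_2$. Your concluding aside correctly identifies the orientation of these edges as the reason $\H_a p$ fails while $\S_a p$ holds, which is indeed the asymmetry the two-model construction exploits.
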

\begin{proof}
We verify the three conditions from item 7 of Definition~\ref{sat} separately:
\vspace{1mm}
\noindent{\em Condition a:} Observe that $w_1\in \{w_1,w_3\}=\pi_l(p)$, see Figure~\ref{two models figure} (left). Then, $w_1\Vdash_l p$ by item 1 of Definition~\ref{sat}. Note also that there is only one world $u\in\{w_1,w_2,w_3\}$ such that $w_1\sim^l_a u$ (namely, the world $w_1$ itself), see Figure~\ref{two models figure} (left). Therefore, $u\Vdash p$ for each world $u\in \{w_1,w_2,w_3\}$ such that $w_1\sim_a u$.
    
\vspace{1mm}
\noindent{\em Condition b:} Note that $\pi_l(p)=\{w_1,w_3\}$, see Figure~\ref{two models figure} (left). Then,
    $w_1\Vdash_l p$, 
    $w_3\Vdash_l p$, and
    $w_2\nVdash_l p$ by item 1 of Definition~\ref{sat}. Also, observe that $w_1\prec^l_a w_2$ and  $w_3\prec^l_a w_2$, see Figure~\ref{two models figure} (left). Thus, for any world $u,u'\in \{w_1,w_2,w_3\}$, if $u\Vdash p$ and $u'\nVdash p$, then $u\prec_a u'$.
    
\vspace{1mm}
\noindent{\em Condition c:} $w_2\nVdash_l p$ by item 1 of Definition~\ref{sat} and because $w_2\notin \{w_1,w_3\}=\pi_l(p)$, see Figure~\ref{two models figure} (left).

\vspace{1mm}
This concludes the proof of the lemma.
\end{proof}

\begin{lemma}\label{nov30-c}
$w_1\nVdash_r \S_a p$.
\end{lemma}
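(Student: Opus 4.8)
The plan is to prove Lemma~\ref{nov30-c} by contradiction, exploiting the fact that the preference relation $\prec_a$ in the right model is empty (the same fact already used in the proof of Lemma~\ref{H r lemma}). So I would assume $w_1\Vdash_r \S_a p$ and then show that this forces a preference edge that does not exist in the right model.

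First I would read off the valuation of $p$ in the right model from Figure~\ref{two models figure}: by the choice of the models, $\pi_r(p)=\{w_1,w_3\}$, hence $w_1\Vdash_r p$ (by item~1 of Definition~\ref{sat}) while $w_2\nVdash_r p$ (since $w_2\notin\pi_r(p)$). Next, I would invoke condition~(b) of item~7 of Definition~\ref{sat}, which is part of the meaning of $w_1\Vdash_r\S_a p$: for any two worlds $u,u'$, if $u\Vdash_r p$ and $u'\nVdash_r p$, then $u\prec_a u'$. Instantiating this with $u=w_1$ and $u'=w_2$ yields $w_1\prec_a w_2$ in the right model.

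Finally, I would note that the relation $\prec_a$ in the right model is empty (see Figure~\ref{two models figure}), so $w_1\prec_a w_2$ is impossible. This contradiction shows $w_1\nVdash_r \S_a p$, completing the proof.

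There is essentially no obstacle here; the only care needed is to correctly identify from the figure that $w_2$ fails $p$ while $w_1$ satisfies it, and that the right model has no preference edges, so that condition~7(b) is the one that breaks. (Conditions 7(a) and 7(c) do in fact hold in the right model at $w_1$, so the argument must go through 7(b) specifically.)
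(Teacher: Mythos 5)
Your proposal is correct and matches the paper's own proof in all essentials: both identify $w_1\Vdash_r p$, $w_2\nVdash_r p$ from $\pi_r(p)=\{w_1,w_3\}$, and both use the emptiness of $\prec_a$ in the right model to show condition 7(b) of Definition~\ref{sat} fails. The paper phrases it contrapositively rather than as an explicit contradiction, but the argument is the same.
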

\begin{proof}
Note that $\pi_r(p)=\{w_1,w_3\}$, see Figure~\ref{two models figure} (right). Thus, $w_1\Vdash_r p$ and $w_2\nVdash_r p$ by item 1 of Definition~\ref{sat}. Observe also, that $w_1\nprec_a w_2$, Figure~\ref{two models figure} (right). Therefore, $w_1\nVdash_r \S_a p$ by item 7(b) of Definition~\ref{sat}.
\end{proof}

The next theorem follows from the three lemmas above.

\begin{theorem}\label{dec1-S-is-not-definable}
Modality $\S$ is not definable in language $\Phi^{-\S}$. \qed
\end{theorem}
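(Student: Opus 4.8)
The plan is to derive Theorem~\ref{dec1-S-is-not-definable} as an immediate consequence of the three preceding lemmas by a contradiction argument. Suppose, toward a contradiction, that modality $\S$ is definable in $\Phi^{-\S}$; that is, there is a formula $\phi\in\Phi^{-\S}$ such that the biconditional $\S_a p\leftrightarrow\phi$ is valid, i.e.\ holds at every world of every epistemic model with preferences. In particular, it holds at world $w_1$ in both the left and the right model of Figure~\ref{two models figure}. Since by Lemma~\ref{nov30-b} we have $w_1\Vdash_l \S_a p$, validity of the biconditional gives $w_1\Vdash_l\phi$. Since by Lemma~\ref{nov30-c} we have $w_1\nVdash_r \S_a p$, it gives $w_1\nVdash_r\phi$. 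But $\phi$ belongs to the $\S$-free fragment $\Phi^{-\S}$, so Lemma~\ref{nov30-a} applies to it and yields $w_1\Vdash_l\phi$ iff $w_1\Vdash_r\phi$ --- contradicting the two statements just derived.

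I would state this explicitly in the proof, citing Lemma~\ref{nov30-a}, Lemma~\ref{nov30-b}, and Lemma~\ref{nov30-c} at the appropriate points, and recalling the reduction to a single agent $a$ and a single propositional variable $p$ already made ``without loss of generality'' at the start of the section, so that the two models of Figure~\ref{two models figure} suffice to witness the failure of definability in the full language.

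There is essentially no hard step here: all the real work has been front-loaded into the construction of the two models and into Lemmas~\ref{H l lemma}--\ref{nov30-c}. The only point that warrants a sentence of care is pinning down the intended meaning of ``definable'' --- namely the existence of a $\Phi^{-\S}$-formula equivalent to $\S_a p$ under the semantics of Definition~\ref{sat} --- and noting that it is enough to exhibit a \emph{single} pointed model on which $\S_a p$ and any purported definiens must disagree, which is exactly what the pair $(w_1,\Vdash_l)$ and $(w_1,\Vdash_r)$ provides.
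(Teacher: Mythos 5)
Your proposal is correct and follows exactly the paper's intended argument: the paper leaves the theorem as an immediate consequence of Lemmas~\ref{nov30-a}, \ref{nov30-b}, and \ref{nov30-c}, which is precisely the contradiction you spell out at the pointed models $(w_1,\Vdash_l)$ and $(w_1,\Vdash_r)$. No gaps; the only addition you make is the (appropriate) explicit statement of what ``definable'' means.
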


\section{Duality of Happiness and Sadness}\label{Duality of Happiness and Sadness}

As we have shown in the previous section, sadness modality is not definable through happiness modality. In spite of this, there still is a connection between these two modalities captured below in Theorem~\ref{duality theorem}. To understand this connection, we need to introduce the notion of a converse model and the notion of $\tau$-translation. As usual, for any binary relation $R\subseteq X\times Y$, by converse relation $R^{\c}$ we mean the set of pairs $\{(y,x)\in Y\times X\;|\; (x,y)\in R\}$.

\begin{definition}\label{converse model}
By the converse model $M^{\c}$ of an epistemic model with preference $M=(W,\{\sim_a\}_{a\in\mathcal{A}},\{\prec_a\}_{a\in\mathcal{A}},\pi)$, we mean model $(W,\{\sim_a\}_{a\in\mathcal{A}},\{\prec^{\c}_a\}_{a\in\mathcal{A}},\pi)$.
\end{definition}
For any epistemic model with preference, by $\Vdash$ we denote the satisfaction relation for this model and by $\Vdash^\c$  the satisfaction relation for the converse model.

\begin{definition}\label{tau translation}
For any formula $\phi\in\Phi$, formula $\tau(\phi)\in \Phi$ is defined recursively as follows:
\begin{enumerate}
    \item $\tau(p) \equiv p$, where $p$ is a propositional variable, 
    \item $\tau(\neg\phi) \equiv \neg\tau(\phi)$,
    \item $\tau(\phi\to\psi) \equiv \tau(\phi)\to\tau(\psi)$,
    \item $\tau(\N\phi) \equiv \N\tau(\phi)$,
    \item $\tau(\K_a\phi) \equiv \K_a\tau(\phi)$,
    \item $\tau(\H_a\phi) \equiv \S_a\tau(\phi)$,
    \item $\tau(\S_a\phi) \equiv \H_a\tau(\phi)$.
\end{enumerate}
\end{definition}

We are now ready to state and prove the ``duality principle'' that connects modalities $\H$ and $\S$. 

\begin{theorem}\label{duality theorem}
$w\Vdash \phi$ iff $w\Vdash^\c \tau(\phi)$, for each world $w$ of an epistemic model with preferences.
\end{theorem}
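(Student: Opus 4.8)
The plan is to prove the equivalence $w \Vdash \phi$ iff $w \Vdash^\c \tau(\phi)$ by structural induction on the formula $\phi$, for an arbitrary fixed epistemic model with preferences $M = (W,\{\sim_a\}_{a\in\mathcal{A}},\{\prec_a\}_{a\in\mathcal{A}},\pi)$, whose converse is $M^\c = (W,\{\sim_a\}_{a\in\mathcal{A}},\{\prec^\c_a\}_{a\in\mathcal{A}},\pi)$. Note that $M$ and $M^\c$ share the same set of worlds $W$, the same indistinguishability relations $\sim_a$, and the same valuation $\pi$; only the preference relations are reversed. First I would dispatch the base case: for a propositional variable $p$ we have $\tau(p) \equiv p$, and since $\pi$ is identical in both models, $w \Vdash p$ iff $w \in \pi(p)$ iff $w \Vdash^\c p$. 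The Boolean cases $\neg\phi$ and $\phi \to \psi$ follow immediately from the induction hypothesis together with items 2 and 3 of Definition~\ref{sat}, using that $\tau$ commutes with $\neg$ and $\to$.

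Next I would handle the cases $\N\psi$ and $\K_a\psi$. For $\N\psi$: $\tau(\N\psi) \equiv \N\tau(\psi)$, and $w \Vdash \N\psi$ iff $u \Vdash \psi$ for all $u \in W$, which by the induction hypothesis holds iff $u \Vdash^\c \tau(\psi)$ for all $u \in W$, i.e., iff $w \Vdash^\c \N\tau(\psi)$. The case $\K_a\psi$ is the same argument relativized to the worlds $u$ with $w \sim_a u$, using that $\sim_a$ is unchanged in $M^\c$.

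The crux is the two modal emotion cases. Consider $\phi = \H_a\psi$, so $\tau(\phi) \equiv \S_a\tau(\psi)$. I would unfold $w \Vdash \H_a\psi$ via the three conditions of item 6 of Definition~\ref{sat} and show that, under the induction hypothesis, each is equivalent to the corresponding condition of item 7 for $\S_a\tau(\psi)$ evaluated in $M^\c$. Condition (a) is the same knowledge clause on both sides, so it transfers via the induction hypothesis exactly as in the $\K_a$ case. Condition (c) is ``there is a world $u$ with $u \nVdash \psi$,'' which by the induction hypothesis is ``there is a world $u$ with $u \nVdash^\c \tau(\psi)$'' — this is precisely condition (c) for $\S_a\tau(\psi)$. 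The key point is condition (b): for $\H_a\psi$ in $M$ it reads ``for all $u, u'$, if $u \nVdash \psi$ and $u' \Vdash \psi$ then $u \prec_a u'$''; by the induction hypothesis this becomes ``for all $u, u'$, if $u \nVdash^\c \tau(\psi)$ and $u' \Vdash^\c \tau(\psi)$ then $u \prec_a u'$,'' and since $u \prec_a u'$ iff $u' \prec^\c_a u$ by definition of the converse relation, this is equivalent to ``for all $u, u'$, if $u' \Vdash^\c \tau(\psi)$ and $u \nVdash^\c \tau(\psi)$ then $u' \prec^\c_a u$'' — which, after renaming the bound variables, is exactly condition (b) of item 7 for $\S_a\tau(\psi)$ in $M^\c$. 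The case $\phi = \S_a\psi$ with $\tau(\phi) \equiv \H_a\tau(\psi)$ is entirely symmetric, swapping the roles of items 6 and 7.

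The main obstacle, such as it is, is purely bookkeeping: one must be careful to apply the induction hypothesis to the subformula $\psi$ (via $\tau(\psi)$) before reindexing the quantified worlds, and to get the direction of the converse relation right when turning $\prec_a$ into $\prec^\c_a$. There is no genuine difficulty beyond keeping the two conditions (b) aligned; the swap $\H \leftrightarrow \S$ in $\tau$ is precisely what compensates for the order reversal in the preference relation, which is the conceptual content of the duality.
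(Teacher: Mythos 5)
Your proposal is correct and follows essentially the same route as the paper: structural induction on $\phi$, with the base and Boolean/$\N$/$\K$ cases handled directly from the induction hypothesis, and the $\H_a\psi$ case settled by matching conditions (a), (b), (c) of item 6 of Definition~\ref{sat} to the corresponding conditions of item 7 in the converse model, using $u\prec_a u'$ iff $u'\prec^\c_a u$ for condition (b). The only cosmetic difference is that you phrase each condition as an equivalence at once, while the paper argues one implication and notes the converse and the $\S_a\psi$ case are similar.
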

\begin{proof}
We prove the theorem by induction on structural complexity of formula $\phi$. If $\phi$ is a propositional variable, then the statement of the theorem holds because the model and the converse model have the same valuation function $\pi$. If $\phi$ is a negation, an implication, an $\N$-formula, or an $\K$-formula, then the statement of the theorem follows from the induction hypothesis and items 2, 3, 4, and 5 of Definition~\ref{sat} respectively.

Suppose that formula $\phi$ has the form $\H_a\psi$. First, assume that $w\Vdash\H_a\psi$. Thus, by item 6 of Definition~\ref{sat}, the following three conditions are satisfied:
        \begin{enumerate}
            \item[(a)] $u\Vdash \psi$ for each world $u\in W$ such that $w\sim_a u$,
            \item[(b)] for any two worlds $u,u'\in W$, if $u\nVdash\psi$ and $u'\Vdash\psi$, then $u\prec_a u'$,
            \item[(c)] there is a world $u\in W$ such that $u\nVdash\psi$.
        \end{enumerate}
Hence, by the induction hypothesis,
\begin{enumerate}
            \item[(d)] $u\Vdash^\c \tau(\psi)$ for each world $u\in W$ such that $w\sim_a u$,
            \item[(e)] for any two worlds $u,u'\in W$, if $u\nVdash^\c\tau(\psi)$ and $u'\Vdash^\c\tau(\psi)$, then $u\prec_a u'$,
            \item[(f)] there is a world $u\in W$ such that $u\nVdash^\c\tau(\psi)$.
        \end{enumerate}
Note that statement (e) is logically equivalent to
\begin{enumerate}
            \item[(g)] for any two worlds $u,u'\in W$, if $u\Vdash^\c\tau(\psi)$ and $u'\nVdash^\c\tau(\psi)$, then $u'\prec_a u$.
        \end{enumerate}
By the definition of converse partial order, statement (g) is equivalent to
\begin{enumerate}
            \item[(h)] for any two worlds $u,u'\in W$, if $u\Vdash^\c\tau(\psi)$ and $u'\nVdash^\c\tau(\psi)$, then $u\prec^\c_a u'$.
        \end{enumerate}
Thus, $w\Vdash^\c\S_a\tau(\psi)$ by item 7 of Definition~\ref{sat} using statements (d), (h), and (f). Therefore, $w\Vdash^\c\tau(\H_a\psi)$. The proof in the other direction and the proof for the case when formula $\phi$ has the form $\S_a\psi$ are similar. 
\end{proof}

The next theorem follows from Theorem~\ref{dec1-S-is-not-definable} and Theorem~\ref{duality theorem}.

\begin{theorem}
Modality $\H$ is not definable in language $\Phi^{-\H}$. \qed
\end{theorem}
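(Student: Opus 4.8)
The plan is to derive the claim from Theorem~\ref{dec1-S-is-not-definable} by means of the duality principle of Theorem~\ref{duality theorem}. Suppose, towards a contradiction, that modality $\H$ \emph{is} definable in $\Phi^{-\H}$; that is, there is a formula $\chi\in\Phi^{-\H}$ such that $w\Vdash\H_a p$ iff $w\Vdash\chi$ for every world $w$ of every epistemic model with preferences. I would then show that $\tau(\chi)$ defines $\S$ in $\Phi^{-\S}$, contradicting Theorem~\ref{dec1-S-is-not-definable}.

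First I would record two elementary facts about the $\tau$-translation of Definition~\ref{tau translation}: (i) $\tau$ is an involution, i.e. $\tau(\tau(\phi))\equiv\phi$, by a routine structural induction exploiting that clauses 6 and 7 of Definition~\ref{tau translation} are mirror images of each other; and (ii) $\tau$ maps $\Phi^{-\H}$ into $\Phi^{-\S}$, since under $\tau$ an occurrence of $\S$ can only be produced from an occurrence of $\H$, and $\chi$ contains none. In particular $\tau(\chi)\in\Phi^{-\S}$, while $\tau(\S_a p)\equiv\H_a p$ by clauses 1 and 7 of Definition~\ref{tau translation}.

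Next I would fix an arbitrary model $M$ and world $w$, write $\Vdash_M$ and $\Vdash_{M^{\c}}$ for the satisfaction relations of $M$ and of its converse $M^{\c}$, and chain the duality principle twice, using that $(M^{\c})^{\c}=M$ because taking the converse of a strict partial order is an involution. Applying Theorem~\ref{duality theorem} to $M$ with the formula $\S_a p$ gives $w\Vdash_M\S_a p$ iff $w\Vdash_{M^{\c}}\tau(\S_a p)$, i.e. iff $w\Vdash_{M^{\c}}\H_a p$. The assumed definition is valid in \emph{every} model, hence in $M^{\c}$, so $w\Vdash_{M^{\c}}\H_a p$ iff $w\Vdash_{M^{\c}}\chi$. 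Finally, applying Theorem~\ref{duality theorem} to $M^{\c}$ (whose converse is $M$) with the formula $\chi$ gives $w\Vdash_{M^{\c}}\chi$ iff $w\Vdash_M\tau(\chi)$. Composing the three equivalences yields $w\Vdash_M\S_a p$ iff $w\Vdash_M\tau(\chi)$ for all $M$ and $w$, so $\tau(\chi)\in\Phi^{-\S}$ defines $\S$, contradicting Theorem~\ref{dec1-S-is-not-definable}; this proves the theorem.

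I expect the only real delicacy to be bookkeeping: keeping straight which application of Theorem~\ref{duality theorem} is to $M$ and which is to $M^{\c}$, in which direction each equivalence is used, and the fact that ``definable'' must be read as semantic equivalence over the class of all models (so that the assumed equivalence transfers to $M^{\c}$). The supporting facts — that $\tau$ and the converse operation are involutions, and that $\tau$ respects the sublanguage $\Phi^{-\H}$ — are each a one-line induction or immediate from the relevant definitions.
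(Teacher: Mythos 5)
Your proposal is correct and follows exactly the route the paper intends: the paper simply states that the theorem ``follows from Theorem~\ref{dec1-S-is-not-definable} and Theorem~\ref{duality theorem},'' and your argument (pass a hypothetical defining formula $\chi\in\Phi^{-\H}$ through the $\tau$-translation and the converse model, using that $\tau$ sends $\Phi^{-\H}$ into $\Phi^{-\S}$, that $\tau(\S_a p)\equiv\H_a p$, and that $(M^{\c})^{\c}=M$) is precisely the omitted detail. The only cosmetic remark is that the involutivity of $\tau$ is not actually needed for the chain of equivalences you set up.
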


\section{Axioms of Emotions}\label{Axioms of Emotions}

In addition to propositional tautologies in language $\Phi$, our logical system contains the following axioms, where $\E\in\{\H,\S\}$. 

\begin{enumerate}
    \item Truth: $\N\phi\to\phi$, $\K_a\phi\to\phi$, and $\E_a\phi\to\phi$,
    \item Distributivity:\\ 
    $\N(\phi\to\psi)\to(\N\phi\to\N\psi)$,\\
    $\K_a(\phi\to\psi)\to(\K_a\phi\to\K_a\psi)$,
    \item Negative Introspection: 
    $\neg\N\phi\to\N\neg\N\phi$, and 
    $\neg\K_a\phi\to\K_a\neg\K_a\phi$,
    \item Knowledge of Necessity: $\N\phi\to\K_a\phi$,
    \item Emotional Introspection: $\E_a\phi\to\K_a\E_a\phi$,
    \item Emotional Consistency: $\H_a\phi\to\neg\S_a\phi$,
    \item Coherence of Potential Emotions:\\
    $\cN\E_a\phi\wedge\cN\E_a\psi\to \N(\phi\to\psi)\vee \N(\psi\to\phi)$,\\
    $\cN\H_a\phi\wedge\cN\S_a\psi\to \N(\phi\to\neg\psi)\vee \N(\neg\psi\to\phi)$,
    \item Counterfactual: $\E_a\phi\to\neg\N\phi$,
    \item Emotional Predictability:\\ $\cN\H_a\phi\vee\cN\S_a\neg\phi\to(\K_a\phi\to\H_a\phi)$,\\
    $\cN\H_a\neg\phi\vee\cN\S_a\phi\to(\K_a\phi\to\S_a\phi)$,
    \item Substitution: $\N(\phi\leftrightarrow\psi)\to (\E_a\phi\to\E_a\psi)$.
\end{enumerate}
The Truth, the Distributivity, and the Negative Introspection axioms for modalities $\N$ and $\K$ are well-known properties from modal logic S5. The Truth axiom for modality $\E$ states that if an agent is either happy or sad about $\phi$, then statement $\phi$ must be true. This axiom reflects the fact that our emotions are defined through agent's knowledge. We will mention belief-based emotions in the conclusion.

The Knowledge of Necessity axioms states that each agent knows all statements that are true in all worlds of the model. The Emotional Introspection axiom captures one of the two possible interpretations of the title of this article: each agent knows her emotions. The other interpretation of the title is stated below as Lemma~\ref{E to K}.
The Emotional Consistency axiom states that an agent cannot be simultaneously happy and sad about the same thing.

Recall that $\cN\phi$ stands for formula $\neg\N\neg\phi$. Thus, formula $\cN\E_a\phi$ means that agent $a$ {\em might} have the emotion $\E$ about statement $\phi$. In other words, formula $\cN\E_a\phi$ states that agent $a$ has a {\em potential} emotion $\E$ about $\phi$. The Coherence of Potential Emotions axioms expresses the fact that potential emotions of any agent are not independent. The first of these axioms says that if an agent has {\em the same} potential emotions about statements $\phi$ and $\psi$, then one of these statements must imply the other in each world of our model. The second of these axioms says that if an agent has {\em opposite} potential emotions about statements $\phi$ and $\psi$, then these statements cannot be both true in any world of the model. We prove soundness of these axioms along with the soundness of the other axioms of our system in Section~\ref{Soundness section}.

The Counterfactual axiom states that an agent cannot have an emotion about something which is universally true in the model. This axiom reflects items 6(c) and 7(c) of Definition~\ref{sat}.

Because the assumptions of both Emotional Predictability axioms contain disjunctions, each of these axioms could be split into two statements. The first statement of the first Emotional Predictability axiom says that if an agent is potentially happy about $\phi$, then she must be happy about $\phi$ each time when she knows that $\phi$ is true. The second statement of the same axiom says that if an agent is potentially sad about $\neg\phi$, then she must be happy about $\phi$ each time when she knows that $\phi$ is true. The second Emotional Predictability axioms is the dual form of the first axiom.

Finally, the Substitution axiom states that if two statements are equivalent in each world of the model and an agent has an emotion about one of them, then she must have the same emotion about the other statement.

We write $\vdash\phi$ and say that statement $\phi$ is a {\em theorem} of our logical system if $\phi$ is derivable from the above axioms using the Modus Ponens and the Necessitation inference rules:
$$
\dfrac{\phi, \phi\to\psi}{\psi}\hspace{20mm}
\dfrac{\phi}{\N\phi}.
$$
For any set of statements $X\subseteq\Phi$, we write $X\vdash\phi$ if formula $\phi$ is derivable from the theorems of our system and the set $X$ using only the Modus Ponens inference rule. We say that set $X$ is {\em inconsistent} if there is a formula $\phi\in\Phi$ such that $X\vdash\phi$ and $X\vdash\neg\phi$.

\begin{lemma}\label{K Necessitation}
Inference rule $\dfrac{\phi}{\K_a\phi}$ is derivable.
\end{lemma}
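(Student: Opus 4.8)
The plan is to derive the rule $\dfrac{\phi}{\K_a\phi}$ from the Necessitation rule for $\N$ together with the Knowledge of Necessity axiom. These are exactly the two ingredients available: Necessitation lets us pass from a theorem $\phi$ to the theorem $\N\phi$, and the Knowledge of Necessity axiom $\N\phi\to\K_a\phi$ is an axiom schema, hence a theorem for every $\phi$.

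First I would assume $\vdash\phi$. Applying the Necessitation inference rule gives $\vdash\N\phi$. Next, the Knowledge of Necessity axiom instantiated at $\phi$ yields $\vdash\N\phi\to\K_a\phi$. Finally, applying Modus Ponens to $\vdash\N\phi$ and $\vdash\N\phi\to\K_a\phi$ produces $\vdash\K_a\phi$, which is what we wanted. So the derived rule is justified in three lines.

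There is no real obstacle here; the only thing to be slightly careful about is the distinction between $\vdash$ (derivability from axioms using both inference rules) and $X\vdash$ (derivability using only Modus Ponens). Since the statement concerns the derived rule at the level of theorems, the full inference apparatus, in particular Necessitation, is available, so the argument goes through directly. This lemma will presumably be used subsequently so that one may freely apply $\K$-necessitation the same way one applies $\N$-necessitation.
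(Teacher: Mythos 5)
Your proof is correct and is exactly the paper's argument: the paper also derives the rule by combining the Necessitation rule, the Knowledge of Necessity axiom, and Modus Ponens, which you have simply spelled out in full. Your remark about the $\vdash$ versus $X\vdash$ distinction is also apt, since Necessitation is indeed only available at the level of theorems.
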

\begin{proof}
This rule is a combination of the Necessitation inference rule, the Knowledge of Necessity axiom, and the Modus Ponens inference rule.
\end{proof}



\begin{lemma}\label{E to K}
$\vdash \E_a\phi\to\K_a\phi$.
\end{lemma}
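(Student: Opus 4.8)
The goal is to derive $\E_a\phi\to\K_a\phi$ for $\E\in\{\H,\S\}$. The natural strategy is to route through the Emotional Introspection axiom $\E_a\phi\to\K_a\E_a\phi$ and the Truth axiom $\E_a\phi\to\phi$, combined with the distributivity and monotonicity of $\K_a$. First I would take the Truth axiom instance $\E_a\phi\to\phi$; since this is a theorem, by the derived rule in Lemma~\ref{K Necessitation} (or by Necessitation plus Knowledge of Necessity) we get $\vdash\K_a(\E_a\phi\to\phi)$. Then the $\K$-Distributivity axiom gives $\vdash\K_a(\E_a\phi\to\phi)\to(\K_a\E_a\phi\to\K_a\phi)$, so by Modus Ponens $\vdash\K_a\E_a\phi\to\K_a\phi$.

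Next I would chain this with the Emotional Introspection axiom $\vdash\E_a\phi\to\K_a\E_a\phi$: composing the two implications (a propositional tautology plus two applications of Modus Ponens) yields $\vdash\E_a\phi\to\K_a\phi$, which is exactly the claim. Each individual step is a routine use of an axiom, an inference rule, or a propositional tautology, so no single step should present real difficulty.

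The only point that needs a moment's care is the very first move: obtaining $\vdash\K_a(\E_a\phi\to\phi)$ from the theorem $\E_a\phi\to\phi$. This is not an instance of any axiom directly, but it follows because theorems are closed under the derived rule $\phi\mapsto\K_a\phi$ established in Lemma~\ref{K Necessitation} (itself built from Necessitation, Knowledge of Necessity, and Modus Ponens). I expect this to be the ``main obstacle'' only in the bookkeeping sense — one must cite Lemma~\ref{K Necessitation} rather than reach for an axiom — but conceptually the whole argument is a short syllogism: an agent knows the logical fact that having an emotion about $\phi$ entails $\phi$, the agent knows she has the emotion, hence she knows $\phi$.
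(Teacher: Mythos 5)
Your proof is correct and follows exactly the same route as the paper's: Truth axiom, then Lemma~\ref{K Necessitation} to get $\K_a(\E_a\phi\to\phi)$, then $\K$-Distributivity with Modus Ponens, and finally chaining with Emotional Introspection by propositional reasoning. Nothing to add.
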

\begin{proof}
Note that $\vdash \E_a\phi\to\phi$ by the Truth axiom. Thus,
$\vdash \K_a(\E_a\phi\to\phi)$ by Lemma~\ref{K Necessitation}.
Hence, 
$\vdash \K_a\E_a\phi\to\K_a\phi$
by the Distributivity axiom and the Modus Ponens inference rule.
Therefore, 
$\vdash \E_a\phi\to\K_a\phi$ by the Emotional Introspection axiom and propositional reasoning.
\end{proof}

The next three lemmas are well-known in model logic. We reproduce their proofs here to keep the article self-contained.

\begin{lemma}[deduction]\label{deduction lemma}
If $X,\phi\vdash\psi$, then $X\vdash\phi\to\psi$.
\end{lemma}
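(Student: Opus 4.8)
The statement to prove is the Deduction Lemma: if $X,\phi\vdash\psi$, then $X\vdash\phi\to\psi$. This is the standard deduction theorem for a Hilbert-style system whose only rule applied to the context $X$ (together with theorems) is Modus Ponens — which is exactly the setup described just before the lemma. Let me sketch the standard proof.

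The plan is to argue by induction on the length of a derivation of $\psi$ from $X\cup\{\phi\}$ using only Modus Ponens on theorems and members of $X\cup\{\phi\}$. First I would set up the induction: let $\psi_1,\dots,\psi_n=\psi$ be such a derivation, and show by induction on $k$ that $X\vdash\phi\to\psi_k$ for every $k$. There are three cases for how $\psi_k$ enters the derivation.

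In the base cases, $\psi_k$ is either a theorem of the system (i.e.\ $\vdash\psi_k$) or a member of $X$, or else $\psi_k$ is $\phi$ itself. If $\psi_k$ is $\phi$, then $X\vdash\phi\to\phi$ since $\phi\to\phi$ is a propositional tautology, hence a theorem. If $\vdash\psi_k$ or $\psi_k\in X$, then $X\vdash\psi_k$, and combining this with the tautology $\psi_k\to(\phi\to\psi_k)$ via Modus Ponens gives $X\vdash\phi\to\psi_k$. For the inductive case, $\psi_k$ is obtained by Modus Ponens from earlier formulas $\psi_i$ and $\psi_j=\psi_i\to\psi_k$ with $i,j<k$. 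By the induction hypothesis, $X\vdash\phi\to\psi_i$ and $X\vdash\phi\to(\psi_i\to\psi_k)$. Using the propositional tautology $(\phi\to(\psi_i\to\psi_k))\to((\phi\to\psi_i)\to(\phi\to\psi_k))$ together with two applications of Modus Ponens, I obtain $X\vdash\phi\to\psi_k$. Taking $k=n$ yields $X\vdash\phi\to\psi$.

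There is no real obstacle here — this is the textbook proof — but the one point to be careful about is that the argument only works because derivations from $X$ use Modus Ponens alone (not Necessitation); the paper explicitly restricts $X\vdash$ this way, so the induction goes through cleanly. I would also note that all the propositional tautologies invoked ($\phi\to\phi$, $\psi_k\to(\phi\to\psi_k)$, and the distribution axiom for implication) are available as axioms since propositional tautologies in language $\Phi$ are included in the system.
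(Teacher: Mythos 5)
Your proof is correct and follows essentially the same route as the paper's: induction on the length of the Modus Ponens-only derivation from $X\cup\{\phi\}$, with the same case split (theorem, member of $X$, the formula $\phi$ itself, Modus Ponens) and the same propositional tautologies $\psi_k\to(\phi\to\psi_k)$ and the implication-distribution tautology for the inductive step. No gaps; your remark that the restriction of $X\vdash$ to Modus Ponens alone is what makes the induction go through is exactly the right point of care.
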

\begin{proof}
Suppose that sequence $\psi_1,\dots,\psi_n$ is a proof from set $X\cup\{\phi\}$ and the theorems of our logical system that uses the Modus Ponens inference rule only. In other words, for each $k\le n$, either
\begin{enumerate}
    \item $\vdash\psi_k$, or
    \item $\psi_k\in X$, or
    \item $\psi_k$ is equal to $\phi$, or
    \item there are $i,j<k$ such that formula $\psi_j$ is equal to $\psi_i\to\psi_k$.
\end{enumerate}
It suffices to show that $X\vdash\phi\to\psi_k$ for each $k\le n$. We prove this by induction on $k$ through considering the four cases above separately.

\vspace{1mm}
\noindent{\em Case I}: $\vdash\psi_k$. Note that $\psi_k\to(\phi\to\psi_k)$ is a propositional tautology, and thus, is an axiom of our logical system. Hence, $\vdash\phi\to\psi_k$ by the Modus Ponens inference rule. Therefore, $X\vdash\phi\to\psi_k$. 

\vspace{1mm}
\noindent{\em Case II}: $\psi_k\in X$. Then, $X\vdash\psi_k$.

\vspace{1mm}
\noindent{\em Case III}: formula $\psi_k$ is equal to $\phi$. Thus, $\phi\to\psi_k$ is a propositional tautology. Therefore, $X\vdash\phi\to\psi_k$. 

\vspace{1mm}
\noindent{\em Case IV}:  formula $\psi_j$ is equal to $\psi_i\to\psi_k$ for some $i,j<k$. Thus, by the induction hypothesis, $X\vdash\phi\to\psi_i$ and $X\vdash\phi\to(\psi_i\to\psi_k)$. Note that formula 
$
(\phi\to\psi_i)\to((\phi\to(\psi_i\to\psi_k))\to(\phi\to\psi_k))
$
is a propositional tautology. Therefore, $X\vdash \phi\to\psi_k$ by applying the Modus Ponens inference rule twice.
\end{proof}

\begin{lemma}\label{superdistributivity lemma} 
If $\phi_1,\dots,\phi_n\vdash\psi$, then $\Box\phi_1,\dots,\Box\phi_n\vdash\Box\psi$, where $\Box$ is either modality $\N$ or modality $\K_a$. 
\end{lemma}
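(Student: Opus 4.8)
The plan is to argue by induction on the length $n$ of the proof of $\psi$ from the hypotheses $\phi_1,\dots,\phi_n$, exactly as one does for the analogous "superdistributivity" fact in basic normal modal logic. The key tools are the Distributivity axiom for $\Box$ (available for both $\N$ and $\K_a$ — the latter by the Distributivity axiom as stated, the former directly), the Necessitation rule for $\Box$ (directly for $\N$; for $\K_a$ it is the derived rule of Lemma~\ref{K Necessitation}), and the Deduction Lemma (Lemma~\ref{deduction lemma}).

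First I would reduce to a statement about a single boxed implication. By $n$ applications of the Deduction Lemma, the hypothesis $\phi_1,\dots,\phi_n\vdash\psi$ gives $\vdash \phi_1\to(\phi_2\to\cdots\to(\phi_n\to\psi)\cdots)$. Applying Necessitation for $\Box$ yields $\vdash \Box\bigl(\phi_1\to(\phi_2\to\cdots\to(\phi_n\to\psi)\cdots)\bigr)$. Now I would peel off the implications one at a time using the Distributivity axiom: from $\vdash\Box(\phi_1\to\chi)$ and the Distributivity instance $\Box(\phi_1\to\chi)\to(\Box\phi_1\to\Box\chi)$, Modus Ponens gives $\vdash \Box\phi_1\to\Box\chi$, hence $\Box\phi_1\vdash\Box\chi$ where $\chi$ is the remaining nested implication $\phi_2\to\cdots\to(\phi_n\to\psi)\cdots$. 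Iterating this $n$ times, discharging $\Box\phi_1,\dots,\Box\phi_n$ in turn, produces $\Box\phi_1,\dots,\Box\phi_n\vdash\Box\psi$, as required. (The base case $n=0$, i.e.\ $\vdash\psi\Rightarrow\vdash\Box\psi$, is just the Necessitation rule for $\Box$.)

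The only thing to be careful about is that both ingredients — Necessitation and Distributivity — must be available uniformly for whichever of $\N$ or $\K_a$ the symbol $\Box$ denotes. For $\N$ this is immediate from the Necessitation inference rule and the first Distributivity axiom. For $\K_a$, Necessitation is the derived rule of Lemma~\ref{K Necessitation} and Distributivity is the second Distributivity axiom. So I expect no real obstacle here; the proof is a routine unwinding, and the main (minor) point is simply to phrase the iterated peeling-off cleanly rather than to invoke any new idea.
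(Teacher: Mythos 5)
Your argument is correct and follows essentially the same route as the paper's proof: apply the Deduction Lemma $n$ times, use Necessitation (directly for $\N$, via Lemma~\ref{K Necessitation} for $\K_a$), and then repeatedly peel off implications with the Distributivity axiom and Modus Ponens to discharge $\Box\phi_1,\dots,\Box\phi_n$ in turn. The only nitpick is your opening phrase about ``induction on the length of the proof,'' which does not match what you actually do (and is unnecessary), since the subsequent direct peeling argument is complete as stated.
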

\begin{proof}
Lemma~\ref{deduction lemma} applied $n$ times to assumption $\phi_1,\dots,\phi_n\vdash\psi$ implies that $\vdash\phi_1\to(\dots\to(\phi_n\to\psi)\dots)$. Thus, $\vdash\Box(\phi_1\to(\dots\to(\phi_n\to\psi)\dots))$, by either the Necessitation inference rule (if $\Box$ is modality $\N$) or Lemma~\ref{K Necessitation} (if $\Box$ is modality $\K$). Hence,  by the Distributivity axiom and the Modus Ponens inference rule, 
$$\vdash\Box\phi_1\to\Box(\phi_2\dots\to(\phi_n\to\psi)\dots).$$ 
Then, $\Box\phi_1\vdash\Box(\phi_2\dots\to(\phi_n\to\psi)\dots)$ by the Modus Ponens inference rule. Thus, again by the Distributivity axiom and the Modus Ponens inference rule, $\Box\phi_1\vdash\Box\phi_2\to \Box(\phi_3\dots\to(\phi_n\to\psi)\dots)$.
Therefore, $\Box\phi_1,\dots,\Box\phi_n\vdash\Box\psi$, by repeating the last two steps $n-2$ times.
\end{proof}


\begin{lemma}[positive introspection]\label{positive introspection lemma}
$\vdash \Box\phi\to\Box\Box\phi$, where $\Box$ is either modality $\N$ or modality $\K_a$. 
\end{lemma}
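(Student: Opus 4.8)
The plan is to reproduce the standard S5-style derivation of the ``4'' principle from the Truth axiom, the Negative Introspection axiom, the Distributivity axiom, and the necessitation rule, carried out uniformly for $\Box\in\{\N,\K_a\}$. For $\Box=\N$ the Necessitation inference rule is available directly, and for $\Box=\K_a$ Lemma~\ref{K Necessitation} provides the same rule; in both cases the Truth, Distributivity, and Negative Introspection axioms hold, so a single argument covers both modalities.

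First I would extract three one-line consequences using only the axioms and propositional (contrapositive) reasoning. Instantiating the Truth axiom at $\neg\Box\phi$ gives $\vdash\Box\neg\Box\phi\to\neg\Box\phi$, whose contrapositive is
\[
\vdash \Box\phi\to\neg\Box\neg\Box\phi. \tag{1}
\]
Instantiating the Negative Introspection axiom at $\phi$ gives $\vdash\neg\Box\phi\to\Box\neg\Box\phi$, whose contrapositive is
\[
\vdash \neg\Box\neg\Box\phi\to\Box\phi. \tag{2}
\]
Instantiating the Negative Introspection axiom at $\neg\Box\phi$ gives directly
\[
\vdash \neg\Box\neg\Box\phi\to\Box\neg\Box\neg\Box\phi. \tag{3}
\]

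Next I would ``push a $\Box$'' through (2): applying the appropriate necessitation rule and then the Distributivity axiom with Modus Ponens (equivalently, Lemma~\ref{superdistributivity lemma}) to (2) yields
\[
\vdash \Box\neg\Box\neg\Box\phi\to\Box\Box\phi. \tag{4}
\]
Finally, chaining $(1)\to(3)\to(4)$ by transitivity of implication (propositional reasoning) produces $\vdash\Box\phi\to\neg\Box\neg\Box\phi\to\Box\neg\Box\neg\Box\phi\to\Box\Box\phi$, hence $\vdash\Box\phi\to\Box\Box\phi$, which is the claim. There is no real obstacle here: the only thing requiring care is the propositional bookkeeping with the triply-nested occurrences of $\Box$ and checking that each step is legitimate for both $\Box=\N$ and $\Box=\K_a$, which it is since Truth, Distributivity, Negative Introspection, and the relevant necessitation rule are available for both.
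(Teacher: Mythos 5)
Your proof is correct and is essentially the same as the paper's: both derive $\Box\phi\to\neg\Box\neg\Box\phi$ from the Truth axiom by contraposition, use Negative Introspection at $\neg\Box\phi$ to reach $\Box\neg\Box\neg\Box\phi$, and obtain $\Box\neg\Box\neg\Box\phi\to\Box\Box\phi$ by necessitation (via Lemma~\ref{K Necessitation} when $\Box=\K_a$) plus Distributivity applied to the contrapositive of Negative Introspection at $\phi$. The only difference is the bookkeeping order in which the three implications are chained, which is immaterial.
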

\begin{proof}
Formula $\Box\neg\Box\phi\to\neg\Box\phi$ is an instance of the Truth axiom. Thus,  by contraposition, $\vdash \Box\phi\to\neg\Box\neg\Box\phi$. Hence, taking into account the following instance of  the Negative Introspection axiom: $\neg\Box\neg\Box\phi\to\Box\neg\Box\neg\Box\phi$,
we have 
\begin{equation}\label{pos intro eq 2}
\vdash \Box\phi\to\Box\neg\Box\neg\Box\phi.
\end{equation}
At the same time, $\neg\Box\phi\to\Box\neg\Box\phi$ is an instance of the Negative Introspection axiom. Thus, $\vdash \neg\Box\neg\Box\phi\to \Box\phi$ by the law of contrapositive in the propositional logic. Hence,  
$\vdash \Box(\neg\Box\neg\Box\phi\to \Box\phi)$
by either the Necessitation inference rule (if $\Box$ is modality $\N$) or Lemma~\ref{K Necessitation} (if $\Box$ is modality $\K$). Thus, by  the Distributivity axiom and the Modus Ponens inference rule, 
$
  \vdash \Box\neg\Box\neg\Box\phi\to \Box\Box\phi.
$
 The latter, together with statement~(\ref{pos intro eq 2}), implies the statement of the lemma by propositional reasoning.
\end{proof}

\begin{lemma}\label{N biconditional lemma}
~
\begin{enumerate}
    \item $\N(\phi\to\psi),\N(\neg\phi\to\neg\psi)\vdash \N(\phi\leftrightarrow \psi)$,
    \item $\N(\phi\to\neg\psi),\N(\neg\phi\to\psi)\vdash \N(\phi\leftrightarrow \neg\psi)$.
\end{enumerate}
\end{lemma}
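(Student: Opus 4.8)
The plan is to reduce both parts to a short piece of pure propositional reasoning followed by a single application of Lemma~\ref{superdistributivity lemma} with $\Box$ taken to be the modality $\N$.

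For part~1, I would first observe that
$$
(\phi\to\psi)\to\big((\neg\phi\to\neg\psi)\to(\phi\leftrightarrow\psi)\big)
$$
is a propositional tautology: the formula $\neg\phi\to\neg\psi$ is propositionally equivalent to $\psi\to\phi$, so the two antecedents together deliver exactly both directions of the biconditional $\phi\leftrightarrow\psi$. Since every propositional tautology in language $\Phi$ is an axiom of our system, two applications of the Modus Ponens inference rule give the derivation $\phi\to\psi,\ \neg\phi\to\neg\psi\vdash\phi\leftrightarrow\psi$. Now I apply Lemma~\ref{superdistributivity lemma} to this derivation, with $\Box$ instantiated as $\N$; this immediately yields $\N(\phi\to\psi),\ \N(\neg\phi\to\neg\psi)\vdash\N(\phi\leftrightarrow\psi)$, which is the first claim.

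For part~2 the argument is the same with $\psi$ replaced by $\neg\psi$ at the appropriate places. Here $(\phi\to\neg\psi)\to\big((\neg\phi\to\psi)\to(\phi\leftrightarrow\neg\psi)\big)$ is again a propositional tautology, because $\neg\phi\to\psi$ is propositionally equivalent to $\neg\psi\to\phi$, so that the two antecedents give both directions of $\phi\leftrightarrow\neg\psi$. Hence $\phi\to\neg\psi,\ \neg\phi\to\psi\vdash\phi\leftrightarrow\neg\psi$ by Modus Ponens, and Lemma~\ref{superdistributivity lemma} (again with $\Box$ equal to $\N$) lifts this derivation under $\N$ to $\N(\phi\to\neg\psi),\ \N(\neg\phi\to\psi)\vdash\N(\phi\leftrightarrow\neg\psi)$.

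I do not expect any genuine obstacle. The only point that needs a moment of care is that Lemma~\ref{superdistributivity lemma} is stated for a derivation from finitely many premises with no auxiliary context, which is exactly the form of the propositional derivations constructed above, so it applies verbatim. If one preferred not to invoke that lemma, one could instead argue directly from the Necessitation inference rule and the Distributivity axiom for $\N$, but using the already-established lemma keeps the proof short and self-contained.
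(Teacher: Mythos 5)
Your proposal is correct and follows essentially the same route as the paper: establish the purely propositional derivations $\phi\to\psi,\ \neg\phi\to\neg\psi\vdash\phi\leftrightarrow\psi$ (and its variant with $\neg\psi$), then lift them under $\N$ by Lemma~\ref{superdistributivity lemma}. Your explicit tautology-plus-Modus-Ponens step just spells out what the paper states as ``provable in the propositional logic''.
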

\begin{proof}
It is provable in the propositional logic that $\phi\to\psi,\neg\phi\to\neg\psi\vdash \phi\leftrightarrow \psi$. Thus, $\N(\phi\to\psi),\N(\neg\phi\to\neg\psi)\vdash \N(\phi\leftrightarrow \psi)$ by Lemma~\ref{superdistributivity lemma}. The proof of the second part of the lemma is similar.
\end{proof}

\begin{lemma}\label{substitution lemma}
~
\begin{enumerate}
    \item $\N(\phi\leftrightarrow\psi), \cN\H_a\phi \vdash \cN\H_a\psi$,
    \item $\N(\phi\leftrightarrow\psi), \cN\S_a\neg\phi\vdash  \cN\S_a\neg\psi$,
    \item $\N(\phi\leftrightarrow\neg\psi), \cN\S_a\neg\phi\vdash  \cN\S_a\psi$.
\end{enumerate}
\end{lemma}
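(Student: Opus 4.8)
The plan is to prove all three parts by the same mechanism: push the Substitution axiom $\N(\phi\leftrightarrow\psi)\to(\E_a\phi\to\E_a\psi)$ underneath the $\cN$ modality, and then, for parts~2 and~3, reduce to that case by elementary propositional manipulation of the $\N$-biconditional. The one general claim to establish first is: for any formulae $\chi,\eta$ and any $\E\in\{\H,\S\}$,
$$\N(\chi\leftrightarrow\eta),\ \cN\E_a\chi\ \vdash\ \cN\E_a\eta.$$

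To prove this claim, I would start from the instance $\vdash\N(\chi\leftrightarrow\eta)\to(\E_a\chi\to\E_a\eta)$ of the Substitution axiom. Applying the Necessitation rule and the Distributivity axiom gives $\vdash\N\N(\chi\leftrightarrow\eta)\to\N(\E_a\chi\to\E_a\eta)$, and combining this with positive introspection for $\N$ (Lemma~\ref{positive introspection lemma}) yields $\vdash\N(\chi\leftrightarrow\eta)\to\N(\E_a\chi\to\E_a\eta)$. Next, since $\E_a\chi\to\E_a\eta\vdash\neg\E_a\eta\to\neg\E_a\chi$ by propositional reasoning, Lemma~\ref{superdistributivity lemma} gives $\N(\E_a\chi\to\E_a\eta)\vdash\N(\neg\E_a\eta\to\neg\E_a\chi)$, and the Distributivity axiom then gives $\N(\E_a\chi\to\E_a\eta)\vdash\N\neg\E_a\eta\to\N\neg\E_a\chi$; contraposing and unfolding the abbreviation $\cN\cdot\equiv\neg\N\neg\cdot$ yields $\N(\E_a\chi\to\E_a\eta)\vdash\cN\E_a\chi\to\cN\E_a\eta$. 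Chaining the two displayed facts (via the deduction lemma and Modus Ponens) gives the claim.

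With the claim in hand, part~1 is just the instance $\chi\equiv\phi$, $\eta\equiv\psi$, $\E\equiv\H$. For part~2, note that $\phi\leftrightarrow\psi\vdash\neg\phi\leftrightarrow\neg\psi$ is provable in propositional logic, so Lemma~\ref{superdistributivity lemma} gives $\N(\phi\leftrightarrow\psi)\vdash\N(\neg\phi\leftrightarrow\neg\psi)$, and part~2 follows from the claim with $\chi\equiv\neg\phi$, $\eta\equiv\neg\psi$, $\E\equiv\S$. Similarly, $\phi\leftrightarrow\neg\psi\vdash\neg\phi\leftrightarrow\psi$ is propositional, so $\N(\phi\leftrightarrow\neg\psi)\vdash\N(\neg\phi\leftrightarrow\psi)$ and part~3 follows from the claim with $\chi\equiv\neg\phi$, $\eta\equiv\psi$, $\E\equiv\S$ (both using that the Substitution axiom is available for $\E=\S$ as well as $\E=\H$).

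The only delicate step — the one I would be most careful to get right — is turning $\N(\chi\leftrightarrow\eta)\to(\E_a\chi\to\E_a\eta)$ into the necessitated form $\N(\chi\leftrightarrow\eta)\to\N(\E_a\chi\to\E_a\eta)$: Necessitation by itself only gives the implication with a \emph{double} box $\N\N(\chi\leftrightarrow\eta)$ in the antecedent, and it is precisely positive introspection for $\N$ (Lemma~\ref{positive introspection lemma}) that lets one collapse it. Everything else is routine modal bookkeeping with Distributivity, the deduction lemma, and propositional reasoning.
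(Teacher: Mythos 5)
Your proposal is correct and follows essentially the same route as the paper's proof: both derive the key fact $\N(\chi\leftrightarrow\eta)\vdash\cN\E_a\chi\to\cN\E_a\eta$ from the Substitution axiom by necessitating/superdistributing, collapsing the resulting $\N\N$ via positive introspection (Lemma~\ref{positive introspection lemma}), contraposing, and unfolding $\cN$, and both reduce parts~2 and~3 to this case through the tautologies $(\phi\leftrightarrow\psi)\to(\neg\phi\leftrightarrow\neg\psi)$ and $(\phi\leftrightarrow\neg\psi)\to(\neg\phi\leftrightarrow\psi)$ together with Lemma~\ref{superdistributivity lemma}. The only difference is cosmetic bookkeeping (you contrapose inside the box after necessitating, the paper contraposes first and then boxes the sequent), and your stating the claim uniformly for $\E\in\{\H,\S\}$ merely makes explicit what the paper leaves as ``the same proof.''
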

\begin{proof}
Formula $\N(\phi\leftrightarrow\psi)\to(\H_a\phi\to\H_a\psi)$ is an instance of the Substitution axiom. 
Thus, 
$\vdash \N(\phi\leftrightarrow\psi)\to(\neg\H_a\psi\to\neg\H_a\phi)$ by the laws of propositional reasoning.
Hence,
$\N(\phi\leftrightarrow\psi),\neg\H_a\psi\vdash \neg\H_a\phi$
by the Modus Ponens rule applied twice.
Then,
$\N\N(\phi\leftrightarrow\psi),\N\neg\H_a\psi\vdash \N\neg\H_a\phi$
by Lemma~\ref{superdistributivity lemma}.
Thus,
by Lemma~\ref{positive introspection lemma} and the Modus Ponens inference rule,
$\N(\phi\leftrightarrow\psi),\N\neg\H_a\psi\vdash \N\neg\H_a\phi$.
Hence, 
$\N(\phi\leftrightarrow\psi)\vdash\N\neg\H_a\psi\to\N\neg\H_a\phi$ by Lemma~\ref{deduction lemma}.
Then, by the laws of propositional reasoning,
$\N(\phi\leftrightarrow\psi)\vdash\neg\N\neg\H_a\phi\to\neg\N\neg\H_a\psi$.
Thus, by the definition of modality $\cN$, we have
$\N(\phi\leftrightarrow\psi)\vdash\cN\H_a\phi\to\cN\H_a\psi$. Therefore, by the Modus Ponens inference rule,
$\N(\phi\leftrightarrow\psi),\cN\H_a\phi\vdash\cN\H_a\psi$.

To prove the second statement, observe that 
$(\phi\leftrightarrow\psi)\to(\neg\phi\leftrightarrow\neg\psi)$
is a propositional tautology. Thus, 
$\phi\leftrightarrow\psi\vdash \neg\phi\leftrightarrow\neg\psi$
by the Modus Ponens inference rule. 
Hence,
$\N(\phi\leftrightarrow\psi)\vdash \N(\neg\phi\leftrightarrow\neg\psi)$ by Lemma~\ref{superdistributivity lemma}. 
Then, to prove of the second statement, it suffices to show that $\N(\neg\phi\leftrightarrow\neg\psi), \cN\S_a\neg\phi\vdash  \cN\S_a\neg\psi$. The proof of this is the same as of the first statement.

The proof of the third statement is similar to the proof of the second, but it starts with the tautology $(\phi\leftrightarrow\neg\psi)\to(\neg\phi\leftrightarrow\psi)$.
\end{proof}

\begin{lemma}[Lindenbaum]\label{Lindenbaum's lemma}
Any consistent set of formulae can be extended to a maximal consistent set of formulae.
\end{lemma}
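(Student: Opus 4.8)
The plan is the standard enumeration-and-extension argument. First I would record that the language $\Phi$ is countable: the set of agents $\mathcal{A}$ is countable, the set of propositional variables is countable, and every formula is a finite string built from these together with the finitely many logical symbols, so $\Phi$ is a countable union of countable sets. Hence we may fix an enumeration $\phi_1,\phi_2,\phi_3,\dots$ of all formulae of $\Phi$.

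Next, given a consistent set $X$, I would build an increasing chain of consistent sets $X=X_0\subseteq X_1\subseteq X_2\subseteq\cdots$ by recursion: having defined a consistent $X_{n-1}$, put $X_n=X_{n-1}\cup\{\phi_n\}$ if this set is consistent, and $X_n=X_{n-1}\cup\{\neg\phi_n\}$ otherwise. The key claim is that every $X_n$ is consistent, proved by induction on $n$. The only case needing work is when $X_{n-1}\cup\{\phi_n\}$ is inconsistent and we must show $X_{n-1}\cup\{\neg\phi_n\}$ is consistent. If the latter were also inconsistent, then $X_{n-1},\phi_n\vdash\psi$ and $X_{n-1},\phi_n\vdash\neg\psi$ for some $\psi$, and likewise $X_{n-1},\neg\phi_n\vdash\chi$ and $X_{n-1},\neg\phi_n\vdash\neg\chi$ for some $\chi$. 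Applying the Deduction Lemma (Lemma~\ref{deduction lemma}) and propositional reasoning to the first pair gives $X_{n-1}\vdash\neg\phi_n$, and to the second pair gives $X_{n-1}\vdash\neg\neg\phi_n$, hence $X_{n-1}\vdash\phi_n$; so $X_{n-1}$ is inconsistent, contradicting the induction hypothesis.

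Finally, set $\widehat{X}=\bigcup_{n\ge 0}X_n$. I would check that $\widehat{X}$ is consistent: were $\widehat{X}\vdash\psi$ and $\widehat{X}\vdash\neg\psi$, each derivation would use only finitely many members of $\widehat{X}$, and since the chain is increasing all of them lie in a single $X_n$ for $n$ large enough, so $X_n\vdash\psi$ and $X_n\vdash\neg\psi$, contradicting the consistency of $X_n$. And $\widehat{X}$ is maximal: for any formula $\psi$, writing $\psi=\phi_n$ in the enumeration, the construction places either $\phi_n$ or $\neg\phi_n$ into $X_n\subseteq\widehat{X}$; thus adding to $\widehat{X}$ any formula $\psi\notin\widehat{X}$ would add a formula whose negation already belongs to $\widehat{X}$, making the extension inconsistent. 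Since $X=X_0\subseteq\widehat{X}$, this is the desired maximal consistent extension.

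The argument is essentially routine bookkeeping; the only step with any content is the consistency-preservation claim for the $X_n$, and there the single ingredient beyond pure propositional reasoning is the already-established Deduction Lemma, so I do not anticipate a genuine obstacle.
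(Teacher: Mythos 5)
Your proposal is correct and is exactly the standard enumeration-and-extension Lindenbaum construction that the paper itself invokes (it simply cites the standard proof rather than spelling it out); the one step with content, consistency preservation via the Deduction Lemma, is handled properly and matches the paper's notions of derivability and inconsistency. No gaps.
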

\begin{proof}
The standard proof of Lindenbaum's lemma applies here~\cite[Proposition 2.14]{m09}.
\end{proof}

\section{Soundness}\label{Soundness section}

The Truth, the Distributivity, and the Negative Introspection axioms for modalities $\K$ and $\N$ are well-known principles of S5 logic. The soundness of the Knowledge of Necessity axiom follows from Definition~\ref{sat}. Below we prove the soundness of each of the remaining axioms as a separate lemma. We state strong soundness as Theorem~\ref{soundness theorem} in the end of the section. 
In the lemmas below we assume that $w\in W$ is an arbitrary world of an epistemic model with preferences $M=(W,\{\sim_a\}_{a\in\mathcal{A}},\{\prec_a\}_{a\in\mathcal{A}},\pi)$.

\begin{lemma}
If $w\Vdash \E_a\phi$, then $w\Vdash \phi$.
\end{lemma}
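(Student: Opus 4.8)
The plan is to unpack the semantics of $\E_a\phi$ directly from Definition~\ref{sat}. Recall that $\E$ ranges over $\{\H,\S\}$, so I need to handle both the happiness and the sadness cases, though the relevant part of each definition is identical for this statement.

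First I would assume $w\Vdash\E_a\phi$, and split into the two cases $\E=\H$ and $\E=\S$. In the case $\E=\H$, condition 6(a) of Definition~\ref{sat} tells us that $u\Vdash\phi$ for every world $u\in W$ with $w\sim_a u$. Since $\sim_a$ is an equivalence relation (item 2 of Definition~\ref{epistemic model with preferences}), it is in particular reflexive, so $w\sim_a w$. Applying condition 6(a) with $u=w$ yields $w\Vdash\phi$, as desired. The case $\E=\S$ is verbatim the same, using condition 7(a) instead of 6(a).

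There is essentially no obstacle here: the only thing to be careful about is to invoke reflexivity of $\sim_a$ explicitly, and to note that the argument does not use conditions (b) or (c) of items 6 and 7 at all, nor the preference relations $\prec_a$. I would write the proof compactly, treating both values of $\E$ uniformly by simply remarking that items 6(a) and 7(a) of Definition~\ref{sat} have the same form. This establishes the soundness of the Truth axiom for modality $\E$.

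\begin{proof}
Suppose $w\Vdash\E_a\phi$. By items 6(a) and 7(a) of Definition~\ref{sat}, in either case $u\Vdash\phi$ for each world $u\in W$ such that $w\sim_a u$. Since relation $\sim_a$ is reflexive by item 2 of Definition~\ref{epistemic model with preferences}, we have $w\sim_a w$. Therefore, $w\Vdash\phi$.
\end{proof}
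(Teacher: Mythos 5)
Your proof is correct and follows exactly the paper's own argument: invoke reflexivity of $\sim_a$ and apply condition 6(a) (resp.\ 7(a)) of Definition~\ref{sat} with $u=w$. Nothing is missing.
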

\begin{proof}
First, we consider the case $\E=\H$. Note that $w\sim_a w$ because $\sim_a$ is an equivalence relation. Thus, the assumption $w\Vdash \H_a\phi$ implies $w\Vdash \phi$ by item 6(a) of Definition~\ref{sat}. The proof for the case $\E=\S$ is similar, but it uses item 7(a) of Definition~\ref{sat} instead of item 6(a).
\end{proof}

\begin{lemma}
If $w\Vdash \E_a\phi$, then $w\Vdash \K_a\E_a\phi$.
\end{lemma}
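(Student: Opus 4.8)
The goal is to show that $w \Vdash \E_a\phi$ implies $w \Vdash \K_a \E_a\phi$, i.e., the semantic counterpart of the Emotional Introspection axiom, for $\E \in \{\H, \S\}$. By item~5 of Definition~\ref{sat}, it suffices to fix an arbitrary world $v \in W$ with $w \sim_a v$ and prove $v \Vdash \E_a\phi$. I will do this for $\E = \H$; the case $\E = \S$ is entirely analogous, using item~7 in place of item~6.

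First I would unpack the hypothesis $w \Vdash \H_a\phi$ via item~6 of Definition~\ref{sat}: this gives (a) $u \Vdash \phi$ for every $u$ with $w \sim_a u$; (b) $u \prec_a u'$ whenever $u \nVdash \phi$ and $u' \Vdash \phi$; and (c) some world $u_0$ with $u_0 \nVdash \phi$. Now I check the three conditions of item~6 for $v$ in place of $w$. Condition (b) is world-independent, so it transfers verbatim. Condition (c) is also world-independent, witnessed by the same $u_0$. The only condition that genuinely uses the world is (a): I need $u \Vdash \phi$ for every $u$ with $v \sim_a u$. But $w \sim_a v$ and $v \sim_a u$, so transitivity of the equivalence relation $\sim_a$ (clause~2 of Definition~\ref{epistemic model with preferences}) gives $w \sim_a u$, whence $u \Vdash \phi$ by condition~(a) for $w$. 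Thus all three conditions hold at $v$, so $v \Vdash \H_a\phi$, and since $v$ was an arbitrary $\sim_a$-successor of $w$, we conclude $w \Vdash \K_a \H_a\phi$.

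The proof is essentially routine; the only "idea" is the observation that among the three clauses defining $\H_a$ (and $\S_a$), two are world-independent and the third is preserved under moving to a $\sim_a$-indistinguishable world precisely because $\sim_a$ is transitive. There is no real obstacle here — the one place to be slightly careful is making sure the same counterfactual witness $u_0$ from condition~(c) is reused, and that condition~(b) is stated in a form that visibly does not mention the evaluation world. For $\E = \S$, one repeats the argument with item~7, again noting that conditions 7(b) and 7(c) are world-free and 7(a) transfers by transitivity of $\sim_a$.
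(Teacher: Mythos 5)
Your proof is correct and follows essentially the same route as the paper: fix an arbitrary $\sim_a$-successor, observe that conditions 6(b) and 6(c) (respectively 7(b), 7(c)) are world-independent, and transfer condition 6(a) using transitivity of $\sim_a$ (which the paper leaves implicit in the step from (a) to (a')). No issues.
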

\begin{proof}
First, we consider the case $\E=\H$. Consider any world $w'\in W$
 such that $w\sim_a w'$. By item 5 of Definition~\ref{sat}, it suffices to show that $w'\Vdash \H_a\phi$. Indeed, by item 6 of Definition~\ref{sat}, the assumption $w\Vdash \H_a\phi$ of the lemma implies that
\begin{enumerate}
            \item[(a)] $u\Vdash \phi$ for each world $u\in W$ such that $w\sim_a u$,
            \item[(b)] for any two worlds $u,u'\in W$, if $u\nVdash\phi$ and $u'\Vdash\phi$, then $u\prec_a u'$,
            \item[(c)] there is a world $u\in W$ such that $u\nVdash\phi$,
\end{enumerate} 
By the assumption $w\sim_a w'$, statement (a) implies that
\begin{enumerate}
            \item[(a')] $u\Vdash \phi$ for each world $u\in W$ such that $w'\sim_a u$.
\end{enumerate} 
Finally, statements (a'), (b), and (c) imply that  $w'\Vdash \H_a\phi$ by item 6 of Definition~\ref{sat}. The proof for the case $\E=\S$ is similar, but it uses item 7 of Definition~\ref{sat} instead of item 6.
\end{proof}

\begin{lemma}
If $w\Vdash \H_a\phi$, then $w\nVdash \S_a\phi$.
\end{lemma}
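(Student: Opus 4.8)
The plan is to derive a contradiction from assuming both $w\Vdash\H_a\phi$ and $w\Vdash\S_a\phi$. First I would unpack $w\Vdash\H_a\phi$ via item~6 of Definition~\ref{sat}: in particular, condition 6(c) supplies a world $u_0\in W$ with $u_0\nVdash\phi$, and condition 6(a) together with the reflexivity of $\sim_a$ gives $w\Vdash\phi$. So we have a concrete witness pair: $u_0$ where $\phi$ fails and $w$ where $\phi$ holds.

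Next I would feed this pair into the ``preference'' clauses of both modalities. Since $u_0\nVdash\phi$ and $w\Vdash\phi$, the happiness condition 6(b) applied to $\H_a\phi$ yields $u_0\prec_a w$. On the other hand, the sadness condition 7(b) applied to $\S_a\phi$ also takes a world where $\phi$ holds and a world where $\phi$ fails and asserts the former is $\prec_a$-below the latter; applied to $w$ (where $\phi$ holds) and $u_0$ (where $\phi$ fails) this gives $w\prec_a u_0$. Thus $u_0\prec_a w$ and $w\prec_a u_0$.

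Finally, this contradicts the fact that $\prec_a$ is a strict partial order (clause~3 of Definition~\ref{epistemic model with preferences}), since a strict partial order is irreflexive and transitive, so it cannot contain a $2$-cycle. Hence the assumption $w\Vdash\S_a\phi$ is untenable, i.e.\ $w\nVdash\S_a\phi$, completing the proof.

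There is essentially no obstacle here; the only point requiring the slightest care is making sure the right worlds are plugged into clauses 6(b) and 7(b) in the right roles, and invoking the existence-of-a-counterexample clauses (6(c) and implicitly 7(c)) to guarantee the witness world $u_0$ actually exists so that the argument is not vacuous.
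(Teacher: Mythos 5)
Your proposal is correct and follows essentially the same route as the paper's proof: both extract $w\Vdash\phi$ from 6(a), a witness $u_0\nVdash\phi$ from 6(c), get $u_0\prec_a w$ from 6(b), and then rule out $\S_a\phi$ via 7(b) and the fact that a strict partial order admits no 2-cycle (the paper phrases this as asymmetry, concluding $w\nprec_a u_0$ directly rather than deriving an explicit contradiction). No gaps.
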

\begin{proof}
By item 6(a) of Definition~\ref{sat}, the assumption $w\Vdash \H_a\phi$ implies that $w\Vdash \phi$. By item 6(c) of Definition~\ref{sat}, the same assumption implies that there is a world $w'\in W$ such that $w'\nVdash\phi$. By item 6(b) of Definition~\ref{sat} the assumption $w\Vdash \H_a\phi$ and statements $w'\nVdash \phi$ and $w\Vdash\phi$ imply that $w'\prec_a w$. Thus, $w\nprec_a w'$ because relation $\prec_a$ is a strict partial order. Therefore, $w\nVdash \S_a\phi$ by item 7(b) of Definition~\ref{sat} and statements $w\Vdash \phi$ and $w'\nVdash \phi$.
\end{proof}

\begin{lemma}
If $w\Vdash \cN\E_a\phi$ and $w\Vdash \cN\E_a\psi$, then either $w\Vdash \N(\phi\to\psi)$ or $w\Vdash \N(\psi\to\phi)$.
\end{lemma}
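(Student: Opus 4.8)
The plan is to argue by contraposition: assuming both $w\nVdash\N(\phi\to\psi)$ and $w\nVdash\N(\psi\to\phi)$, I will produce a violation of the strict-partial-order condition on $\prec_a$. First I would unpack the four relevant hypotheses. By item 4 of Definition~\ref{sat}, $w\nVdash\N(\phi\to\psi)$ yields a world $x$ with $x\Vdash\phi$ and $x\nVdash\psi$, and $w\nVdash\N(\psi\to\phi)$ yields a world $y$ with $y\Vdash\psi$ and $y\nVdash\phi$. On the other side, since $\cN\chi$ abbreviates $\neg\N\neg\chi$, item 4 applied to $w\Vdash\cN\E_a\phi$ gives a world $v$ with $v\Vdash\E_a\phi$, and likewise a world $v'$ with $v'\Vdash\E_a\psi$.

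Next I would split on the value of $\E$. If $\E=\H$: applying item 6(b) of Definition~\ref{sat} at world $v$ to the pair $(y,x)$ — legitimate because $y\nVdash\phi$ and $x\Vdash\phi$ — gives $y\prec_a x$; applying item 6(b) at world $v'$ to the pair $(x,y)$ — legitimate because $x\nVdash\psi$ and $y\Vdash\psi$ — gives $x\prec_a y$. If $\E=\S$: applying item 7(b) at $v$ to $(x,y)$ (valid since $x\Vdash\phi$ and $y\nVdash\phi$) gives $x\prec_a y$, and item 7(b) at $v'$ to $(y,x)$ (valid since $y\Vdash\psi$ and $x\nVdash\psi$) gives $y\prec_a x$. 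Either way we obtain $x\prec_a y$ and $y\prec_a x$ simultaneously, so transitivity of $\prec_a$ forces $x\prec_a x$, contradicting the irreflexivity of the strict partial order $\prec_a$ (item 3 of Definition~\ref{epistemic model with preferences}).

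Note that the only ingredient of the happiness/sadness clauses actually used is the ``preference'' condition (b); conditions (a) and (c) play no role here, which fits the fact that the axiom is stated in terms of the \emph{potential} emotions $\cN\E_a$. There is no genuine obstacle in this argument; the one thing to be careful about is orienting the pairs correctly when invoking condition (b) — for $\H$ the $\phi$-false world must be the smaller argument of $\prec_a$, while for $\S$ the $\phi$-true world must be the smaller one — together with using that a strict partial order is asymmetric, which is precisely what turns the two derived inequalities into a contradiction.
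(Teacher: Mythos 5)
Your proof is correct and follows essentially the same route as the paper's: both assume the failure of $\N(\phi\to\psi)$ and $\N(\psi\to\phi)$, extract witnessing worlds, unfold $\cN\E_a$ to get a world satisfying $\E_a\phi$ (resp.\ $\E_a\psi$), and then use condition (b) of item 6 or 7 of Definition~\ref{sat} twice to derive $x\prec_a y$ and $y\prec_a x$, contradicting the strict partial order. The only cosmetic difference is that you spell out the contradiction via transitivity plus irreflexivity, where the paper simply appeals to $\prec_a$ being a strict partial order.
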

\begin{proof}
First, we consider the case $\E=\H$. Suppose that $w\nVdash \N(\phi\to\psi)$ and $w\nVdash \N(\psi\to\phi)$. Thus, by item 4 of Definition~\ref{sat}, there are epistemic worlds $w_1,w_2\in W$, such that $w_1\nVdash \phi\to\psi$ and $w_2\nVdash \psi\to\phi$. Hence, by item 3 of Definition~\ref{sat},
\begin{equation}\label{nov16-big}
    w_1\Vdash \phi,
    \hspace{5mm}
    w_1\nVdash \psi,
    \hspace{5mm}
    w_2\Vdash \psi,
    \hspace{5mm}
    w_2\nVdash \phi.
\end{equation}
At the same time, by the definition of modality $\cN$ and items 2 and 4 of Definition~\ref{sat}, the assumption $w\Vdash \cN\H_a\phi$ of the lemma  implies that there is a world $w'$ such that $w'\Vdash \H_a\phi$. Hence, $w_2\prec_a w_1$ by item 6(b) of Definition~\ref{sat} and parts $ w_2\nVdash \phi$ and $w_1\Vdash \phi$ of statement~(\ref{nov16-big}).

Similarly, the assumption $w\Vdash \cN\H_a\psi$ of the lemma and parts $ w_1\nVdash \psi$ and $w_2\Vdash \psi$ of statement~(\ref{nov16-big}) imply that $w_1\prec_a w_2$. Note that statements $w_2\prec_a w_1$ and $w_1\prec_a w_2$ are inconsistent because relation $\prec_a$ is a strict partial order. 

The proof in the case $\E=\S$ is similar, but it uses item 7(b) of Definition~\ref{sat} instead of item 6(b).
\end{proof}

\begin{lemma}
If $w\Vdash \cN\H_a\phi$ and $w\Vdash \cN\S_a\psi$, then either $w\Vdash \N(\phi\to\neg\psi)$ or $w\Vdash \N(\neg\psi\to\phi)$.
\end{lemma}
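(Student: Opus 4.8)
The plan is to mimic the proof of the previous lemma, replacing the two ``same emotion'' hypotheses by one happiness and one sadness hypothesis and adjusting the conclusion accordingly. First I would unpack the two assumptions: from $w\Vdash\cN\H_a\phi$, using the definition of $\cN$ together with items 2 and 4 of Definition~\ref{sat}, there is a world $w'$ with $w'\Vdash\H_a\phi$; similarly from $w\Vdash\cN\S_a\psi$ there is a world $w''$ with $w''\Vdash\S_a\psi$. From these I extract the ordering facts I will use: by item 6(b) of Definition~\ref{sat} applied to $w'\Vdash\H_a\phi$, whenever $u\nVdash\phi$ and $u'\Vdash\phi$ we have $u\prec_a u'$; by item 7(b) applied to $w''\Vdash\S_a\psi$, whenever $u\Vdash\psi$ and $u'\nVdash\psi$ we have $u\prec_a u'$.

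Next I would argue by contradiction on the conclusion. Suppose $w\nVdash\N(\phi\to\neg\psi)$ and $w\nVdash\N(\neg\psi\to\phi)$. By item 4 of Definition~\ref{sat} there are worlds $w_1,w_2\in W$ with $w_1\nVdash\phi\to\neg\psi$ and $w_2\nVdash\neg\psi\to\phi$; by items 2 and 3 of Definition~\ref{sat} this gives $w_1\Vdash\phi$, $w_1\Vdash\psi$, $w_2\nVdash\psi$, and $w_2\nVdash\phi$. Then applying the happiness-ordering fact to the pair $w_2\nVdash\phi$, $w_1\Vdash\phi$ yields $w_2\prec_a w_1$, while applying the sadness-ordering fact to the pair $w_1\Vdash\psi$, $w_2\nVdash\psi$ yields $w_1\prec_a w_2$. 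Since $\prec_a$ is a strict partial order these two statements are inconsistent, which is the desired contradiction.

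There is no real obstacle here; the only thing to be careful about is that the conclusion involves $\phi\to\neg\psi$ and $\neg\psi\to\phi$ rather than $\phi\to\psi$ and $\psi\to\phi$, so the negated conclusion must be read correctly as giving a world satisfying both $\phi$ and $\psi$ and a world satisfying neither, and then the happiness fact is fed the $\phi$-values while the sadness fact is fed the $\psi$-values, so that both yield a strict $\prec_a$ between the same pair $\{w_1,w_2\}$ in opposite directions.
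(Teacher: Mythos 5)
Your proposal is correct and follows essentially the same route as the paper's own proof: negate the conclusion to obtain worlds $w_1,w_2$ with $w_1\Vdash\phi$, $w_1\Vdash\psi$, $w_2\nVdash\phi$, $w_2\nVdash\psi$, then use the witnesses of the two potential-emotion assumptions together with items 6(b) and 7(b) of Definition~\ref{sat} to derive $w_2\prec_a w_1$ and $w_1\prec_a w_2$, contradicting that $\prec_a$ is a strict partial order. The bookkeeping about $\phi\to\neg\psi$ versus $\neg\psi\to\phi$ is handled exactly as in the paper.
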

\begin{proof}
Suppose that $w\nVdash \N(\phi\to\neg\psi)$ and $w\nVdash \N(\neg\psi\to\phi)$. Thus, by item 4 of Definition~\ref{sat}, there are epistemic worlds $w_1,w_2\in W$, such that $w_1\nVdash \phi\to\neg\psi$ and $w_2\nVdash \neg\psi\to\phi$. Hence, by item 3 and item 2 of Definition~\ref{sat},
\begin{equation}\label{nov17-big}
    w_1\Vdash \phi,
    \hspace{5mm}
    w_1\Vdash \psi,
    \hspace{5mm}
    w_2\nVdash \psi,
    \hspace{5mm}
    w_2\nVdash \phi.
\end{equation}
At the same time, by the definition of modality $\cN$ and items 2 and 4 of Definition~\ref{sat}, the assumption $w\Vdash \cN\H_a\phi$ of the lemma  implies that there is a world $w'$ such that $w'\Vdash \H_a\phi$. Hence, $w_2\prec_a w_1$ by item 6(b) of Definition~\ref{sat} and parts $ w_2\nVdash \phi$ and $w_1\Vdash \phi$ of statement~(\ref{nov17-big}).

Also, by item 7(b) of Definition~\ref{sat}, the assumption $w\Vdash \cN\S_a\psi$ of the lemma and parts $ w_1\Vdash \psi$ and $w_2\nVdash \psi$ of statement~(\ref{nov17-big}) imply that $w_1\prec_a w_2$. Note that statements $w_2\prec_a w_1$ and $w_1\prec_a w_2$ are inconsistent because relation $\prec_a$ is a strict partial order. 
\end{proof}

\begin{lemma}
If $w\Vdash \E_a\phi$, then $w\nVdash \N\phi$.
\end{lemma}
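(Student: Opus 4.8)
The plan is to unwind the definitions directly; this is the soundness lemma for the Counterfactual axiom, and it is essentially immediate from the non-triviality clauses 6(c) and 7(c) of Definition~\ref{sat}.

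First I would split into the two cases $\E = \H$ and $\E = \S$. In the case $\E = \H$, the assumption $w \Vdash \H_a\phi$ together with item 6(c) of Definition~\ref{sat} yields a world $u \in W$ with $u \nVdash \phi$. Then, by item 4 of Definition~\ref{sat}, the existence of such a $u$ is exactly what it takes to conclude $w \nVdash \N\phi$. In the case $\E = \S$, the argument is identical except that one invokes item 7(c) of Definition~\ref{sat} in place of item 6(c).

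There is no real obstacle here: the only thing to be careful about is citing the correct sub-items (6(c) versus 7(c)) for the two modalities, and noting that item 4 of Definition~\ref{sat} is stated in the positive form $w \Vdash \N\phi$ iff $u \Vdash \phi$ for all $u \in W$, so its contrapositive gives $w \nVdash \N\phi$ precisely when some world falsifies $\phi$. So the proof is a two-line case split, mirroring the structure of the preceding soundness lemmas.

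\begin{proof}
First, we consider the case $\E=\H$. By item 6(c) of Definition~\ref{sat}, the assumption $w\Vdash \H_a\phi$ implies that there is a world $u\in W$ such that $u\nVdash\phi$. Therefore, $w\nVdash \N\phi$ by item 4 of Definition~\ref{sat}. The proof for the case $\E=\S$ is similar, but it uses item 7(c) of Definition~\ref{sat} instead of item 6(c).
\end{proof}
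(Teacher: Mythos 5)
Your proof is correct and coincides with the paper's own argument: both split on $\E\in\{\H,\S\}$, extract a world falsifying $\phi$ from item 6(c) (respectively 7(c)) of Definition~\ref{sat}, and conclude $w\nVdash\N\phi$ from item 4. Nothing further is needed.
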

\begin{proof}
First, suppose that $\E=\H$. Then, by item 6(c) of Definition~\ref{sat}, the assumption $w\Vdash \E_a\phi$ implies that there is an epistemic world $u\in W$ such that $u\nVdash\phi$. Therefore, $w\nVdash \N\phi$ by item 4 of Definition~\ref{sat}. 

The proof in the case $\E=\S$ is similar, but it uses item 7(c) of Definition~\ref{sat} instead of item 6(c).
\end{proof}

\begin{lemma}\label{first emotional predictability sound}
If either $w\Vdash \cN\H_a\phi$ or $w\Vdash \cN\S_a\neg\phi$, then statement $w\Vdash \K_a\phi$ implies $w\Vdash \H_a\phi$.
\end{lemma}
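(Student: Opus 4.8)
The plan is to split on the disjunction in the hypothesis and, in each case, extract from the definition of $\cN$ a ``witness'' world at which the relevant emotion actually holds. The key observation is that conditions 6(b), 6(c) of Definition~\ref{sat} (and likewise 7(b), 7(c)) are \emph{world-independent}: they constrain the model as a whole, not the evaluation world. Hence once we have a witness world satisfying $\H_a\phi$ or $\S_a\neg\phi$, these two conditions transfer for free to $w$, while condition 6(a) of $\H_a\phi$ at $w$ is literally the hypothesis $w\Vdash\K_a\phi$ unwound via item 5 of Definition~\ref{sat}. So I would first assume $w\Vdash\K_a\phi$, then handle the two disjuncts separately.

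First, suppose $w\Vdash\cN\H_a\phi$. By the definition of modality $\cN$ together with items 2 and 4 of Definition~\ref{sat}, there is a world $w'$ with $w'\Vdash\H_a\phi$. Applying items 6(b) and 6(c) to $w'$ gives exactly the second and third conditions required for $w\Vdash\H_a\phi$, and the first condition is $w\Vdash\K_a\phi$ read off via item 5. Therefore $w\Vdash\H_a\phi$ by item 6 of Definition~\ref{sat}.

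Second, suppose $w\Vdash\cN\S_a\neg\phi$, and extract a world $w'$ with $w'\Vdash\S_a\neg\phi$. Here one must unwind the negations. Item 7(b) applied to $\neg\phi$ at $w'$ says that whenever $u\Vdash\neg\phi$ and $u'\nVdash\neg\phi$ one has $u\prec_a u'$, which by item 2 of Definition~\ref{sat} is precisely the statement that whenever $u\nVdash\phi$ and $u'\Vdash\phi$ one has $u\prec_a u'$; this is condition 6(b) for $\H_a\phi$. For condition 6(c) of $\H_a\phi$, note that reflexivity of $\sim_a$ together with item 7(a) applied to $w'$ gives $w'\Vdash\neg\phi$, hence $w'\nVdash\phi$, the required witness. (Item 7(c) for $\neg\phi$, which only asserts that some world satisfies $\phi$, is not used.) Combining these with $w\Vdash\K_a\phi$ for condition 6(a) yields $w\Vdash\H_a\phi$ by item 6, as desired. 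The only delicate point is the negation bookkeeping in the second case; there is no genuine obstacle, and in particular neither the partial-order structure of $\prec_a$ nor any property of $\sim_a$ beyond reflexivity is needed.
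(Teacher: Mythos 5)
Your proof is correct and follows essentially the same route as the paper's: split on the disjunct, extract a witness world $w'$ from $\cN$, transfer the world-independent conditions 6(b)/6(c) (in the sadness case, converting 7(b) via item 2 and obtaining the non-trivial-witness condition from 7(a) plus reflexivity of $\sim_a$, exactly as the paper does), and read condition 6(a) at $w$ directly off $w\Vdash\K_a\phi$ via item 5. Your observation that 7(c) is not needed matches the paper's argument as well.
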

\begin{proof}
First, suppose that $w\Vdash \cN\H_a\phi$. Thus, by the definition of modality $\cN$ and items 2 and 4 of Definition~\ref{sat}, there is an epistemic world $w'\in W$ such that $w'\Vdash \H_a\phi$. Hence, by item 6 of Definition~\ref{sat}, 
\begin{enumerate}
            \item[(a)] $u\Vdash \phi$ for each world $u\in W$ such that $w'\sim_a u$,
            \item[(b)] for any two worlds $u,u'\in W$, if $u\nVdash\phi$ and $u'\Vdash\phi$, then $u\prec_a u'$,
            \item[(c)] there is a world $u\in W$ such that $u\nVdash\phi$.
        \end{enumerate}
Also, by item 5 of Definition~\ref{sat}, the assumption $w\Vdash \K_a\phi$ implies that
\begin{enumerate}
            \item[(a')] $u\Vdash \phi$ for each world $u\in W$ such that $w\sim_a u$.
        \end{enumerate}
By item 6 of Definition~\ref{sat}, statements (a'), (b), and (c) imply $w\Vdash \H_a\phi$.

\vspace{1mm}
Next, suppose that $w\Vdash \cN\S_a\neg\phi$. Thus, by the definition of modality $\cN$ and items 2 and 4 of Definition~\ref{sat}, there is an epistemic world $w'\in W$ such that $w'\Vdash \S_a\neg\phi$. Hence, by item 7 of Definition~\ref{sat}, 
\begin{enumerate}
            \item[(a)] $u\Vdash \neg\phi$ for each world $u\in W$ such that $w'\sim_a u$,
            \item[(b)] for any two worlds $u,u'\in W$, if $u\Vdash\neg\phi$ and $u'\nVdash\neg\phi$, then $u\prec_a u'$,
            \item[(c)] there is a world $u\in W$ such that $u\nVdash\neg\phi$.
        \end{enumerate}
Also, by item 5 of Definition~\ref{sat}, the assumption $w\Vdash \K_a\phi$ implies that
\begin{enumerate}
            \item[(a')] $u\Vdash \phi$ for each world $u\in W$ such that $w\sim_a u$.
        \end{enumerate}
Note that by item 2 of Definition~\ref{sat}, statement (b) implies that
\begin{enumerate}
            \item[(b')] for any two worlds $u,u'\in W$, if $u\nVdash\phi$ and $u'\Vdash\phi$, then $u\prec_a u'$.
        \end{enumerate}
And, by item 2 of Definition~\ref{sat}, statement (a) implies that
\begin{enumerate}
            \item[(c')] $w'\nVdash \phi$
\end{enumerate}
because relation $\sim_a$ is reflexive. Finally, note that by item 6 of Definition~\ref{sat}, statements (a'), (b'), and (c') imply $w\Vdash \H_a\phi$.
\end{proof}

The proof of the next lemma is using the converse models and translation $\tau$ that have been introduced in Definition~\ref{converse model} and Definition~\ref{tau translation} respectively.
\begin{lemma}
If either $w\Vdash \cN\H_a\neg\phi$ or $w\Vdash \cN\S_a\phi$, then statement $w\Vdash \K_a\phi$ implies $w\Vdash \S_a\phi$.
\end{lemma}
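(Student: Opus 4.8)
The plan is to derive this lemma from the previous one (Lemma~\ref{first emotional predictability sound}) by passing to the converse model and applying the $\tau$-translation, in exactly the manner of the proof of Theorem~\ref{duality theorem}. Fix an epistemic model with preferences $M=(W,\{\sim_a\}_{a\in\mathcal{A}},\{\prec_a\}_{a\in\mathcal{A}},\pi)$ and a world $w$, and write $\Vdash$ for satisfaction in $M$ and $\Vdash^\c$ for satisfaction in the converse model $M^\c$. First I would observe that $M^\c$ is itself a legitimate epistemic model with preferences, since the converse of a strict partial order is again a strict partial order; hence Lemma~\ref{first emotional predictability sound} is available for $M^\c$ and for any formula in place of $\phi$. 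I would also record that $(M^\c)^\c=M$, so that Theorem~\ref{duality theorem} can be read in both directions.

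Next I would compute the relevant $\tau$-translations. Using clauses 2, 4, 6, and 7 of Definition~\ref{tau translation}, together with the fact that $\tau(\cN\chi)\equiv\cN\tau(\chi)$ (which follows from clauses 2 and 4, since $\cN\chi$ abbreviates $\neg\N\neg\chi$), one obtains $\tau(\cN\H_a\neg\phi)\equiv\cN\S_a\neg\tau(\phi)$, $\tau(\cN\S_a\phi)\equiv\cN\H_a\tau(\phi)$, $\tau(\K_a\phi)\equiv\K_a\tau(\phi)$, and $\tau(\S_a\phi)\equiv\H_a\tau(\phi)$.

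Then I would translate the hypotheses across. By Theorem~\ref{duality theorem}, the assumption that $w\Vdash\cN\H_a\neg\phi$ or $w\Vdash\cN\S_a\phi$ becomes: $w\Vdash^\c\cN\S_a\neg\tau(\phi)$ or $w\Vdash^\c\cN\H_a\tau(\phi)$; and the assumption $w\Vdash\K_a\phi$ becomes $w\Vdash^\c\K_a\tau(\phi)$. These are precisely the hypotheses of Lemma~\ref{first emotional predictability sound}, applied in the model $M^\c$ with $\tau(\phi)$ playing the role of $\phi$. That lemma then yields $w\Vdash^\c\H_a\tau(\phi)$, i.e. $w\Vdash^\c\tau(\S_a\phi)$. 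Applying Theorem~\ref{duality theorem} once more, in the converse direction of its biconditional, gives $w\Vdash\S_a\phi$, as required.

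I expect the only delicate part to be bookkeeping rather than mathematics: getting the $\tau$-translations exactly right (in particular, that $\tau$ commutes with $\cN$, and that $\tau(\neg\phi)\equiv\neg\tau(\phi)$ correctly absorbs the extra negation sitting inside $\cN\H_a\neg\phi$), and being careful that the duality theorem is invoked in the right direction each time. No new semantic argument is needed beyond what Theorem~\ref{duality theorem} and Lemma~\ref{first emotional predictability sound} already provide.
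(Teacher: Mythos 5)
Your proposal is correct and follows essentially the same route as the paper: apply Lemma~\ref{first emotional predictability sound} in the converse model $M^\c$ to the formula $\tau(\phi)$, compute the $\tau$-translations of the hypotheses and conclusion, and transfer back via Theorem~\ref{duality theorem}. Your explicit checks that $M^\c$ is again an epistemic model with preferences and that $\tau$ commutes with $\cN$ are details the paper leaves implicit, but they are the right details and they hold.
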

\begin{proof}
Let $M^\c$ be the converse model of the epistemic model with preferences $M$ and $\Vdash^\c$ is the satisfaction relation for the model $M^\c$. By Lemma~\ref{first emotional predictability sound}, if either $w\Vdash^\c \cN\H_a\tau(\phi)$ or $w\Vdash^c \cN\S_a\neg\tau(\phi)$, then statement $w\Vdash^c \K_a\tau(\phi)$ implies $w\Vdash^\c \H_a\tau(\phi)$. Thus, by Definition~\ref{tau translation}, if either $w\Vdash^\c \tau(\cN\S_a\phi)$ or $w\Vdash^c \tau(\cN\H_a\neg\phi)$, then statement $w\Vdash^c \tau(\K_a\phi)$ implies $w\Vdash^\c \tau(\S_a\phi)$. Therefore,  by Theorem~\ref{duality theorem}, if either $w\Vdash \cN\S_a\phi$ or $w\Vdash \cN\H_a\neg\phi$, then statement $w\Vdash \K_a\phi$ implies $w\Vdash \S_a\phi$.
\end{proof}

\begin{lemma}
If $w\Vdash \N(\phi\leftrightarrow\psi)$ and $w\Vdash \E_a\phi$, then $w\Vdash \E_a\psi$.
\end{lemma}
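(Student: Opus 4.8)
The plan is to reduce this lemma to the semantic substitution lemma (Lemma~\ref{semantic substitution lemma}) that was recorded earlier precisely for this purpose. The only work is to convert the syntactic hypothesis $w\Vdash\N(\phi\leftrightarrow\psi)$ into the semantic form required by that lemma, and then to observe that both values of $\E$ are covered by its two items.

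First I would unfold the assumption $w\Vdash\N(\phi\leftrightarrow\psi)$. By item~4 of Definition~\ref{sat}, this means $u\Vdash\phi\leftrightarrow\psi$ for every world $u\in W$. Since the biconditional $\leftrightarrow$ abbreviates a combination of $\neg$ and $\to$, items~2 and~3 of Definition~\ref{sat} show that $u\Vdash\phi\leftrightarrow\psi$ holds exactly when $u\Vdash\phi$ iff $u\Vdash\psi$. Hence $w\Vdash\phi$ iff $w'\Vdash\psi$ fails to be the right phrasing; rather, we obtain: $u\Vdash\phi$ iff $u\Vdash\psi$ for each world $u\in W$. This is exactly the antecedent of Lemma~\ref{semantic substitution lemma}.

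Applying Lemma~\ref{semantic substitution lemma} to $\phi$ and $\psi$ then gives that $w'\Vdash\H_a\phi$ iff $w'\Vdash\H_a\psi$ for each world $w'\in W$, and likewise $w'\Vdash\S_a\phi$ iff $w'\Vdash\S_a\psi$ for each world $w'\in W$. In either case $\E=\H$ or $\E=\S$, instantiating at the particular world $w$ yields $w\Vdash\E_a\phi$ iff $w\Vdash\E_a\psi$. Combined with the second hypothesis $w\Vdash\E_a\phi$, this delivers $w\Vdash\E_a\psi$, as required.

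There is no genuine obstacle here: the lemma is essentially a corollary of Lemma~\ref{semantic substitution lemma}. The only points to be careful about are (i) writing the case split on $\E\in\{\H,\S\}$ so that each case cites the corresponding item of Lemma~\ref{semantic substitution lemma}, and (ii) being explicit that the semantics of $\leftrightarrow$ inside $\N$ really does unwind to worldwise truth-value agreement of $\phi$ and $\psi$, which is the bridge from the syntactic premise to the semantic hypothesis of that lemma.
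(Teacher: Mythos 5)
Your proof is correct, but it is organized around a different lemma than the paper's own argument. You convert the hypothesis $w\Vdash\N(\phi\leftrightarrow\psi)$ into worldwise agreement of $\phi$ and $\psi$ (via items 2--4 of Definition~\ref{sat} and the fact that $\leftrightarrow$ is an abbreviation), and then invoke Lemma~\ref{semantic substitution lemma}, whose two items handle $\E=\H$ and $\E=\S$; instantiating at $w$ finishes the job. The paper instead works directly with the emotion semantics: it unfolds $w\Vdash\H_a\phi$ into conditions (a), (b), (c) of item 6 of Definition~\ref{sat}, uses $w\Vdash\N(\phi\leftrightarrow\psi)$ together with item 4 to replace $\phi$ by $\psi$ in each of the three conditions, and reassembles $w\Vdash\H_a\psi$ (with item 7 playing the same role for $\S$). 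The mathematical content is identical, since Lemma~\ref{semantic substitution lemma} is precisely the statement that satisfaction of $\H_a$ and $\S_a$ depends only on which worlds satisfy the argument formula; your route is shorter and makes that dependence explicit, whereas the paper's route is self-contained. The one thing to be aware of is that Lemma~\ref{semantic substitution lemma} is stated in the paper as a technical observation without proof, so your reduction inherits that unproved step --- and the justification it would need is exactly the unfolding of items 6 and 7 that the paper's proof of this soundness lemma performs. Nothing is lost mathematically, but if this lemma is meant to stand on its own within the soundness section, the direct verification is the more conservative choice.
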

\begin{proof}
First, we consider the case $\E=\H$. By item 6 of Definition~\ref{sat}, the assumption $w\Vdash \H_a\phi$ implies that 
\begin{enumerate}
            \item[(a)] $u\Vdash \phi$ for each world $u\in W$ such that $w\sim_a u$,
            \item[(b)] for any two worlds $u,u'\in W$, if $u\nVdash\phi$ and $u'\Vdash\phi$, then $u\prec_a u'$,
            \item[(c)] there is a world $u\in W$ such that $u\nVdash\phi$.
        \end{enumerate}
Thus, by the assumption $w\Vdash \N(\phi\leftrightarrow\psi)$ and item 4 of Definition~\ref{sat},
\begin{enumerate}
            \item[(a')] $u\Vdash \psi$ for each world $u\in W$ such that $w\sim_a u$,
            \item[(b')] for any two worlds $u,u'\in W$, if $u\nVdash\psi$ and $u'\Vdash\psi$, then $u\prec_a u'$,
            \item[(c')] there is a world $u\in W$ such that $u\nVdash\psi$.
        \end{enumerate}
By item 6 of Definition~\ref{sat}, statements (a'), (b'), and (c') imply $w\Vdash \H_a\psi$.

The proof in the case $\E=\S$ is similar, but it uses item 7 of Definition~\ref{sat} instead of item 6.
\end{proof}

The strong soundness theorem below follows from the lemmas proven above.

\begin{theorem}\label{soundness theorem}
For any epistemic world $w$ of an epistemic model with preferences, any set of formulae $X\subseteq \Phi$, and any formula $\phi\in\Phi$, if $w\Vdash\chi$ for each formula $\chi\in X$ and $X\vdash\phi$, then $w\Vdash\phi$. \qed
\end{theorem}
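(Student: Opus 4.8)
\section*{Proof proposal for Theorem~\ref{soundness theorem}}

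The plan is to separate the argument into two nested inductions, reflecting the two ways a formula can appear in a derivation from $X$. First I would establish the auxiliary fact that every theorem of the system is \emph{valid}, i.e.\ true at every world of every epistemic model with preferences. The base case is that every axiom is valid: for the Truth, Distributivity, and Negative Introspection axioms for $\N$ and $\K$ this is the standard S5 computation, for the Knowledge of Necessity axiom it is immediate from items 4 and 5 of Definition~\ref{sat}, and for all of the remaining axioms (Emotional Introspection, Emotional Consistency, Coherence of Potential Emotions, Counterfactual, Emotional Predictability, and Substitution) validity is exactly what the lemmas of Section~\ref{Soundness section} assert, quantified over the arbitrary world $w$ and arbitrary model $M$ fixed there. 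The induction step has two cases: Modus Ponens preserves truth at each individual world by item 3 of Definition~\ref{sat}, so it preserves validity; and Necessitation preserves validity because if $\phi$ holds at every world of every model, then in particular $u\Vdash\phi$ for every $u\in W$ in any given model, so $w\Vdash\N\phi$ for every $w$ by item 4 of Definition~\ref{sat}.

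With that in hand, I would prove the theorem itself by induction on the length of a derivation $\psi_1,\dots,\psi_n=\phi$ witnessing $X\vdash\phi$, where this derivation uses only Modus Ponens together with theorems and members of $X$. Fix a world $w$ with $w\Vdash\chi$ for all $\chi\in X$. For each $k\le n$ there are three cases. If $\vdash\psi_k$, then $\psi_k$ is valid by the auxiliary fact above, hence $w\Vdash\psi_k$. If $\psi_k\in X$, then $w\Vdash\psi_k$ by the hypothesis on $w$. If $\psi_k$ is obtained by Modus Ponens from earlier $\psi_i$ and $\psi_j=\psi_i\to\psi_k$, then $w\Vdash\psi_i$ and $w\Vdash\psi_i\to\psi_k$ by the induction hypothesis, so $w\Vdash\psi_k$ by item 3 of Definition~\ref{sat}. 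Taking $k=n$ gives $w\Vdash\phi$.

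The only delicate point, and the one I would be careful to get right, is the interface between the two inductions: Necessitation is a rule about \emph{validity}, not about truth at a fixed world, so it cannot be folded into the second (fixed-$w$) induction directly. This is precisely why the first step proves the stronger statement ``every theorem is valid'' rather than merely ``every theorem is true at $w$'', after which the second induction needs no further appeal to Necessitation. Everything else is routine: the soundness lemmas already discharge the genuinely model-theoretic content, and what remains is bookkeeping about the two inference rules.
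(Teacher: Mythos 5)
Your proposal is correct and is exactly the expansion the paper intends: the paper's ``proof'' is just the remark that the theorem follows from the soundness lemmas of Section~\ref{Soundness section} (which give validity of the non-S5 axioms at an arbitrary world of an arbitrary model), combined with the standard two-level induction you spell out, where Necessitation is absorbed into the ``every theorem is valid'' step and derivations from $X$ use only Modus Ponens. The only cosmetic omission is that propositional tautologies are also axioms of the system and their validity (immediate from items 2 and 3 of Definition~\ref{sat}) should be mentioned in your base case; otherwise nothing is missing.
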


\section{Utilitarian Emotions}\label{Utilitarian Emotions}

Lang, van er Torre, and Weydert introduced a notion of utilitarian desire which is based on a utility function rather than a preference relation~\cite{lvw02aamas}. Although desire, as an emotion, is different from the happiness and sadness emotions that we study in this article, their approach could be adopted to happiness and sadness as well. To do this, one needs to modify Definition~\ref{epistemic model with preferences} to include agent-specific utility functions instead of agent-specific preference relations: 

\begin{definition}\label{epistemic model with utilities}
A tuple $(W,\{\sim_a\}_{a\in\mathcal{A}},\{u_a\}_{a\in\mathcal{A}},\pi)$ is called an epistemic model with utilities if
\begin{enumerate}
    \item $W$ is a set of epistemic worlds,
    \item $\sim_a$ is an ``indistinguishability'' equivalence relation on set $W$ for each agent $a\in\mathcal{A}$,
    \item $u_a$ is a ``utility'' function from set $W$ to real numbers for each agent $a\in\mathcal{A}$,
    \item $\pi(p)$ is a subset of $W$ for each propositional variable $p$.
\end{enumerate}
\end{definition}

Below is the definition of the satisfaction relation for epistemic model with utilities. Its parts 6(b) and 7(b) are similar to Lang, van er Torre, and Weydert utilitarian desire definition in \cite{lvw02aamas}. Unlike the current article, \cite{lvw02aamas} does not prove any completeness results. In the definition below we assume that language $\Phi$ is modified to incorporate a no-negative real ``degree'' parameter into modalities $\H^d_a$ and $\S^d_a$. We read statement $\H^d_a\phi$ as ``agent $a$ is happy about $\phi$ with degree $d$''. Similarly, we read $\S^d_a\phi$ as ``agent $a$ is sad about $\phi$ with degree $d$''.

\begin{definition}\label{sat-u}
For any epistemic model with utilities $(W,\{\sim_a\}_{a\in\mathcal{A}},\{u_a\}_{a\in\mathcal{A}},\pi)$, any world $w\in W$, and any formula $\phi\in\Phi$, satisfaction relation $w\Vdash\phi$ is defined as follows:
\begin{enumerate}
    \item $w\Vdash p$, if $w\in \pi(p)$,
    \item $w\Vdash\neg\phi$, if $w\nVdash\phi$,
    \item $w\Vdash\phi\to\psi$, if $w\nVdash\phi$ or $w\Vdash\psi$,
    \item $w\Vdash\N\phi$, if $v\Vdash \phi$ for each world $v\in W$,
    \item $w\Vdash\K_a\phi$, if $v\Vdash \phi$ for each world $v\in W$ such that $w\sim_a v$,
    \item $w\Vdash \H^d_a\phi$, if the following three conditions are satisfied:
        \begin{enumerate}
            \item $v\Vdash \phi$ for each world $v\in W$ such that $w\sim_a v$,
            \item for any $v,v'\in W$, if $v\nVdash\phi$ and $v'\Vdash\phi$, then $u_a(v)+d\le u_a(v')$,
            \item there is a world $v\in W$ such that $v\nVdash\phi$,
        \end{enumerate}
    \item $w\Vdash \S^d_a\phi$, if the following three conditions are satisfied:
        \begin{enumerate}
            \item $v\Vdash \phi$ for each world $v\in W$ such that $w\sim_a u$,
            \item for any worlds $v,v'\in W$, if $v\Vdash\phi$ and $v'\nVdash\phi$, then $u_a(v)+d\le u_a(v')$,
            \item there is a world $u\in W$ such that $u\nVdash\phi$.
        \end{enumerate}
\end{enumerate}
\end{definition}

We have already defined utility functions for our Battle of Cuisines scenario, see Section~\ref{The Battle of Cuisines Scenario section}. In the two propositions below we use this scenario to illustrate utilitarian happiness modality. Note how Sanaz is much happier to be with Pavel in a Russian restaurant than she is to be with him in a restaurant.

\begin{proposition}\label{oct27 Hs}
$(x,x)\Vdash \H^1_s(\mbox{``Sanaz and Pavel are in the same restaurant''})$,
where $x\in \{\mbox{Iranian},\mbox{Russian}\}$.
\end{proposition}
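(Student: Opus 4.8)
The plan is to verify the three conditions of item 6 of Definition~\ref{sat-u} for the formula $\H^1_s$ applied to the statement ``Sanaz and Pavel are in the same restaurant'' at the world $(x,x)$. The underlying epistemic model with utilities is the Battle of Cuisines model from Section~\ref{The Battle of Cuisines Scenario section}, which has perfect information, so the indistinguishability relation $\sim_s$ is the identity. The utility values to keep in mind are $u_s(\mathrm{Iranian},\mathrm{Iranian})=1$, $u_s(\mathrm{Russian},\mathrm{Russian})=3$, and $u_s=0$ on the two off-diagonal worlds.

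First I would check condition 6(a): since the model has perfect information, the only world $v$ with $(x,x)\sim_s v$ is $(x,x)$ itself, and $(x,x)\Vdash\mbox{``Sanaz and Pavel are in the same restaurant''}$ because both coordinates equal $x$. Next, condition 6(c) is immediate: $(\mathrm{Russian},\mathrm{Iranian})\nVdash\mbox{``Sanaz and Pavel are in the same restaurant''}$, so a world falsifying the statement exists.

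The main step is condition 6(b): for any worlds $v,v'$ with $v\nVdash\mbox{``same restaurant''}$ and $v'\Vdash\mbox{``same restaurant''}$, we need $u_s(v)+1\le u_s(v')$. Here $v$ is one of the two off-diagonal worlds, so $u_s(v)=0$ and hence $u_s(v)+1=1$. On the other hand $v'$ is a diagonal world, so $u_s(v')\in\{1,3\}$, in particular $u_s(v')\ge 1$. Thus $u_s(v)+1=1\le u_s(v')$, which is exactly what is required. This is the only place where the degree parameter $d=1$ is actually used, and it works precisely because the gap between the worst ``same restaurant'' world (utility $1$) and the off-diagonal worlds (utility $0$) is at least $1$.

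I do not anticipate any real obstacle; the argument is a routine case check against Table~\ref{example table figure}. The only mild subtlety worth stating explicitly is that condition 6(b) is a universally quantified statement over \emph{all} pairs $(v,v')$ of the indicated form, not just those involving $(x,x)$, so one must be sure to handle both choices of the off-diagonal world $v$ and both diagonal worlds $v'$; but since the bound $u_s(v)+1=1\le u_s(v')$ holds uniformly, a single line covers all cases.
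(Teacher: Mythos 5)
Your proof is correct and matches the paper's argument essentially step for step: condition (a) via perfect information, condition (c) with the witness $(\mathrm{Russian},\mathrm{Iranian})$, and condition (b) by noting $u_s=0$ off the diagonal and $u_s\ge 1$ on it, so $u_s(v)+1\le u_s(v')$. No differences worth noting.
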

\begin{proof}
We verify conditions (a), (b), and (c) from item 6 of Definition~\ref{sat-u}:

\vspace{1mm}
\noindent{\em Condition a:} Consider any epistemic world $(y,z)$ such that $(x,x)\sim_s (y,z)$. It suffices to show that $(y,z)\Vdash \mbox{``Sanaz and Pavel are in the same restaurant''}$. The latter is true because assumption $(x,x)\sim_s (y,z)$ implies that $x=y$ and $x=z$ in the  perfect information setting of the Battle of Cuisines scenario.

\vspace{1mm}
\noindent{\em Condition b:} Consider any worlds $(y,z)$ and $(y',z')$ such that
\begin{eqnarray}
    (y,z)\nVdash \mbox{``Sanaz and Pavel are in the same restaurant''},\label{nov2-a}\\
    (y',z')\Vdash \mbox{``Sanaz and Pavel are in the same restaurant''}.\label{nov2-b}
\end{eqnarray}
Statement~(\ref{nov2-a}) implies that $u_s(y,z)=0$, see Table~\ref{example table figure}. Similarly, statement~(\ref{nov2-b}) implies that $u_s(y',z')\ge 1$. Therefore, $u_s(y,z)+1=1\le u_s(y',z')$.

\vspace{1mm}
\noindent{\em Condition c:}
$(\mathrm{Russian},\mathrm{Iranian})\nVdash\mbox{``Sanaz and Pavel are in the same restaurant''}$.
\end{proof}

\begin{proposition}\label{nov2 Hs}
$$(\mathrm{Russian},\mathrm{Russian})\Vdash \H^2_s(\mbox{``Sanaz and Pavel are in the Russian restaurant''}).$$
\end{proposition}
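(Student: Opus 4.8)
The plan is to verify, one at a time, the three conditions (a), (b), (c) of item~6 of Definition~\ref{sat-u} for the world $(\mathrm{Russian},\mathrm{Russian})$, the agent $s$, the formula ``Sanaz and Pavel are in the Russian restaurant'', and the degree $d=2$. This mirrors the structure of the proof of Proposition~\ref{sept26 Hs proposition}, the only new ingredient being the degree-$2$ gap required in condition~(b).

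For condition~(a), I would invoke the fact that the Battle of Cuisines model has perfect information: if $(\mathrm{Russian},\mathrm{Russian})\sim_s(y,z)$ then $(y,z)=(\mathrm{Russian},\mathrm{Russian})$, which of course satisfies ``Sanaz and Pavel are in the Russian restaurant''. For condition~(c), I would simply note that $(\mathrm{Russian},\mathrm{Iranian})\nVdash\mbox{``Sanaz and Pavel are in the Russian restaurant''}$, so a world falsifying the formula exists.

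The substantive step is condition~(b). Take any worlds $(y,z)$ and $(y',z')$ with $(y,z)\nVdash\mbox{``Sanaz and Pavel are in the Russian restaurant''}$ and $(y',z')\Vdash\mbox{``Sanaz and Pavel are in the Russian restaurant''}$; I must show $u_s(y,z)+2\le u_s(y',z')$. The second assumption forces $(y',z')=(\mathrm{Russian},\mathrm{Russian})$, hence $u_s(y',z')=3$ by Table~\ref{example table figure}. The first assumption leaves only the worlds $(\mathrm{Iranian},\mathrm{Iranian})$, $(\mathrm{Iranian},\mathrm{Russian})$, $(\mathrm{Russian},\mathrm{Iranian})$, whose $u_s$-values are $1$, $0$, $0$ respectively, so in all cases $u_s(y,z)\le 1$. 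Therefore $u_s(y,z)+2\le 3=u_s(y',z')$, as needed.

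I do not expect a genuine obstacle here: the only thing to be careful about is reading off the right column of Table~\ref{example table figure} for $u_s$ (Sanaz is the row player, so her payoffs are the first entries) and confirming that the $d=2$ gap is exactly tight against the value $1$ at $(\mathrm{Iranian},\mathrm{Iranian})$ — a larger degree would fail, which is presumably why the proposition is stated with degree $2$ rather than something bigger.
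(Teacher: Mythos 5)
Your proposal is correct and follows essentially the same route as the paper: the paper likewise defers conditions (a) and (c) to the earlier degree-$1$ proposition and concentrates on condition (b), deriving $u_s(y,z)\le 1$ and $u_s(y',z')=3$ from Table~\ref{example table figure} to conclude $u_s(y,z)+2\le 3= u_s(y',z')$. Your extra observation that the degree $2$ is tight at $(\mathrm{Iranian},\mathrm{Iranian})$ is a nice aside but not needed.
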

\begin{proof}
Conditions (a) and (c) from item 6 of Definition~\ref{sat-u} could be verified similarly to the proof of Proposition~\ref{oct27 Hs}. Below we verify condition (b).

Consider any worlds $(y,z)$ and $(y',z')$ such that
\begin{eqnarray}
    (y,z)\nVdash \mbox{``Sanaz and Pavel are in the Russian restaurant''},\label{nov2-c}\\
    (y',z')\Vdash \mbox{``Sanaz and Pavel are in the Russian restaurant''}.\label{nov2-d}
\end{eqnarray}
Statement~(\ref{nov2-c}) implies that $u_s(y,z)\le 1$, see Table~\ref{example table figure}. Similarly, statement~(\ref{nov2-d}) implies that $u_s(y',z')= 3$. Therefore, $u_s(y,z)+2\le 1 + 2 = 3= u_s(y',z')$.
\end{proof}

\section{Goodness-Based Emotions}\label{Goodness-Based Emotions}

Lorini and Schwarzentruber proposed a different framework for defining emotions~\cite{ls11ai}. Instead of specifying preference relations on the epistemic worlds they label some of worlds as desirable or ``good''. In such a setting they define modalities ``rejoice'' and ``disappointment'' that are similar to our modalities ``happiness'' and ``sadness''. In this section we compare their approach to ours. Although their framework endows agents with actions, it appears that actions are essential for defining regret and are less important for capturing rejoice and disappointment. In the definition below, we simplify Lorini and Schwarzentruber's framework to action-less models that we call epistemic model with goodness.

\begin{definition}\label{epistemic model with goodness}
A tuple $(W,\{\sim_a\}_{a\in\mathcal{A}},\{G_a\}_{a\in\mathcal{A}},\pi)$ is called an epistemic model with goodness if
\begin{enumerate}
    \item $W$ is a set of epistemic worlds,
    \item $\sim_a$ is an ``indistinguishability'' equivalence relation on set $W$ for each agent $a\in\mathcal{A}$,
    \item $G_a\subseteq W$ is a nonempty set of ``good'' epistemic worlds for agent $a\in\mathcal{A}$,
    \item $\pi(p)$ is a subset of $W$ for each propositional variable $p$.
\end{enumerate}
\end{definition}

To represent the gift example from Figure~\ref{example gift figure} as an epistemic model with goodness, we need to specify the sets  of good epistemic worlds $G_s$ and $G_p$ of Sanaz and Pavel. 
A natural way to do this is to assume that $G_s=G_p=\{w\}$. In other words, the ideal outcome for both of them would be if the gift and the card arrives to the recipients. 

In the lottery example, the desirable outcome for each agent is when the agents wins the lottery. In other words, $G_s=\{u\}$, $G_p=\{w\}$, and $G_o=\{v\}$, see Figure~\ref{example lottery figure}.

In the Battle of Cuisines example captured in Table~\ref{example table figure}, the choice of good epistemic worlds is not obvious. On one hand, we can assume that good worlds for both Sanaz and Pavel are the ones where they have positive pay-offs. In this case, $G_s=G_p=\{(\mathrm{Iranian},\mathrm{Iranian}),(\mathrm{Russian},\mathrm{Russian})\}$. Alternatively, we can choose the good worlds to be those where they get maximal pay-off. In that case, 
$G_s=\{(\mathrm{Russian},\mathrm{Russian})\}$ 
and 
$G_p=\{(\mathrm{Iranian},\mathrm{Iranian})\}$.
Note that our epistemic models with preferences approach provides a more fine-grained semantics that does not force the choice between these two alternatives.

In the definition below, we rephrase Lorini and Schwarzentruber's formal definitions of ``rejoice'' and ``disappointment'' in terms of epistemic models with goodness. We denote modalities ``rejoice'' and ``disappointment'' by $\H$ and $\S$ respectively to be consistent with the notations in the rest of this article. 

\begin{definition}\label{sat with goodness}
For any world $w\in W$ of an epistemic model with goodness $(W,\{\sim_a\}_{a\in\mathcal{A}},\{G_a\}_{a\in\mathcal{A}},\pi)$ and any formula $\phi\in\Phi$, satisfaction relation $w\Vdash\phi$ is defined as follows:
\begin{enumerate}
    \item $w\Vdash p$, if $w\in \pi(p)$,
    \item $w\Vdash\neg\phi$, if $w\nVdash\phi$,
    \item $w\Vdash\phi\to\psi$, if $w\nVdash\phi$ or $w\Vdash\psi$,
    \item $w\Vdash\N\phi$, if $u\Vdash \phi$ for each world $u\in W$,
    \item\label{item K} $w\Vdash\K_a\phi$, if $u\Vdash \phi$ for each world $u\in W$ such that $w\sim_a u$,
    \item\label{item H} $w\Vdash \H_a\phi$, if the following three conditions are satisfied:
        \begin{enumerate}
            \item $u\Vdash \phi$ for each world $u\in W$ such that $w\sim_a u$,
            \item $u\Vdash\phi$ for each world $u \in G_a$,
            \item there is a world $u\in W$ such that $u\nVdash\phi$,
        \end{enumerate}
    \item $w\Vdash \S_a\phi$, if the following three conditions are satisfied:
        \begin{enumerate}
            \item $u\Vdash \phi$ for each world $u\in W$ such that $w\sim_a u$,
            \item $u\nVdash\phi$ for each world $u \in G_a$,
            \item there is a world $u\in W$ such that $u\nVdash\phi$.
        \end{enumerate}
\end{enumerate}
\end{definition}

Consider the discussed above epistemic model with goodness for the gift scenario in which $G_s=G_p=\{w\}$.
It is relatively easy to see that all propositions that we proved in Section~\ref{gift scenario section} for preference-based semantics hold true under goodness-based semantics of modalities $\H$ and $\S$ given in Definition~\ref{sat with goodness}. 

The situation is different for the Battle of Cuisines scenario. If $\Vdash_1$ denotes the satisfaction relation of the epistemic model with goodness where $G_s=G_p=\{(\mathrm{Iranian},\mathrm{Iranian}),(\mathrm{Russian},\mathrm{Russian})\}$, then the following two propositions are true just like they are under our definition of happiness (see Proposition~\ref{sept25 Hs} and Proposition~\ref{oct19-rr}):

\begin{proposition}\label{oct19-rr2}
$$(\mathrm{Russian},\mathrm{Russian})\Vdash_1 \H_s(\mbox{``Sanaz and Pavel are in the same restaurant''}).$$
\end{proposition}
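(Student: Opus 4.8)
The plan is to verify directly the three conditions (a), (b), (c) of item~6 of Definition~\ref{sat with goodness} for the world $(\mathrm{Russian},\mathrm{Russian})$, the agent Sanaz, and the formula ``Sanaz and Pavel are in the same restaurant''. The only new ingredient compared with Proposition~\ref{sept25 Hs} is that condition~6(b) now quantifies over the fixed set $G_s=\{(\mathrm{Iranian},\mathrm{Iranian}),(\mathrm{Russian},\mathrm{Russian})\}$ rather than over a preference relation, so that is the one point to check with care.

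First I would handle condition~(a). Since the Battle of Cuisines is a perfect information scenario, the only world $(y,z)$ with $(\mathrm{Russian},\mathrm{Russian})\sim_s(y,z)$ is $(\mathrm{Russian},\mathrm{Russian})$ itself, and in that world both players are in the Russian restaurant, so $(\mathrm{Russian},\mathrm{Russian})\Vdash_1\mbox{``Sanaz and Pavel are in the same restaurant''}$.

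Next I would check condition~(b): I must show $u\Vdash_1\mbox{``Sanaz and Pavel are in the same restaurant''}$ for every $u\in G_s$. There are exactly two such worlds, $(\mathrm{Iranian},\mathrm{Iranian})$ and $(\mathrm{Russian},\mathrm{Russian})$, and in each of them Sanaz and Pavel pick the same cuisine, so both satisfy the formula. Finally, for condition~(c) I would exhibit a world falsifying the formula, e.g.\ $(\mathrm{Russian},\mathrm{Iranian})\nVdash_1\mbox{``Sanaz and Pavel are in the same restaurant''}$. Together, conditions (a), (b), (c) give $(\mathrm{Russian},\mathrm{Russian})\Vdash_1\H_s(\mbox{``Sanaz and Pavel are in the same restaurant''})$ by item~6 of Definition~\ref{sat with goodness}. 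There is no real obstacle here; the proof is a routine case check, and the only thing to be mildly attentive to is that both elements of $G_s$ — not just the actual world — lie inside the extension of the formula.
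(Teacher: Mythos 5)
Your proposal is correct and follows essentially the same route as the paper's proof: verifying conditions (a), (b), and (c) of item~6 of Definition~\ref{sat with goodness}, using perfect information for (a), checking both good worlds $(\mathrm{Iranian},\mathrm{Iranian})$ and $(\mathrm{Russian},\mathrm{Russian})$ for (b), and exhibiting $(\mathrm{Russian},\mathrm{Iranian})$ for (c). No differences worth noting.
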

\begin{proof}
It suffices to verify conditions (a), (b), and (c) of item 6 of Definition~\ref{sat with goodness}:

\vspace{1mm}
\noindent{\em Condition a:} Since the Battle of the Cuisines is a setting with perfect information, it suffices to show that $$(\mathrm{Russian},\mathrm{Russian})\Vdash_1 (\mbox{``Sanaz and Pavel are in the same restaurant''}).$$
The last statement is true by the definition of the world $(\mathrm{Russian},\mathrm{Russian})$.

\vspace{1mm}
\noindent{\em Condition b:} Statement ``Sanaz and Pavel are in the same restaurant'' is satisfied in both good worlds: $(\mathrm{Iranian},\mathrm{Iranian})$ and $(\mathrm{Russian},\mathrm{Russian})$.

\vspace{1mm}
\noindent{\em Condition c:} Statement ``Sanaz and Pavel are in the same restaurant'' is not satisfied in the world $(\mathrm{Russian},\mathrm{Iranian})$.
\end{proof}

\begin{proposition}
$$(\mathrm{Russian},\mathrm{Russian})\nVdash_1 \H_s(\mbox{``Sanaz is in the Russian restaurant''}).$$
\end{proposition}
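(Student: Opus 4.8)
The plan is to read off the obstruction directly from condition (b) of item~6 of Definition~\ref{sat with goodness}: under the goodness-based semantics, $w\Vdash_1\H_s\psi$ requires, among other things, that $\psi$ hold in \emph{every} world of $G_s$. So I would take $\psi$ to be the statement ``Sanaz is in the Russian restaurant'' and exhibit a single good world for Sanaz that falsifies $\psi$.

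Concretely, first I would recall that in this variant $G_s=\{(\mathrm{Iranian},\mathrm{Iranian}),(\mathrm{Russian},\mathrm{Russian})\}$, so in particular $(\mathrm{Iranian},\mathrm{Iranian})\in G_s$. Next I would observe, from the description of the world $(\mathrm{Iranian},\mathrm{Iranian})$, that Sanaz is in the Iranian restaurant there, hence
$$
(\mathrm{Iranian},\mathrm{Iranian})\nVdash_1 \mbox{``Sanaz is in the Russian restaurant''}.
$$
Since $(\mathrm{Iranian},\mathrm{Iranian})$ is a good world for Sanaz that does not satisfy the formula, condition (b) of item~6 of Definition~\ref{sat with goodness} fails for $\H_s(\mbox{``Sanaz is in the Russian restaurant''})$ at \emph{any} world, and in particular at $(\mathrm{Russian},\mathrm{Russian})$. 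Therefore $(\mathrm{Russian},\mathrm{Russian})\nVdash_1 \H_s(\mbox{``Sanaz is in the Russian restaurant''})$, as claimed.

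There is essentially no hard step here; the only thing worth flagging is the contrast with the preference-based setting. Under Definition~\ref{sat}, the analogous negative fact (Proposition~\ref{oct19-rr}) was witnessed by the world $(\mathrm{Russian},\mathrm{Iranian})$ together with a failure of the preference condition~6(b); under the goodness-based Definition~\ref{sat with goodness} the witness is instead the good world $(\mathrm{Iranian},\mathrm{Iranian})$ and a failure of the goodness condition~6(b). So the two proofs have the same shape but invoke different clauses, and the only care needed is to cite the goodness-based clause rather than the preference-based one.
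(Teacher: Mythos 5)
Your proof is correct and is essentially the same as the paper's: both observe that the good world $(\mathrm{Iranian},\mathrm{Iranian})\in G_s$ falsifies ``Sanaz is in the Russian restaurant'', so condition 6(b) of Definition~\ref{sat with goodness} fails. The extra remark contrasting the goodness-based clause with the preference-based clause of Definition~\ref{sat} is accurate but not needed.
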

\begin{proof}
Note that $(\mathrm{Iranian},\mathrm{Iranian})$ is a good world, in which statement ``Sanaz is in the Russian restaurant'' is not satisfied. Therefore, the statement of the proposition is true by item 6(b) of Definition~\ref{sat with goodness}.
\end{proof}

However, for the satisfaction relation $\Vdash_2$ of the epistemic model with goodness where  $G_s=\{(\mathrm{Russian},\mathrm{Russian})\}$ 
and 
$G_p=\{(\mathrm{Iranian},\mathrm{Iranian})\}$, the situation is different:

\begin{proposition}\label{oct20-a}
$$(\mathrm{Russian},\mathrm{Russian})\Vdash_2 \H_s(\mbox{``Sanaz and Pavel are in the same restaurant''}).$$
\end{proposition}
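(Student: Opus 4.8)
The plan is to verify directly the three conditions (a), (b), (c) of item~6 of Definition~\ref{sat with goodness} for the formula $\phi=$ ``Sanaz and Pavel are in the same restaurant'', the agent $s$, the world $(\mathrm{Russian},\mathrm{Russian})$, and the goodness set $G_s=\{(\mathrm{Russian},\mathrm{Russian})\}$. This parallels the proof of Proposition~\ref{oct19-rr2} almost verbatim; the only change is that the goodness set for Sanaz is now the singleton $\{(\mathrm{Russian},\mathrm{Russian})\}$, which actually makes condition (b) easier to check.

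First I would handle condition (a): since the Battle of Cuisines is a perfect-information scenario, every world is distinguishable, so $(\mathrm{Russian},\mathrm{Russian})\sim_s (y,z)$ forces $(y,z)=(\mathrm{Russian},\mathrm{Russian})$, and in that world Sanaz and Pavel are indeed in the same restaurant. Next, for condition (b), I would observe that $G_s=\{(\mathrm{Russian},\mathrm{Russian})\}$ and that ``Sanaz and Pavel are in the same restaurant'' holds in $(\mathrm{Russian},\mathrm{Russian})$, so the condition is satisfied at its unique good world. Finally, for condition (c), I would exhibit a counterexample world, e.g.\ $(\mathrm{Russian},\mathrm{Iranian})\nVdash \mbox{``Sanaz and Pavel are in the same restaurant''}$, so the formula is non-trivial.

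There is essentially no obstacle here: every step is an immediate appeal to the definition of the worlds and of $G_s$. The only point worth flagging for the reader is the contrast this proposition is meant to highlight, namely that under $\Vdash_2$ this $\H_s$-statement comes out true even though, under a different (equally reasonable) choice of goodness sets, one would want to distinguish the ``maximal-payoff'' outcome $(\mathrm{Russian},\mathrm{Russian})$ from the merely ``same-restaurant'' outcomes; this is the phenomenon the surrounding discussion uses to argue that goodness-based semantics is coarser than the preference-based semantics of Definition~\ref{sat}. Accordingly, after the three-condition verification I would add a one-sentence remark connecting the result back to that comparison, but no additional lemmas are needed.
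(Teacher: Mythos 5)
Your proposal is correct and follows exactly the paper's own argument: the paper proves this proposition by noting it is the same verification as in Proposition~\ref{oct19-rr2}, except that in condition~(b) only the single good world $(\mathrm{Russian},\mathrm{Russian})$ needs to be checked, which is precisely what you do. No gaps.
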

\begin{proof}
The proof  is similar to the proof of Proposition~\ref{oct19-rr2} except that in {\em Condition b} we only need to consider the world $(\mathrm{Russian},\mathrm{Russian})$.
\end{proof}

\begin{proposition}\label{oct20-b}
$$(\mathrm{Russian},\mathrm{Russian})\Vdash_2 \H_s(\mbox{``Sanaz is in the Russian restaurant''}).$$
\end{proposition}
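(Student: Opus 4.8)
The plan is to verify directly the three conditions in item~6 of Definition~\ref{sat with goodness} for the world $(\mathrm{Russian},\mathrm{Russian})$, the agent $s$, and the statement $\phi=$ ``Sanaz is in the Russian restaurant'', in the epistemic model with goodness where $G_s=\{(\mathrm{Russian},\mathrm{Russian})\}$. This mirrors the proof of Proposition~\ref{oct20-a}.

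First I would handle condition~(a): since the Battle of Cuisines is a perfect-information scenario, the only world $u$ with $(\mathrm{Russian},\mathrm{Russian})\sim_s u$ is $(\mathrm{Russian},\mathrm{Russian})$ itself, and in that world $\phi$ holds by the very meaning of the world, so (a) is immediate. Next, for condition~(b) I would observe that $G_s$ is the singleton $\{(\mathrm{Russian},\mathrm{Russian})\}$, and $\phi$ holds there; this is the one place where the argument diverges from the $\Vdash_1$ case (cf.\ the previous proposition), so it is worth flagging explicitly that it is precisely the choice of the smaller, payoff-maximal good set that makes (b) go through. Finally, for condition~(c), I would exhibit a world where $\phi$ fails, e.g.\ $(\mathrm{Iranian},\mathrm{Iranian})\nVdash\phi$, establishing non-triviality.

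I expect no real obstacle: the statement is a routine check against Definition~\ref{sat with goodness}. The only point requiring a moment of care is condition~(b), which is the substantive contrast the proposition is meant to illustrate; everything else follows the pattern already used in Proposition~\ref{oct20-a} and the earlier gift- and game-scenario propositions.
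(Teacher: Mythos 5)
Your proposal is correct and follows essentially the same route as the paper: a direct verification of conditions (a), (b), and (c) of item 6 of Definition~\ref{sat with goodness}, using perfect information for (a), the singleton $G_s=\{(\mathrm{Russian},\mathrm{Russian})\}$ for (b), and a witness world for (c). The only inessential difference is the witness in (c) — you use $(\mathrm{Iranian},\mathrm{Iranian})$ while the paper uses $(\mathrm{Iranian},\mathrm{Russian})$; both work since Sanaz is in the Iranian restaurant in either world.
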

\begin{proof}
It suffices to verify conditions (a), (b), and (c) of item 6 of Definition~\ref{sat with goodness}:

\vspace{1mm}
\noindent{\em Condition a:} Since the Battle of the Cuisines is a setting with perfect information, it suffices to show that $$(\mathrm{Russian},\mathrm{Russian})\Vdash_2 (\mbox{``Sanaz is in the Russian restaurant''}).$$
The last statement is true by the definition of the world $(\mathrm{Russian},\mathrm{Russian})$.

\vspace{1mm}
\noindent{\em Condition b:} Note that in the current setting set $G_s$ contains only element $(\mathrm{Russian},\mathrm{Russian})$ and that $$(\mathrm{Russian},\mathrm{Russian})\Vdash_2 (\mbox{``Sanaz is in the Russian restaurant''}).$$

\vspace{1mm}
\noindent{\em Condition c:} $(\mathrm{Iranian},\mathrm{Russian})\nVdash_2 (\mbox{``Sanaz is in the Russian restaurant''})$.
\end{proof}

We conclude this section by an observation that the Coherence of Potential Emotions axiom is not universally true under goodness-based semantics. Indeed, note that according to Proposition~\ref{oct20-a} and Proposition~\ref{oct20-b}, there is an epistemic worlds in which Sanaz is happy that ``Sanaz is in the Russian restaurant'' and there is an epistemic world in which she is happy that ``Sanaz and Pavel are in the same restaurant''. If the Coherence of Potential Emotions axiom holds in this setting, then one of these statements would imply the other, but neither of them does.

\section{Canonical Model}

In the rest of this article we prove strong completeness of our logical system with respect to the semantics given in Definition~\ref{sat}. As usual, the proof of the completeness is based on a construction of a canonical model. In this section, for any maximal consistent set of formulae $X\subseteq\Phi$, we define canonical epistemic  model with preferences $M(X_0)=(W,\{\sim_a\}_{a\in\mathcal{A}},\{\prec_a\}_{a\in\mathcal{A}},\pi)$.

As it is common in modal logic, we define worlds as maximal consistent sets of formulae. Since the meaning of modality $\N$ in our system is ``for all worlds'', we require all worlds in the canonical model to have the same $\N$-formulae. We achieve this through the following definition.

\begin{definition}\label{canonical world definition}
$W$ is the set of all such maximal consistent sets of formulae $Y$ that $\{\phi\in\Phi\;|\;\N\phi\in X_0\}\subseteq Y$.
\end{definition}

Note that although the above definition only requires all $\N$-formulae from set $X_0$ to be in set $Y$, it is possible to show that the converse is also true due to the presence  of the Negative Introspection axiom for modality $\N$ in our system. 

\begin{lemma}\label{X0 in W lemma}
$X_0\in W$.
\end{lemma}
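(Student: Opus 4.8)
The plan is to show that $X_0$ itself satisfies the defining condition of membership in $W$ given in Definition~\ref{canonical world definition}, namely that $\{\phi\in\Phi\;|\;\N\phi\in X_0\}\subseteq X_0$. Since $X_0$ is assumed to be a maximal consistent set of formulae, the only thing that needs checking is this inclusion. So first I would take an arbitrary formula $\phi$ with $\N\phi\in X_0$ and argue that $\phi\in X_0$.

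The key step is the Truth axiom for the modality $\N$: the formula $\N\phi\to\phi$ is an axiom of the logical system, hence a theorem, so $\vdash\N\phi\to\phi$. Then since $X_0$ is a consistent set containing $\N\phi$ and closed under Modus Ponens (being maximal consistent), we get $X_0\vdash\phi$. Because $X_0$ is maximal consistent, $X_0\vdash\phi$ implies $\phi\in X_0$ — otherwise $\neg\phi\in X_0$ by maximality, and then $X_0$ would derive both $\phi$ and $\neg\phi$, contradicting consistency. This establishes $\{\phi\in\Phi\;|\;\N\phi\in X_0\}\subseteq X_0$, and therefore $X_0\in W$ by Definition~\ref{canonical world definition}.

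I do not anticipate any real obstacle here; this is a routine bootstrapping lemma. The only point requiring a moment's care is the standard fact that a maximal consistent set is deductively closed (i.e.\ $X_0\vdash\phi$ implies $\phi\in X_0$), which follows immediately from maximality and consistency and is used ubiquitously in the completeness argument. The proof is essentially one line invoking the Truth axiom for $\N$.
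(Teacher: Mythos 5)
Your proof is correct and follows exactly the paper's own argument: an arbitrary $\phi$ with $\N\phi\in X_0$ yields $X_0\vdash\phi$ via the Truth axiom and Modus Ponens, and maximality of $X_0$ gives $\phi\in X_0$, so $X_0\in W$ by Definition~\ref{canonical world definition}. No differences worth noting.
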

\begin{proof}
Consider any formula $\N\phi\in X_0$. By Definition~\ref{canonical world definition}, it suffices to show that $\phi\in X_0$. Indeed, the assumption $\N\phi\in X_0$ implies that $X_0\vdash\phi$ by the Truth axiom and the Modus Ponens inference rule. Therefore, $\phi\in X_0$ because set $X_0$ is maximal.
\end{proof}

\begin{definition}\label{canonical sim}
For any worlds $w,u\in W$, let $w\sim_a u$ if $\phi\in u$ for each formula $\K_a\phi\in w$.
\end{definition}

Alternatively, one can define $w\sim_a u$ if sets $w$ and $u$ have the same $\K_a$-formulae. Our approach results in shorter proofs, but it requires to prove the following lemma.

\begin{lemma}\label{canonical sim is equivalence relation lemma}
Relation $\sim_a$ is an equivalence relation on set $W$.
\end{lemma}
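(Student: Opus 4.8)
The plan is to verify reflexivity, symmetry, and transitivity of $\sim_a$ directly from Definition~\ref{canonical sim}, using exactly the S5-style axioms available for $\K_a$ (Truth, Negative Introspection, Distributivity) together with positive introspection, which is already available as Lemma~\ref{positive introspection lemma} with $\Box=\K_a$. This is the entirely standard argument that the canonical accessibility relation of an S5 modality is an equivalence relation; the only wrinkle is that worlds here are constrained to lie in $W$ (i.e.\ to contain all $\N$-consequences of $X_0$), but that constraint plays no role in this proof since all three arguments only manipulate $\K_a$-formulae inside a single fixed maximal consistent set.

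First I would prove reflexivity: fix $w\in W$ and any $\K_a\phi\in w$; by the Truth axiom $\K_a\phi\to\phi$ and maximal consistency of $w$ we get $\phi\in w$, so $w\sim_a w$. Next, symmetry: suppose $w\sim_a u$ and take any $\K_a\phi\in u$; I want $\phi\in w$. Suppose not, so $\neg\phi\in w$ by maximality, hence $\neg\K_a\phi\in w$ by the Truth axiom (contrapositive) and maximality; then by Negative Introspection $\K_a\neg\K_a\phi\in w$, so by $w\sim_a u$ we get $\neg\K_a\phi\in u$, contradicting $\K_a\phi\in u$ since $u$ is consistent. Hence $\phi\in w$ and $u\sim_a w$.

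Finally, transitivity: suppose $w\sim_a u$ and $u\sim_a v$, and take any $\K_a\phi\in w$. By Lemma~\ref{positive introspection lemma} (with $\Box=\K_a$) we have $\vdash\K_a\phi\to\K_a\K_a\phi$, so $\K_a\K_a\phi\in w$ by maximality. Then $w\sim_a u$ gives $\K_a\phi\in u$, and $u\sim_a v$ gives $\phi\in v$. Hence $w\sim_a v$. The main obstacle, such as it is, is simply remembering to route transitivity through positive introspection rather than trying to argue directly; once that is in hand, everything is routine maximal-consistency bookkeeping, and no properties of $W$ beyond "its elements are maximal consistent sets" are needed.
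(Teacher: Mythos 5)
Your proposal is correct and follows essentially the same route as the paper's proof: reflexivity from the Truth axiom, symmetry by contradiction via the contrapositive of Truth plus Negative Introspection, and transitivity routed through positive introspection for $\K_a$ (Lemma~\ref{positive introspection lemma}), all with standard maximal-consistency bookkeeping. No discrepancies worth noting.
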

\begin{proof} {\bf Reflexivity:} Consider any formula $\phi\in\Phi$. Suppose that $\K_a\phi\in w$. By Definition~\ref{canonical sim}, it suffices to show that $\phi\in w$. Indeed, assumption $\K_a\phi\in w$ implies $w\vdash\phi$ by the Truth axiom and the Modus Ponens inference rule. Therefore, $\phi\in w$ because set $w$ is maximal.

\vspace{1mm}
\noindent {\bf Symmetry:} Consider any epistemic worlds $w, u \in W$ such that $w \sim_a u$ and any formula $\K_a\phi \in u$. By Definition~\ref{canonical sim}, it suffices to show  $\phi \in w$. Suppose the opposite. Then, $\phi \notin w$. Hence, $w \nvdash \phi$ because set $w$ is maximal. Thus, $w \nvdash \K_a \phi$ by the contraposition of the Truth axiom. Then, $\neg \K_a \phi \in w$ because set $w$ is maximal. Thus, $w \vdash \K_a \neg \K_a \phi$ by the Negative Introspection axiom and the Modus Ponens inference rule. Hence, $\K_a \neg \K_a \phi\in w$ because set $w$ is maximal. Then, $\neg \K_a \phi\in u$ by assumption $w\sim_a u$ and Definition~\ref{canonical sim}. Therefore, $\K_a\phi \notin u$ because set $w$ is consistent, which contradicts the assumption  $\K_a\phi \in u$. 

\vspace{1mm}
\noindent {\bf Transitivity:} Consider any epistemic worlds $w, u, v \in W$ such that $w \sim_a u$ and $u \sim_a v$ and any formula $\K_a\phi\in w$. By Definition~\ref{canonical sim}, it suffices to show  $\phi\in v$.  Assumption $\K_a\phi\in w$ implies $w \vdash \K_a \K_a \phi$ by Lemma~\ref{positive introspection lemma} and the Modus Ponens inference rule. Thus, $\K_a \K_a \phi \in w$ because set $w$ is maximal. Hence, $\K_a \phi \in u$ by the assumption $w \sim_a u$ and Definition~\ref{canonical sim}. Therefore, $\phi \in v$ by the assumption $u \sim_a v$ and Definition~\ref{canonical sim}.
\end{proof}

The next step in specifying the canonical model is to define preference relation $\prec_a$ for each agent $a\in\mathcal{A}$, which we do in Definition~\ref{canonical prec}. Towards this definition, we first introduce the ``emotional base'' $\Delta_a$ for each agent $a$. The set $\Delta_a$ contains a formula $\delta$ if agent $a$ could either be potentially happy about $\delta$ or potentially sad about $\neg\phi$.

\begin{definition}\label{Delta a}
$
\Delta_a = \{\delta\in\Phi\;|\; \cN\H_a\delta\in X_0\}\cup \{\delta\in\Phi\;|\; \cN\S_a\neg\delta\in X_0\}
$.
\end{definition}

The next lemma holds because set $\Phi$ is countable.
\begin{lemma}\label{Delta_a is countable}
Set $\Delta_a$ is countable for each agent $a\in\mathcal{A}$. \qed
\end{lemma}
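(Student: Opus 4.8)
The plan is to observe that this is essentially immediate from the countability of the language, so the proof is a one-line appeal to standard cardinality facts, and the only thing worth spelling out is why $\Phi$ is countable in the first place.

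First I would recall the grammar defining $\Phi$ from Section~3: formulae are finite expressions built from a fixed countable set of propositional variables and a fixed countable set of agents $\mathcal{A}$, using the finitely many clauses $\neg$, $\to$, $\N$, $\K_a$, $\H_a$, $\S_a$. Since each formula is a finite string over a countable alphabet (propositional variables, agent symbols, connectives, modal operators, parentheses), and the set of finite strings over a countable alphabet is countable, the set $\Phi$ is countable. (Alternatively one can argue by induction on structural complexity: there are countably many formulae of complexity $0$, and if there are countably many formulae of complexity $<n$ then the clauses of the grammar produce only countably many of complexity $n$, since $\mathcal{A}$ is countable; a countable union of countable sets is countable.)

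Next I would simply unfold Definition~\ref{Delta a}: $\Delta_a$ is the union of $\{\delta\in\Phi\mid \cN\H_a\delta\in X_0\}$ and $\{\delta\in\Phi\mid \cN\S_a\neg\delta\in X_0\}$. Each of these is a subset of $\Phi$, hence countable, and the union of two countable sets is countable. Therefore $\Delta_a$ is countable, which is what we wanted.

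I do not expect any real obstacle here: the statement is a routine cardinality observation. The only mildly non-trivial ingredient is the countability of $\Phi$ itself, and that is exactly what the sentence preceding the lemma already asserts ("holds because set $\Phi$ is countable"), so in the written proof it would suffice to cite that fact and note that a subset, and a finite union of subsets, of a countable set is countable.
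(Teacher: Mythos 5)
Your proof is correct and matches the paper's reasoning: the paper gives no separate argument beyond the remark that $\Phi$ is countable, and your spelled-out version (countability of $\Phi$ from the grammar, then $\Delta_a$ as a union of two subsets of $\Phi$) is exactly the intended justification.
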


Next, we introduce a total pre-order $\sqsubseteq_a$ on emotional base $\Delta_a$ of each agent $a\in \mathcal{A}$. Note that this pre-order is different from canonical preference relation $\prec_a$ that we introduce in Definition~\ref{canonical prec}.

\begin{definition}\label{sqsubseteq definition}
For any agent $a\in\mathcal{A}$ and any two formulae $\delta,\delta'\in\Delta_a$, let $\delta\sqsubseteq \delta'$ if $\N(\delta\to\delta')\in X_0$.
\end{definition}

\begin{lemma}\label{sqsubseteq contraposition lemma}
$\N(\neg\delta'\to\neg\delta)\in X_0$
for any agent $a\in\mathcal{A}$ and any two formulae $\delta,\delta'\in\Delta_a$ such that $\delta\sqsubseteq\delta'$.
\end{lemma}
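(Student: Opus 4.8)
The statement we must prove is: if $\delta,\delta'\in\Delta_a$ and $\delta\sqsubseteq\delta'$, i.e. $\N(\delta\to\delta')\in X_0$, then $\N(\neg\delta'\to\neg\delta)\in X_0$. On first glance this looks like pure propositional reasoning inside the scope of $\N$, which would be trivial via Lemma~\ref{superdistributivity lemma}: from the tautology $(\delta\to\delta')\to(\neg\delta'\to\neg\delta)$ we get $\N(\delta\to\delta')\vdash\N(\neg\delta'\to\neg\delta)$, and since $X_0$ is maximal consistent and contains $\N(\delta\to\delta')$, it contains $\N(\neg\delta'\to\neg\delta)$. So the plan is essentially a one-line derivation — the membership hypotheses $\delta,\delta'\in\Delta_a$ might well be unused, serving only to match the lemma's role in the subsequent construction.

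Let me spell out the routine steps. First, $(\delta\to\delta')\to(\neg\delta'\to\neg\delta)$ is a propositional tautology, hence an axiom of the system, so $\delta\to\delta'\vdash\neg\delta'\to\neg\delta$ by Modus Ponens. Then Lemma~\ref{superdistributivity lemma} (with $n=1$ and $\Box=\N$) gives $\N(\delta\to\delta')\vdash\N(\neg\delta'\to\neg\delta)$. Now the assumption $\delta\sqsubseteq\delta'$ unpacks, by Definition~\ref{sqsubseteq definition}, to $\N(\delta\to\delta')\in X_0$; since $X_0\vdash\N(\delta\to\delta')$ and the previous derivation is a valid deduction, we get $X_0\vdash\N(\neg\delta'\to\neg\delta)$. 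Finally, $X_0$ is a maximal consistent set, so it is deductively closed, whence $\N(\neg\delta'\to\neg\delta)\in X_0$, as required.

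I do not anticipate any real obstacle here — the lemma is a bookkeeping step, and the only thing to double-check is that we are applying Lemma~\ref{superdistributivity lemma} to $\N$ rather than $\K_a$ (both are permitted by that lemma), and that $X_0$ being maximal consistent indeed entails deductive closure (standard, and used repeatedly already in the Lindenbaum and canonical-world lemmas above). If the authors intended a less obvious route — say, going through the $\cN$ modality or through the emotional-base membership — that would be surprising given how cleanly the contraposition tautology handles it; I would expect the published proof to be the short argument sketched here and perhaps to remark that $\delta,\delta'\in\Delta_a$ is only needed to make the statement about $\sqsubseteq_a$ well-typed.
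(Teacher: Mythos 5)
Your proposal is correct and follows essentially the same route as the paper's proof: contraposition as a propositional tautology, pushed under $\N$, then Modus Ponens and maximality of $X_0$. The only cosmetic difference is that you invoke Lemma~\ref{superdistributivity lemma} where the paper applies the Necessitation rule and the Distributivity axiom directly, which amounts to the same derivation.
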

\begin{proof}
Formula $(\delta\to\delta')\to(\neg\delta'\to\neg\delta)$ is a propositional tautology. Thus, $\vdash \N((\delta\to\delta')\to(\neg\delta'\to\neg\delta))$ by the Necessitation inference rule. Hence,  
\begin{equation}\label{contraposition under N}
    \vdash\N(\delta\to\delta')\to\N(\neg\delta'\to\neg\delta)
\end{equation}
by the Distributivity axiom and the Modus Ponens inference rule.

Suppose that $\delta\sqsubseteq\delta'$. Thus, $\N(\delta\to\delta')\in X_0$ by Definition~\ref{sqsubseteq definition}. Hence, $X_0\vdash \N(\neg\delta'\to\neg\delta)$ by statement~(\ref{contraposition under N}) and the Modus Ponens inference rule. Therefore, $\N(\neg\delta'\to\neg\delta)\in X_0$ because set $X_0$ is maximal.
\end{proof}

\begin{lemma}\label{sqsubseteq total pre-order lemma}
For any agent $a\in\mathcal{A}$, relation $\sqsubseteq$ is a total pre-order on set $\Delta_a$.
\end{lemma}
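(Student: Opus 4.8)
The plan is to check separately the three defining properties of a total pre-order on $\Delta_a$ — reflexivity, transitivity, and totality — using Definition~\ref{sqsubseteq definition} throughout, and expecting totality to be the only part requiring the Coherence of Potential Emotions axioms. For reflexivity, fix $\delta\in\Delta_a$; since $\delta\to\delta$ is a propositional tautology, $\vdash\N(\delta\to\delta)$ by the Necessitation inference rule, so $\N(\delta\to\delta)\in X_0$ by maximality of $X_0$, i.e.\ $\delta\sqsubseteq\delta$. For transitivity, suppose $\delta\sqsubseteq\delta'$ and $\delta'\sqsubseteq\delta''$, so $\N(\delta\to\delta'),\N(\delta'\to\delta'')\in X_0$; since $\delta\to\delta',\,\delta'\to\delta''\vdash\delta\to\delta''$ in propositional logic, Lemma~\ref{superdistributivity lemma} gives $\N(\delta\to\delta'),\N(\delta'\to\delta'')\vdash\N(\delta\to\delta'')$, hence $X_0\vdash\N(\delta\to\delta'')$ and $\N(\delta\to\delta'')\in X_0$ by maximality, i.e.\ $\delta\sqsubseteq\delta''$.

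For totality, fix $\delta,\delta'\in\Delta_a$. By Definition~\ref{Delta a}, for each of $\delta$ and $\delta'$ at least one of $\cN\H_a(\cdot)\in X_0$ or $\cN\S_a\neg(\cdot)\in X_0$ holds; fixing such a witness for each gives four cases. When $\cN\H_a\delta,\cN\H_a\delta'\in X_0$, the first Coherence of Potential Emotions axiom with $\E=\H$ and Modus Ponens give $X_0\vdash\N(\delta\to\delta')\vee\N(\delta'\to\delta)$. When $\cN\S_a\neg\delta,\cN\S_a\neg\delta'\in X_0$, the same axiom with $\E=\S$ applied to $\neg\delta,\neg\delta'$ gives $X_0\vdash\N(\neg\delta\to\neg\delta')\vee\N(\neg\delta'\to\neg\delta)$. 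In the two mixed cases one applies the second Coherence axiom, taking the $\H$-witness as $\phi$ and the negation of the $\S$-witness's argument as $\psi$, obtaining $X_0\vdash\N(\phi\to\neg\psi)\vee\N(\neg\psi\to\phi)$, where $\{\phi,\neg\psi\}$ is a double-negation variant of $\{\delta,\delta'\}$. Since $X_0$ is maximal consistent, in every case one of the two disjuncts already belongs to $X_0$.

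It remains to rewrite that disjunct into the form $\N(\delta\to\delta')$ or $\N(\delta'\to\delta)$ demanded by Definition~\ref{sqsubseteq definition}. This is purely propositional: each of $(\neg\delta\to\neg\delta')\to(\delta'\to\delta)$, $(\delta\to\neg\neg\delta')\to(\delta\to\delta')$, and $(\neg\neg\delta'\to\delta)\to(\delta'\to\delta)$ is a tautology, so by the Necessitation inference rule and the Distributivity axiom the corresponding implication between the $\N$-formulas is derivable — exactly the ``tautology pushed under $\N$'' move used in equation~(\ref{contraposition under N}) in the proof of Lemma~\ref{sqsubseteq contraposition lemma}. Applying it (or nothing, in the $\H$-$\H$ case) yields $\delta\sqsubseteq\delta'$ or $\delta'\sqsubseteq\delta$ in each case. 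The only real obstacle is bookkeeping rather than ideas: an element of $\Delta_a$ can lie in both sets of the union in Definition~\ref{Delta a}, so one must first commit to a membership witness before the appropriate Coherence axiom is available, and then keep the double-negation rewrites consistent across the four cases.
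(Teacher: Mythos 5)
Your proposal is correct and follows essentially the same route as the paper: reflexivity via Necessitation of $\delta\to\delta$, transitivity by pushing the propositional syllogism under $\N$ (you package it with Lemma~\ref{superdistributivity lemma}, the paper unwinds Distributivity by hand), and totality by case analysis on the membership witnesses using the two Coherence of Potential Emotions axioms plus contraposition/double-negation rewrites under $\N$. The only cosmetic difference is that you spell out four witness cases where the paper collapses the symmetric mixed cases ``without loss of generality.''
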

\begin{proof}
We need to show that relation $\sqsubseteq$ is reflexive, transitive, and total.

\vspace{1mm}
\noindent{\bf Reflexivity:} Consider an arbitrary formula $\delta\in\Phi$. By Definition~\ref{sqsubseteq definition}, it suffices to show that $\N(\delta\to\delta)\in X_0$. Indeed, formula $\delta\to\delta$ is a propositional tautology. Thus, $\vdash \N(\delta\to\delta)$ by the Necessitation inference rule. Therefore, $\N(\delta\to\delta)\in X_0$ because set $X_0$ is maximal.

\vspace{1mm}
\noindent{\bf Transitivity:} Consider arbitrary $\delta_1,\delta_2,\delta_3\in\Phi$ such that $\N(\delta_1\to\delta_2)\in X_0$ and $\N(\delta_2\to\delta_3)\in X_0$. By Definition~\ref{sqsubseteq definition}, it suffices to show that $\N(\delta_1\to\delta_3)\in X_0$. Indeed, note that formula 
$
(\delta_1\to\delta_2)\to ((\delta_2\to\delta_3)\to(\delta_1\to\delta_3) )
$
is a propositional tautology. Thus, 
$
\vdash \N((\delta_1\to\delta_2)\to ((\delta_2\to\delta_3)\to(\delta_1\to\delta_3) ))
$
by the Necessitation inference rule. Hence, by the Distributivity axiom and the Modus Ponens inference rule,
$
\vdash \N(\delta_1\to\delta_2)\to \N((\delta_2\to\delta_3)\to(\delta_1\to\delta_3) )
$. 
Then,
$
X_0 \vdash \N((\delta_2\to\delta_3)\to(\delta_1\to\delta_3) )
$
by the assumption $\N(\delta_1\to\delta_2)\in X_0$ and the Modus Ponens inference rule.
Thus,
$
X_0 \vdash \N(\delta_2\to\delta_3)\to\N(\delta_1\to\delta_3)
$
by the Distributivity axiom and the Modus Ponens inference rule. Hence,
by the assumption $\N(\delta_2\to\delta_3)\in X_0$ and the Modus Ponens rule,
$
X_0 \vdash \N(\delta_1\to\delta_3)
$.
Therefore, $\N(\delta_1\to\delta_3)\in X_0$ because set $X_0$ is maximal.

\vspace{1mm}
\noindent{\bf Totality:} Consider arbitrary formulae $\delta_1,\delta_2\in \Delta_a$. By Definition~\ref{sqsubseteq definition}, it suffices to show that either $\N(\delta_1\to\delta_2)\in X_0$ or $\N(\delta_2\to\delta_1)\in X_0$. By Definition~\ref{Delta a}, without loss of generality, we can assume that on the the following three cases takes place:

\vspace{1mm}
\noindent{\em Case I:} $\cN\H_a\delta_1, \cN\H_a\delta_2\in X_0$. Then, $X_0\vdash \N(\delta_1\to\delta_2)\vee \N(\delta_2\to\delta_1)$ by the first Coherence of Potential Emotions axiom and propositional reasoning. Therefore, because set $X_0$ is maximal, either $\N(\delta_1\to\delta_2)\in X_0$ or $\N(\delta_2\to\delta_1)\in X_0$. 

\vspace{1mm}
\noindent{\em Case II:} $\cN\S_a\neg\delta_1, \cN\S_a\neg\delta_2\in X_0$. Similarly to the previous case, we can show that either $\N(\neg\delta_1\to\neg\delta_2)\in X_0$ or $\N(\neg\delta_2\to\neg\delta_1)\in X_0$. Without loss of generality, suppose that $\N(\neg\delta_1\to\neg\delta_2)\in X_0$. Note that formula
$
(\neg\delta_1\to\neg\delta_2)\to(\delta_2\to\delta_1)
$
is a propositional tautology. Thus,
$
\vdash\N((\neg\delta_1\to\neg\delta_2)\to(\delta_2\to\delta_1))
$
by the Necessitation inference rule. Hence, 
$
\vdash\N(\neg\delta_1\to\neg\delta_2)\to\N(\delta_2\to\delta_1)
$ by the Distributivity axiom and the Modus Ponens rule. Thus,
$
X_0\vdash\N(\delta_2\to\delta_1)
$ 
by the assumption $\N(\neg\delta_1\to\neg\delta_2)\in X_0$.
Therefore, $\N(\delta_2\to\delta_1)\in X_0$ because set $X_0$ is maximal.

\vspace{1mm}
\noindent{\em Case III:} $\cN\H_a\delta_1,\cN\S_a\neg\delta_2\in X_0$. Thus, by the second Coherence of Potential Emotions axiom and propositional reasoning,
$
X_0\vdash \N(\delta_1\to\neg\neg\delta_2) \vee \N(\neg\neg\delta_2\to \delta_1)
$.
Hence, 
either $\N(\delta_1\to\neg\neg\delta_2)\in X_0$ or $\N(\neg\neg\delta_2\to \delta_1)\in X_0$ because set $X_0$ is consistent. Then using an argument similar to the one in Case II and propositional tautologies
$$
(\delta_1\to\neg\neg\delta_2) \to (\delta_1\to\delta_2)
\mbox{ and }
(\neg\neg\delta_2\to \delta_1) \to (\delta_2\to \delta_1)
$$
one can conclude that either $\N(\delta_1\to\delta_2)\in X_0$ or $\N(\delta_2\to \delta_1)\in X_0$. 
\end{proof}

We now are ready to define  preference relation $\prec_a$ on epistemic worlds of the canonical model.  

\begin{definition}\label{canonical prec}
$w\prec_a u$ if there is a formula $\delta\in \Delta_a$ such that $\delta\notin w$ and $\delta\in u$. 
\end{definition}

Note that the transitivity of the relation $\prec_a$ is not obvious. We prove it as a part of the next lemma.

\begin{lemma}
$\prec_a$ is a strict partial order on $W$.
\end{lemma}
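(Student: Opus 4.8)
The plan is to establish the two defining properties of a strict partial order on $W$, namely irreflexivity and transitivity; asymmetry then follows formally. Irreflexivity is immediate from Definition~\ref{canonical prec}: if $w\prec_a w$ held, there would be a formula $\delta\in\Delta_a$ with $\delta\notin w$ and $\delta\in w$ at the same time, which is impossible. So the only substantive work is transitivity.

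For transitivity, suppose $w\prec_a u$ and $u\prec_a v$. By Definition~\ref{canonical prec} there are formulae $\delta_1,\delta_2\in\Delta_a$ with $\delta_1\notin w$, $\delta_1\in u$, $\delta_2\notin u$, and $\delta_2\in v$. The plan is to produce a single formula in $\Delta_a$ that witnesses $w\prec_a v$, and the tool that glues the two given witnesses together is Lemma~\ref{sqsubseteq total pre-order lemma}: $\sqsubseteq$ is total on $\Delta_a$, so either $\delta_1\sqsubseteq\delta_2$ or $\delta_2\sqsubseteq\delta_1$. In the first case, $\N(\delta_1\to\delta_2)\in X_0$ by Definition~\ref{sqsubseteq definition}, hence $\delta_1\to\delta_2\in u$ by Definition~\ref{canonical world definition}; since $u$ is a maximal consistent set containing $\delta_1$, it is closed under Modus Ponens, so $\delta_2\in u$, contradicting $\delta_2\notin u$. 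Thus the first alternative is impossible and we must have $\delta_2\sqsubseteq\delta_1$, i.e. $\N(\delta_2\to\delta_1)\in X_0$. Then $\delta_2\to\delta_1\in v$ by Definition~\ref{canonical world definition}, and since $\delta_2\in v$ this gives $\delta_1\in v$. As $\delta_1\in\Delta_a$ and $\delta_1\notin w$, Definition~\ref{canonical prec} yields $w\prec_a v$.

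Finally, asymmetry follows automatically: if $w\prec_a u$ and $u\prec_a w$, transitivity gives $w\prec_a w$, contradicting irreflexivity. Hence $\prec_a$ is a strict partial order on $W$.

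The only place where genuine content enters is the transitivity step, and within it the key realization is that totality of $\sqsubseteq$ on the emotional base forces one of the two alternatives to be self-contradictory, while the surviving alternative hands over a witness for $w\prec_a v$ essentially for free; everything else is routine manipulation of maximal consistent sets together with the closure property of $W$ under $\N$-formulae guaranteed by Definition~\ref{canonical world definition}.
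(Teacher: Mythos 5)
Your proof is correct and follows essentially the same route as the paper: irreflexivity directly from Definition~\ref{canonical prec}, and transitivity by taking witnesses $\delta_1,\delta_2$, invoking totality of $\sqsubseteq$ on $\Delta_a$, ruling out $\delta_1\sqsubseteq\delta_2$ via closure of $u$ under the $\N$-formula $\N(\delta_1\to\delta_2)$, and using $\delta_2\sqsubseteq\delta_1$ to transfer $\delta_1$ into $v$ so that $\delta_1$ witnesses $w\prec_a v$. The added remark on asymmetry is harmless, and your citation of Definition~\ref{sqsubseteq definition} for unfolding $\sqsubseteq$ is in fact the more accurate reference.
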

\begin{proof}
{\bf Irreflexivity}. Suppose that $w\prec_a w$ for some world $w\in W$. Thus, by Definition~\ref{canonical prec}, there exists formula $\delta\in\Delta_a$ such that $\delta\notin w$ and $\delta\in w$, which is a contradiction.


\noindent{\bf Transitivity}. Consider any worlds $w,u,v\in W$ such that $w\prec_a u$ and $u\prec_a v$. It suffices to prove that $w\prec_a v$. Indeed, by Definition~\ref{canonical prec}, assumptions $w\prec_a u$ and $u\prec_a v$ imply that there are formulae $\delta_1,\delta_2$ in $\Delta_a$ such that 
\begin{equation}\label{another four facts}
    \delta_1\notin w, \;\;\;
    \delta_1\in u, \;\;\;
    \delta_2\notin u, \;\;\; \mbox{and }\;\;\;
    \delta_2\in v.
\end{equation}
By Lemma~\ref{sqsubseteq total pre-order lemma},  either $\delta_1\sqsubseteq \delta_2$ or $\delta_2\sqsubseteq \delta_1$. We consider these two cases separately.

\vspace{1mm}
\noindent{\em Case I:} $\delta_1\sqsubseteq \delta_2$. Then, $\N(\delta_1\to\delta_2)\in X_0$ by Definition~\ref{canonical prec}. Hence, $\delta_1\to\delta_2\in u$ by Definition~\ref{canonical world definition}. Thus, $u\vdash \delta_2$ by the part $\delta_1\in u$ of statement~(\ref{another four facts}) and the Modus Ponens inference rule. Therefore, $\delta_2\in u$ because set $u$ is maximal, which contradicts the part $\delta_2\notin u$ of statement~(\ref{another four facts}).

\vspace{1mm}
\noindent{\em Case II:} $\delta_2\sqsubseteq \delta_1$. Then, $\N(\delta_2\to\delta_1)\in X_0$ by Definition~\ref{canonical prec}. Thus, $\delta_2\to\delta_1\in v$ by Definition~\ref{canonical world definition}. Hence, $v\vdash \delta_1$ by the part $\delta_2\in v$ of statement~(\ref{another four facts}) and the Modus Ponens inference rule. Then, $\delta_1\in v$ because set $v$ is maximal. Therefore, $w\prec_a v$ by Definition~\ref{canonical prec} and the part $\delta_1\notin w$ of statement~(\ref{another four facts}).
\end{proof}

\begin{definition}\label{canonical pi}
$\pi(p)=\{w\in W\;|\;p\in w\}$.
\end{definition}

This concludes the definition of the canonical epistemic  model with preferences $M(X_0)=(W,\{\sim_a\}_{a\in\mathcal{A}},\{\prec_a\}_{a\in\mathcal{A}},\pi)$.

\section{Properties of a Canonical Model}

As usual the proof of the completeness is centered around an ``induction'' or ``truth'' lemma. In our case, this is Lemma~\ref{induction lemma}. We precede this lemma with several auxiliary lemmas that are used in the induction step of the proof of Lemma~\ref{induction lemma}. For the benefit of the reader, we grouped these auxiliary lemmas into several subsections. Throughout this section up to and including Lemma~\ref{induction lemma}, we assume a fixed canonical model $M(X_0)$.

\subsection{Properties of Modality $\N$}

\begin{lemma}\label{N child all}
For any worlds $w,u\in W$ and any formula $\phi\in\Phi$, if $\N\phi\in w$, then $\phi\in u$.
\end{lemma}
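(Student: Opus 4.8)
The plan is to reduce the statement to the single claim that $\N\phi\in X_0$, since once we know this, Definition~\ref{canonical world definition} immediately gives $\phi\in u$ for the arbitrary world $u\in W$. So the real content is a ``push-up'' step: from $\N\phi$ belonging to some world $w$, conclude that $\N\phi$ belongs to the distinguished set $X_0$ that generates the canonical model.

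To carry this out, I would argue by contradiction. Suppose $\N\phi\notin X_0$. Since $X_0$ is maximal, $\neg\N\phi\in X_0$. Applying the Negative Introspection axiom $\neg\N\phi\to\N\neg\N\phi$ together with the Modus Ponens inference rule gives $X_0\vdash\N\neg\N\phi$, and maximality of $X_0$ yields $\N\neg\N\phi\in X_0$. Now Definition~\ref{canonical world definition} applies to the world $w$: since $\N\neg\N\phi\in X_0$, we get $\neg\N\phi\in w$. But by hypothesis $\N\phi\in w$ as well, so $w\vdash\N\phi$ and $w\vdash\neg\N\phi$, contradicting the consistency of $w$. Hence $\N\phi\in X_0$, and therefore $\phi\in u$ by Definition~\ref{canonical world definition}.

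There is no serious obstacle here; the only ``idea'' needed is the standard observation that, in the presence of the Negative Introspection axiom for $\N$, every world of the canonical model has exactly the same $\N$-formulae as $X_0$ (the inclusion one way being built into Definition~\ref{canonical world definition}, and the other way being precisely the argument above). One should just be careful to invoke the axiom and maximality of $X_0$ in the right order, and to note that $w$ being maximal consistent is what makes the final step a genuine contradiction rather than merely a derivability clash.
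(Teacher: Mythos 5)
Your proof is correct, but it is organized differently from the paper's. The paper proves the contrapositive: assuming $\phi\notin u$, it walks back to $\N\phi\notin w$ by first using the Truth axiom to get $\N\phi\notin u$, then Definition~\ref{canonical world definition} to get $\N\N\phi\notin X_0$, then the positive introspection lemma (Lemma~\ref{positive introspection lemma}) to get $\N\phi\notin X_0$, and only then the Negative Introspection axiom --- exactly as you use it --- to push $\neg\N\phi$ from $X_0$ up into $w$. You instead isolate the single claim that $\N\phi\in w$ forces $\N\phi\in X_0$, prove it directly by the Negative Introspection argument (if $\N\phi\notin X_0$, then $\neg\N\phi\in X_0$, hence $\N\neg\N\phi\in X_0$, hence $\neg\N\phi\in w$, contradicting consistency of $w$), and then obtain $\phi\in u$ immediately from Definition~\ref{canonical world definition}. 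Your route is shorter: it dispenses with the Truth axiom and with Lemma~\ref{positive introspection lemma} entirely, and the detour through the world $u$ in the paper's proof buys nothing additional for this particular lemma; it merely re-derives in passing the observation, stated informally after Definition~\ref{canonical world definition}, that all canonical worlds carry the same $\N$-formulae as $X_0$. Both arguments hinge on the same essential ingredient, namely Negative Introspection for $\N$ together with maximality of $X_0$ and the defining condition of the canonical worlds.
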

\begin{proof}
Suppose that $\phi \notin u$.
Thus, $u\nvdash \phi$ because set $u$ is maximal. 
Hence, $\N\phi\notin u$ by the Truth axiom. Then,
$\N\N\phi\notin X_0$
by Definition~\ref{canonical world definition}.
Thus, $X_0\nvdash\N\N\phi$ because set $X_0$ is maximal.
Hence, $\N\phi\notin X_0$ by Lemma~\ref{positive introspection lemma}. Then, $\neg\N\phi\in X_0$ because set $X_0$ is maximal. Thus,
$X_0\vdash \N\neg\N\phi$ by the Negative Introspection axiom  and the Modus Ponens inference rule. Hence, $\N\neg\N\phi\in X_0$ because set $X_0$ is maximal. Then, $\neg\N\phi\in w$ by Definition~\ref{canonical world definition}. Therefore, $\N\phi\notin w$ because set $w$ is consistent.
\end{proof}

\begin{lemma}\label{N child exists}
For any world $w\in W$ and any formula $\phi\in\Phi$, if $\N\phi\notin w$, then there is a world $u\in W$ such that $\phi\notin u$.
\end{lemma}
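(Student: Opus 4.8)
The plan is to run the standard ``existence'' argument for the universal modality $\N$. Assume $\N\phi\notin w$. The target is a maximal consistent set $u\in W$ with $\neg\phi\in u$, so that $\phi\notin u$ by consistency. The natural candidate to feed into Lindenbaum's lemma is $\Gamma\cup\{\neg\phi\}$, where $\Gamma=\{\psi\in\Phi\;|\;\N\psi\in X_0\}$ is the set that, by Definition~\ref{canonical world definition}, every world of the canonical model must contain; so the whole proof reduces to showing that $\Gamma\cup\{\neg\phi\}$ is consistent.

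To prove that claim I would argue by contradiction. If $\Gamma\cup\{\neg\phi\}$ is inconsistent, then $\Gamma\cup\{\neg\phi\}\vdash\phi$ (since an inconsistent set derives everything), so $\Gamma\vdash\neg\phi\to\phi$ by Lemma~\ref{deduction lemma}, and hence $\Gamma\vdash\phi$ by propositional reasoning. A derivation from $\Gamma$ using Modus Ponens only invokes finitely many of its members, say $\psi_1,\dots,\psi_n\in\Gamma$ with $\psi_1,\dots,\psi_n\vdash\phi$. Then Lemma~\ref{superdistributivity lemma} (with $\Box=\N$) gives $\N\psi_1,\dots,\N\psi_n\vdash\N\phi$, and since $\N\psi_i\in X_0$ for each $i$ by the definition of $\Gamma$, we conclude $X_0\vdash\N\phi$, hence $\N\phi\in X_0$ by maximality of $X_0$. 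From here, $X_0\vdash\N\N\phi$ by Lemma~\ref{positive introspection lemma} and Modus Ponens, so $\N\N\phi\in X_0$, and therefore $\N\phi\in w$ by Definition~\ref{canonical world definition} --- contradicting the assumption $\N\phi\notin w$. This establishes consistency of $\Gamma\cup\{\neg\phi\}$.

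Finally, I would apply Lemma~\ref{Lindenbaum's lemma} to extend $\Gamma\cup\{\neg\phi\}$ to a maximal consistent set $u$. Since $\Gamma\subseteq u$, Definition~\ref{canonical world definition} gives $u\in W$; and $\neg\phi\in u$ forces $\phi\notin u$ because $u$ is consistent, which is exactly what is required. The only genuinely delicate step is the consistency claim, and within it the passage from a finite derivation of $\phi$ out of $\N$-premises to a derivation of $\N\phi$; Lemma~\ref{superdistributivity lemma} is tailored precisely for this, so the obstacle is mild --- the rest is routine bookkeeping with maximal consistent sets and the introspection axioms already used in the proof of Lemma~\ref{N child all}.
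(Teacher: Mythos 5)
Your proposal is correct and follows essentially the same route as the paper's proof: build $\{\neg\phi\}\cup\{\psi\mid\N\psi\in X_0\}$, show it consistent by pushing a finite derivation of $\phi$ through Lemma~\ref{superdistributivity lemma} and Lemma~\ref{positive introspection lemma} to contradict $\N\phi\notin w$, then extend by Lemma~\ref{Lindenbaum's lemma}. The only difference is your explicit detour through Lemma~\ref{deduction lemma} to extract the finite derivation, which is harmless bookkeeping.
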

\begin{proof}
Consider the set 
$
X=\{\neg\phi\}\cup\{\psi\;|\;\N\psi\in X_0\}
$.
We start by showing that set $X$ is consistent. Suppose the opposite. Then, there are formulae $ \N\psi_1,\dots, \N\psi_n \in X_0 $ such that 
$
\psi_1,\dots,\psi_n\vdash \phi
$.
Thus,
$
\N\psi_1,\dots,\N\psi_n\vdash \N\phi
$
by Lemma~\ref{superdistributivity lemma}.
Hence,
$
X_0\vdash \N\phi
$ 
because $ \N\psi_1,\dots, \N\psi_n \in X_0 $. Then,
$
X_0\vdash \N\N\phi
$ 
by Lemma~\ref{positive introspection lemma} and the Modus Ponens inference rule.
Thus, 
$ 
\N\N\phi \in X_0 
$
because set $X_0$ is maximal. Hence,
$
\N\phi\in w
$
by Definition~\ref{canonical world definition} which contradicts the assumption $\N\phi\notin w$ of the lemma. Therefore, set $X$ is consistent. 

Let set $u$ be a maximum consistent extension of set $X$. Such set exists by Lemma~\ref{Lindenbaum's lemma}. Note that $u \in W $ by Definition~\ref{canonical world definition} and the choice of sets $X$ and $u$. Also, $\neg\phi\in X\subseteq u$ by the choice of sets $X$ and $u$. Therefore, $\phi\notin u$ because set $u$ is consistent.
\end{proof}

\subsection{Properties of Modality $\K$}

\begin{lemma}\label{K child all}
For any agent $a\in \mathcal{A}$, any worlds $w,u\in W$, and any formula $\phi\in\Phi$, if $\K_a\phi\in w$ and $w\sim_a u$, then $\phi\in u$.
\end{lemma}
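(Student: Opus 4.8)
The plan is to observe that this lemma is little more than a direct unfolding of Definition~\ref{canonical sim}. Recall that the canonical indistinguishability relation $\sim_a$ was defined precisely so that $w\sim_a u$ holds when every formula $\psi$ with $\K_a\psi\in w$ satisfies $\psi\in u$. So, given the two hypotheses $\K_a\phi\in w$ and $w\sim_a u$, I would simply instantiate the defining condition of $w\sim_a u$ with $\psi=\phi$ and conclude $\phi\in u$. That is the whole argument — a single invocation of Definition~\ref{canonical sim}, with no appeal to maximality, consistency, or any of the modal axioms.

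The only point that requires a moment's care is that Definition~\ref{canonical sim} is phrased as a one-directional stipulation (``let $w\sim_a u$ \emph{if} $\phi\in u$ for each $\K_a\phi\in w$''), so one should note that $\sim_a$ is by construction exactly the set of pairs meeting that condition, and hence membership $w\sim_a u$ really does hand back the defining property. There is no genuine obstacle in this lemma.

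I expect the real work to lie not here but in the companion ``existence'' statement that will be needed in the induction step for $\K_a$ in the truth lemma — namely, that if $\K_a\phi\notin w$ then there is a world $u$ with $w\sim_a u$ and $\phi\notin u$. That one, like Lemma~\ref{N child exists}, calls for a Lindenbaum-style construction of $u$ from the set $\{\neg\phi\}\cup\{\psi\mid \K_a\psi\in w\}$, using positive introspection (Lemma~\ref{positive introspection lemma}) to show this set is consistent and Definition~\ref{canonical world definition} to check $u\in W$. But that is a separate lemma; the statement at hand follows immediately from the definition of $\sim_a$.
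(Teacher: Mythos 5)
Your proof is correct and coincides with the paper's, which likewise disposes of this lemma in one line as a direct instantiation of Definition~\ref{canonical sim} with $\psi=\phi$. Your further remarks about where the real work lies (the companion existence lemma via a Lindenbaum-style construction) also match the paper's structure.
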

\begin{proof}
Assumptions $\K_a\phi\in w$ and $w\sim_a u$ imply $\phi\in u$ by Definition~\ref{canonical sim}.
\end{proof}

\begin{lemma}\label{K child exists}
For any agent $a\in \mathcal{A}$, any world $w\in W$, and any formula $\phi\in\Phi$, if $\K_a\phi\notin w$, then there is a world $u\in W$ such that $w\sim_a u$ and $\phi\notin u$.
\end{lemma}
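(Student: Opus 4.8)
The plan is to mimic the proof of Lemma~\ref{N child exists}, replacing modality $\N$ by $\K_a$ throughout. First I would introduce the set
\[
X=\{\neg\phi\}\cup\{\psi\in\Phi\;|\;\K_a\psi\in w\}
\]
and show it is consistent. Supposing otherwise, the finiteness of derivations and propositional reasoning (exactly as in the opening of the proof of Lemma~\ref{N child exists}) yield formulae $\K_a\psi_1,\dots,\K_a\psi_n\in w$ with $\psi_1,\dots,\psi_n\vdash\phi$. Applying Lemma~\ref{superdistributivity lemma} with $\Box$ taken to be $\K_a$ gives $\K_a\psi_1,\dots,\K_a\psi_n\vdash\K_a\phi$, hence $w\vdash\K_a\phi$, hence $\K_a\phi\in w$ by maximality of $w$, contradicting the hypothesis $\K_a\phi\notin w$.

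Next I would take a maximal consistent extension $u$ of $X$, which exists by Lemma~\ref{Lindenbaum's lemma}, and verify the three required properties. The relation $w\sim_a u$ follows immediately from Definition~\ref{canonical sim}: every formula $\psi$ with $\K_a\psi\in w$ lies in $X\subseteq u$. Likewise $\phi\notin u$ because $\neg\phi\in X\subseteq u$ and $u$ is consistent.

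The one point needing genuine care is checking $u\in W$, i.e.\ that $\{\chi\;|\;\N\chi\in X_0\}\subseteq u$ in the sense of Definition~\ref{canonical world definition}. For this I would show $\{\chi\;|\;\N\chi\in X_0\}\subseteq X$: given $\N\chi\in X_0$, positive introspection (Lemma~\ref{positive introspection lemma}) gives $X_0\vdash\N\N\chi$, so $\N\N\chi\in X_0$ by maximality, hence $\N\chi\in w$ by Definition~\ref{canonical world definition}; then the Knowledge of Necessity axiom, the Modus Ponens inference rule, and maximality of $w$ yield $\K_a\chi\in w$, whence $\chi\in X$. Since $X\subseteq u$, this gives $u\in W$ and completes the proof. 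The main obstacle is precisely this last step: it is the only place where the $\K$-version of the lemma needs more than a mechanical translation of the $\N$-version, since one must propagate the $\N$-formulae of $X_0$ into the $\K_a$-theory of $w$, using both positive introspection for $\N$ and the Knowledge of Necessity axiom.
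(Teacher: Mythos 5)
Your proof is correct, and it reaches the same goal by a slightly different decomposition than the paper. The paper builds the larger set $X=\{\neg\phi\}\cup\{\psi\;|\;\K_a\psi\in w\}\cup\{\chi\;|\;\N\chi\in X_0\}$, so its consistency argument must handle mixed premises: it derives $\vdash\N\chi_i\to\N\K_a\chi_i$ (via Knowledge of Necessity, Necessitation, Distributivity, and positive introspection), concludes $\N\K_a\chi_i\in X_0$ and hence $\K_a\chi_i\in w$, and only then applies Lemma~\ref{superdistributivity lemma}; membership $u\in W$ is then immediate from the third component of $X$. You instead keep $X$ minimal, which makes the consistency step a verbatim transcription of the $\N$-case, and you recover $u\in W$ afterwards by showing that the $\N$-theory of $X_0$ is already contained in the $\K_a$-theory of $w$: from $\N\chi\in X_0$ you get $\N\N\chi\in X_0$ by Lemma~\ref{positive introspection lemma}, hence $\N\chi\in w$ by Definition~\ref{canonical world definition}, hence $\K_a\chi\in w$ by the Knowledge of Necessity axiom, so $\chi\in X\subseteq u$. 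This is sound (it uses $w\in W$, which is part of the hypothesis), and it is essentially the same derivation the paper performs inside its consistency proof, just relocated; the payoff of your arrangement is a cleaner consistency argument with no interleaving of $\K$- and $\N$-premises, while the paper's arrangement makes $u\in W$ trivially visible from the definition of $X$.
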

\begin{proof}
First, we show that the following set of formulae is consistent
\begin{equation}\label{july16-X}
   X=\{\neg\phi\}\cup\{\psi\;|\;\K_a\psi\in w\}\cup\{\chi\;|\;\N\chi\in X_0\}. 
\end{equation}
Assume the opposite. Then, there are formulae 
\begin{equation}\label{july16-psi}
    \K_a\psi_1,\dots,\K_a\psi_m\in w
\end{equation}
and formulae 
\begin{equation}\label{july16-chi}
    \N\chi_1,\dots,\N\chi_n\in X_0
\end{equation}
such that
$$
\psi_1,\dots,\psi_m,\chi_1,\dots,\chi_n\vdash \phi.
$$
Thus, by Lemma~\ref{superdistributivity lemma},
$$
\K_a\psi_1,\dots,\K_a\psi_m,\K_a\chi_1,\dots,\K_a\chi_n\vdash \K_a\phi.
$$
Hence, by assumption~(\ref{july16-psi}),
\begin{equation}\label{june29}
    w,\K_a\chi_1,\dots,\K_a\chi_n\vdash \K_a\phi.
\end{equation}
Consider any integer $i\le n$. Note that $\N\chi_i\to\K_a\chi_i$ is an instance of the Knowledge of Necessity axiom. Then, $\vdash \N(\N\chi_i\to\K_a\chi_i)$ by the Necessitation inference rule. Thus, $\vdash \N\N\chi_i\to\N\K_a\chi_i$ by the Distributivity axiom and the Modus Ponens inference rule. Note that $\vdash \N\phi\to\N\N\phi$ by Lemma~\ref{positive introspection lemma}. Hence, $\vdash \N\chi_i\to\N\K_a\chi_i$ by the laws of propositional reasoning. Then, $X_0\vdash \N\K_a\chi$ by assumption~(\ref{july16-chi}). Thus, $\N\K_a\chi\in X_0$ because set $X_0$ is maximal. Hence, $\K_a\chi_i\in w$ by Definition~\ref{canonical world definition} for any integer $i\le n$. Then, statement~(\ref{june29}) implies that $w\vdash \K_a\phi$. Thus, $\K_a\phi\in w$ because set $w$ is maximal, which contradicts assumption $\K_a\phi\notin w$ of the lemma. Therefore, set $X$ is consistent.

By Lemma~\ref{Lindenbaum's lemma}, set $X$ can be extended to a maximal consistent set $u$. Then, $\{\chi\;|\;\N\chi\in X_0\}\subseteq X\subseteq u$ by equation~(\ref{july16-X}). Thus, $u\in W$ by Definition~\ref{canonical world definition}. Also, $\{\psi\;|\;\K_a\psi\in w\}\subseteq X\subseteq u$ by equation~(\ref{july16-X}). Hence, $w\sim_a u$ by Definition~\ref{canonical sim}. Finally, $\neg\phi\in X\subseteq u$ also by equation~(\ref{july16-X}). Therefore, $\phi\notin u$ because set $u$ is consistent.
\end{proof}

\subsection{Common Properties of Modalities $\H$ and $\S$}

Recall that by $\E$ we denote one of the two emotional modalities: $\H$ and $\S$.

\begin{lemma}
For any agent $a\in \mathcal{A}$, any worlds $w,u\in W$, and any formula $\phi\in\Phi$, if $\E_a\phi\in w$ and $w\sim_a u$, then $\phi\in u$. 
\end{lemma}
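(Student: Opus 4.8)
The statement to prove is: for any agent $a\in\mathcal{A}$, any worlds $w,u\in W$, and any formula $\phi\in\Phi$, if $\E_a\phi\in w$ and $w\sim_a u$, then $\phi\in u$, where $\E\in\{\H,\S\}$.

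The plan is to reduce this to the analogous property of modality $\K$ (Lemma~\ref{K child all}) by first passing through the Emotional Introspection axiom and then using the Truth axiom. Concretely, I would argue as follows. First, from the assumption $\E_a\phi\in w$ and the Emotional Introspection axiom $\E_a\phi\to\K_a\E_a\phi$, together with the fact that $w$ is maximal consistent (so it is closed under Modus Ponens and contains all theorems), I would deduce $\K_a\E_a\phi\in w$.

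Next, applying Lemma~\ref{K child all} to the formula $\E_a\phi$, using $\K_a\E_a\phi\in w$ and the assumption $w\sim_a u$, I would obtain $\E_a\phi\in u$. Then, from the Truth axiom $\E_a\phi\to\phi$ and the maximality of $u$, I would conclude $\phi\in u$, which is exactly the desired statement. An even shorter route is available using Lemma~\ref{E to K}, which already gives $\vdash\E_a\phi\to\K_a\phi$: from $\E_a\phi\in w$ and maximality of $w$ we get $\K_a\phi\in w$, and then Lemma~\ref{K child all} with $w\sim_a u$ gives $\phi\in u$ directly. I would likely present this latter, more streamlined version.

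There is essentially no obstacle here: the lemma is a routine bookkeeping consequence of previously established facts (the Emotional Introspection axiom, the Truth axiom, Lemma~\ref{E to K}, Lemma~\ref{K child all}, and maximal consistency). The only thing to be slightly careful about is that the argument is uniform in $\E\in\{\H,\S\}$ — but since both the Emotional Introspection axiom and the Truth axiom (and hence Lemma~\ref{E to K}) are stated with the meta-variable $\E$ ranging over $\{\H,\S\}$, no case split is needed.

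\begin{proof}
By Lemma~\ref{E to K}, $\vdash \E_a\phi\to\K_a\phi$. Thus, the assumption $\E_a\phi\in w$ implies $w\vdash \K_a\phi$ by the Modus Ponens inference rule. Hence, $\K_a\phi\in w$ because set $w$ is maximal. Therefore, $\phi\in u$ by the assumption $w\sim_a u$ and Lemma~\ref{K child all}.
\end{proof}
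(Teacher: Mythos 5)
Your streamlined proof is exactly the paper's argument: apply Lemma~\ref{E to K} and Modus Ponens to get $w\vdash\K_a\phi$, use maximality of $w$ to conclude $\K_a\phi\in w$, and then invoke Lemma~\ref{K child all} with $w\sim_a u$. The proposal is correct and matches the paper's proof step for step.
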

\begin{proof}
By Lemma~\ref{E to K} and the Modus Ponens inference rule, the assumption $\E_a\phi\in w$ implies $w\vdash\K_a\phi$. Thus, $\K_a\phi\in w$ because set $w$ is maximal. Therefore, $\phi\in u$ by Lemma~\ref{K child all} and the assumption $w\sim_a u$.
\end{proof}

\begin{lemma}
For any world $w\in W$, and any formula $\E_a\phi\in w$, there is a world $u\in W$ such that $\phi\notin u$. 
\end{lemma}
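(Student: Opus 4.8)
The plan is to reduce the statement directly to the Counterfactual axiom $\E_a\phi\to\neg\N\phi$ together with the already-established Lemma~\ref{N child exists}. First I would assume $\E_a\phi\in w$. Since the Counterfactual axiom is among the axioms of our system, we have $\vdash \E_a\phi\to\neg\N\phi$, and hence $w\vdash \neg\N\phi$ by the Modus Ponens inference rule. Because set $w$ is maximal, this yields $\neg\N\phi\in w$, and because set $w$ is consistent, $\N\phi\notin w$. Now applying Lemma~\ref{N child exists} to the world $w$ and the formula $\phi$ produces a world $u\in W$ with $\phi\notin u$, which is precisely the conclusion of the lemma.

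I do not expect any genuine obstacle here: the argument is a short unwinding of the Counterfactual axiom followed by the ``existence of a non-$\phi$ world'' property of the canonical model that was proved as Lemma~\ref{N child exists}. The only point requiring a little care is to invoke Lemma~\ref{N child exists} with the right world, namely $w$ itself (and not, say, $X_0$), since it is $\N\phi\notin w$ that we have derived. Note also that this lemma, being a ``common property of modalities $\H$ and $\S$,'' is stated uniformly in $\E$, and the proof above is uniform in $\E$ as well, since the Counterfactual axiom is stated for $\E\in\{\H,\S\}$; no case split on the emotional modality is needed.
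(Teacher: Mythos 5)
Your proposal is correct and follows essentially the same route as the paper: the Counterfactual axiom plus Modus Ponens gives $w\vdash\neg\N\phi$, consistency of $w$ gives $\N\phi\notin w$, and Lemma~\ref{N child exists} applied to $w$ yields the desired world $u$ with $\phi\notin u$. The intermediate appeal to maximality is harmless but unnecessary, since consistency of $w$ alone already rules out $\N\phi\in w$.
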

\begin{proof}
By the Counterfactual axiom and the Modus Ponens inference rule, assumption $\E_a\phi\in w$ implies  $w\vdash \neg\N\phi$. Thus, $\N\phi\notin w$ because set $w$ is consistent. Therefore, by Lemma~\ref{N child exists}, there is a world $u\in W$ such that $\phi\notin u$. 
\end{proof}

\subsection{Properties of Modality $\H$}

\begin{lemma}\label{H child all}
For any agent $a\in \mathcal{A}$, any worlds $w,u,u'\in W$, and any formula $\phi\in\Phi$, if $\H_a\phi\in w$, $\phi\notin u$ and $\phi\in u'$, then $u\prec_a u'$.
\end{lemma}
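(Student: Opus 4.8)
The plan is to reduce the claim to Definition~\ref{canonical prec} by showing that $\phi$ itself belongs to the emotional base $\Delta_a$; once that is established, $\phi$ serves as the separating formula witnessing $u\prec_a u'$, since $\phi\notin u$ and $\phi\in u'$ by hypothesis.

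The one real step is therefore to derive $\phi\in\Delta_a$ from $\H_a\phi\in w$. By the first clause of Definition~\ref{Delta a} it suffices to show $\cN\H_a\phi\in X_0$. Suppose not; then $\neg\cN\H_a\phi\in X_0$ because $X_0$ is maximal, and unfolding the abbreviation $\cN$ together with propositional reasoning and maximality yields $\N\neg\H_a\phi\in X_0$. Since $X_0\in W$ by Lemma~\ref{X0 in W lemma}, Lemma~\ref{N child all} applied with source world $X_0$, target world $w$, and formula $\neg\H_a\phi$ gives $\neg\H_a\phi\in w$, which contradicts $\H_a\phi\in w$ and the consistency of $w$. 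Hence $\cN\H_a\phi\in X_0$, so $\phi\in\Delta_a$.

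With $\phi\in\Delta_a$, $\phi\notin u$, and $\phi\in u'$ in hand, Definition~\ref{canonical prec} immediately yields $u\prec_a u'$, taking $\delta:=\phi$. The only point requiring care is the passage from ``$\H_a\phi$ lies in the single world $w$'' to ``$\cN\H_a\phi$ lies in $X_0$'': this must be carried out purely syntactically, since the truth lemma is not yet available, and it is exactly what the Negative-Introspection--driven Lemma~\ref{N child all} supplies. Everything after that is bookkeeping against the canonical definitions, so I do not expect any further obstacle.
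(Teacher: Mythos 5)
Your proof is correct and follows essentially the same route as the paper: derive $\cN\H_a\phi\in X_0$ from $\H_a\phi\in w$, conclude $\phi\in\Delta_a$ via Definition~\ref{Delta a}, and read off $u\prec_a u'$ from Definition~\ref{canonical prec}. The only cosmetic difference is that the paper obtains $\N\neg\H_a\phi\notin X_0$ directly from Definition~\ref{canonical world definition} (contrapositively, since $w\in W$ and $\neg\H_a\phi\notin w$), whereas you reach the same point by contradiction through Lemma~\ref{N child all} and Lemma~\ref{X0 in W lemma}, which is slightly heavier machinery but equally valid.
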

\begin{proof}
The assumption $\H_a\phi\in w$ implies $\neg\H_a\phi\notin w$ because set $w$ is consistent. Thus, $\N\neg\H_a\phi\notin X_0$ by Definition~\ref{canonical world definition} because $w\in W$. Hence, $\neg\N\neg\H_a\phi\in X_0$ because set $X_0$ is maximal. Then, $\cN\H_a\phi\in X_0$ by the definition of modality $\cN$. Thus, $\phi\in \Delta_a$ by Definition~\ref{Delta a}. Therefore, 
$u\prec_a u'$ by Definition~\ref{canonical prec} and the assumptions $\phi\notin u$ and $\phi\in u'$ of the lemma. 
\end{proof}

\begin{lemma}\label{H children exist}
For any agent $a\in \mathcal{A}$, any world $w\in W$, and any formula $\phi\in\Phi$, if $\H_a\phi\notin w$, $\K_a\phi\in w$, and $\N\phi\notin w$, then there are worlds $u,u'\in W$ such that $\phi\notin u$, $\phi\in u'$, and  $u\not\prec_a u'$.
\end{lemma}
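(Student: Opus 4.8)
The plan is to build the two worlds so as to control their intersections with the emotional base $\Delta_a$. The crucial preliminary fact is that $\phi\notin\Delta_a$. To establish it, I would first observe that every world of the canonical model has exactly the same $\N$-formulas as $X_0$: if $\N\chi\in w$ then $w\vdash\N\N\chi$ by positive introspection (Lemma~\ref{positive introspection lemma}), so $\N\N\chi\in w$, and then $\N\chi\in X_0$ by Lemma~\ref{N child all} applied with $X_0$ in place of the second world (legitimate since $X_0\in W$ by Lemma~\ref{X0 in W lemma}). Hence if $\phi\in\Delta_a$, then by Definition~\ref{Delta a} either $\cN\H_a\phi\in X_0$ or $\cN\S_a\neg\phi\in X_0$, and in either case this formula also lies in $w$; together with $\K_a\phi\in w$ and the first Emotional Predictability axiom (a theorem, hence in $w$), two applications of Modus Ponens yield $\H_a\phi\in w$, contradicting the hypothesis $\H_a\phi\notin w$.

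Now set $B=\{\delta\in\Delta_a\mid\N(\delta\to\phi)\in X_0\}$, let $u$ be a maximal consistent extension (Lemma~\ref{Lindenbaum's lemma}) of $\{\neg\phi\}\cup\{\psi\mid\N\psi\in X_0\}\cup(\Delta_a\setminus B)$, and let $u'$ be a maximal consistent extension of $\{\phi\}\cup\{\psi\mid\N\psi\in X_0\}\cup\{\neg\delta\mid\delta\in B\}$. Granting consistency of these two sets for the moment, both extensions contain $\{\psi\mid\N\psi\in X_0\}$ and so lie in $W$ by Definition~\ref{canonical world definition}; plainly $\phi\notin u$ and $\phi\in u'$. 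For $\delta\in B$ we have $\delta\to\phi\in u$ by Definition~\ref{canonical world definition}, so $\delta\notin u$ (else $\phi\in u$), whence $u\cap\Delta_a=\Delta_a\setminus B$. On the other hand $\neg\delta\in u'$ for every $\delta\in B$, so $u'\cap\Delta_a\subseteq\Delta_a\setminus B\subseteq u$. Consequently no $\delta\in\Delta_a$ satisfies $\delta\notin u$ and $\delta\in u'$, i.e.\ $u\not\prec_a u'$ by Definition~\ref{canonical prec}, which is exactly the claim.

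What remains is the consistency of the two displayed sets, and this is the real work. For the first set, an inconsistency produces --- after using Lemma~\ref{superdistributivity lemma} and $\N\phi\notin w$ to dispose of the sub-case involving no $\delta$'s --- finitely many $\delta_1,\dots,\delta_m\in\Delta_a\setminus B$; by totality of the pre-order $\sqsubseteq$ (Lemma~\ref{sqsubseteq total pre-order lemma}) some $\delta_i$, say $\delta_1$, is $\sqsubseteq$-least, so each $\delta_j$ is derivable from $\delta_1$ together with the $\N$-consequences of $X_0$, and the inconsistency collapses to $\{\neg\phi\}\cup\{\psi\mid\N\psi\in X_0\}\cup\{\delta_1\}\vdash\bot$, giving $\N(\delta_1\to\phi)\in X_0$ and hence $\delta_1\in B$, a contradiction. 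For the second set the argument is dual: a $\sqsubseteq$-greatest $\delta_m$ among the involved elements of $B$ yields $\N(\phi\to\delta_m)\in X_0$, which together with $\N(\delta_m\to\phi)\in X_0$ (from $\delta_m\in B$) and Lemma~\ref{N biconditional lemma} gives $\N(\phi\leftrightarrow\delta_m)\in X_0$; since $\delta_m\in\Delta_a$, the Substitution clauses in Lemma~\ref{substitution lemma} transport the potential emotion about $\delta_m$ to $\phi$, so $\phi\in\Delta_a$, contradicting the preliminary fact. I expect this last consistency check --- precisely the point where $\phi\notin\Delta_a$ is indispensable --- to be the main obstacle; everything else is routine maximal-consistent-set bookkeeping.
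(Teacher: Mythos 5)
Your proof is correct, but it takes a genuinely different route from the paper's. The paper fixes an enumeration $\delta_0\sqsubseteq\delta_1\sqsubseteq\dots$ of the countable emotional base $\Delta_a$ compatible with the total pre-order (invoking countable choice), locates by (trans)finite induction a cut point $k'$ where the sets $Y_k=\{\phi\}\cup\{\neg\delta_i\mid i<k\}\cup\{\psi\mid\N\psi\in X_0\}$ switch from consistent to inconsistent, and then builds $u'$ from $Y_{k'}$ and $u$ from $\{\neg\phi\}\cup\{\delta_i\mid k'\le i\}\cup\{\psi\mid\N\psi\in X_0\}$. You instead define the cut intrinsically, $B=\{\delta\in\Delta_a\mid\N(\delta\to\phi)\in X_0\}$ (a $\sqsubseteq$-downward-closed set), which eliminates the enumeration, the ordinal bookkeeping, and the appeal to countable choice; the role the paper's ordering plays in collapsing a finite inconsistency onto a single $\delta$ is played in your argument by taking a $\sqsubseteq$-least (resp.\ $\sqsubseteq$-greatest) element of the finitely many $\delta$'s involved, which exists by Lemma~\ref{sqsubseteq total pre-order lemma}. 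The substantive ingredients are the same in both proofs: totality of $\sqsubseteq$ (hence the Coherence of Potential Emotions axiom), Lemmas~\ref{deduction lemma}, \ref{superdistributivity lemma}, \ref{N biconditional lemma}, \ref{substitution lemma}, and, at the crux, the first Emotional Predictability axiom combined with $\K_a\phi\in w$ and $\H_a\phi\notin w$ --- which you package up front as the ``preliminary fact'' $\phi\notin\Delta_a$, whereas the paper reaches the same contradiction inside Case~1 of its Claim~\ref{Z claim}. Two small points you left implicit but which go through: the no-$\delta$ sub-case for your second set needs $\K_a\phi\in w$ and the Truth axiom (dual to your use of $\N\phi\notin w$ for the first set, and exactly the paper's Zero Case), and the transfer of $\cN\H_a\phi$ or $\cN\S_a\neg\phi$ from $X_0$ to $w$ in the preliminary fact requires the Negative Introspection step hidden in your ``same $\N$-formulae'' observation. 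On balance your construction is arguably cleaner; the paper's enumeration-based cut is what generalizes naturally to its companion Lemma~\ref{S children exist}, but your $B$ dualizes just as easily (take $\{\delta\mid\N(\delta\to\neg\phi)\in X_0\}$).
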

\begin{proof}
By Lemma~\ref{sqsubseteq total pre-order lemma}, relation $\sqsubseteq$ forms a total pre-order on set $\Delta_a$. By Lemma~\ref{Delta_a is countable}, set $\Delta_a$ is countable. Thus, by the axiom of countable choice, there is an ordering of all formulae in set $\Delta_a$ that agrees with pre-order $\sqsubseteq$. Generally speaking, such ordering is not unique. We fix any such ordering:
\begin{equation}\label{delta chain}
    \delta_0\sqsubseteq\delta_1\sqsubseteq\delta_2\sqsubseteq\delta_3\sqsubseteq\dots
\end{equation}
If set $\Delta_a$ is finite, the above ordering has some finite number $n$ of elements. In this case, the ordering is isomorphic to ordinal $n$. Otherwise, it is isomorphic to ordinal $\omega$. Let $\alpha$ be the ordinal which is the type of ordering~(\ref{delta chain}).  Ordinal $\alpha$ is either finite or is equal to $\omega$.

For any ordinal $k\le \alpha$, we consider set 
\begin{equation}\label{Yk definition}
    Y_k=\{\phi\}\cup\{\neg\delta_i\;|\; i<k\}\cup \{\psi\;|\;\N\psi\in X_0\}.
\end{equation}

\begin{claim}\label{trans induction lemma}
If there is no finite ordinal $k<\alpha$ such that $Y_k$ is consistent and $Y_{k+1}$ is inconsistent, then $Y_\alpha$ is consistent.
\end{claim}
\begin{proof-of-claim}
To prove that $Y_\alpha$ is consistent, it suffices to show that $Y_k$ is consistent for each ordinal $k\le \alpha$. We prove this by transfinite induction.

\vspace{2mm}
\noindent{\bf Zero Case:} Suppose that $Y_0$ is not consistent. Thus, there are formulae $\N\psi_1,\dots,\N\psi_n\in X_0$ such that
$
\psi_1,\dots,\psi_n\vdash \neg\phi
$. Hence, 
$
\N\psi_1,\dots,\N\psi_n\vdash \N\neg\phi
$
by Lemma~\ref{superdistributivity lemma}. 
Then,
$
X_0\vdash \N\neg\phi
$
by the assumption $\N\psi_1,\dots,\N\psi_n\in X_0$. Thus,
because set $X_0$ is maximal,
\begin{equation}\label{zero case point}
  \N\neg\phi\in X_0.  
\end{equation}
Hence, 
$\neg\phi\in w$,
by Definition~\ref{canonical world definition}.
Then,
$w\vdash \neg\K_a\phi$
by the contraposition of the Truth axiom and 
propositional reasoning.
Therefore, 
$\K_a\phi\notin w$ because set $w$ is consistent, which contradicts the assumption 
$\K_a\phi\in w$ of lemma.

\vspace{2mm}
\noindent{\bf Successor Case:} Suppose that set $Y_k$ is consistent for some $k<\alpha$. By the assumption of the claim, there is no finite ordinal $k<\alpha$ such that $Y_k$ is consistent and $Y_{k+1}$ is inconsistent. Therefore, set $Y_{k+1}$ is consistent.

\vspace{2mm}
\noindent{\bf Limit Case:} Suppose that set $Y_\omega$ is not consistent. Thus, there are formulae $\N\psi_1,\dots,\N\psi_n\in X_0$ and some finite $k<\omega$ such that
$
\psi_1,\dots,\psi_n,\neg\delta_0,\dots,\neg\delta_{k-1}\vdash \neg\phi
$.
Therefore, set $Y_k$ is not consistent.
\end{proof-of-claim}

Let $k'$ be any finite ordinal $k'<\alpha$ such that $Y_{k'}$ is consistent and $Y_{k'+1}$ is inconsistent. If such finite ordinal does not exist, then let $k'$ be ordinal $\alpha$. Note that in either case, set $Y_{k'}$ is consistent by Claim~\ref{trans induction lemma}. Let $u'$ be any maximal consistent extension of set $Y_{k'}$. Such extension exists by Lemma~\ref{Lindenbaum's lemma}. Note that $\phi\in Y_{k'}\subseteq u'$ by equation~(\ref{Yk definition}) and the choice of set $u'$.

\begin{claim}\label{u' in W claim}
$u'\in W$.
\end{claim}
\begin{proof-of-claim}
Consider any formula $\N\psi\in X_0$. By Definition~\ref{canonical world definition}, it suffices to show that $\psi\in u'$. Indeed, $\psi\in Y_{k'}$ by equation~(\ref{Yk definition})  and the assumption $\N\psi\in X_0$. Thus, $\psi\in u'$ because $Y_{k'}\subseteq u'$ by the choice of set $u'$.
\end{proof-of-claim}

Consider the following set of formulae:
\begin{equation}\label{choice of Z equation}
   Z=\{\neg\phi\}\cup\{\delta_i\;|\;k'\le i<\alpha\}\cup \{\psi\;|\;\N\psi\in X_0\}.
\end{equation}

\begin{claim}\label{Z claim}
Set $Z$ is consistent.
\end{claim}
\begin{proof-of-claim}
We consider the following two cases separately:

\noindent{\em Case 1:} $k'<\alpha$. Suppose that set $Z$ is not consistent. Thus, there are finite ordinals $m<\alpha$ and $n<\omega$ and formulae
\begin{equation}\label{choice of psi}
    \N\psi_1,\dots\N\psi_n\in X_0
\end{equation}
such that $k'\le m$ and
$$
\delta_{k'},\delta_{k'+1},\dots,\delta_{m},\psi_1,\dots,\psi_n\vdash \phi.
$$
Hence, by the Modus Ponens inference rule applied $m-k'$ times,
$$
\delta_{k'},\delta_{k'}\to\delta_{k'+1},
\delta_{k'+1}\to\delta_{k'+2},
\delta_{k'+2}\to\delta_{k'+3},
\dots,
\delta_{m-1}\to\delta_{m},\psi_1,\dots,\psi_n\vdash \phi.
$$
Then, by Lemma~\ref{deduction lemma},
$$
\delta_{k'}\to\delta_{k'+1},
\delta_{k'+1}\to\delta_{k'+2},
\dots,
\delta_{m-1}\to\delta_{m},\psi_1,\dots,\psi_n\vdash \delta_{k'}\to\phi.
$$
Thus, by Lemma~\ref{superdistributivity lemma},
$$
\N(\delta_{k'}\to\delta_{k'+1}),
\N(\delta_{k'+1}\to\delta_{k'+2}),
\dots,
\N(\delta_{m-1}\to\delta_{m}),\N\psi_1,\dots,\N\psi_n\vdash \N(\delta_{k'}\to\phi).
$$
Recall that 
$\delta_{k'}\sqsubseteq\delta_{k'+1}\sqsubseteq\dots \sqsubseteq\delta_{m}$ by assumption~(\ref{delta chain}). Hence, it follows that $\N(\delta_{k'}\to\delta_{k'+1}), \N(\delta_{k'+1}\to\delta_{k'+2}), \dots, \N(\delta_{m-1}\to\delta_{m})\in X_0$ by Definition~\ref{sqsubseteq definition}. Then,
$$
X_0,\N\psi_1,\dots,\N\psi_n\vdash \N(\delta_{k'}\to\phi).
$$
Thus, by assumption~(\ref{choice of psi}), 
\begin{equation}\label{N delta to phi}
   X_0\vdash \N(\delta_{k'}\to\phi). 
\end{equation}
At the same time, $k'<\alpha$ by the assumption of the case. Hence, set $Y_{k'+1}$ is not consistent by the choice of the finite ordinal $k'$. Then, by equation~(\ref{Yk definition}), there must exist formulae
\begin{equation}\label{choice of psi prime}
    \N\psi'_1,\dots,\N\psi'_p\in X_0
\end{equation}
such that
$$
\neg\delta_{0},\neg\delta_{1},\neg\delta_{2},\dots,\neg\delta_{k'},\psi'_1,\dots,\psi'_p\vdash \neg\phi.
$$
In other words,
$$
\neg\delta_{k'},\neg\delta_{k'-1},\neg\delta_{k'-2},\dots,\neg\delta_{0},\psi'_1,\dots,\psi'_p\vdash \neg\phi.
$$
Thus, by applying the Modus Ponens inference rule $k'$ times,
$$
\neg\delta_{k'}, \neg\delta_{k'}\to\neg\delta_{k'-1},\neg\delta_{k'-1}\to \neg\delta_{k'-2},\dots,\neg\delta_{1}\to\neg\delta_{0},\psi'_1,\dots,\psi'_p\vdash \neg\phi.
$$
Hence, by Lemma~\ref{deduction lemma},
$$
\neg\delta_{k'}\to\neg\delta_{k'-1},\neg\delta_{k'-1}\to \neg\delta_{k'-2},\dots,\neg\delta_{1}\to\neg\delta_{0},\psi'_1,\dots,\psi'_n\vdash \neg\delta_{k'}\to \neg\phi.
$$
Then, by Lemma~\ref{superdistributivity lemma},
\begin{eqnarray*}
&&\N(\neg\delta_{k'}\to\neg\delta_{k'-1}),\N(\neg\delta_{k'-1}\to \neg\delta_{k'-2}),\dots,\N(\neg\delta_{1}\to\neg\delta_{0}),\\
&&\hspace{65mm}\N\psi'_1,\dots,\N\psi'_n\vdash \N(\neg\delta_{k'}\to \neg\phi).
\end{eqnarray*}
Recall that 
$\delta_{0}\sqsubseteq\delta_{1}\sqsubseteq\dots \sqsubseteq\delta_{k'}$ by assumption~(\ref{delta chain}). Thus, it follows that $\N(\neg\delta_{1}\to\neg\delta_{0}), \N(\neg\delta_{2}\to\neg\delta_{1}), \dots, \N(\neg\delta_{k'}\to\neg\delta_{k'-1})\in X_0$ by Lemma~\ref{sqsubseteq contraposition lemma}. Hence,
$$
X_0,\N\psi'_1,\dots,\N\psi'_n\vdash \N(\neg\delta_{k'}\to\neg\phi).
$$
Then, by assumption~(\ref{choice of psi prime}),
\begin{equation}\label{N neg delta to neg phi}
    X_0\vdash \N(\neg\delta_{k'}\to\neg\phi).
\end{equation}
Thus, by item 1 of Lemma~\ref{N biconditional lemma} and statement~(\ref{N delta to phi}),
\begin{equation}\label{N biconditional statement}
    X_0\vdash \N(\delta_{k'}\leftrightarrow\phi).
\end{equation}
Note that $\delta_{k'}\in\Delta_a$ because (\ref{delta chain}) is an ordering of set $\Delta_a$. Hence, by Definition~\ref{Delta a}, either $\cN\H_a\delta_{k'}\in X_0$ or $\cN\S_a\neg\delta_{k'}\in X_0$. Then, by items 1 and 2 of Lemma~\ref{substitution lemma} and statement~(\ref{N biconditional statement}), either $X_0\vdash \cN\H_a\phi$ or $X_0\vdash \cN\S_a\neg\phi$. Thus, either $X_0\vdash \N\cN\H_a\phi$ or $X_0\vdash \N\cN\S_a\neg\phi$ by the definition of modality $\cN$, the Negative Introspection axiom, and the Modus Ponens inference rule. Hence, either $\N\cN\H_a\phi\in X_0$ or $ \N\cN\S_a\neg\phi\in X_0$ because set $X_0$ is maximal. Then, either $\cN\H_a\phi\in w$ or $\cN\S_a\neg\phi\in w$ by Definition~\ref{canonical world definition} and assumption $w\in W$ of the lemma. Thus, $w\vdash \K_a\phi\to\H_a\phi$ by the first Emotional Predictability axiom and propositional reasoning. Hence, $w\vdash \H_a\phi$ by assumption $\K_a\phi$ of the lemma and the Modus Ponens inference rule. Therefore, $\H_a\phi\in w$ because set $w$ is maximal, which contradicts assumption $\H_a\phi\notin w$ of the lemma.

\noindent{\em Case 2:} $\alpha\le k'$. Recall that $k'\le\alpha$ by the choice of ordinal $k'$ made after the end of the proof of Claim~\ref{trans induction lemma}. Thus, $k'=\alpha$. Hence, $Z=\{\neg\phi\}\cup \{\psi\;|\;\N\psi\in X_0\}$ by equation~(\ref{choice of Z equation}). Then, inconsistency of set $Z$ implies that there are formulae 
\begin{equation}\label{choice of psi 1 psi n}
    \N\psi_1,\dots\N\psi_n\in X_0
\end{equation}
such that
$
\psi_1,\dots, \psi_n\vdash \phi
$.
Thus,
$
\N\psi_1,\dots, \N\psi_n\vdash \N\phi
$
by Lemma~\ref{superdistributivity lemma}. Hence,
\begin{equation}\label{split point}
   X_0\vdash \N\phi 
\end{equation}
by the assumption~(\ref{choice of psi 1 psi n}).
Then,
$
X_0\vdash \N\N\phi
$
by Lemma~\ref{positive introspection lemma} and the Modus Ponens inference rule. 
Hence,
$\N\N\phi\in X_0$
because set $X_0$ is maximal.
Therefore,
$
\N\phi\in w
$
by Definition~\ref{canonical world definition}, which contradicts the assumption of the lemma.
\end{proof-of-claim}
Let $u$ be any maximal consistent extension of set $Z$. Note that $\neg\phi\in Z\subseteq u$ by equation~(\ref{choice of Z equation}) and the choice of set $u$.

\begin{claim}\label{u W claim}
$u\in W$.
\end{claim}
\begin{proof-of-claim}
Consider any formula $\N\psi\in X_0$. By Definition~\ref{canonical world definition}, it suffices to show that $\psi\in u$. Indeed, $\psi\in Z$ by equation~(\ref{choice of Z equation})  and the assumption $\N\psi\in X_0$. Thus, $\psi\in u$ because $Z\subseteq u$ by the choice of set $u$.
\end{proof-of-claim}

\begin{claim}\label{u prec u' claim}
$u\not\prec_a u'$.
\end{claim}
\begin{proof-of-claim}
Suppose that $u\prec_a u'$. Thus, by Definition~\ref{canonical prec}, there is a formula $\delta\in\Delta_a$ such that $\delta\notin u$ and $\delta\in u'$. Recall that (\ref{delta chain}) is an ordering of the set $\Delta_a$. Hence, there must exist an integer $i<\alpha$ such that 
\begin{equation}\label{about u and u'}
    \delta_i\notin u \mbox{ and }\delta_i\in u'.
\end{equation}
We consider the following two cases separately:

\noindent{\em Case 1:} $i< k'$. Then, $\neg\delta_i\in Y_{k'}\subseteq u'$ by equation~(\ref{Yk definition}) and the choice of set $u'$. Thus,  $\delta_i\notin u'$ because set $u'$ is consistent, which contradicts to statement~(\ref{about u and u'}).

\noindent{\em Case 2:} $k'\le i$. Then, $\delta_i\in Z\subseteq u$ by equation~(\ref{choice of Z equation}) and the choice of set $u$, which contradicts to statement~(\ref{about u and u'}).
\end{proof-of-claim}
This concludes the proof of the lemma.
\end{proof}

\subsection{Properties of Modality $\S$}

\begin{lemma}\label{S child all}
For any agent $a\in \mathcal{A}$, any worlds $w,u,u'\in W$, and any formula $\phi\in\Phi$, if $\S_a\phi\in w$, $\phi\in u$ and $\phi\notin u'$, then $u\prec_a u'$.
\end{lemma}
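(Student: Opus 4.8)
The plan is to mirror the proof of Lemma~\ref{H child all}, replacing the modality $\H$ with $\S$ and keeping track of one extra double negation that arises because the emotional base $\Delta_a$ records potential sadness about $\neg\delta$ rather than about $\delta$ (see Definition~\ref{Delta a}).

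First I would extract from the assumption $\S_a\phi\in w$ the fact that $w$ is ``potentially sad'' about $\phi$ in $X_0$. Since $w$ is consistent, $\S_a\phi\in w$ gives $\neg\S_a\phi\notin w$. Since $w\in W$, Definition~\ref{canonical world definition} yields $\{\psi\in\Phi\mid\N\psi\in X_0\}\subseteq w$, so the contrapositive with $\psi=\neg\S_a\phi$ gives $\N\neg\S_a\phi\notin X_0$. As $X_0$ is maximal, $\neg\N\neg\S_a\phi\in X_0$, that is, $\cN\S_a\phi\in X_0$ by the definition of modality $\cN$.

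Next I would upgrade $\cN\S_a\phi\in X_0$ to $\cN\S_a\neg\neg\phi\in X_0$, so as to match the shape of the second component of $\Delta_a$. Since $\phi\leftrightarrow\neg\neg\phi$ is a propositional tautology, $\vdash\N(\phi\leftrightarrow\neg\neg\phi)$ by the Necessitation rule, and hence $\vdash\S_a\phi\to\S_a\neg\neg\phi$ by the Substitution axiom and the Modus Ponens inference rule. Arguing exactly as in the proof of Lemma~\ref{substitution lemma} --- contraposition, Necessitation, the Distributivity axiom, and the definition of $\cN$ --- this gives $\vdash\cN\S_a\phi\to\cN\S_a\neg\neg\phi$, so $X_0\vdash\cN\S_a\neg\neg\phi$ and therefore $\cN\S_a\neg\neg\phi\in X_0$ because $X_0$ is maximal. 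By Definition~\ref{Delta a} (its second component, taking $\delta$ to be the formula $\neg\phi$), this means $\neg\phi\in\Delta_a$.

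Finally I would close the argument with Definition~\ref{canonical prec}: the assumption $\phi\in u$ together with consistency of $u$ gives $\neg\phi\notin u$, while $\phi\notin u'$ together with maximality of $u'$ gives $\neg\phi\in u'$. Thus $\delta=\neg\phi$ is a formula of $\Delta_a$ with $\delta\notin u$ and $\delta\in u'$, so $u\prec_a u'$. I do not expect a genuine obstacle: the proof is a routine dualization of Lemma~\ref{H child all}, and the only step requiring a little care is the bookkeeping between $\neg\neg\phi$ and $\phi$, which the Substitution axiom disposes of cleanly.
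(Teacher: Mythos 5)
Your proof is correct and follows essentially the same route as the paper: both use the tautology $\phi\leftrightarrow\neg\neg\phi$ with the Substitution axiom to obtain $\cN\S_a\neg\neg\phi\in X_0$, conclude $\neg\phi\in\Delta_a$ by Definition~\ref{Delta a}, and finish via Definition~\ref{canonical prec}. The only (harmless) difference is ordering --- the paper converts $\S_a\phi$ to $\S_a\neg\neg\phi$ inside $w$ before passing to $X_0$, whereas you pass to $\cN\S_a\phi\in X_0$ first and then upgrade under $\cN$ --- and you make explicit the step $\phi\in u\Rightarrow\neg\phi\notin u$, $\phi\notin u'\Rightarrow\neg\phi\in u'$ that the paper leaves implicit.
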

\begin{proof}
Note that $\phi\leftrightarrow\neg\neg\phi$ is a propositional tautology. Thus, $\vdash\N(\phi\leftrightarrow\neg\neg\phi)$ by the Necessitation inference rule. Hence, $\vdash\S_a\phi\to\S_a\neg\neg\phi$ by the Substitution axiom and the Modus Ponens inference rule. Then, $w\vdash \S_a\neg\neg\phi$ by the Modus Ponens inference rule and the assumption $\S_a\phi\in w$ of the lemma. Thus, $\neg\S_a\neg\neg\phi\notin w$ because set $w$ is consistent. Hence, $\N\neg\S_a\neg\neg\phi\notin X_0$ by Definition~\ref{canonical world definition}. Then, $\neg\N\neg\S_a\neg\neg\phi\in X_0$ because set $X_0$ is maximal. Thus, $\cN\S_a\neg\neg\phi\in X_0$ by the definition of modality $\cN$. Hence, $\neg\phi\in \Delta_a$ by Definition~\ref{Delta a}. Therefore, $u\prec_a u'$ by Definition~\ref{canonical prec} and the assumptions $\neg\phi\notin u$ and  $\neg\phi\in u'$ of the lemma.
\end{proof}

\begin{lemma}\label{S children exist}
For any agent $a\in \mathcal{A}$, any world $w\in W$, and any formula $\phi\in\Phi$, if $\S_a\phi\notin w$, $\K_a\phi\in w$, and $\N\phi\notin w$, then there are worlds $u,u'\in W$ such that $\phi\in u$, $\phi\notin u'$, and  $u\not\prec_a u'$.
\end{lemma}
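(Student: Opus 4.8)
\noindent\emph{Proof plan.} The plan is to mimic the proof of Lemma~\ref{H children exist} almost line for line, interchanging the roles of $\phi$ and $\neg\phi$ and invoking the ``sad'' counterparts of the auxiliary facts. Fix, as there, an enumeration $\delta_0\sqsubseteq\delta_1\sqsubseteq\delta_2\sqsubseteq\cdots$ of the emotional base $\Delta_a$ agreeing with the total pre-order $\sqsubseteq$ (Lemma~\ref{sqsubseteq total pre-order lemma}, Lemma~\ref{Delta_a is countable}) and let $\alpha$ be its order type. This time set $Y_k=\{\neg\phi\}\cup\{\neg\delta_i\mid i<k\}\cup\{\psi\mid\N\psi\in X_0\}$ for $k\le\alpha$ and $Z=\{\phi\}\cup\{\delta_i\mid k'\le i<\alpha\}\cup\{\psi\mid\N\psi\in X_0\}$; the world $u'$ will be a maximal consistent extension of $Y_{k'}$ (so $\neg\phi\in u'$, i.e. $\phi\notin u'$) and $u$ a maximal consistent extension of $Z$ (so $\phi\in u$). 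Both sets contain $\{\psi\mid\N\psi\in X_0\}$, so $u,u'\in W$ by Definition~\ref{canonical world definition}.

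First I would re-run the transfinite-induction claim corresponding to Claim~\ref{trans induction lemma}: if no finite $k<\alpha$ has $Y_k$ consistent and $Y_{k+1}$ inconsistent, then $Y_\alpha$ is consistent. The successor and limit cases are unchanged; only the base case differs, since inconsistency of $Y_0=\{\neg\phi\}\cup\{\psi\mid\N\psi\in X_0\}$ now yields $X_0\vdash\N\phi$, hence $\N\phi\in w$ by Lemma~\ref{positive introspection lemma} and Definition~\ref{canonical world definition}, contradicting the hypothesis $\N\phi\notin w$ (in Lemma~\ref{H children exist} this base step instead used $\K_a\phi\in w$). Choose $k'$ to be the breaking finite ordinal if one exists and $\alpha$ otherwise, take $u'$ extending $Y_{k'}$, and note $u'\in W$ as in Claim~\ref{u' in W claim}.

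The crux is the analogue of Claim~\ref{Z claim}, namely consistency of $Z$. If $k'=\alpha$, inconsistency of $Z=\{\phi\}\cup\{\psi\mid\N\psi\in X_0\}$ gives $X_0\vdash\N\neg\phi$, hence $\neg\phi\in w$ and so $\phi\notin w$, contradicting $\K_a\phi\in w$ via the Truth axiom. If $k'<\alpha$, I would combine inconsistency of $Z$ with inconsistency of $Y_{k'+1}$ and the $\sqsubseteq$-chain (using Lemma~\ref{sqsubseteq contraposition lemma} exactly as in the $\H$-proof) to obtain $\N(\delta_{k'}\to\neg\phi)\in X_0$ and $\N(\neg\delta_{k'}\to\phi)\in X_0$; then item~2 of Lemma~\ref{N biconditional lemma} gives $\N(\delta_{k'}\leftrightarrow\neg\phi)\in X_0$. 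Since $\delta_{k'}\in\Delta_a$, Definition~\ref{Delta a} gives $\cN\H_a\delta_{k'}\in X_0$ or $\cN\S_a\neg\delta_{k'}\in X_0$, and accordingly item~1 or item~3 of Lemma~\ref{substitution lemma} gives $\cN\H_a\neg\phi\in X_0$ or $\cN\S_a\phi\in X_0$. Prefixing $\N$ via the Negative Introspection axiom for $\N$ and pulling back down to $w$ (as in the $\H$-proof) gives $\cN\H_a\neg\phi\in w$ or $\cN\S_a\phi\in w$, so $w\vdash\K_a\phi\to\S_a\phi$ by the \emph{second} Emotional Predictability axiom, whence $w\vdash\S_a\phi$ by $\K_a\phi\in w$ and Modus Ponens --- contradicting $\S_a\phi\notin w$.

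Finally, with $u$ a maximal consistent extension of $Z$, I would verify $u\in W$ as in Claim~\ref{u W claim} and $u\not\prec_a u'$ as in Claim~\ref{u prec u' claim}: if $u\prec_a u'$ then some $\delta_i\in\Delta_a$ has $\delta_i\notin u$ and $\delta_i\in u'$, but $i<k'$ forces $\neg\delta_i\in Y_{k'}\subseteq u'$ while $k'\le i$ forces $\delta_i\in Z\subseteq u$, a contradiction either way. I expect the only real difficulty to be the $k'<\alpha$ subcase above: one must keep the $\phi$/$\neg\phi$ placements swapped consistently through the chain manipulation so that item~2 of Lemma~\ref{N biconditional lemma} and item~3 of Lemma~\ref{substitution lemma} apply on the nose, and remember to invoke the ``sad'' half of Emotional Predictability; everything else is a transliteration of Lemma~\ref{H children exist}. (A purely semantic shortcut via Theorem~\ref{duality theorem} seems unavailable, because the converse of the canonical model need not itself be a canonical model, and because $\S_a\phi$ requires $\K_a\phi$ rather than $\K_a\neg\phi$, so Lemma~\ref{H children exist} cannot simply be applied to $\neg\phi$.)
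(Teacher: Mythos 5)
Your proposal is correct and follows essentially the same route as the paper, which itself proves this lemma by transliterating the proof of Lemma~\ref{H children exist} with exactly the modifications you describe: $Y_k$ built from $\neg\phi$, $Z$ from $\phi$, the Zero case now contradicting $\N\phi\notin w$, the $k'=\alpha$ case contradicting $\K_a\phi\in w$, and the $k'<\alpha$ case using item~2 of Lemma~\ref{N biconditional lemma}, items~1 and~3 of Lemma~\ref{substitution lemma}, and the second Emotional Predictability axiom to contradict $\S_a\phi\notin w$. The concluding claims about $u,u'\in W$ and $u\not\prec_a u'$ carry over unchanged, just as you note.
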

\begin{proof}
The proof of this lemma is similar to the proof of Lemma~\ref{H children exist}. Here we outline the differences. The choice of ordering~(\ref{delta chain}) and  of ordinal $\alpha$ remains the same. Sets $Y_k$ for any ordinal $k\le \alpha$ is now defined as
\begin{equation}\label{Yk definition S}
    Y_k=\{\neg\phi\}\cup\{\neg\delta_i\;|\; i<k\}\cup \{\psi\;|\;\N\psi\in X_0\}.
\end{equation}
This is different from equation~(\ref{Yk definition}) because set $Y_k$ now includes formula $\neg\phi$ instead of formula $\phi$. 

The statement of Claim~\ref{trans induction lemma} remains the same. The proof of this claim is also the same except for the Zero Case. In the Zero Case, the proof is similar to the original till equation~(\ref{zero case point}). Because set $Y_k$ now contains formula $\neg\phi$ instead of formula $\phi$, equation~(\ref{zero case point}) will now have the form $\N\phi\in X_0$. Thus, in our case, $X_0\vdash\N\N\phi$ by Lemma~\ref{positive introspection lemma} and the Modus Ponens inference rule. Hence, $\N\N\phi\in X_0$ because set $X_0$ is maximal. Then, $\N\phi\in w$ by Definition~\ref{canonical world definition}, which contradicts the assumption $\N\phi\notin w$ of the lemma. 

The statement and the proof of Claim~\ref{u' in W claim} remain the same. Set $Z$ will now be defined as
\begin{equation}\label{new choice of Z equation}
   Z=\{\phi\}\cup\{\delta_i\;|\;k'\le i<\alpha\}\cup \{\psi\;|\;\N\psi\in X_0\}.
\end{equation}
This is different from equation~(\ref{choice of Z equation}) because set $Z$ now includes formula $\phi$ instead of formula $\neg\phi$. 

The statement of Claim~\ref{Z claim} remains the same. The Case 1 of the proof of this case is similar to the original proof of Claim~\ref{Z claim} till formula~(\ref{N neg delta to neg phi}), except for formula $\phi$ will be used instead of formula $\neg\phi$ and formula $\neg\phi$ instead of formula $\phi$ everywhere in that part of the proof. In particular formula~(\ref{N delta to phi}) will now have the form 
\begin{equation}\label{new N delta to phi}
   X_0\vdash \N(\delta_{k'}\to\neg\phi). 
\end{equation}
and formula~(\ref{N neg delta to neg phi}) will now have the form 
\begin{equation}\label{new N neg delta to neg phi}
    X_0\vdash \N(\neg\delta_{k'}\to\phi).
\end{equation}
The argument after formula~(\ref{N neg delta to neg phi}) will change as follows.
By item 2 of Lemma~\ref{N biconditional lemma} and statements ~(\ref{new N delta to phi}) and (\ref{new N neg delta to neg phi}),
\begin{equation}\label{N biconditional statement}
    X_0\vdash \N(\delta_{k'}\leftrightarrow\neg\phi).
\end{equation}
Note that $\delta_{k'}\in\Delta_a$ because (\ref{delta chain}) is an ordering of set $\Delta_a$. Hence, by Definition~\ref{Delta a}, either $\cN\H_a\delta_{k'}\in X_0$ or $\cN\S_a\neg\delta_{k'}\in X_0$.
Then, by items 1 or 3 of Lemma~\ref{substitution lemma} and statement~(\ref{N biconditional statement}), either $X_0\vdash \cN\H_a\neg\phi$ or $X_0\vdash \cN\S_a\phi$. 
Thus, either $X_0\vdash \N\cN\H_a\neg\phi$ or $X_0\vdash \N\cN\S_a\phi$ by the definition of modality $\cN$, the Negative Introspection axiom, and the Modus Ponens inference rule. 
Hence, either $\N\cN\H_a\neg\phi\in X_0$ or $ \N\cN\S_a\phi\in X_0$ because set $X_0$ is maximal.
Then, either $\cN\H_a\neg\phi\in w$ or $\cN\S_a\phi\in w$ by Definition~\ref{canonical world definition} and assumption $w\in W$ of the lemma. 
Thus, $w\vdash \K_a\phi\to\S_a\phi$ by the second Emotional Predictability axiom and propositional reasoning. 
Hence, $w\vdash \S_a\phi$ by assumption $\K_a\phi$ of the lemma and the Modus Ponens inference rule. Therefore, $\S_a\phi\in w$ because set $w$ is maximal, which contradicts assumption $\S_a\phi\notin w$ of the lemma.

The Case 2 of the proof of Claim~\ref{Z claim} will be similar to the original proof of Claim~\ref{Z claim} till formula~(\ref{split point}) except that formula except for formula $\phi$ will be used instead of formula $\neg\phi$ and formula $\neg\phi$ instead of formula $\phi$ everywhere in that part of the proof. Statement~(\ref{split point}) will now have the form $X_0\vdash \N\neg\phi$. From this point, the proof will continue as follows. Statement  $X_0\vdash \N\neg\phi$ implies that $\N\neg\phi\in X_0$ because set $X_0$ is maximal. Then, $\neg\phi\in w$ by Definition~\ref{canonical world definition}. Hence, $w\vdash\neg\K_a\phi$ by the contraposition of the Truth axiom. Therefore, $\K_a\phi\notin w$ because set $w$ is consistent, which contradicts the assumption  $\K_a\phi\in w$ of the lemma.

The statements and the proofs of Claim~\ref{u W claim} and Claim~\ref{u prec u' claim} remain the same as in the original proof.
\end{proof}

\subsection{Final Steps}

We are now ready to state and to prove the ``induction'' or ``truth'' lemma.

\begin{lemma}\label{induction lemma}
$w\Vdash \phi$ iff $\phi\in w$.
\end{lemma}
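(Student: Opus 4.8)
The plan is to prove the statement by structural induction on the formula $\phi$, simultaneously for all worlds $w\in W$ of the fixed canonical model $M(X_0)$. The base case $\phi\equiv p$ is immediate from Definition~\ref{canonical pi}, and the Boolean cases $\phi\equiv\neg\psi$ and $\phi\equiv\psi\to\chi$ follow from the induction hypothesis together with the maximality and consistency of the set $w$ in the standard way.

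For $\phi\equiv\N\psi$: if $\N\psi\in w$, then $\psi\in u$ for every $u\in W$ by Lemma~\ref{N child all}, hence $u\Vdash\psi$ for every $u$ by the induction hypothesis, so $w\Vdash\N\psi$; conversely, if $\N\psi\notin w$, then Lemma~\ref{N child exists} produces a world $u\in W$ with $\psi\notin u$, and the induction hypothesis gives $u\nVdash\psi$, so $w\nVdash\N\psi$. The case $\phi\equiv\K_a\psi$ is handled identically, using Lemma~\ref{K child all} and Lemma~\ref{K child exists} in place of the two $\N$-lemmas.

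The interesting cases are $\phi\equiv\H_a\psi$ and $\phi\equiv\S_a\psi$; I describe $\H_a\psi$, the other being symmetric with Lemma~\ref{S child all} and Lemma~\ref{S children exist} replacing Lemma~\ref{H child all} and Lemma~\ref{H children exist}. For the forward direction, assume $\H_a\psi\in w$ and verify conditions (a)--(c) of item~6 of Definition~\ref{sat}: condition (a) follows from the common lemma that $\E_a\psi\in w$ and $w\sim_a u$ imply $\psi\in u$, combined with the induction hypothesis; condition (b) follows from Lemma~\ref{H child all} after translating $u\nVdash\psi$ and $u'\Vdash\psi$ into $\psi\notin u$ and $\psi\in u'$ via the induction hypothesis; condition (c) follows from the common lemma that $\E_a\psi\in w$ yields a world $u$ with $\psi\notin u$, plus the induction hypothesis. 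For the backward direction, assume $\H_a\psi\notin w$ and suppose for contradiction that $w\Vdash\H_a\psi$. Condition (a) and the induction hypothesis give $\psi\in u$ for every $u$ with $w\sim_a u$, hence $\K_a\psi\in w$, for otherwise Lemma~\ref{K child exists} would yield a $\sim_a$-successor of $w$ omitting $\psi$. Condition (c) and the induction hypothesis give some $u\in W$ with $\psi\notin u$, hence $\N\psi\notin w$, for otherwise Lemma~\ref{N child all} would force $\psi\in u$. Now $\H_a\psi\notin w$, $\K_a\psi\in w$, and $\N\psi\notin w$ are exactly the hypotheses of Lemma~\ref{H children exist}, which produces worlds $u,u'\in W$ with $\psi\notin u$, $\psi\in u'$, and $u\not\prec_a u'$; the induction hypothesis gives $u\nVdash\psi$ and $u'\Vdash\psi$, and condition (b), which holds since $w\Vdash\H_a\psi$, forces $u\prec_a u'$ --- a contradiction.

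The main obstacle is the backward direction of the two emotional cases: the decisive lemmas Lemma~\ref{H children exist} and Lemma~\ref{S children exist} require the extra premises $\K_a\psi\in w$ and $\N\psi\notin w$, and the argument must extract these from the bare assumption $w\Vdash\H_a\psi$ (respectively $w\Vdash\S_a\psi$) by unpacking conditions (a) and (c) of the satisfaction clause, pushing them through the induction hypothesis, and then invoking Lemma~\ref{K child exists} and Lemma~\ref{N child all}. Once these premises are in place the contradiction comes directly from condition (b) and the ``$u\not\prec_a u'$'' conclusion of the child-existence lemma; the remaining steps are routine bookkeeping.
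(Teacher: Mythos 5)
Your proof is correct and follows essentially the same route as the paper: the same structural induction, with the cases handled by Lemmas~\ref{N child all}, \ref{N child exists}, \ref{K child all}, \ref{K child exists}, \ref{H child all}, \ref{H children exist}, \ref{S child all}, and \ref{S children exist}. The only difference is cosmetic: in the backward direction of the emotional cases the paper argues by a three-way case split ($\K_a\psi\notin w$, $\N\psi\in w$, or both $\K_a\psi\in w$ and $\N\psi\notin w$), whereas you assume $w\Vdash\H_a\psi$ for contradiction and extract $\K_a\psi\in w$ and $\N\psi\notin w$ from conditions (a) and (c) before invoking Lemma~\ref{H children exist} --- logically the same argument.
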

\begin{proof}
We prove the lemma by structural induction on formula $\phi$. If $\phi$ is a propositional variable, then the required follows from Definition~\ref{canonical pi} and item 1 of Definition~\ref{sat}. If formula $\phi$ is a negation or an implication, then the statement of the lemma follows from the induction hypothesis using items 2 and 3 of Definition~\ref{sat} and the maximality and the consistency of the set $w$ in the standard way.

Suppose that formula $\phi$ has the form $\K_a\psi$.

\vspace{1mm}
\noindent$(\Leftarrow):$ By Lemma~\ref{K child all}, assumption $\K_a\psi\in w$ implies that $\psi\in u$ for any world $u\in W$ such that $w\sim_a u$. Thus, by the induction hypothesis, $u\Vdash \psi$ for any world $u\in W$ such that $w\sim_a u$. Therefore, $w\Vdash\K_a\psi$ by item 5 of Definition~\ref{sat}.

\vspace{1mm}
\noindent$(\Rightarrow):$  Assume that $\K_a\psi\notin w$. Thus, by Lemma~\ref{K child exists}, there is a world $u\in W$ such that $w\sim_a u$ and $\psi\notin u$. Hence, $u\nVdash\psi$ by the induction hypothesis. Therefore, $w\nVdash\K_a\psi$ by item 5 of Definition~\ref{sat}.

If formula $\phi$ has the form $\N\psi$, then the proof is similar to the case $\K_a\psi$ except that Lemma~\ref{N child all} and Lemma~\ref{N child exists} are used instead of Lemma~\ref{K child all} and Lemma~\ref{K child exists} respectively. Also, item 4 of Definition~\ref{sat} is used instead of item 5.

Assume that formula $\phi$ has the form $\H_a\psi$.

\vspace{1mm}
\noindent$(\Leftarrow):$ Assume that $\H_a\psi\in w$. To prove that $w\Vdash\H_a\psi$, we verify conditions (a), (b), and (c) from item 6 of Definition~\ref{sat}.

\begin{enumerate}
    \item[(a)] The assumption $\H_a\psi\in w$ implies $w\vdash\K_a\psi$ by Lemma~\ref{E to K} and the Modus Ponens inference rule. Thus, $\K_a\psi\in w$ because set $w$ is maximal. Hence, by Lemma~\ref{K child all}, for any world $u\in W$, if $\psi\in u$, then $w\sim_a u$ . Therefore, by the induction hypothesis, $u\Vdash \psi$ for any world $u\in W$ such that $w\sim_a u$.
    \item[(b)] By Lemma~\ref{H child all}, assumption $\H_a\psi\in w$ implies that for any worlds $u,u'\in W$, if $\psi\notin u$ and $\psi\in u'$, then $u\prec_a u'$. Thus, by the induction hypothesis, for any worlds $u,u'\in W$, if $u\nVdash\psi$ and $u'\Vdash\psi$, then $u\prec_a u'$.
    \item [(c)] By the Counterfactual axiom and the Modus Ponens inference rule, the assumption $\H_a\psi\in w$ implies that $w\vdash\neg\N\psi$. Thus, $\N\psi\notin w$ because set $w$ is consistent.  Hence, by Lemma~\ref{N child exists}, there is a world $u\in W$ such that $\psi\notin u$. Therefore, $u\nVdash\psi$ by the induction hypothesis.
\end{enumerate}

\vspace{1mm}
\noindent $(\Rightarrow):$  Assume that $\H_a\psi\notin w$. We consider the following three cases separately:

\vspace{1mm}
\noindent{\bf Case I:} $\K_a\psi\notin w$. Thus, by Lemma~\ref{K child exists}, there is a world $u\in W$ such that $w\sim_a u$ and $\psi\notin u$. Hence, $u\nVdash\psi$ by the induction hypothesis. Therefore, $w\nVdash\H_a\psi$ by item 6(a) of Definition~\ref{sat}.

\vspace{1mm}
\noindent{\bf Case II:} $\N\psi\in w$. Then, $\psi\in u$ for any world $u\in W$  by Lemma~\ref{N child all}. Thus, by the induction hypothesis, $u\Vdash \psi$ for any world $u\in W$.  Therefore, $w\nVdash\H_a\psi$ by item 6(c) of Definition~\ref{sat}.

\vspace{1mm}
\noindent{\bf Case III:} $\K_a\psi\in w$ and $\N\psi\notin w$. Thus, by the assumption $\H_a\psi\notin w$ and Lemma~\ref{H children exist}, there are worlds $u,u'\in W$ such that $\psi\notin u$, $\psi\in u'$, and  $u\not\prec_a u'$. Hence, $u\nVdash\psi$ and $u'\Vdash\psi$ by the induction hypothesis. Therefore, $w\nVdash\H_a\psi$ by item 6(b) of Definition~\ref{sat}.

If formula $\phi$ has the form $\S_a\psi$, then the argument is similar to the one above, except that Lemma~\ref{S child all} and Lemma~\ref{S children exist} are used instead of Lemma~\ref{H child all} and Lemma~\ref{H children exist} respectively.
\end{proof}

\begin{theorem}[strong completeness]
If $X\nvdash \phi$, then there is world $w$ of an epistemic model with preferences such that $w\Vdash\chi$ for each formula $\chi\in X$ and  $w\nVdash \phi$.
\end{theorem}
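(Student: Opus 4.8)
The plan is to run the standard Henkin-style argument on the canonical model $M(X_0)$ built in the previous two sections. First I would reformulate the hypothesis: the assumption $X \nvdash \phi$ implies that the set $X \cup \{\neg\phi\}$ is consistent. Indeed, were it inconsistent, then $X, \neg\phi \vdash \phi$, and the deduction lemma (Lemma~\ref{deduction lemma}) together with propositional reasoning would give $X \vdash \phi$, contrary to the assumption.

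Next I would invoke Lindenbaum's lemma (Lemma~\ref{Lindenbaum's lemma}) to extend $X \cup \{\neg\phi\}$ to a maximal consistent set $X_0 \subseteq \Phi$. This $X_0$ is exactly the parameter from which the canonical epistemic model with preferences $M(X_0) = (W, \{\sim_a\}_{a \in \mathcal{A}}, \{\prec_a\}_{a \in \mathcal{A}}, \pi)$ is constructed, and all of its components have already been shown to meet the requirements of Definition~\ref{epistemic model with preferences}: each $\sim_a$ is an equivalence relation by Lemma~\ref{canonical sim is equivalence relation lemma}, each $\prec_a$ is a strict partial order by the lemma immediately following Definition~\ref{canonical prec}, and $\pi$ is a valuation by Definition~\ref{canonical pi}.

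Then I would take $w = X_0$. By Lemma~\ref{X0 in W lemma} we have $X_0 \in W$, so $w$ is a genuine world of $M(X_0)$. For each $\chi \in X$ we have $\chi \in X \subseteq X_0 = w$, hence the truth lemma, Lemma~\ref{induction lemma}, gives $w \Vdash \chi$. Likewise $\neg\phi \in X_0 = w$, so Lemma~\ref{induction lemma} yields $w \Vdash \neg\phi$, i.e.\ $w \nVdash \phi$ by item~2 of Definition~\ref{sat}. This is precisely the desired conclusion.

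The main obstacle is not in this final assembly, which is routine once the canonical machinery is available, but rather in the two lemmas Lemma~\ref{H children exist} and Lemma~\ref{S children exist} that drive the emotional cases of the truth lemma; those already carry out the delicate construction of the witness worlds $u, u'$ with $u \not\prec_a u'$ using the total pre-order $\sqsubseteq$ on the emotional base $\Delta_a$ and the Emotional Predictability axioms. With those in hand, the strong completeness theorem follows immediately.
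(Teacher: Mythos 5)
Your proposal is correct and follows essentially the same route as the paper's own proof: extend the consistent set $X\cup\{\neg\phi\}$ to a maximal consistent set via Lemma~\ref{Lindenbaum's lemma}, take it as the parameter (and, by Lemma~\ref{X0 in W lemma}, a world) of the canonical model, and conclude with the truth lemma (Lemma~\ref{induction lemma}) and item~2 of Definition~\ref{sat}. The extra details you supply (the deduction-lemma justification of consistency and the checks that the canonical structure is a genuine epistemic model with preferences) are consistent with, and only elaborate on, the paper's argument.
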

\begin{proof}
The assumption $X\nvdash \phi$ implies that set $X\cup\{\neg\phi\}$ is consistent. Thus, by Lemma~\ref{Lindenbaum's lemma}, there is a maximal consistent set $w$ such that $X\cup\{\neg\phi\}\subseteq w$. Consider canonical epistemic model with preferences $M(w)$. By Lemma~\ref{X0 in W lemma}, set $w$ is one of the worlds of this model. Then, $w\Vdash \chi$ for any formula $\chi\in X$ and $w\Vdash \neg\phi$
by Lemma~\ref{induction lemma}. Therefore, $w\nVdash \phi$ by item 2 of Definition~\ref{sat}.
\end{proof}




\section{Conclusion}

In this article we proposed a formal semantics for happiness and sadness, proved that these two notions are not definable through each other and gave a complete logical system capturing the properties of these notions.
The approach to happiness that we advocated could be captured by famous saying ``Success is getting what you want, happiness is wanting what you get''. Although popular, this view is not the only possible. As we mentioned in the introduction, some view happiness as ``getting what you want''.  

As defined in this article, happiness and sadness are grounded in agent's knowledge. We think that an interesting next step could be exploring belief-based happiness and sadness. A framework for beliefs, similar to our epistemic models with preferences, has been proposed by Liu~\cite{l11springer}.


\bibliographystyle{elsarticle-num}  
\bibliography{sp}

\begin{thebibliography}{10}
\expandafter\ifx\csname url\endcsname\relax
  \def\url#1{\texttt{#1}}\fi
\expandafter\ifx\csname urlprefix\endcsname\relax\def\urlprefix{URL }\fi
\expandafter\ifx\csname href\endcsname\relax
  \def\href#1#2{#2} \def\path#1{#1}\fi

\bibitem{dsw91ismis}
J.~Doyle, Y.~Shoham, M.~P. Wellman, A logic of relative desire, in:
  International Symposium on Methodologies for Intelligent Systems, Springer,
  1991, pp. 16--31.

\bibitem{lvw02aamas}
J.~Lang, L.~Van Der~Torre, E.~Weydert, Utilitarian desires, Autonomous agents
  and Multi-agent systems 5~(3) (2002) 329--363.

\bibitem{m04ecai}
J.-J.~C. Meyer, Reasoning about emotional agents, in: Proceedings of the 16th
  European Conference on Artificial Intelligence, 2004, pp. 129--133.

\bibitem{sdm07aaai}
B.~R. Steunebrink, M.~Dastani, J.-J.~C. Meyer, A logic of emotions for
  intelligent agents, in: Proceedings of the National Conference on Artificial
  Intelligence, Vol.~22, Menlo Park, CA; Cambridge, MA; London; AAAI Press; MIT
  Press; 1999, 2007, p. 142.

\bibitem{ahl09synthese}
C.~Adam, A.~Herzig, D.~Longin, A logical formalization of the occ theory of
  emotions, Synthese 168~(2) (2009) 201--248.

\bibitem{ls11ai}
E.~Lorini, F.~Schwarzentruber, A logic for reasoning about counterfactual
  emotions, Artificial Intelligence 175~(3) (2011) 814.

\bibitem{s11ijw}
J.~Suikkanen, An improved whole life satisfaction theory of happiness,
  International Journal of Wellbeing 1~(1) (2011).

\bibitem{f19ijw}
F.~Feldman, An improved whole life satisfaction theory of happiness?,
  International Journal of Wellbeing 9~(2) (2019).

\bibitem{b10yup}
S.~Bok, Exploring happiness: From Aristotle to brain science, Yale University
  Press, 2010.

\bibitem{a10dup}
S.~Ahmed, The promise of happiness, Duke University Press, 2010.

\bibitem{a13routledge}
M.~Argyle, The psychology of happiness, Routledge, 2013.

\bibitem{sed05pid}
M.~E. Stewart, K.~P. Ebmeier, I.~J. Deary, Personality correlates of happiness
  and sadness: Epq-r and tpq compared, Personality and individual differences
  38~(5) (2005) 1085--1096.

\bibitem{f10mit}
B.~S. Frey, Happiness: A revolution in economics, MIT press, 2010.

\bibitem{bp05oup}
L.~Bruni, P.~L. Porta, Economics and happiness: Framing the analysis, OUP
  Oxford, 2005.

\bibitem{occ88cup}
A.~Ortony, G.~L. Clore, A.~Collins, The cognitive structure of emotions.
  (1988).

\bibitem{m09}
E.~Mendelson, Introduction to mathematical logic, CRC press, 2009.

\bibitem{l11springer}
F.~Liu, Reasoning about preference dynamics, Vol. 354, Springer Science \&
  Business Media, 2011.

\end{thebibliography}
\end{document}